\documentclass[nohyperref]{article}

\usepackage{microtype}
\usepackage{graphicx}
\usepackage{booktabs} %
\usepackage{subcaption}

\usepackage{hyperref}

 \usepackage[accepted]{icml2022}

\usepackage{amsmath}
\usepackage{amssymb}
\usepackage{mathtools}
\usepackage{amsthm}

\usepackage[capitalize,noabbrev]{cleveref}

\theoremstyle{plain}
\newtheorem{theorem}{Theorem}[section]

\newtheorem{lemma}[theorem]{Lemma}
\newtheorem{corollary}[theorem]{Corollary}
\theoremstyle{definition}
\newtheorem{definition}[theorem]{Definition}

\newtheorem{remark}[theorem]{Remark}

\usepackage{mkolar_definitions}
\usepackage{dsfont}
\usepackage{array}
\usepackage{xspace}
\usepackage{algorithmic}
\usepackage{enumitem}
\usepackage{commath}

\newcommand{\robustaggTS}{\ensuremath{\textsc{RobustAgg-TS}}\xspace}
\newcommand{\robustagg}{\ensuremath{\textsc{RobustAgg}}\xspace}
\newcommand{\inducb}{\ensuremath{\textsc{Ind-UCB}}\xspace}
\newcommand{\indts}{\ensuremath{\textsc{Ind-TS}}\xspace}
\newcommand{\robustaggTSV}{\ensuremath{\textsc{RobustAgg-TS-V}}\xspace}

\newcommand{\aggmu}{\ensuremath{\mathrm{agg}\text{-}\hat{\mu}\xspace}}
\newcommand{\aggvar}{\ensuremath{\mathrm{agg}\text{-}\mathrm{var}\xspace}}
\newcommand{\indmu}{\ensuremath{\mathrm{ind}\text{-}\hat{\mu}\xspace}}
\newcommand{\indvar}{\ensuremath{\mathrm{ind}\text{-}\mathrm{var}\xspace}}

\newcommand{\Reg}{\mathrm{Reg}}

\newcommand{\wbar}[1]{ {\ensuremath{\overline{#1}}} }

\newcommand{\indic}{\mathds{1}}

\newcommand{\UCB}{\mathrm{UCB}}
\newcommand{\var}{\mathrm{var}}
\newcommand{\rounds}{P}
\newcommand{\bPhi}{\wbar{\Phi}}
\newcommand{\optarm}{\diamond} %

\usepackage[textsize=tiny]{todonotes}

\icmltitlerunning{Thompson Sampling for Robust Transfer in Multi-Task Bandits}

\makeatletter
\newcommand{\svast}{\bBigg@{3}}
\newcommand{\vast}{\bBigg@{4}}
\newcommand{\Vast}{\bBigg@{5}}
\makeatother

\begin{document}

\twocolumn[
\icmltitle{Thompson Sampling for Robust Transfer in Multi-Task Bandits}

\icmlsetsymbol{equal}{*}

\begin{icmlauthorlist}
\icmlauthor{Zhi Wang}{ucsd}
\icmlauthor{Chicheng Zhang}{ua}
\icmlauthor{Kamalika Chaudhuri}{ucsd,fair}
\end{icmlauthorlist}

\icmlaffiliation{ucsd}{University of California San Diego}
\icmlaffiliation{ua}{University of Arizona}
\icmlaffiliation{fair}{Facebook AI Research}

\icmlcorrespondingauthor{Zhi Wang}{zhiwang@ucsd.edu}

\icmlkeywords{Machine Learning, ICML}

\vskip 0.3in
]

\printAffiliationsAndNotice{}  %

\begin{abstract}

We study the problem of online multi-task learning where the tasks are performed within similar but not necessarily identical multi-armed bandit environments. In particular, we study how a learner can improve its overall performance across multiple related tasks through robust transfer of knowledge. While an upper confidence bound (UCB)-based algorithm has recently been shown to achieve nearly-optimal performance guarantees in a setting where all tasks are solved concurrently, it remains unclear whether Thompson sampling (TS) algorithms, which have superior empirical performance in general, share similar theoretical properties. In this work, we present a TS-type algorithm for a more general online multi-task learning protocol, which extends the concurrent setting. We provide its frequentist analysis and prove that it is also nearly-optimal using a novel concentration inequality for multi-task data aggregation at random stopping times. Finally, we evaluate the algorithm on synthetic data and show that the TS-type algorithm enjoys superior empirical performance in comparison with the UCB-based algorithm and a baseline algorithm that performs TS for each individual task without transfer.

\end{abstract}

\section{Introduction}
\label{introduction}
We study multi-task transfer learning in a multi-armed bandit (MAB) setting.
In practice, auxiliary data from different but related sources are often available, although it is also often less clear how they should be utilized. 
If properly managed, such data can serve an important role in accelerating learning; in particular, in online learning, auxiliary data may be used to avoid costs associated with unnecessary exploration.
In this work, we study how data collected from similar sources can be robustly aggregated and utilized.

We consider a generalization of the $\epsilon$-multi-player multi-armed bandit ($\epsilon$-MPMAB) problem recently proposed by \citet{wzsrc21}, which can be used to model multi-task bandits. In the $\epsilon$-MPMAB problem\footnote{We shall still refer to the generalized problem as the $\epsilon$-MPMAB problem.}, a set of players sequentially and potentially concurrently interact with a common set of arms that have player-dependent reward distributions.
Each player and its associated reward distributions (data sources) are thereby regarded as a task.
Furthermore, we consider the reward distributions that the players face for each arm to be {\em similar} but not necessarily identical, and the level of (dis)similarity is specified by a parameter $\epsilon \in [0,1]$.

The $\epsilon$-MPMAB problem can be used to model important real-world applications. 
For example, in healthcare robotics, a set of robots, which correspond to players, can be paired with people with dementia to provide personalized cognitive training and wellness activities \citep{kubota2020jessie}.
Each training/wellness activity corresponds to an arm in the $\epsilon$-MPMAB problem, and people with similar preferences or symptoms may exhibit similar interests or needs---this is modeled via similarity in reward
distributions of each arm \cite{wzsrc21}. 
Another example can be seen in recommendation systems where learning agents are assigned to people within a social network, who may have similar interests due to inter-network influence \citep{qian2013personalized}.

Despite the similarity in its reward distributions, the $\epsilon$-MPMAB problem is still challenging for two reasons:
on the one hand, misusing auxiliary data can lead to {negative transfer} and substantially impair a player's performance \citep{rosenstein2005transfer};
on the other hand, while auxiliary data are often immediately accessible in their entirety in offline transfer learning settings, in the $\epsilon$-MPMAB problem, the available auxiliary data grow in time and depend on the interactions between the players and the environments.

An upper confidence bound (UCB)-based algorithm, $\robustagg(\epsilon)$, has been proposed for the $\epsilon$-MPMAB problem \citep{wzsrc21}. It achieves strong, near-optimal theoretical guarantees through robust data aggregation. Nevertheless, $\robustagg(\epsilon)$'s empirical performance can, unfortunately, be underwhelming.

Meanwhile, Thompson sampling (TS) algorithms \citep{thompson1933likelihood}, another family of bandit algorithms, have been shown superior empirically in comparison with UCB-based algorithms in standard single-task settings ~\citep[e.g.,][]{chapelle2011empirical}.
In fact, we show in Section~\ref{sec:experiments} that, for the $\epsilon$-MPMAB problem, a baseline algorithm which employs TS for each task individually without transfer learning can outperform $\robustagg(\epsilon)$ in many cases.

In spite of the encouraging signs from the empirical evaluations, the theoretical study of TS have lagged behind, especially in terms of {\em frequentist} analyses~\cite{an17,kaufmann2012thompson} for data aggregation and transfer learning in the multi-task setting\footnote{See Section~\ref{sec:related_work} for a discussion on related work.}.
It is therefore imperative to design multi-task TS-type algorithms that have superior empirical performance {\em and} strong theoretical guarantees. Our contributions in this work are:

\begin{enumerate}
    \item Inspired by prior works \citep{cgz13,glz14,hong2021hierarchical}, we generalize the $\epsilon$-MPMAB problem \citep{wzsrc21} to model a wider class of multi-task bandit learning scenarios so that it covers sequential and concurrent multi-task learning as special cases.

    \item We design a TS-type algorithm, $\robustaggTS(\epsilon)$, for the $\epsilon$-MPMAB problem and provide a frequentist analysis with near-optimal  performance guarantees.

    \item We empirically evaluate $\robustaggTS(\epsilon)$ on synthetic data and show that it outperforms the UCB-based $\robustagg(\epsilon)$ and a baseline algorithm that runs TS for each individual task without data sharing.

    \item Technical highlight: frequentist analyses of Thompson sampling can be much harder to conduct than those of UCB-based algorithms (see Remark~\ref{rem:compare-ucb}); a concentration inequality loose in logarithmic factors can result in a polynomial increase in regret guarantee (see Remark~\ref{rem:novel_concentration}). 
    To cope with this challenge, we prove a novel concentration inequality for multi-task data aggregation at random stopping times (Lemma~\ref{lem:novel_concentration_main_paper}), which leads to tight performance guarantees for $\robustaggTS(\epsilon)$. Our technique may be of independent interest for analyzing other multi-task sequential learning problems. 

\end{enumerate}

\section{Preliminaries}
\label{sec:background}

In this section, we first present the problem formulation and some important known results.
We then introduce a new baseline algorithm based on TS.
 
\paragraph{Notations.} Throughout, we use $[n]$ to denote the set $\cbr{1, 2, \ldots, n}$. Let $\Ncal(\mu, \sigma^2)$ denote the Gaussian distribution with mean $\mu$ and variance $\sigma^2$.
Let $a \vee b = \max(a,b)$. For a set $A \subseteq U$, denote by $A^C = U \setminus A$ the complement of $A$ in the universe $U$.
We use $\tilde{\Ocal}$ to hide logarithmic factors.

\subsection{Problem Formulation}
We consider and generalize the $\epsilon$-MPMAB problem introduced by \citet{wzsrc21}. 
An $\epsilon$-MPMAB problem instance comprises $M$ players, $K$ arms, and a dissimilarity parameter $\epsilon \in [0,1]$.
Let $[M]$ denote the set of players and $[K]$ the set of arms.
For each player $p \in [M]$ and each arm $i \in [K]$, there is an initially-unknown reward distribution $\Dcal_i^p$, which has support $[0,1]$ and has mean $\mu_i^p$. 

\paragraph{Reward dissimilarity.}
The reward distributions for each arm are assumed to be {\em similar but not necessarily identical} for different players; specifically,
\begin{align}
    \forall i \in [K], \ p,q \in [M], \quad \abr{\mu_i^p - \mu_i^q} \le \epsilon.
\end{align}

\paragraph{Protocol.} In the work of \citet{wzsrc21}, the players interact with the arms in rounds, and within each round, all players take an action concurrently. In this paper, inspired by the problem setup  of~\citet{hong2021hierarchical}, we generalize the interaction protocol such that it allows any subset of the players to take an action. In each round $t \in [T]$, where $T > \max(K,M)$ is the time horizon of learning, a subset of players $\Pcal_t \subseteq [M]$ is chosen (called the {\em active player set} at round $t$) by an oblivious adversary; each active player $p \in \Pcal_t$ then pulls an arm $i_t^p \in [K]$ and observes an independently-drawn reward $r_t^p \sim \Dcal_{i_t^p}^p$. At the end of round $t$, the active players communicate their decisions, $\cbr{i_t^p: p \in \Pcal_t}$, as well as their observed rewards, $\cbr{r_t^p: p \in \Pcal_t}$, to all players. Note that, when $\abr{\Pcal_t} = 1$ for all $t$, the problem setting resembles the one in \cite{cgz13} and captures a sequential transfer bandit learning setting \citep[e.g.,][]{alb13}; when $\Pcal_t = [M]$ for all $t$, we recover the setting in the work of \citet{wzsrc21}.

\paragraph{Performance metric.} The goal of the players is to minimize their expected collective regret, which we define shortly. 
For each player $p \in [M]$, let $\mu_*^p = \max_{j \in [K]} \mu_j^p$ denote the mean reward of an optimal arm for $p$; then, for each arm $i \in [K]$, let $\Delta_i^p = \mu_*^p - \mu_i^p \ge 0$ denote the (suboptimality) gap of arm $i$ for player $p$. 
In addition, let $n_i^p(t) = \sum_{s \leq t} \indic \cbr{p \in \Pcal_s, i_s^p = i}$ denote the number of pulls of arm $i$ by player $p$ after $t$ rounds. 
Then, the individual expected regret of any player $p$ is defined as 
\begin{align*}
\Reg^p(T)
= \EE \Bigg[ \sum_{  \substack{ t \in [T]: \\ p \in \Pcal_t}} 
 \mu_*^p - \mu_{i_t^p}^p  \Bigg]
= \sum_{i \in [K]} \EE \sbr{n_i^p(T)} \Delta_i^p. %
\end{align*}
Finally, the {\em expected collective regret} is defined as the sum of individual expected regret over all the players, i.e.,
\begin{align}
   \Reg(T) = \sum_{p \in [M]} \Reg^p(T) = \sum_{i \in [K]} \sum_{p \in [M]}  \EE \sbr{n_i^p(T)} \Delta_i^p. \label{eqn:regret_definition}
\end{align}

\paragraph{Does one need to know $\epsilon$?}
In this work, we focus on the case where $\epsilon$ is {\em known} to the players in the $\epsilon$-MPMAB problem. This is because \citet{wzsrc21} prove that, unfortunately, not much can be done when $\epsilon$ is unknown to the players---a lower bound (Theorem 11 therein) shows that no sublinear-regret algorithms can effectively take advantage of inter-task data aggregation for every $\epsilon \in [0,1]$ to achieve improved regret upper bounds.

\subsection{Existing Results}
\label{sec:existing-results}

In the concurrent setting ($\Pcal_t = [M]$ for all $t$), \citet{wzsrc21} show that, whether data aggregation can be provably beneficial for an arm $i$ depends on how its associated suboptimality gaps, $\Delta_i^p$'s, compare with the dissimilarity parameter, $\epsilon$. 

\paragraph{Subpar arms.} Specifically, the problem complexity is captured by a notion called {\em subpar arms}. The set of $\alpha$-subpar arms is defined as:
\begin{align}
    \Ical_{\alpha} = \cbr{i \in [K]:\ \exists p,\ \Delta_i^p > \alpha}.
    \label{eqn:alpha-subpar}
\end{align}

\paragraph{Regret guarantees.} The upper and lower bounds provided in \citep{wzsrc21} characterize that, informally, the collective performance of the players  can be improved by a factor of $M$ (resp. $\sqrt{M}$) for each $\Ocal(\epsilon)$-subpar arm in the (suboptimality) gap-dependent (resp. gap-independent) bounds, where we recall that $M$ is the number of players. 

This improvement is in comparison with baseline algorithms in which each player runs their own instance of a bandit algorithm individually. 
Let $\inducb$ be a baseline in which  each player runs the UCB-$1$ algorithm \citep{acf02}. Its collective regret guarantees are obtained by simply summing over individual gap-dependent and gap-independent regret bounds, respectively:
$\Ocal \rbr{
\sum_{p \in [M]}
\sum_{i \in [K]: \Delta_i^p > 0}
\frac{\ln T}{\Delta_i^p}}$ and $\tilde{\Ocal} \rbr{ M \sqrt{K T} }$.

In contrast, through leveraging auxiliary data from inter-player communication, the UCB-based algorithm, $\robustagg(\epsilon)$, proposed by \citet{wzsrc21} has  gap-dependent and gap-independent regret bounds of 
\[
\Ocal \svast(
\underbrace{\frac{1}{M} \sum_{i \in \Ical_{5\epsilon}} \sum_{ \substack{ p \in [M] \\ \Delta_i^p > 0}} \frac{\ln T}{\Delta_i^p}}_{(*)} +
\sum_{i \in \Ical^C_{5\epsilon}}
\sum_{\substack{ p \in [M] \\ \Delta_i^p > 0}}
\frac{\ln T}{\Delta_i^p} + MK \svast) \text{ and}
\]
\[ \tilde{\Ocal} \bigg( \underbrace{ \sqrt{ M |\Ical_{5\epsilon}| T } }_{(*)} + M \sqrt{ \rbr{ | \Ical_{5\epsilon}^C | - 1} T } + MK \bigg),
\]
respectively\footnote{The results may appear different from \citep{wzsrc21} at a glance because we use a slightly notation for subpar arms.}. 
These guarantees exhibit a factor of $\frac1M$ and $\frac1{\sqrt{M}}$ improvement in the respective $(*)$ terms, for the set of $\Ocal (\epsilon)$-subpar arms, $\Ical_{5\epsilon}$, and is nearly optimal.

In Appendix~\ref{sec:appendix_baselines}, we give a brief recap of $\robustagg(\epsilon)$---we show that with a few small modifications, it can be extended to work in the generalized $\epsilon$-MPMAB setting, and achieve generalized regret guarantees (see Theorem~\ref{thm:robustagg-reg}).

\paragraph{Lower bounds.} In the setting where the dissimilarity parameter $\epsilon$ is known, 
a lower bound in~\cite{wzsrc21} shows that, for any algorithm that has a sublinear-regret guarantee, when facing a large class of $\epsilon$-MPMAB problem instances, it must have regret at least
\begin{align*}
\Omega & \left(
\sum_{i \in \Ical_{\epsilon / {4}}: \Delta_i^{\min} > 0}
\frac{\ln T }{ \Delta_i^{\min}} 
+
\sum_{i \in \Ical_{\epsilon/4}^C}
\sum_{p \in [M]: \Delta_i^p > 0} 
\frac{\ln T }{ \Delta_i^p} 
\right),
\end{align*}
where $\Delta_i^{\min} = \min_{p \in [M]} \Delta_i^p$.
This lower bound shows that, 
data aggregation cannot be effective for the arm set $\Ical^C_{\epsilon/4} \subseteq \Ical^C_{\epsilon}$. 

In addition,~\citet{wzsrc21} also show a gap-independent lower bound:
for any algorithm, there exists an $\epsilon$-MPMAB instance, in which the algorithm has regret at least
\[ \Omega \bigg( \sqrt{ M |\Ical_{5\epsilon}| T }  + M \sqrt{ \rbr{ | \Ical_{5\epsilon}^C | - 1} T } \bigg),
\]
in the setting where $\Pcal_t = [M]$ for all $t \in [T]$.

\subsection{Baseline: $\indts$}
In this work, we consider another baseline algorithm, $\indts$, in which each player runs the standard TS algorithm with Gaussian priors.
We now describe the TS algorithm. 
At a high level, every learner (player) $p$ begins with some prior belief on the mean reward of each arm, and through interactions with the environment, the learner updates its posterior belief. Specifically, we consider TS with Gaussian product priors---a learner maintains one Gaussian posterior distribution for each arm, beginning with $\Ncal\rbr{0,1}$. In each round $t$, the learner draws an independent sample $\theta^p_i(t)$ for each arm $i$ from its corresponding posterior distribution, which is of form $\Ncal \rbr{\bar{\mu}^p_i, \frac{1}{n^p_i(t-1) 
\vee 1}}$, where $\bar{\mu}^p_i = \frac{1}{n^p_i(t-1) \vee 1} \sum_{s<t: p \in \Pcal_s, i_s^p = i} r_s^p$ is the empirical mean reward of player $p$ pulling arm $i$.
The learner then pulls the arm $i_t^p = \argmax_{i} \theta^p_i(t)$, receives a reward $r_t^p \sim \Dcal^p_{i_t^p}$, and updates the posterior distribution for arm $i$.

Based on the results of~\citet{an17}, we obtain the regret guarantees of $\indts$ by summing over individual bounds:
$\Ocal \rbr{
\sum_{p \in [M]}
\sum_{i \in [K]: \Delta_i^p > 0}
\frac{\ln T}{\Delta_i^p}}$
and
$\tilde{\Ocal} \rbr{ M \sqrt{K T} }$.

In Appendix~\ref{sec:appendix_baselines}, we briefly recap the guarantees of $\inducb$ and $\indts$ in the generalized $\epsilon$-MPMAB setting, where $\Pcal_t$'s are not necessarily $[M]$ in every round.

\begin{algorithm}[b!]
\begin{algorithmic}[1]
    \STATE {\bfseries Input:} Dissimilarity parameter $\epsilon \in [0,1]$, universal constants $c_1, c_2 > 0$.
    \STATE {\bfseries Initialization:} For every $i \in [K]$ and $p \in [M]$, set $n_i^p = 0$, $\indmu_i^p = 0$, $\indvar_i^p = c_2$, $\aggmu_i^p = 0$, and $\aggvar_i^p = c_2$; for every $i \in [K]$, set $n_i = 0$.
   
    \FOR{round $t \in [T]$}
    
        \STATE Receive active set of players $\Pcal_t$.
    
        \FOR{active player $p \in \Pcal_t$}
            \FOR{arm $i \in [K]$}

                \IF{$n_i^p \ge \frac{c_1 \ln T}{\epsilon^2} + 2 M$}
                \label{line:choose-posterior-start}
                
                \STATE 
                $\hat{\mu}_i^p \gets \indmu_i^p$, $\var_i^p \gets \indvar_i^p$;
                \label{line:posterior_ind} 
                \\ \COMMENT{Use the individual posterior}
                
                \ELSE
                \STATE $\hat{\mu}_i^p \gets \aggmu_i^p$, $\var_i^p \gets \aggvar_i^p$;
                \label{line:posterior_agg}
                \\ \COMMENT{Use the aggregate posterior}
                \ENDIF
                \label{line:choose-posterior-end}
               
               \STATE $\theta_i^p(t) \sim \Ncal( \hat{\mu}_i^p, \var_i^p )$
               \label{line:draw-sample}
            \ENDFOR
            
            \STATE Player $p$ pulls arm $i_t^p = \argmax_{i \in [K]} \theta_i^p(t)$ and observes reward $r_t^p$.
            \label{line:choose-action}
        \ENDFOR
        
        \FOR{active player $p \in \Pcal_t$}
            \STATE Let $i = i_t^p$. Update $n_i^p \gets n_{i}^p + 1$ and $n_{i} \gets n_{i} + 1$.
        \ENDFOR
        
        \FOR{active player $p \in \Pcal_t$} 
            \STATE Let $i = i_t^p$. 
            \\ \COMMENT{{\bfseries Only} update posteriors  associated with $p$ and $i_t^p$} 
            \label{line:invariant}
            
            \STATE Update 
            \begin{align*}
               & \indmu_i^p \gets  \frac{1}{ n_i^p \vee 1} \sum_{s \le t } \indic \cbr{p \in \Pcal_s, i_s^p = i} r_s^p, 
               \\ 
               & \indvar_i^p \gets \frac{c_2}{ n_i^p \vee 1};
            \end{align*}
            \label{line:compute_ind}
            \vspace{-15pt}
            
            \STATE 
            \begin{align*}
               & \aggmu_i^p \gets \frac{1}{ n_i \vee 1} \sum_{s \le t} \sum_{q \in \Pcal_s} \indic \cbr{i_s^q = i} r_s^q +  \epsilon, \\
               & \aggvar_i^p \gets \frac{c_2}{\rbr{n_i - M} \vee 1}.
            \end{align*}
            \label{line:compute_agg}
            \vspace{-10pt}
        \ENDFOR
    \ENDFOR
\end{algorithmic}
\caption{\robustaggTS$(\epsilon)$}
\label{alg:robustaggTS}
\end{algorithm}

\section{Algorithm}

In this section, we present a TS-type randomized exploration algorithm, $\robustaggTS(\epsilon)$ (Algorithm~\ref{alg:robustaggTS}), which can robustly leverage data collected by all the players.

In each round $t$, for each active player $p \in \Pcal_t$ and arm $i$, $\robustaggTS(\epsilon)$ maintains two Gaussian ``posterior'' distributions.
As a standard single-task TS algorithm with Gaussian priors would normally maintain~\citep[e.g.][]{an17}, $\Ncal \rbr{\indmu_i^p, \indvar_i^p}$, the \emph{individual posterior} is solely based on player $p$'s own interactions with arm $i$, with $\indmu_i^p$ and $\indvar_i^p$ defined in line~\ref{line:compute_ind}.
In contrast, the \emph{aggregate posterior}, $\Ncal \rbr{ \aggmu_i^p, \aggvar_i^p}$, is unique to the multi-task setting---its mean, $\aggmu_i^p$, is the sum of the empirical mean of all players' observed rewards for arm $i$ and a bonus term $\epsilon$, and its variance, $\aggvar_i^p$, is based on the total number of pulls of arm $i$ by all players (line~\ref{line:compute_agg}).

The algorithm chooses one of the posterior distributions (lines~\ref{line:choose-posterior-start} to~\ref{line:choose-posterior-end}), i.e., decides whether to utilize data shared by other players, by balancing a bias-variance trade-off \citep{bbckpv10,soaremulti,wzsrc21}: while an inclusion of $n_i$ reward samples collected by all players leads to a variance, $\aggvar_i^p$, which can be much smaller than $\indvar_i^p$, it may also cause $\aggmu_i^p$ to be biased as the reward distributions for different players may be different. The algorithm then independently draws a sample, $\theta_i^p(t)$, from the chosen posterior distribution (line~\ref{line:draw-sample}) and pulls the arm with the largest $\theta_i^p(t)$ for player $p$ (line~\ref{line:choose-action}).

Specifically, in round $t$, for player $p \in \Pcal_t$ and arm $i \in [K]$, the algorithm chooses a posterior distribution by comparing $n_i^p$, the number of pulls of $i$ by $p$ at the beginning of round $t$, to a threshold in terms of the dissimilarity parameter, i.e., $\frac{c_1 \ln T}{\epsilon^2} + 2 M$ (line~\ref{line:choose-posterior-start}), where $c_1 > 0$ is some numerical constant. Intuitively, when $\epsilon$ is smaller, each player stays longer on using the aggregate posterior to perform randomized exploration, which indicates a higher degree of trust on data from other tasks.

After all players in $\Pcal_t$ obtain rewards for their arm pulls, they compute and {\em update} their posteriors with new data. 
In principle, data from one player can affect the aggregate posteriors of all players. 
We make the design choice that 
this effect gets delayed: 
the algorithm only updates the posteriors for player $p$ and arm $i$ in round $t$, if $p \in \Pcal_t$ and $i = i_t^p$ (line~\ref{line:invariant}). Although our current analysis (see Sections~\ref{sec:main-results} and~\ref{sec:pf-ingr} below) relies on this property to establish sharp regret guarantees, we conjecture that similar regret guarantees can be shown even if the algorithm updates the posteriors of all players and all arms in every round\footnote{In Section~\ref{sec:comp_v} of the appendix, we show that this variation induces little effect on the empirical performance of the algorithm.}.

\section{Main Results}
\label{sec:main-results}
We now present gap-dependent and gap-independent regret upper bounds of $\robustaggTS(\epsilon)$.
Recall that $\Ical_{\alpha} = \{i \in [K]: \exists p,\ \Delta_i^p > \alpha \}$ is the set of $\alpha$-subpar arms.

\begin{theorem}[Gap-dependent bound]
\label{thm:ts_gap_dep_ub}
There exists a setting of $c_1, c_2 > 0$, such that,
the expected collective regret of $\robustaggTS(\epsilon)$ after $T > \max(K,M)$ rounds satisfies: $\Reg(T) \le $
\begin{align*}
    \Ocal \svast( & \frac 1 M \sum_{i \in \Ical_{10\epsilon}} \sum_{ \substack{p \in [M] \\ \Delta_i^p > 0}} \frac{\ln T}{\Delta_i^p}
    +
    \sum_{i \in \Ical_{10\epsilon}^C} \sum_{\substack{p \in [M] \\ \Delta_i^p > 0}} \frac{\ln T}{\Delta_i^p} 
    + M^2K \svast).
\end{align*}
\end{theorem}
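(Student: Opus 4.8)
The plan is to bound the expected number of pulls $\EE[n_i^p(T)]$ for every suboptimal arm--player pair and then sum according to the regret identity $\Reg(T) = \sum_{i \in [K]} \sum_{p \in [M]} \EE[n_i^p(T)]\, \Delta_i^p$, separating the arms into the subpar set $\Ical_{10\epsilon}$ and its complement $\Ical_{10\epsilon}^C$. For a fixed pair $(i,p)$ I would split the pulls into those made while player $p$ is in the \emph{aggregate regime} ($n_i^p < c_1 \ln T / \epsilon^2 + 2M$) and those in the \emph{individual regime}. Since $n_i^p$ only increases, the number of aggregate-regime pulls is deterministically at most the threshold $c_1 \ln T / \epsilon^2 + 2M$; the $2M$ slack, summed over all $K$ arms and $M$ players, already produces the additive $\Ocal(M^2 K)$ term (together with the $-M$ offset in $\aggvar_i^p$ and the $\Ocal(1)$ concentration-failure counts). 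In the individual regime the posterior used by $\robustaggTS(\epsilon)$ coincides with that of the baseline $\indts$, so I can import the standard single-task frequentist TS guarantee to obtain $\EE[n_i^p(T)] = \Ocal(\ln T / (\Delta_i^p)^2)$ there.

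For arms in $\Ical_{10\epsilon}^C$ every gap satisfies $\Delta_i^p \le 10\epsilon$, so the individual-regime contribution is $\Ocal(\ln T / \Delta_i^p)$ and, because $\frac{c_1 \ln T}{\epsilon^2}\Delta_i^p = \Ocal(\ln T / \Delta_i^p)$ whenever $\Delta_i^p \le 10\epsilon$, the aggregate-regime slack is absorbed into $\Ocal(\ln T / \Delta_i^p + M)$; summing yields the second sum of the theorem. The delicate part is the first sum, over the subpar arms, where I must recover the $1/M$ factor. The crucial structural observation is that the aggregate posterior $\Ncal(\aggmu_i^p, \aggvar_i^p)$ is effectively \emph{shared} across players: neither $\aggmu_i^p$ nor $\aggvar_i^p = c_2 / ((n_i - M) \vee 1)$ depends on $p$, and the variance is governed by the \emph{total} pull count $n_i = \sum_q n_i^q$. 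I would run the Agrawal--Goyal-style decomposition with thresholds $\mu_i^p < x_i^p < y_i^p < \mu_*^p$ placed at scale $\Delta_i^p$, bounding $\Pr[i_t^p = i]$ by (i) the event $\aggmu_i^p > x_i^p$; (ii) the event $\theta_i^p(t) > y_i^p$ while $\aggmu_i^p \le x_i^p$, whose conditional probability is a Gaussian tail decaying like $\exp(-c\,(n_i - M)(\Delta_i^p)^2)$; and (iii) the event that arm $i$ is played with $\theta_i^p(t) \le y_i^p$, which forces the optimal arm's sample below $y_i^p$. Because the tail in (ii) is driven by $n_i$ and every active player's pull increments $n_i$, the collective cost of ruling out a subpar arm is $\Ocal(\ln T / (\Delta_i^p)^2)$ \emph{total} pulls rather than per player; shared among the players this gives the $\frac1M \sum_p \ln T / \Delta_i^p$ bound. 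A subpar gap $\Delta_i^p > 10\epsilon$ guarantees $\ln T / (\Delta_i^p)^2 < c_1 \ln T / \epsilon^2$, so the arm is eliminated inside the aggregate regime and the speed-up is realized before any switch to the individual posterior.

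I expect the main obstacle to be term (i): controlling the shared aggregate mean $\aggmu_i^p$, which pools rewards from all players. At the moment player $p$ consults the aggregate posterior, the number of pooled samples $n_i$ is a \emph{random stopping time} determined by the adaptively interleaved actions of every player, so a crude union bound over its possible values would cost extra logarithmic factors and, by Remark~\ref{rem:novel_concentration}, inflate the regret polynomially. I would instead invoke the multi-task concentration inequality at random stopping times (Lemma~\ref{lem:novel_concentration_main_paper}) to certify, with the correct $\ln T$ dependence, that $\aggmu_i^p$ stays within $\Ocal(\epsilon)$ of a pull-weighted average of the $\mu_i^q$; combined with the dissimilarity assumption and the $+\epsilon$ bonus this places $\aggmu_i^p$ in the band $[\mu_i^p,\ \mu_i^p + \Ocal(\epsilon)]$, safely below $x_i^p$ once $\Delta_i^p > 10\epsilon$. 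A secondary difficulty is term (iii), which I would handle by the usual reduction relating the count to $\sum_t (1 - q_t)/q_t$, where $q_t$ is the conditional probability that the optimal arm's sample clears $y_i^p$, while keeping track of the fact that the optimal arm's posterior may itself be aggregate or individual. Collecting the three contributions and summing over $(i,p)$ then yields the three stated terms.
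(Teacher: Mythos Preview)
Your overall decomposition matches the paper's, but you have the main technical obstacle in the wrong place. Lemma~\ref{lem:novel_concentration_main_paper} is \emph{not} what makes your term~(i) work: controlling the suboptimal arm's aggregate mean only requires a high-probability ``clean event,'' and a union bound over all stopping times is perfectly adequate there (the failure probability just needs to be $\Ocal(T^{-2})$). The real difficulty is your term~(iii), which you call a ``secondary difficulty'' to be handled ``by the usual reduction.'' In the Agrawal--Goyal argument, that reduction requires showing $\EE\sbr{1/q_{\tau_k+1} - 1} = \Ocal(1)$ for each pull $k$ of the near-optimal arm. When that arm is in the aggregate regime, $q$ is determined by $\aggmu_\dagger(\tau_k)$, and the number of aggregated samples $n_\dagger(\tau_k)$ is random in $[k,\,k{+}M{-}1]$. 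A union bound over these $M$ values (or Freedman's inequality) inserts an extra $\ln M$ or $\ln\ln T$ \emph{inside the exponent} of the tail bound; after inversion through $1/\bPhi(\cdot)$ this becomes a multiplicative $\Ocal(M)$ or $\Ocal(\ln T)$ factor in $\EE\sbr{1/q}$, which is then summed over $\Theta(\ln T/(\Delta_i^{\min})^2)$ values of $k$ and destroys the bound---exactly the failure mode in Remark~\ref{rem:novel_concentration}. Lemma~\ref{lem:novel_concentration_main_paper} is engineered to avoid precisely this, and it must be deployed in term~(iii), not~(i).

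Two smaller gaps. First, your claim that the aggregate posterior ``does not depend on $p$'' is false for this algorithm: by the delayed-update rule (line~\ref{line:invariant}), player $p$'s aggregate parameters for arm $i$ are refreshed only when $p$ itself pulls $i$, so different players carry stale, player-specific values (in the paper's notation $m_i^p(t-1) \neq n_i(t-1)$ in general). The paper handles this through an ``invariant property'' that reindexes round-sums as pull-sums; your heuristic that ``every active player's pull increments $n_i$'' does not control any single player's stale posterior variance. Second, since players can have different optimal arms, the subpar analysis must fix a single common near-optimal arm $\dagger \in \Ical_{2\epsilon}^C$ (rather than each player's own optimum) so that all the $q_t$'s are governed by one arm's aggregate data; your sketch does not address this.
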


\begin{theorem}[Gap-independent bound]
\label{thm:ts_gap_indep_ub}
There exists a setting of $c_1, c_2 > 0$, such that, the expected collective regret of $\robustaggTS(\epsilon)$ after $T > \max(K,M)$ rounds satisfies:
\begin{align*}
    \Reg(T) \le & \tilde{\Ocal} \bigg( \sqrt{ |\Ical_{10\epsilon}| P } + \sqrt{ M \rbr{|\Ical_{10\epsilon}^C| -1 } P } + M^2K \bigg), 
\end{align*}
where $P = \sum_{t=1}^T \abr{ \Pcal_t }$.
\end{theorem}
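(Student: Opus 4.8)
The plan is to convert the gap-dependent guarantee of Theorem~\ref{thm:ts_gap_dep_ub} into a gap-independent one by a peeling (thresholding) argument applied separately to the subpar arms $\Ical_{10\epsilon}$ and the non-subpar arms $\Ical_{10\epsilon}^C$. Writing $\Reg(T) = \sum_{i \in [K]} \sum_{p \in [M]} \EE\sbr{n_i^p(T)} \Delta_i^p$, I would rely on three ingredients: (i) the deterministic pull budget $\sum_{i,p} \EE\sbr{n_i^p(T)} = P$, since each active player pulls exactly one arm per round; (ii) the term-wise (per arm-player) pull-count bounds established in the proof of Theorem~\ref{thm:ts_gap_dep_ub}, namely $\EE\sbr{n_i^p(T)} = \tilde{\Ocal}\rbr{1/(\Delta_i^p)^2}$ for $i \in \Ical_{10\epsilon}^C$ and the aggregation-improved $\EE\sbr{n_i^p(T)} = \tilde{\Ocal}\rbr{1/(M (\Delta_i^p)^2)}$ for $i \in \Ical_{10\epsilon}$, each up to an additive term whose total regret contribution is $\Ocal(M^2 K)$; and (iii) two combinatorial facts about the gap structure, proved below.

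The two gap-structure facts both follow from reward dissimilarity. First, for every player $p$ its optimal arm $\optarm_p$ is non-subpar: for any $q$, $\mu_{\optarm_p}^q \ge \mu_*^p - \epsilon$ and $\mu_*^q \le \mu_*^p + \epsilon$, so $\Delta_{\optarm_p}^q \le 2\epsilon \le 10\epsilon$, giving $\optarm_p \in \Ical_{10\epsilon}^C$; hence for each fixed $p$ at most $\abr{\Ical_{10\epsilon}^C} - 1$ non-subpar arms have a strictly positive gap. Second, every subpar arm has a uniformly large gap for all players: if $\Delta_i^q > 10\epsilon$ for some $q$, then $\Delta_i^p \ge \Delta_i^q - 2\epsilon > 8\epsilon$ for all $p$, which is what justifies applying the aggregation-improved bound in (ii) to \emph{every} player on a subpar arm.

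I would then peel each arm class at its own threshold. For $i \in \Ical_{10\epsilon}^C$ and threshold $\lambda$, pairs with $\Delta_i^p \le \lambda$ contribute at most $\lambda P$ in total by (i), while pairs with $\Delta_i^p > \lambda$ contribute at most $\tilde{\Ocal}(1/\lambda)$ each by (ii), and there are at most $M(\abr{\Ical_{10\epsilon}^C}-1)$ of them by the first fact; balancing at $\lambda = \tilde{\Theta}\bigl(\sqrt{M(\abr{\Ical_{10\epsilon}^C}-1)/P}\bigr)$ gives $\tilde{\Ocal}\bigl(\sqrt{M(\abr{\Ical_{10\epsilon}^C}-1)P}\bigr)$. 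For $i \in \Ical_{10\epsilon}$ the argument is identical at a threshold $\lambda'$, except the large-gap per-pair bound now carries the extra $1/M$: the at most $M\abr{\Ical_{10\epsilon}}$ large-gap pairs each contribute $\tilde{\Ocal}(1/(M\lambda'))$, so the $M$'s cancel and their total is $\tilde{\Ocal}(\abr{\Ical_{10\epsilon}}/\lambda')$, balancing against $\lambda' P$ to give $\tilde{\Ocal}(\sqrt{\abr{\Ical_{10\epsilon}} P})$. Summing the two classes and the $\Ocal(M^2 K)$ additive contribution yields the stated bound.

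The main obstacle is ingredient (ii) for the subpar arms: the crucial $1/M$ saving is invisible from the \emph{statement} of Theorem~\ref{thm:ts_gap_dep_ub}, which only reports the summed regret, so I must extract from its proof a genuinely term-wise, aggregation-improved pull-count bound of order $\ln T/(M(\Delta_i^p)^2)$ for each subpar arm-player pair, with constants and additive terms uniform enough that the peeling can proceed pair-by-pair rather than only in aggregate. This is precisely where the aggregate-posterior analysis and the concentration inequality at random stopping times (Lemma~\ref{lem:novel_concentration_main_paper}) do the real work; the combinatorics and the threshold optimization are then routine.
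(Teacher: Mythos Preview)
Your overall strategy---extract per-arm pull-count bounds from the proof of Theorem~\ref{thm:ts_gap_dep_ub} and then threshold/peel to get a gap-independent bound---is sound and is essentially what the paper does (the paper uses the equivalent device $\min(A,B)\le\sqrt{AB}$ followed by Jensen, rather than an explicit threshold $\lambda$; see Lemma~\ref{lem:arm-pull-reg}). However, your ingredient (ii) for subpar arms is misformulated and cannot be extracted from the paper's proof.

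The issue is your claimed \emph{per-player} bound $\EE\sbr{n_i^p(T)}=\tilde\Ocal\bigl(1/(M(\Delta_i^p)^2)\bigr)$ for $i\in\Ical_{10\epsilon}$. The paper does not prove this, and it is false in general: if only one player is ever active ($\Pcal_t\equiv\{1\}$), aggregation brings no benefit and that single player must make $\Theta(\ln T/(\Delta_i^1)^2)$ pulls of each suboptimal arm, not $\Theta(\ln T/(M(\Delta_i^1)^2))$. What the paper actually establishes for subpar arms is the \emph{aggregate} bound $\EE\sbr{n_i(T)}=\Ocal\bigl(\ln T/(\Delta_i^{\min})^2+M\bigr)$ (Lemma~\ref{lem:key_subpar_arms}); the $1/M$ improvement in Theorem~\ref{thm:ts_gap_dep_ub} appears only after dividing this total over $M$ players, not as a per-player guarantee.

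The fix is simple: for subpar arms, peel at the \emph{arm} level, not at the (arm,player)-pair level. First use your second fact ($\Delta_i^p\le 2\Delta_i^{\min}$ for all $p$) to write the subpar contribution as $\Ocal\bigl(\sum_{i\in\Ical_{10\epsilon}}\EE[n_i(T)]\,\Delta_i^{\min}\bigr)$. Then split on $\Delta_i^{\min}\lessgtr\lambda'$: small-gap arms contribute $\le\lambda'\sum_i\EE[n_i(T)]\le\lambda' P$, and each large-gap arm contributes $\Ocal(\ln T/\Delta_i^{\min})\le\Ocal(\ln T/\lambda')$ by Lemma~\ref{lem:key_subpar_arms}, so at most $\abr{\Ical_{10\epsilon}}$ of them give $\tilde\Ocal(\abr{\Ical_{10\epsilon}}/\lambda')$. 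Balancing at $\lambda'=\tilde\Theta(\sqrt{\abr{\Ical_{10\epsilon}}/P})$ yields $\tilde\Ocal(\sqrt{\abr{\Ical_{10\epsilon}}P})$, exactly as desired. Your treatment of non-subpar arms is fine as written, since there the paper genuinely provides a per-player bound (Lemma~\ref{lem:i-eps-c-robustagg-ts}).
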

The proofs of Theorems~\ref{thm:ts_gap_dep_ub} and~\ref{thm:ts_gap_indep_ub} can be found in Appendix~\ref{sec:appendix_main_proofs}; in Section~\ref{sec:pf-ingr}, we also highlight several technical challenges and proof ingredients in our analysis.

\paragraph{Guarantees in the generalized $\epsilon$-MPMAB setting.} Our guarantees for $\robustaggTS(\epsilon)$ hold under the generalized $\epsilon$-MPMAB setting, in that $\Pcal_t$'s at each round can change over time. 
Observe that the regret bound given by Theorem~\ref{thm:ts_gap_dep_ub} does not depend on $\Pcal_t$'s, and the regret bound given by Theorem~\ref{thm:ts_gap_indep_ub} has the highest value when $P = MT$.
In addition, recall that near-matching gap-dependent and gap-independent lower bounds have been shown by~\citet{wzsrc21} in the $\Pcal_t \equiv [M]$ setting (Section~\ref{sec:existing-results}). These lower bounds indicate the near-optimality of $\robustaggTS(\epsilon)$'s guarantees, modulo an additive lower-order term $O(M^2 K)$ which does not depend on $T$.

Furthermore, the gap-independent guarantee in Theorem~\ref{thm:ts_gap_indep_ub} adapts to the value of $\rounds$. This shows the flexibility of $\robustaggTS(\epsilon)$. 
Specifically, if $\abr{\Pcal_t} = 1$ (similar to the settings of~\citealt{cgz13,glz14}), we have $\rounds = T$, and $\Reg(T) \le$
\begin{align*}
     \tilde{\Ocal} \rbr{  \sqrt{ |\Ical_{10\epsilon}| T } + \sqrt{ M \rbr{|\Ical_{10\epsilon}^C| - 1} T } +M^2K}.
\end{align*}
Similarly, if $\Pcal_t = [M]$ for all $t$~\citep{wzsrc21}, then $\rounds = MT$, and $\Reg(T) \le$
\begin{align*}
    \tilde{\Ocal} \rbr{  \sqrt{ M |\Ical_{10\epsilon}| T } + M \sqrt{ \rbr{|\Ical_{10\epsilon}^C| - 1} T }  + M^2K}.
\end{align*}

\paragraph{Comparison with baselines.} In comparison with the guarantees of the UCB-based algorithm $\robustagg(\epsilon)$ in Appendix~\ref{sec:robustagg-ucb}, we see that $\robustaggTS(\epsilon)$ has competitive guarantees, except that the set of arms which benefits from data aggregation changes from $\Ical_{5\epsilon}$ to $\Ical_{10\epsilon}$.

In comparison with the guarantees of \inducb and \indts,
the regret guarantees of $\robustaggTS(\epsilon)$ are never worse (modulo lower-order terms), and save factors of $\frac{1}{M}$ and $\frac{1}{\sqrt{M}}$ in $\Ical_{10\epsilon}$'s contribution in the gap-dependent and gap-independent regret guarantees, respectively.

\section{Proof Ingredients}
\label{sec:pf-ingr}

In this section, we highlight some of the novel proof ingredients used in our analysis of Algorithm~\ref{alg:robustaggTS}, which are unique to the {\em multi-task} setting\footnote{Our analysis involves various proofs by cases. Figure~\ref{fig:case_division} in the appendix provides an overview illustrating the case division rules used in our proofs.}. 

We begin by decomposing the regret in terms of subpar arms and non-subpar arms. 
It follows from Eq.~\eqref{eqn:regret_definition} that
\begin{align}
\Reg(T) 
=\order \svast( \sum_{i \in \Ical_{10\epsilon}} \EE & \sbr{n_i(T)} \Delta_i^{\min} +  \nonumber \\
& \sum_{i \in \Ical^C_{10\epsilon}} \sum_{p \in [M]} \EE \sbr{n_i^p(T)} \Delta_i^p \svast), \nonumber
\end{align}
where we 
let $n_i(T) = \sum_{p=1}^M n_i^p(T)$ be the number of pulls of arm $i$ by all players after $T$ rounds;
we recall that $\Delta_i^{\min} = \min_{p \in [M]} \Delta_i^p$;
and we use the fact that for any subpar arm $i \in \Ical_{10\epsilon}$ and any player $p \in [M]$, $\Delta_i^p \le 2 \Delta_i^{\min}$ (Fact~\ref{fact:basic_facts_subpar_arms}).

In the interest of space, we focus on the analysis for subpar arms and defer the discussion on non-subpar arms to the appendix. The following lemma provides an upper bound on $\EE \sbr{n_i(T)}$ for $i \in \Ical_{10\epsilon}$, which can be subsequently used to derive the upper bounds on the expected collective regret incurred by the $10\epsilon$-subpar arms in Section~\ref{sec:main-results}.

\begin{lemma}
\label{lem:key_subpar_arms}
For any arm $i \in \Ical_{10\epsilon}$,
\[
\EE \sbr{n_i(T)} \le \Ocal \rbr{\frac{\ln T}{(\Delta_i^{\min})^2} + M}.
\]
\end{lemma}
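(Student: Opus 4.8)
The plan is to bound $\EE[n_i(T)]$ for a subpar arm $i \in \Ical_{10\epsilon}$ by controlling the number of times arm $i$ is pulled under each of the two posterior regimes separately, and then summing. For each player $p$, once $n_i^p$ crosses the threshold $\frac{c_1 \ln T}{\epsilon^2} + 2M$, the algorithm switches from the aggregate posterior to the individual posterior (lines~\ref{line:choose-posterior-start}--\ref{line:choose-posterior-end}). So I would write
\begin{align*}
\EE[n_i(T)] = \sum_{p \in [M]} \EE[n_i^p(T)]
\end{align*}
and split each $n_i^p(T)$ into pulls made while using the aggregate posterior and pulls made while using the individual posterior. The aggregate-posterior pulls are capped deterministically: player $p$ can use the aggregate posterior for arm $i$ only while $n_i^p < \frac{c_1 \ln T}{\epsilon^2} + 2M$, but I expect the \emph{collective} cap on aggregate-posterior pulls of arm $i$ to be governed by the shared count $n_i$ and the aggregate variance's dependence on $(n_i - M)$, so that the total number of aggregate pulls across all players is $\tilde{\Ocal}(\frac{\ln T}{(\Delta_i^{\min})^2} + M)$ rather than $M$ times that. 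This is the heart of where the $\frac{1}{M}$ savings for subpar arms comes from, and it is what lets the bound read $\frac{\ln T}{(\Delta_i^{\min})^2} + M$ rather than $M \cdot \frac{\ln T}{\epsilon^2}$.

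Next I would handle the individual-posterior pulls. Here the analysis should mirror the standard frequentist TS argument (in the style of \citet{an17}): because $i$ is $10\epsilon$-subpar, there is a player $p$ with $\Delta_i^p > 10\epsilon$, and by Fact~\ref{fact:basic_facts_subpar_arms} all gaps satisfy $\Delta_i^p \le 2\Delta_i^{\min}$ with $\Delta_i^{\min}$ itself bounded below in terms of $\epsilon$. The key point is that once a player has pulled arm $i$ enough times that the individual posterior is well-concentrated, the probability of drawing an optimistic sample $\theta_i^p(t)$ that exceeds the sampled value of the optimal arm decays, and a sum over rounds of these misfire probabilities yields the $\frac{\ln T}{(\Delta_i^{\min})^2}$ term. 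I would bound the number of such individual pulls per player by $\Ocal\!\left(\frac{\ln T}{(\Delta_i^{\min})^2}\right)$ via the same decomposition into a ``sufficient exploration of the optimal arm'' event and a ``concentration of the empirical mean'' event.

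The main obstacle, and the technical crux flagged in the introduction (Remark~\ref{rem:novel_concentration}), is controlling the aggregate posterior. Its mean $\aggmu_i^p$ is a sum of rewards pooled across \emph{all} players, and the decision of which posterior to use is made at a data-dependent, hence \emph{random}, time; moreover the pooled reward stream is not i.i.d.\ because different players have different (though $\epsilon$-close) means, and the number of summands $n_i$ is itself a stopping time. A naive union bound over the random stopping time loses logarithmic factors, which by Remark~\ref{rem:novel_concentration} would inflate the regret polynomially. The plan is therefore to invoke the novel concentration inequality for multi-task data aggregation at random stopping times (Lemma~\ref{lem:novel_concentration_main_paper}) to show that, uniformly over stopping times, $\aggmu_i^p$ concentrates around the true pooled mean up to its added $\epsilon$ bonus, so that the aggregate posterior is with high probability neither over-optimistic (which would cause excess pulls) nor so pessimistic that it blocks the optimal arm. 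Combining the two regime bounds, summing over players, and folding the low-probability failure events into the additive $M$ term then yields $\EE[n_i(T)] \le \Ocal\!\left(\frac{\ln T}{(\Delta_i^{\min})^2} + M\right)$.
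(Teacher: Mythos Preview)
Your decomposition by which posterior is used for arm $i$ is not the paper's, and it leaves the central Thompson-sampling difficulty unaddressed. The paper first splits by whether $\theta_i^p(t)$ exceeds a threshold $y_i^p \in (\mu_i^p, \mu_*^p)$: term $(A)$ is $\{i_t^p=i,\,\theta_i^p(t) > y_i^p\}$, term $(B)$ is $\{i_t^p=i,\,\theta_i^p(t) \le y_i^p\}$. Term $(A)$ is the easy part, and your use of Lemma~\ref{lem:novel_concentration_main_paper} on $\aggmu_i^p$ would essentially handle it. But term $(B)$ is the crux: if $i_t^p = i$ and $\theta_i^p(t) \le y_i^p$, then \emph{every} arm's sample is at most $y_i^p$; in particular a fixed near-optimal arm $\dagger \in \Ical_{2\epsilon}^C$ must satisfy $\theta_\dagger^p(t) \le y_i^p < \mu_\dagger^p$. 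Controlling $(B)$ therefore hinges on the anti-concentration of $\dagger$'s posterior. The paper's change-of-arm step (Lemma~\ref{lem:bounding_term_B}) converts the sum into one indexed by pulls of $\dagger$, weighted by $(1/\phi_{i,t}^p - 1)$ where $\phi_{i,t}^p = \Pr(\theta_\dagger^p(t) > y_i^p \mid \Fcal_{t-1})$; the novel stopping-time inequality is then applied to $\aggmu_\dagger(\tau_k(\dagger))$---the aggregate mean of arm $\dagger$, not arm $i$---inside Lemma~\ref{lem:key_lemma_constant}. Your plan never analyzes $\dagger$'s posterior, so the $(B)$-type event remains uncontrolled; applying the concentration only to $\aggmu_i^p$ shows that arm $i$'s sample is rarely \emph{too optimistic}, but says nothing about the event that the near-optimal arm's sample is \emph{too pessimistic}.

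A second, quantitative issue: your per-player bound of $\Ocal(\ln T/(\Delta_i^{\min})^2)$ for individual-posterior pulls, summed over $M$ players, yields $\Ocal(M \ln T/(\Delta_i^{\min})^2)$, off by a factor of $M$ from the claimed bound. In the paper the individual-regime contributions are $\Ocal(M)$ \emph{total} (terms $(A1)$ and $(b1)$), because the switching threshold $c_1 \ln T/\epsilon^2$ already exceeds $\Omega(\ln T/(\Delta_i^{\min})^2)$ (recall $\Delta_i^{\min} > 8\epsilon$ for $i\in\Ical_{10\epsilon}$), so that once the individual posterior is in use, each subsequent misfire has probability at most $1/T$.
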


While a similar lemma can be found in \citep[Lemma 20]{wzsrc21} for the UCB-based algorithm, $\robustagg(\epsilon)$, proving Lemma~\ref{lem:key_subpar_arms} requires new ingredients that we present in the rest of this section.

Let us fix an arm $i \in \Ical_{10\epsilon}$. To control $\EE \sbr{n_i(T)} = \EE \sbr{\sum_{t \in [T]} \sum_{p \in \Pcal_t}  \indic \cbr{i_t^p = i}}$, we begin by generalizing a technique introduced by~\citet{an17} for standard TS to the multi-task setting.
In each round $t$ and for each active player $p$, we consider two cases: (1) player $p$ pulls arm $i$ (namely, $i_t^p = i$), and $\theta_i^p(t)$ (line~\ref{line:draw-sample} in Algorithm~\ref{alg:robustaggTS}) is greater than some threshold $y_i^p \in (\mu_i^p, \mu_*^p)$ to be defined shortly, and (2) $i_t^p = i$ and $\theta_i^p(t) \le y_i^p$.
We have
\begin{align*}
\EE \sbr{n_i(T)} = \underbrace{\EE \sbr{\sum_{t \in [T]} \sum_{p \in \Pcal_t} \indic \cbr{i_t^p = i, \theta_i^p(t) > y_i^p, \Ecal_t}}}_{(A)} \\ 
+ \underbrace{\EE \sbr{\sum_{t \in [T]} \sum_{p \in \Pcal_t} \indic \cbr{i_t^p = i, \theta_i^p(t) \le y_i^p, \Ecal_t}}}_{(B)} + \Ocal \rbr{1},
\end{align*}
where $\Ecal_t$, informally, is a high-probability ``clean'' event in which $\hat{\mu}_i^p$'s maintained by Algorithm~\ref{alg:robustaggTS} in round $t$ for each $i$ and $p$ concentrate towards their respective expected values.

Term $(A)$ can be controlled because, as more pulls of arm $i$ are made, $\cbr{\theta_i^p(t) > y_i^p}$ is unlikely to happen, as $\hat{\mu}_i^p$ concentrates towards a value smaller than $y_i^p$, and $\var_i^p$ decreases. See Lemma~\ref{lem:bounding_term_A_appendix}
in the appendix for a detailed proof.

In what follows, we focus on bounding term $(B)$. 
Observe that the event $\cbr{i_t^p = i, \theta_i^p(t) \le y_i^p}$ in $(B)$ happens only if $\forall j \in [K]$, $\theta_j^p(t) \le y_i^p$, including the optimal arm(s) for player $p$. 
Since in an $\epsilon$-MPMAB problem instance, different players may have different optimal arms, we consider a common near-optimal arm $\dagger \in \Ical_{2\epsilon}^C$---see Fact~\ref{fact:basic_facts_subpar_arms} in the appendix for the existence of such an arm. It can be easily verified that, for any arm $i \in \Ical_{10\epsilon}$ and player $p \in [M]$, $\delta_i^p := \mu_{\dagger}^p - \mu_i^p > 0$ (see Fact~\ref{fact:small_delta}). In other words, while $\dagger$ may not necessarily be an optimal arm for every player, it has a larger mean reward than any $i \in \Ical_{10\epsilon}$.
We can now define
$y_i^p := \mu_i^p + \frac12 \delta_i^p \in (\mu_i^p, \mu_\dagger^p)
\subset 
(\mu_i^p, \mu_*^p)$.

Using a technique first introduced in~\citep{an17}, we will show that $\theta_{\dagger}^p(t)$ converges to a value greater than $y_i^p$ {\em fast} enough so that $\cbr{\forall j \in [K], \theta_j^p(t) \le y_i^p}$ will unlikely happen soon enough and thus $(B)$ can be controlled.

\begin{remark}[Comparison with UCB-based analyses]
\label{rem:compare-ucb}
We note that controlling term $(B)$
is often not required in the analyses of UCB-based algorithms. Colloquially, this term concerns the event in which arm $i$ is pulled even when its sample/index value is smaller than $y_i^p$; such an event would unlikely happen for UCB-based algorithms as the {\em optimism in the face of uncertainty} principle ensures that, with high probability, the UCB index of an optimal arm for player $p$ is greater than or equal to $\mu_*^p \ge \mu_{\dagger}^p > y_i^p$. 
\end{remark}

Before we formalize the above-mentioned intuition for bounding term $(B)$ in Lemma~\ref{lem:bounding_term_B}, we first lay out a few helpful definitions. We define $\cbr{\Fcal_t}_{t=0}^{T}$ to be a filtration
such that
$\Fcal_t = \sigma \rbr{ \cbr{i_s^q, r_s^q: s\le t, q \in \Pcal_{s}}}$
is the $\sigma$-algebra generated by interactions of all players up until round $t$. 
Then, let
$
\phi_{i,t}^p = \Pr \rbr{\theta_{\dagger}^p(t) > y_i^p \mid \Fcal_{t-1}}.
$
{
Observe that if $\phi_{i,t}^p$ is large, the event $\cbr{i_t^p = i, \theta_i^p(t) \le y_i^p}$ will unlikely happen.
}

\begin{lemma}
\label{lem:bounding_term_B}
\begin{align*}
(B) \le \underbrace{\sum_{t \in [T]} \sum_{p \in \Pcal_t} \EE \sbr{  {\rbr{\frac{1}{\phi_{i,t}^p} - 1} \indic \cbr{i_t^p = \dagger, \Ecal_t} }}}_{(B*)}.
\end{align*}
\end{lemma}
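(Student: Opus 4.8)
The plan is to bound term $(B)$ by relating the event that player $p$ pulls arm $i$ with a small sample value to the event that the common near-optimal arm $\dagger$ is pulled, using the quantity $\phi_{i,t}^p = \Pr(\theta_\dagger^p(t) > y_i^p \mid \Fcal_{t-1})$ as the bridge. The intuition is that whenever $\cbr{i_t^p = i, \theta_i^p(t) \le y_i^p}$ occurs, every arm's sample (in particular $\theta_\dagger^p(t)$) must be below $y_i^p$; conditioned on the past, the probability of this is at most $1 - \phi_{i,t}^p$, whereas the event $\cbr{i_t^p = \dagger}$ can be lower-bounded in terms of $\phi_{i,t}^p$. Converting the former into the latter is exactly what produces the factor $\frac{1}{\phi_{i,t}^p} - 1$.

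First I would condition on $\Fcal_{t-1}$ and fix the active player $p \in \Pcal_t$. The key observation is that, \emph{conditioned on $\Fcal_{t-1}$}, the samples $\cbr{\theta_j^p(t)}_{j \in [K]}$ are drawn independently (line~\ref{line:draw-sample} of Algorithm~\ref{alg:robustaggTS}), and the chosen posterior parameters $\hat\mu_i^p, \var_i^p$ are $\Fcal_{t-1}$-measurable since they depend only on the counts and empirical means through round $t-1$. Thus $\phi_{i,t}^p$ is a deterministic function of $\Fcal_{t-1}$. I would then decompose the selection event by peeling off arm $\dagger$: the event $\cbr{i_t^p = i, \theta_i^p(t) \le y_i^p}$ forces $\theta_\dagger^p(t) \le y_i^p$, and I want to compare the conditional probability of this ``failure to sample $\dagger$ high'' against the conditional probability that $\dagger$ is actually selected.

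The main technical step — and the place I expect the real work to be — is the following ratio argument, which is the heart of An--Niu's technique lifted to the multi-task setting. Consider the sequence of rounds in which player $p$ is active and arm $\dagger$ has a fixed count. Between consecutive pulls of $\dagger$ by player $p$, the posterior for $\dagger$ (and hence $\phi_{i,t}^p$) does not change, because of the delayed-update design choice that only the posteriors of $p$ and $i_t^p$ are updated (line~\ref{line:invariant}). One would group the rounds of player $p$ by the current value of $n_\dagger^p$, and within each such block argue that the conditional expected number of rounds in which $\cbr{i_t^p = i, \theta_i^p(t) \le y_i^p}$ occurs before $\dagger$ is next pulled is at most $\frac{1 - \phi_{i,t}^p}{\phi_{i,t}^p} = \frac{1}{\phi_{i,t}^p} - 1$ times the indicator that $\dagger$ is eventually pulled in that block. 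Summing over blocks and players and taking expectations yields the stated bound. I would be careful that the delayed-update invariant is genuinely what keeps $\phi_{i,t}^p$ frozen across a block (so the geometric-series counting is valid), and that the clean event $\Ecal_t$ is preserved when transferring the indicator from round to round.

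In more detail, within a block where $n_\dagger^p$ is fixed, conditioning on $\Fcal_{t-1}$ the event $\cbr{i_t^p = \dagger}$ has probability at least $\phi_{i,t}^p \cdot \Pr(\forall j \ne \dagger,\ \theta_j^p(t) < \theta_\dagger^p(t))$ on the relevant sample paths, while $\cbr{i_t^p = i, \theta_i^p(t) \le y_i^p}$ requires $\theta_\dagger^p(t) \le y_i^p$; the ratio of these conditional probabilities is controlled by $\frac{1}{\phi_{i,t}^p} - 1$ because $\theta_\dagger^p(t) > y_i^p$ makes $\dagger$ a strong competitor against any arm sampled below $y_i^p$. Formally tying the per-round bound into a sum over $t \in [T]$ and $p \in \Pcal_t$ of $\EE[(\frac{1}{\phi_{i,t}^p} - 1)\indic\cbr{i_t^p = \dagger, \Ecal_t}]$ — rather than a block-by-block statement — is the cleanest target, and I would aim to state the intermediate inequality at the level of a single round $t$ (conditioned on $\Fcal_{t-1}$) so that the outer summation and tower property finish the argument directly.
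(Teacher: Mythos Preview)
Your final paragraph lands on the correct approach, and it is exactly what the paper does: a single-round conditional-probability inequality, followed by the tower property. Concretely, conditioned on $\Fcal_{t-1}$ (with $\Ecal_t$ $\Fcal_{t-1}$-measurable), the event $\cbr{i_t^p = i,\ \theta_i^p(t)\le y_i^p}$ implies $\cbr{\forall j,\ \theta_j^p(t)\le y_i^p}$; by conditional independence of the samples this factors as $(1-\phi_{i,t}^p)\cdot \Pr(\forall j\neq\dagger,\ \theta_j^p(t)\le y_i^p\mid\Fcal_{t-1})$. On the other side, $\cbr{i_t^p=\dagger}$ contains $\cbr{\theta_\dagger^p(t)>y_i^p,\ \forall j\neq\dagger,\ \theta_j^p(t)\le y_i^p}$, which has conditional probability $\phi_{i,t}^p\cdot \Pr(\forall j\neq\dagger,\ \theta_j^p(t)\le y_i^p\mid\Fcal_{t-1})$. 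Dividing gives the per-round bound; summing and taking expectations gives $(B)\le(B*)$.

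The issue with your proposal is misplacement, not correctness of ingredients. The block/geometric argument you call ``the main technical step'' --- grouping rounds by $n_\dagger^p$, invoking the delayed-update invariant so that $\phi_{i,t}^p$ is frozen within a block, and counting failures before a success --- is \emph{not} used at all in proving $(B)\le(B*)$. That machinery belongs to the \emph{next} stage of the analysis, where $(B*)$ itself is decomposed into $(b1)$, $(b2.1)$, $(b2.2)$ and the invariant property is used to rewrite the sum in $(b2.1)$ in terms of the stopping times $\tau_k(\dagger)$. For the present lemma, no invariance and no blocking are required: the per-round ratio argument is self-contained, and the clean event $\Ecal_t$ simply rides along as an $\Fcal_{t-1}$-measurable indicator. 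You should strip the block discussion from this step and save it for bounding $(b2.1)$.
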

See Lemma~\ref{lem:B*} and its proof in the appendix for details.
We now consider the following two cases:
in any round $t$ and for any active player $p$ that pulls arm $\dagger$, i.e., $i_t^p = \dagger$, $p$ uses \emph{either} the individual \emph{or} the aggregate posterior distribution associated with arm $\dagger$ (lines~\ref{line:choose-posterior-start} to \ref{line:choose-posterior-end} in Algorithm~\ref{alg:robustaggTS}). Let $H_{\dagger}^p(t)$ be the event that $p$ uses the individual posterior distribution and $\wbar{H_{\dagger}^p(t)}$ be the event that $p$ uses the aggregate posterior (see Definition~\ref{def:alg_criteron} in the appendix for the formal definitions).
We can then decompose $(B*)$ as follows:
\begin{align*}
(B*) =
\underbrace{\sum_{t \in [T]} \sum_{p \in \Pcal_t} \EE \sbr{ {\rbr{\frac{1}{\phi_{i,t}^p}-1} \indic \cbr{i_t^p = \dagger, \Ecal_t, H_{\dagger}^p(t)} }}}_{(b1)} \\
+
\underbrace{\sum_{t \in [T]} \sum_{p \in \Pcal_t} \EE \sbr{ {\rbr{\frac{1}{\phi_{i,t}^p}-1} \indic \cbr{i_t^p = \dagger, \Ecal_t, \wbar{H_{\dagger}^p(t)}} }}}_{(b2)}.
\end{align*}

Let $m_{\dagger}^p(t)$ denote the aggregate number of pulls of arm $\dagger$ maintained by player $p$ after $t$ rounds (see Definition~\ref{def:m_i^p} in the appendix). Note that, by the design choice of Algorithm~\ref{alg:robustaggTS} (line~\ref{line:invariant}), $m_{\dagger}^p(t)$ is not necessarily the same as $n_\dagger(t)$. With foresight, let $L = \Theta \rbr{\frac{\ln T}{(\Delta_i^{\min})^2} + M}$, and let $G_t^p = \cbr{i_t^p = \dagger, \Ecal_t, \wbar{H_{\dagger}^p(t)}}$. We have
\begin{align*}
& (b2) = \\
& \underbrace{\sum_{t \in [T]} \sum_{p \in \Pcal_t} \EE \sbr{ {\rbr{\frac{1 }{\phi_{i,t}^p} - 1} \indic \cbr{G_t^p, m_{\dagger}^p(t-1) < L} }}}_{(b2.1)} 
\\
& + \underbrace{\sum_{t \in [T]} \sum_{p \in \Pcal_t} \EE \sbr{ {\rbr{\frac{1}{\phi_{i,t}^p} - 1} \indic \cbr{G_t^p, m_{\dagger}^p(t-1) \geq L} }}}_{(b2.2)}.
\end{align*}

Both $(b1)$ and $(b2.2)$ can be bounded by $\Ocal \rbr{M}$, because, informally speaking, either player $p$ has pulled arm $\dagger$ many times when the individual posterior is used (term $(b1)$) or the players collectively have pulled $\dagger$ many times when the aggregate posterior is used (term $(b2.2)$), and $\frac{1}{\phi_{i,t}^p} - 1$ can therefore be upper bounded by $\frac{1}{T}$. See Lemma~\ref{lem:b1} and Lemma~\ref{lem:b22} and their proofs for details.

The main challenge in bounding $\EE\sbr{n_i(T)}$ lies in term $(b2.1)$, for which we show the following lemma.
\begin{lemma}[Bounding term $(b2.1)$]
\label{lem:bounding_term_b21}
\begin{align*}
(b2.1) \le \Ocal \rbr{L} \le \Ocal \rbr{\frac{\ln T}{(\Delta_i^{\min})^2} + M}.
\end{align*}
\end{lemma}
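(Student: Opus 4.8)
The plan is to adapt the single-task technique of~\citet{an17}—which bounds sums of the form $\sum_t \rbr{\frac{1}{\phi_{i,t}^p}-1}\indic\cbr{i_t^p=\dagger}$—to the aggregate posterior, and to pair it with a counting argument that exploits the fact that the aggregate count is shared across players. The argument has three steps. \emph{Reindexing:} first I would use the delayed-update invariant of Algorithm~\ref{alg:robustaggTS} (line~\ref{line:invariant}), which guarantees that player $p$'s aggregate posterior $\Ncal\rbr{\aggmu_\dagger^p, \aggvar_\dagger^p}$ for arm $\dagger$, and hence the exceedance probability $\phi_{i,t}^p$, stays frozen between two consecutive pulls of $\dagger$ by player $p$. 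This lets me rewrite $p$'s contribution to $(b2.1)$ as a sum over $p$'s successive pulls of $\dagger$ on the event $\wbar{H_\dagger^p(t)}$, each summand being $\EE\sbr{\rbr{\frac{1}{\phi_{i,t}^p}-1}\indic\cbr{\Ecal_t}}$ for the frozen posterior, whose variance is $\aggvar_\dagger^p = c_2/((m_\dagger^p-M)\vee 1)$ with $m_\dagger^p$ the recorded aggregate count (Definition~\ref{def:m_i^p}).

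\emph{Counting terms:} second I would bound the total number of such summands with $m_\dagger^p(t-1)<L$, across all players. Since $m_\dagger^p$ is refreshed to the global count $n_\dagger$ precisely at each of $p$'s pulls of $\dagger$, and $n_\dagger$ is nondecreasing and increases by one per pull of $\dagger$, at most $L$ pulls of $\dagger$ (by any player) occur while $n_\dagger<L$. A short argument then shows that the number of $p$'s pulls with recorded view below $L$ exceeds the number of $p$'s pulls made while $n_\dagger<L$ by at most one; summing over players gives a total of $\Ocal(L+M)=\Ocal(L)$, using $L\gtrsim M$ by the choice of $L$.

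\emph{Per-pull expectation:} third I would show that each summand is $\Ocal(1)$. Writing $\phi_{i,t}^p = Q\rbr{(y_i^p-\aggmu_\dagger^p)/\sqrt{\aggvar_\dagger^p}}$ with $Q$ the standard Gaussian tail, I split on the recorded count $m=m_\dagger^p$. When $m\gtrsim \ln T/(\delta_i^p)^2$, the clean event $\Ecal_t$ forces $\aggmu_\dagger^p \ge \mu_\dagger^p-\tfrac12\delta_i^p=y_i^p$ (the $+\epsilon$ bonus in line~\ref{line:compute_agg} absorbs the inter-task bias, and $\delta_i^p>0$ by Fact~\ref{fact:small_delta}, with $\delta_i^p=\Omega(\Delta_i^{\min})$ since $i\in\Ical_{10\epsilon}$, $\dagger\in\Ical_{2\epsilon}^C$), so $\phi_{i,t}^p\ge\tfrac12$ and the summand is $\Ocal(1)$. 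For smaller $m$ the inflated variance $\aggvar_\dagger^p$ together with the fluctuations of $\aggmu_\dagger^p$ keep $\EE\sbr{1/\phi_{i,t}^p}$ finite, provided $c_2$ is a large enough constant relative to the reward variance; this is the multi-task analogue of the $\Ocal(1)$ small-count bound in~\citet{an17}. Multiplying the uniform $\Ocal(1)$ per-term bound by the $\Ocal(L)$ count yields $(b2.1)\le\Ocal(L)$.

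The crux is the small-$m$ case of the third step. Bounding $\EE\sbr{1/\phi_{i,t}^p}$ there requires control of the fluctuations of the aggregate empirical mean $\aggmu_\dagger^p$ at the \emph{random} count $m_\dagger^p$—a stopping time produced by the adaptive, multi-player interaction—where the averaged rewards come from heterogeneous per-player distributions. This is exactly what the novel concentration inequality for multi-task data aggregation at random stopping times (Lemma~\ref{lem:novel_concentration_main_paper}) supplies, and it must be tight up to logarithmic factors: as flagged in Remark~\ref{rem:novel_concentration}, an extra logarithmic slack would enlarge the threshold $\ln T/(\delta_i^p)^2$ polynomially, inflating the final regret.
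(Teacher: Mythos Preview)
Your plan has the right ingredients—invariance, a counting argument, and the stopping-time concentration—but the way you have wired them together has a gap. You reindex per player, writing $(b2.1)$ as a sum over $p$'s successive pulls of $\dagger$, and then aim to show each such summand is $\Ocal(1)$ via Lemma~\ref{lem:novel_concentration_main_paper}. The problem is that the posterior at $p$'s $s$-th pull uses $\aggmu_\dagger(\pi_{s-1}(\dagger,p))$ and variance $c_2/((n_\dagger(\pi_{s-1})-M)\vee 1)$, and while $\pi_{s-1}(\dagger,p)$ is a stopping time, $n_\dagger(\pi_{s-1})$ has \emph{no upper bound} in terms of the per-player index $s$. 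The optional-stopping proof behind Lemma~\ref{lem:novel_concentration_main_paper} works at $\tau_k(\dagger)$ precisely because $n_\dagger(\tau_k)\in[k,k+M-1]$ is pinned down by the fixed global index $k$, which lets one choose a single $\lambda$ and avoid a union bound. At $\pi_{s-1}(\dagger,p)$ this fails: the standardized deviation $(\mu_\dagger^p-\aggmu_\dagger)\sqrt{(n_\dagger-M)\vee 1}$ is not sub-Gaussian with an $\Ocal(1)$ proxy, and Lemma~\ref{lem:sg-anticonc} does not deliver an $\Ocal(1)$ bound. The paper flags exactly this obstruction (Remark after Lemma~\ref{lem:e2.1}): tight concentration for $\aggmu$ at per-player stopping times is not available without extra $\ln T$ factors.

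The fix the paper uses is to reindex \emph{globally} before taking expectations: Lemma~\ref{lem:auxiliary_sum_f_indic} converts the double sum directly into $\sum_{p}f_1^p+\sum_{k=1}^{L-1} f_{\tau_k+1}^{p_k}\indic\cbr{\tau_k\le T}$, i.e.\ $M+(L-1)$ terms each sitting at a \emph{fixed} global pull index $k$, to which Lemma~\ref{lem:novel_concentration_main_paper} applies verbatim (this is Lemma~\ref{lem:key_lemma_constant}). Your counting step is essentially this lemma applied with $f\equiv 1$; what is missing is carrying $f_t^p=(1/\phi_{i,t}^p-1)\indic\cbr{\wbar{H_\dagger^p(t)}}$ through the same global reindexing. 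A secondary point: the large-$m$/small-$m$ split in your step~3 is unnecessary—the paper drops $\Ecal_t$ before reindexing (it is not invariant) and handles all $k$ uniformly by observing $y_i^p\le\mu_\dagger^p$ and invoking Lemma~\ref{lem:sg-anticonc} directly.
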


Proving Lemma~\ref{lem:bounding_term_b21} is {\em central} to our analysis and as we will see, requires special care. We begin by introducing the following notion.
For any arm $j \in [K]$ and $k \in [TM]$, 
let
\[
\tau_k(j) = \min \cbr {T+1, \min \cbr{t: n_j(t) \ge k}}
\]
be the round in which arm $j$ is pulled the $k$-th time by any player. Furthermore, let $\tau_0(j) = 0$ by convention.
For any $j \in [K]$ and $k \in [TM]$, it is easy to verify that $\tau_k(j)$ is a stopping time with respect to $\cbr{\Fcal_t}_{t=0}^T$.
In what follows, when circumstances permit, we abuse the notation and denote $\tau_k(\dagger)$ by $\tau_k$.

\paragraph{Invariant property.} By the construction of Algorithm~\ref{alg:robustaggTS}, in any round $t$, a player only updates the posteriors associated with an arm if the player pulls the arm in the round $t$ (line~\ref{line:invariant}). This design choice induces an invariant property: for any arm and player, certain random variables associated with them stay invariant between consecutive pulls of the arm by the player (see Definition~\ref{def:invariant} and a few examples in the appendix).

The invariant property allows us to bound $(b2.1)$ as follows in terms of the stopping times $\tau_k$'s (See Lemma~\ref{lem:b21} and Lemma~\ref{lem:auxiliary_sum_f_indic} in the appendix):
\begin{align*}
(b2.1) 
& \le \sum_{p=1}^M \EE \sbr{ \rbr{\frac{1}{\phi_{i,1}^p} - 1} \indic \cbr{\wbar{H_{\dagger}^p(1)}}} + \\
& \sum_{k=1}^{L-1} \EE \sbr{ \rbr{\frac{1}{\phi_{i, \tau_k+1}^{p_k}} - 1} \indic \cbr{\tau_k \le T, \wbar{H_{\dagger}^p(\tau_k+1)}}},
\end{align*}
where $p_k := p_k(\dagger)$ is the player that makes the $k$-th pull of arm $\dagger$ (Definition~\ref{def:p_k}).

Using basic Gaussian tail bounds, we can show that $\EE \sbr{\rbr{\frac{1}{\phi_{i,1}^p} - 1}\indic \cbr{\wbar{H_{\dagger}^p(1)}}} \le \Ocal \rbr{1}$ for any player $p$. Then, the following lemma suffices to prove Lemma~\ref{lem:bounding_term_b21}.

\begin{lemma}
\label{lem:key_lemma_constant}
For any $k \in [TM]$,
\[
\EE \sbr{\rbr{\frac{1}{\phi_{i, \tau_k+1}^{p_k}} - 1} \indic \cbr{\tau_k \le T, \wbar{H_{\dagger}^p(\tau_k+1)}}} \le \Ocal \rbr{1}.
\]
\end{lemma}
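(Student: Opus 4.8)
The lemma asks us to bound, for a *single fixed* index $k$, the expected value of $\left(\frac{1}{\phi_{i,\tau_k+1}^{p_k}} - 1\right)$ restricted to the event that the $k$-th pull of arm $\dagger$ happens within the horizon *and* that at the round immediately after this pull ($\tau_k+1$), the player uses the aggregate posterior. The point is that this per-$k$ bound is $\mathcal{O}(1)$ with no dependence on $k$ or $T$, so that summing over $k \in [L-1]$ (as done in the decomposition preceding the lemma) yields the $\mathcal{O}(L)$ bound of Lemma~\ref{lem:bounding_term_b21}.

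**Approach.** The quantity $\phi_{i,\tau_k+1}^{p_k} = \Pr(\theta_\dagger^{p_k}(\tau_k+1) > y_i^{p_k} \mid \mathcal{F}_{\tau_k})$ is the probability that a *fresh* Gaussian sample from the aggregate posterior for arm $\dagger$ exceeds the threshold $y_i^{p_k}$. On the event $\overline{H_\dagger^p(\tau_k+1)}$, the aggregate posterior is $\mathcal{N}(\aggmu_\dagger^{p_k}, \aggvar_\dagger^{p_k})$, whose mean is the all-player empirical mean of arm $\dagger$ plus the bonus $\epsilon$, and whose variance scales like $\frac{c_2}{(n_\dagger - M)\vee 1}$. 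The plan is to condition on $\mathcal{F}_{\tau_k}$ and argue that, just after the $k$-th pull, the aggregate posterior mean $\aggmu_\dagger^{p_k}$ concentrates (thanks to the clean event structure and the $+\epsilon$ bonus, which biases the aggregate mean *upward* toward $\mu_\dagger^{p_k}$) at a value comfortably exceeding $y_i^{p_k} = \mu_i^{p_k} + \tfrac12\delta_i^{p_k}$, while the aggregate standard deviation is controlled. The core estimate is a Gaussian anti-concentration / tail bound: writing $\Phi$ for the standard normal CDF, $\frac{1}{\phi} - 1 = \frac{1-\phi}{\phi}$, and one shows $\mathbb{E}\left[\frac{1}{\phi} - 1\right]$ is $\mathcal{O}(1)$ by integrating the Gaussian tail ratio, using the standard fact that for $Z \sim \mathcal{N}(m, s^2)$ with $m > y$, the reciprocal survival probability $\frac{1}{\Pr(Z > y)}$ has bounded expectation once $m - y$ is a positive multiple of $s$.

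**Key steps in order.** First, I would rewrite $\frac{1}{\phi_{i,\tau_k+1}^{p_k}} - 1$ using the exact Gaussian survival function with parameters $(\aggmu_\dagger^{p_k}, \aggvar_\dagger^{p_k})$ evaluated at $\tau_k+1$, invoking the invariant property so that these are measurable with respect to $\mathcal{F}_{\tau_k}$ (the posterior having just been updated at the $k$-th pull). Second, I would use the optional stopping / tower property to condition on $\mathcal{F}_{\tau_k}$: since $\tau_k$ is a stopping time and the relevant statistics are frozen at $\tau_k$, I can reduce to bounding a conditional expectation given the aggregate count $m_\dagger^{p_k}(\tau_k) = n_\dagger(\tau_k) \geq k$ and the empirical mean. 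Third, on the clean event I would lower-bound $\aggmu_\dagger^{p_k} - y_i^{p_k}$ by a positive quantity of order $\delta_i^{p_k}$ (using $\delta_i^p > 0$ from Fact~\ref{fact:small_delta}, the $+\epsilon$ bonus, and $\dagger \in \mathcal{I}_{2\epsilon}^C$), and upper-bound $\aggvar_\dagger^{p_k}$; combining via the Gaussian tail-ratio integral gives the $\mathcal{O}(1)$ bound. The clean event $\mathcal{E}_t$ handles the complementary low-probability regime, contributing $\mathcal{O}(1)$ as well.

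**Main obstacle.** The hard part is that $\phi_{i,\tau_k+1}^{p_k}$ is evaluated at a *random stopping time* $\tau_k$, so I cannot naively apply fixed-time concentration to the aggregate empirical mean $\aggmu_\dagger^{p_k}$: the aggregated data $\frac{1}{n_\dagger}\sum_{s,q}\indic\{i_s^q = \dagger\}r_s^q$ is a sum over an *adaptively chosen, random set* of player-round pairs, stopped at $\tau_k$. This is exactly where the novel concentration inequality for multi-task data aggregation at random stopping times (Lemma~\ref{lem:novel_concentration_main_paper}) must be invoked — it is what guarantees that, at the stopping time $\tau_k$, the aggregate mean concentrates within the clean event without losing extra logarithmic factors, which (per Remark~\ref{rem:novel_concentration}) would otherwise blow up the regret polynomially. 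I expect the delicate bookkeeping to be ensuring the clean event $\mathcal{E}_{\tau_k+1}$ is defined consistently with this stopping-time concentration and that the invariant property correctly freezes $\aggmu_\dagger^{p_k}$ and $\aggvar_\dagger^{p_k}$ between the $k$-th and $(k{+}1)$-th pulls, so that the bound genuinely depends only on $k$ through the count and not on the full horizon.
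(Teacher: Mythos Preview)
Your proposal correctly identifies the key ingredients---the Gaussian survival representation of $\phi_{i,\tau_k+1}^{p_k}$, the invariant property, and the need for the stopping-time concentration inequality (Lemma~\ref{lem:novel_concentration_main_paper})---but the way you plan to assemble them has a genuine gap.

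Your Step~3 says that on the clean event you will lower-bound $\aggmu_\dagger^{p_k} - y_i^{p_k}$ by a positive quantity of order $\delta_i^{p_k}$, and then conclude via ``$m-y$ is a positive multiple of $s$.'' This fails when $k$ is small. For $k$ on the order of a constant (or more generally $k \ll \ln T / (\delta_i^{p_k})^2$), the clean-event radius $\sqrt{10\ln T/((n_\dagger(\tau_k)-M)\vee 1)}$ can dwarf $\delta_i^{p_k}$, so there is no positive margin to extract. Since the lemma must hold uniformly for every $k\in[TM]$ with an $\mathcal{O}(1)$ constant, a per-$k$ clean-event margin argument cannot close. Relatedly, note that the statement of the lemma carries no indicator of $\Ecal_t$; the clean event was already dropped upstream in the proof of Lemma~\ref{lem:b21}, and the paper's argument for the present lemma does not use it at all.

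What the paper does instead is to first use the deterministic inequality $y_i^{p_k} \le \mu_\dagger^{p_k}$ (which removes $\delta_i^{p_k}$ from the picture entirely) together with $\aggmu_\dagger^{p_k}(\tau_k)=\aggmu_\dagger(\tau_k)$ and $m_\dagger^{p_k}(\tau_k)=n_\dagger(\tau_k)$ to bound the quantity of interest by
\[
\EE\!\left[\frac{1}{\bPhi\!\big((\mu_\dagger^{p_k}-\aggmu_\dagger(\tau_k))\sqrt{((n_\dagger(\tau_k)-M)\vee 1)/4}\big)}\,\indic\{\tau_k\le T\}\right].
\]
Then the novel concentration result is used \emph{as a full tail bound}, i.e.\ $\Pr(X\le -z,\ \tau_k\le T)\le 2e^{-2z^2}$ for all $z\ge 0$ where $X$ is the normalized deviation, not merely to assert one high-probability event. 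This tail is fed into the auxiliary Lemma~\ref{lem:sg-anticonc}, which integrates $1/\bPhi(-X)$ against a $2e^{-2z^2}$ tail using the Gaussian anti-concentration lower bound $\bPhi(z)\ge \frac{1}{\sqrt{2\pi}}\frac{z}{z^2+1}e^{-z^2/2}$; since $e^{z^2}\cdot e^{-2z^2}$ is integrable, one gets an absolute $\mathcal{O}(1)$ constant regardless of $k$. The missing idea in your plan is precisely this replacement of a single clean-event margin by a tail integration that works even when the posterior has only a handful of samples.
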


\paragraph{Technical highlight.} Lemma~\ref{lem:key_lemma_constant} generalizes \citet[Lemma 2.13]{an17} for standard TS to the {\em multi-task} setting.
A complete proof can be found in the appendix,
which uses anti-concentration bounds of Gaussian random variables \citep{gordon1941millsratio} as well as a {\em novel} concentration inequality for multi-task data aggregation at random stopping times $\tau_k(\dagger)$'s, which we highlight here\footnote{In the single-task case ($M=1$), our proof technique (Lemma~\ref{lem:sg-anticonc}) also simplifies the proof of the first case of~\citet[Lemma 2.13]{an17}.}.
For any arm $j$, let
\[
\aggmu_j(t) = \frac{1}{n_j(t) \vee 1} \sum_{s \le t} \sum_{ q \in \Pcal_s } \indic \cbr{i_s^q = j} r_s^q +  \epsilon
\]
be the aggregate mean reward estimate of $j$ constructed using data by all players after $t$ rounds, offset by $\epsilon$.

\begin{lemma}
\label{lem:novel_concentration_main_paper}
For any arm $j \in [K]$ and $k \in [TM] \cup \cbr{0}$, 
denote by $\tau_k = \tau_k(j)$. Then, 
for any $\delta \in (0,1]$, with probability at least $1 - \delta$, one of the following events happens:
\begin{enumerate}
    \item $\tau_k = T+1$;
    \item 
    $\forall p \in [M],\ \mu_j^{p} - \aggmu_j(\tau_k) \le \sqrt{\frac{2 \ln \rbr{\frac{2}{\delta}}}{ \rbr{n_j(\tau_k) -M} \vee 1}}.$
\end{enumerate}
\end{lemma}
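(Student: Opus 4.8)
The plan is to reduce the stated dichotomy to a one-sided deviation bound for a martingale of centered rewards, and then to control that deviation at the random stopping time $\tau_k$ via a stopped exponential supermartingale whose free parameter $\lambda$ is chosen \emph{deterministically} as a function of the fixed index $k$. Fix $j$, $k$, and $\delta$. If $\tau_k = T+1$ we are in the first case, so assume $\tau_k \le T$ and write $N = n_j(\tau_k)$. Enumerate the $N$ pulls of arm $j$ made up to and including round $\tau_k$ in chronological order (breaking within-round ties by player index); let the $\ell$-th such pull be made by player $q_\ell$ and yield reward $X_\ell \in [0,1]$. For $N \ge 1$ and any player $p$, writing $\aggmu_j(\tau_k) = \frac{1}{N}\sum_{\ell=1}^N X_\ell + \epsilon$, I would split $\mu_j^p - X_\ell = (\mu_j^p - \mu_j^{q_\ell}) + (\mu_j^{q_\ell} - X_\ell)$ and invoke the reward-dissimilarity assumption $\abr{\mu_j^p - \mu_j^{q_\ell}} \le \epsilon$; the per-sample bias $\frac1N\sum_\ell(\mu_j^p - \mu_j^{q_\ell}) \le \epsilon$ then exactly cancels the $+\epsilon$ offset, leaving
\[
\mu_j^p - \aggmu_j(\tau_k) \;\le\; \frac{1}{N}\sum_{\ell=1}^N \rbr{\mu_j^{q_\ell} - X_\ell} \;=\; -\frac{S_N}{N}, \qquad S_n := \sum_{\ell=1}^n \rbr{X_\ell - \mu_j^{q_\ell}}.
\]
Thus it suffices to show $-S_N \le N\sqrt{2\ln(2/\delta)/((N-M)\vee 1)}$ with probability $\ge 1-\delta$.

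\textbf{Two structural facts.} Since $\tau_k$ is the first round whose cumulative count reaches $k$ and at most $M$ players act per round, the count overshoots by at most $M$: $k \le N \le k + M - 1$, hence $N - M < k \le N$. Moreover, if $N \le M$ then $(N-M)\vee 1 = 1$ and the claim is immediate and deterministic, because $\aggmu_j(\tau_k) \ge \epsilon \ge 0$ forces $\mu_j^p - \aggmu_j(\tau_k) \le \mu_j^p \le 1 \le \sqrt{2\ln(2/\delta)}$ (using $\delta \le 1$). So only the regime $N > M$, where $(N-M)\vee 1 = N - M$ and $k \ge 2$, requires a concentration argument.

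\textbf{Stopped supermartingale with a deterministic $\lambda$.} Define the round-indexed martingale $T_t = \sum_{s \le t}\sum_{q \in \Pcal_s}\indic\cbr{i_s^q = j}(r_s^q - \mu_j^q)$ with respect to $\cbr{\Fcal_t}$, so that $T_{\tau_k} = S_N$, and, for $\lambda > 0$, the process $W_t = \exp\rbr{-\lambda T_t - \tfrac{\lambda^2}{2} n_j(t)}$. Conditioning on $\Fcal_{t-1}$ and the within-round actions (which are functions of $\Fcal_{t-1}$ and the reward-independent Thompson-sampling draws), the round-$t$ increment is a sum of conditionally independent centered rewards in $[0,1]$, so Hoeffding's lemma bounds its moment generating function; because the compensator increment $n_j(t) - n_j(t-1)$ counts exactly the $\le M$ arm-$j$ pulls revealed in round $t$, it offsets this bound and $W_t$ is a supermartingale with $W_0 = 1$. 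Applying Ville's maximal inequality at level $2/\delta$ gives, with probability $\ge 1 - \delta/2$, $-T_t \le \frac{\ln(2/\delta)}{\lambda} + \frac{\lambda}{2} n_j(t)$ for all $t$ simultaneously; evaluating at $t = \tau_k$ yields $-S_N \le \frac{\ln(2/\delta)}{\lambda} + \frac{\lambda}{2} N$. Choosing $\lambda = \sqrt{2\ln(2/\delta)/k}$ (legitimate because $k$ is fixed) and using $k \le N$ together with $N - M < k$ reduces the desired inequality to the elementary $(k+N)\sqrt{N-M} \le 2N\sqrt{k}$, which holds since $k + N \le 2N$ and $\sqrt{N-M} < \sqrt{k}$.

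\textbf{Main obstacle.} The crux is controlling the deviation at the \emph{random} number of samples $N = n_j(\tau_k)$ without paying extra logarithmic factors. Optimizing $\lambda$ pointwise would require $\lambda \propto 1/\sqrt{N-M}$, which is inadmissible because $\lambda$ must be fixed for $W_t$ to be a supermartingale; a union bound over the $M$ admissible values of $N$ would inflate the rate to one scaling with $\sqrt{\ln(2M/\delta)}$. The resolution exploits that the lemma fixes the \emph{target} count $k$: a single $\lambda$ tuned to $k$ works for every realized $N \in \cbr{k,\dots,k+M-1}$ precisely because the deliberate $-M$ offset in the denominator (mirrored by $\aggvar_i^p = c_2/((n_i-M)\vee 1)$ in Algorithm~\ref{alg:robustaggTS}) absorbs the concurrency overshoot. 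This is the technical novelty relative to the single-task analysis of~\citet{an17}.
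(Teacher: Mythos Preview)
Your proof is correct and follows essentially the same approach as the paper's: reduce via the $\epsilon$-dissimilarity assumption to a centered-reward martingale, build an exponential supermartingale with a \emph{deterministic} $\lambda$ tuned to the fixed index $k$, and exploit the overshoot bound $k \le n_j(\tau_k) \le k+M-1$ together with the $-M$ offset in the denominator to close the argument. The only cosmetic differences are that the paper uses optional stopping plus Markov's inequality (you use Ville's maximal inequality, which is equivalent here), and the paper uses the tight Hoeffding compensator $\lambda^2/8$ per sample rather than your conservative $\lambda^2/2$---your choice still yields a valid supermartingale and leaves exactly enough slack for the final elementary inequality $(k+N)\sqrt{N-M}\le 2N\sqrt{k}$.
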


\begin{remark}
\label{rem:novel_concentration}
We note that Lemma~\ref{lem:novel_concentration_main_paper} is critical to the tight performance guarantee in Lemma~\ref{lem:key_lemma_constant} and subsequently the near-optimal regret guarantees. This result is non-trivial, as it is a concentration bound for a sequence of random variables whose length, $n_j(\tau_k(j))$, is also a random variable. Furthermore, since $\tau_k(j)$ is the round in which arm $j$ is pulled the $k$-th time by any player, $n_j(\tau_k(j))$ can potentially take any integer value in $[k, k+M-1]$ because there can be up to $M$ pulls of arm $j$ in round $\tau_k(j)$. We note that using the Azuma-Hoeffding inequality together with a union bound or Freedman's inequality (similar to \citealp[Lemma 17]{wzsrc21}) can lead to extra $\Ocal \rbr{M}$ or $\Ocal \rbr{\ln T}$ terms for Lemma~\ref{lem:key_lemma_constant}, respectively (see Remark~\ref{rem:comparison_freedman_azuma} in the appendix for details).
\end{remark}

To our best knowledge, we are not aware of any similar tight concentration bounds for data aggregation in multi-task bandits, and our technique may be of independent interest for analyzing other multi-task sequential learning problems.

\begin{figure*}[t]
    \centering
    \begin{subfigure}{\textwidth}
        \centering
        \begin{subfigure}{.32\textwidth}
            \centering
            \includegraphics[height=0.7\linewidth]{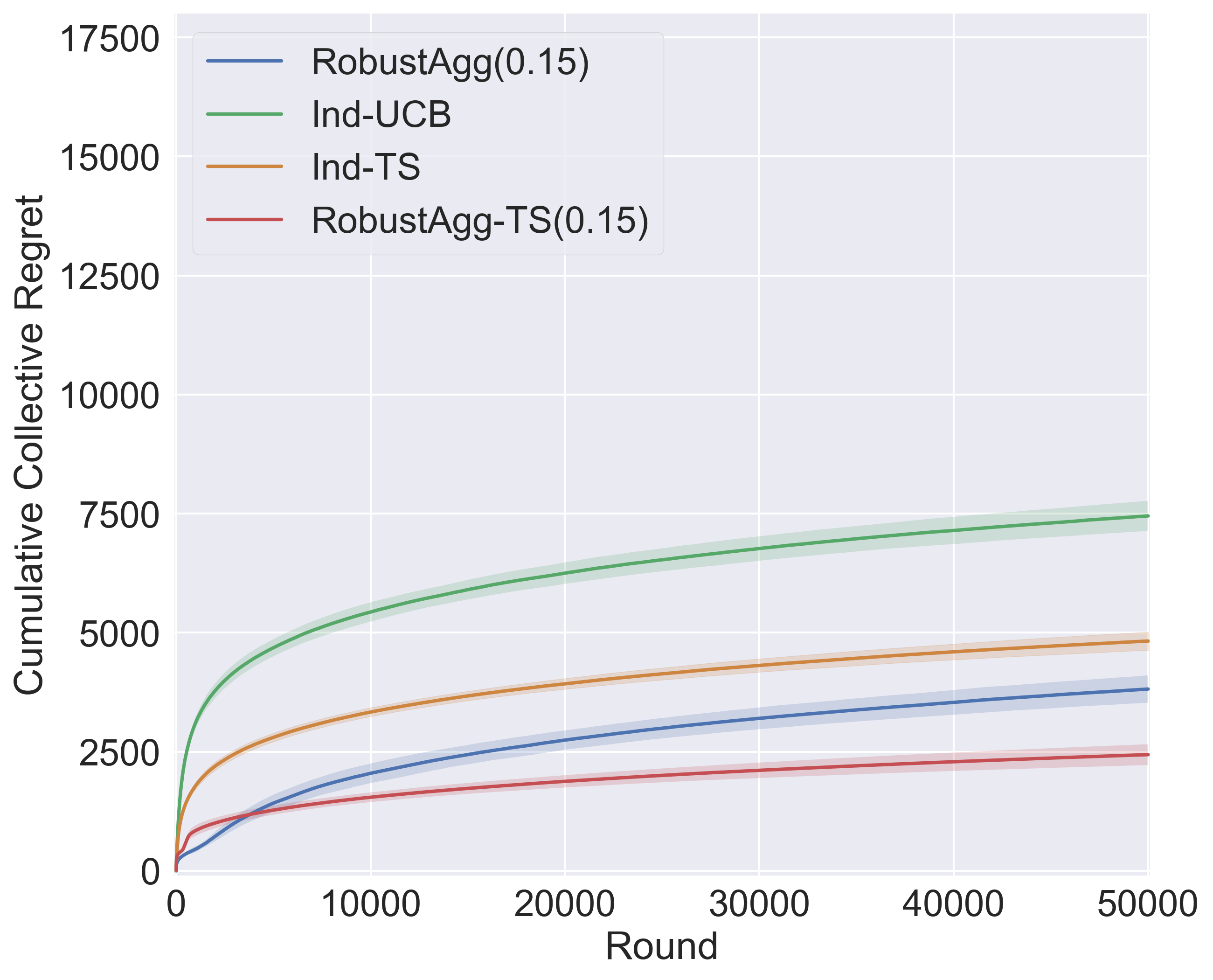}
        \end{subfigure}
        \begin{subfigure}{.32\textwidth}
            \centering
            \includegraphics[height=0.7\linewidth]{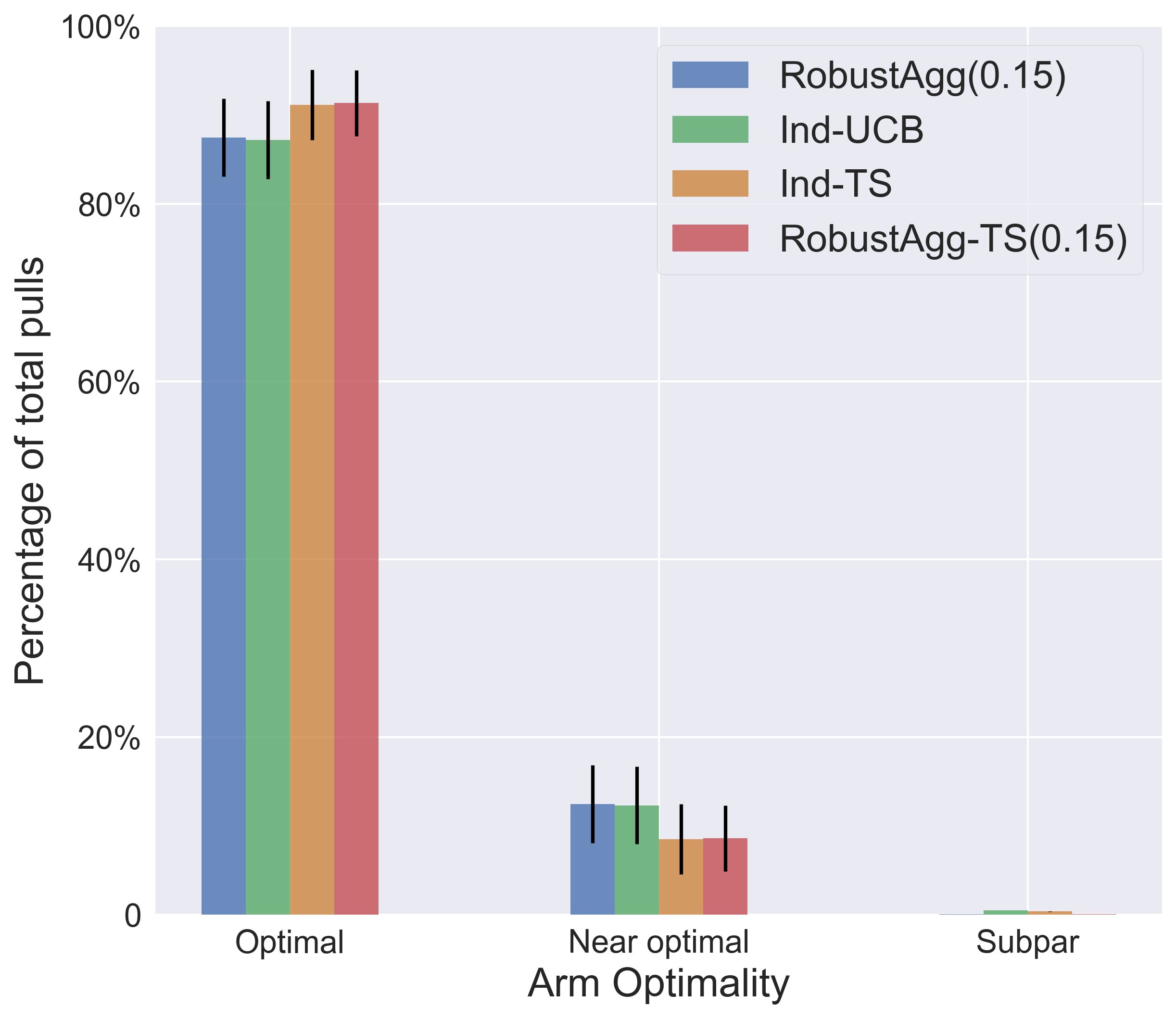}
        \end{subfigure}
        \begin{subfigure}{.32\textwidth}
            \centering
            \includegraphics[height=0.7\linewidth]{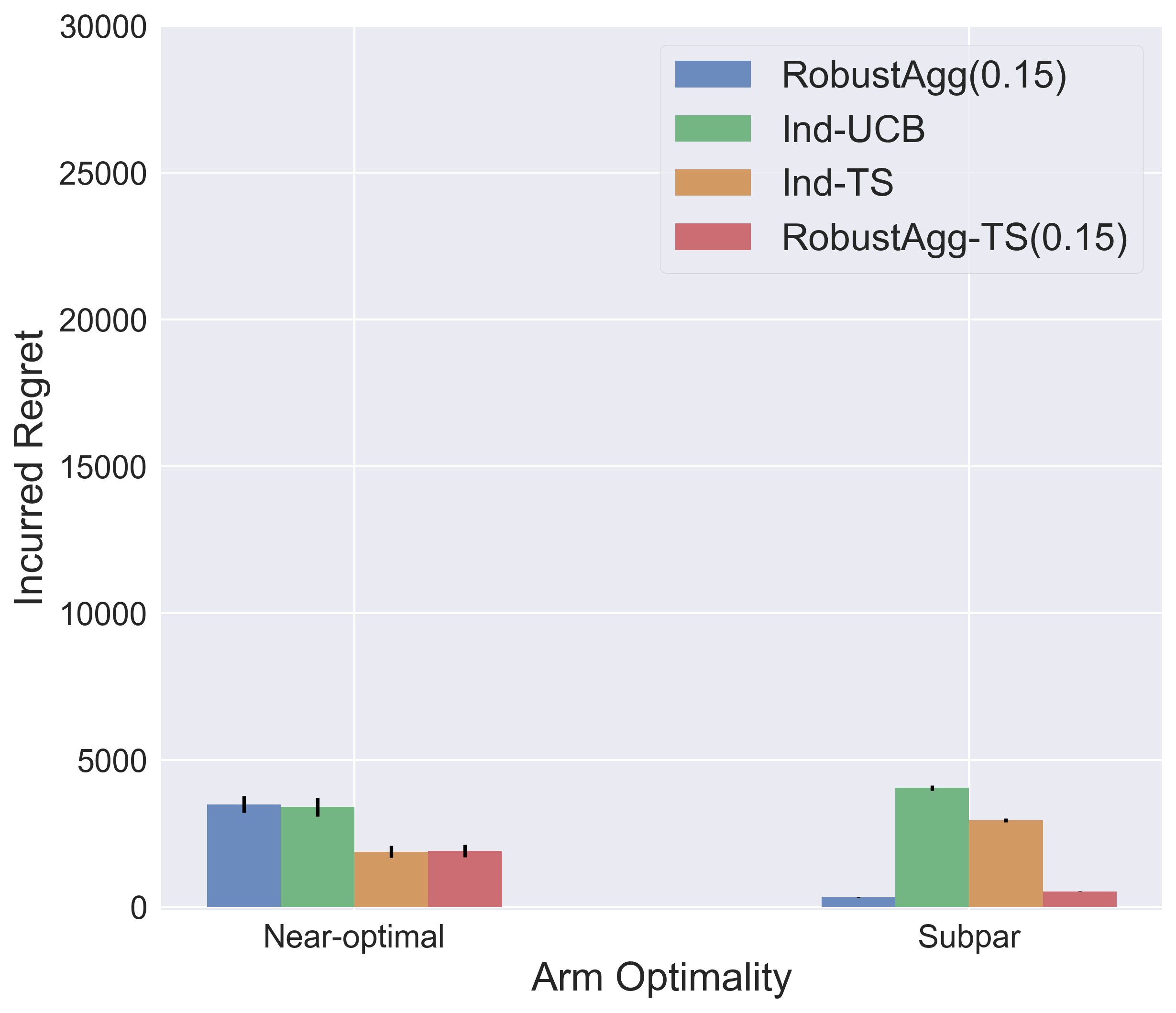}
        \end{subfigure}
        \caption{$\abr{\Ical_{5\epsilon}} = 8$}
        \label{figure:exp_Ical_8_cumulative}
    \end{subfigure}
    \begin{subfigure}{\textwidth}
        \centering
        \begin{subfigure}{.32\textwidth}
            \centering
            \includegraphics[height=0.7\linewidth]{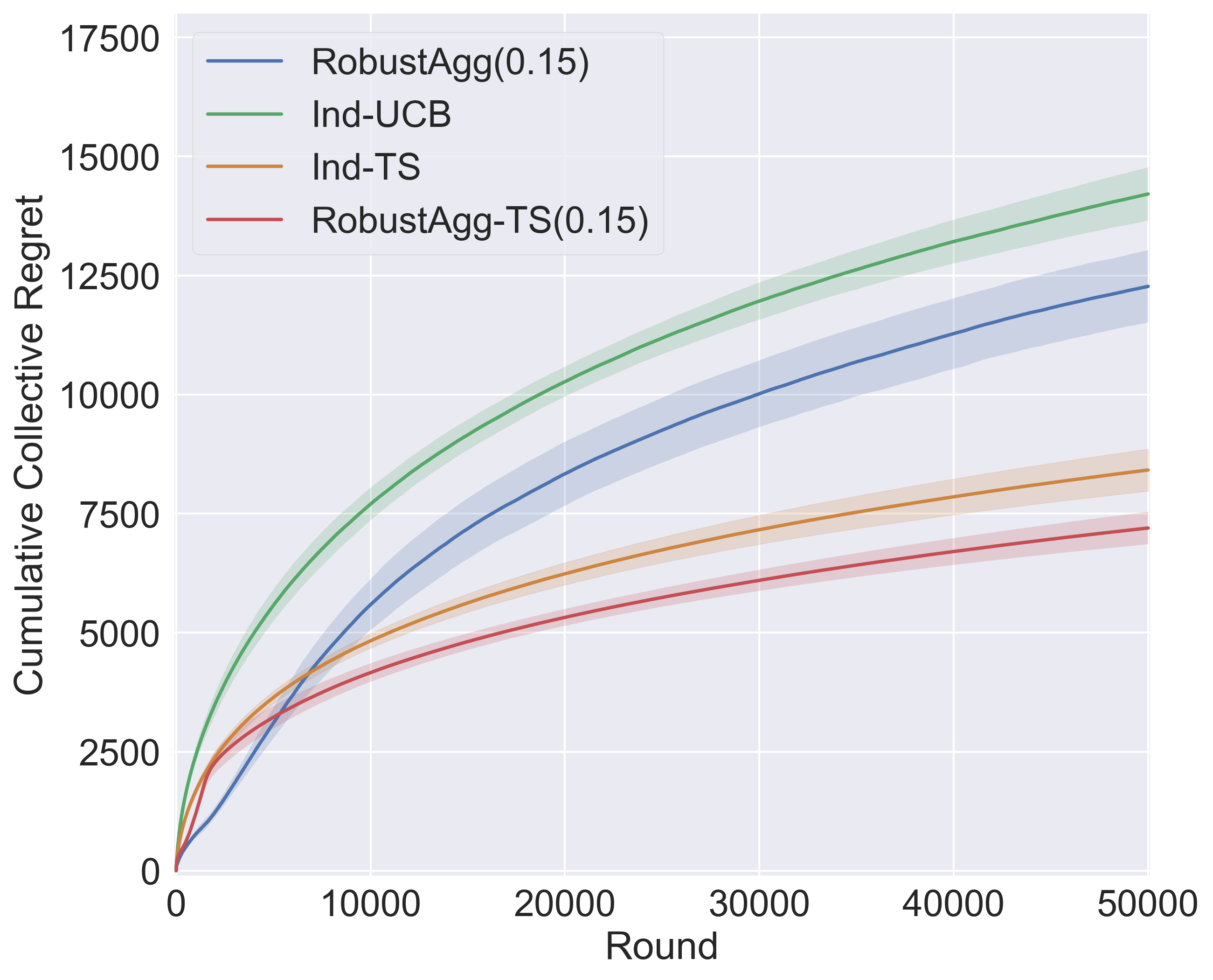}
        \end{subfigure}
        \begin{subfigure}{.32\textwidth}
            \centering
            \includegraphics[height=0.7\linewidth]{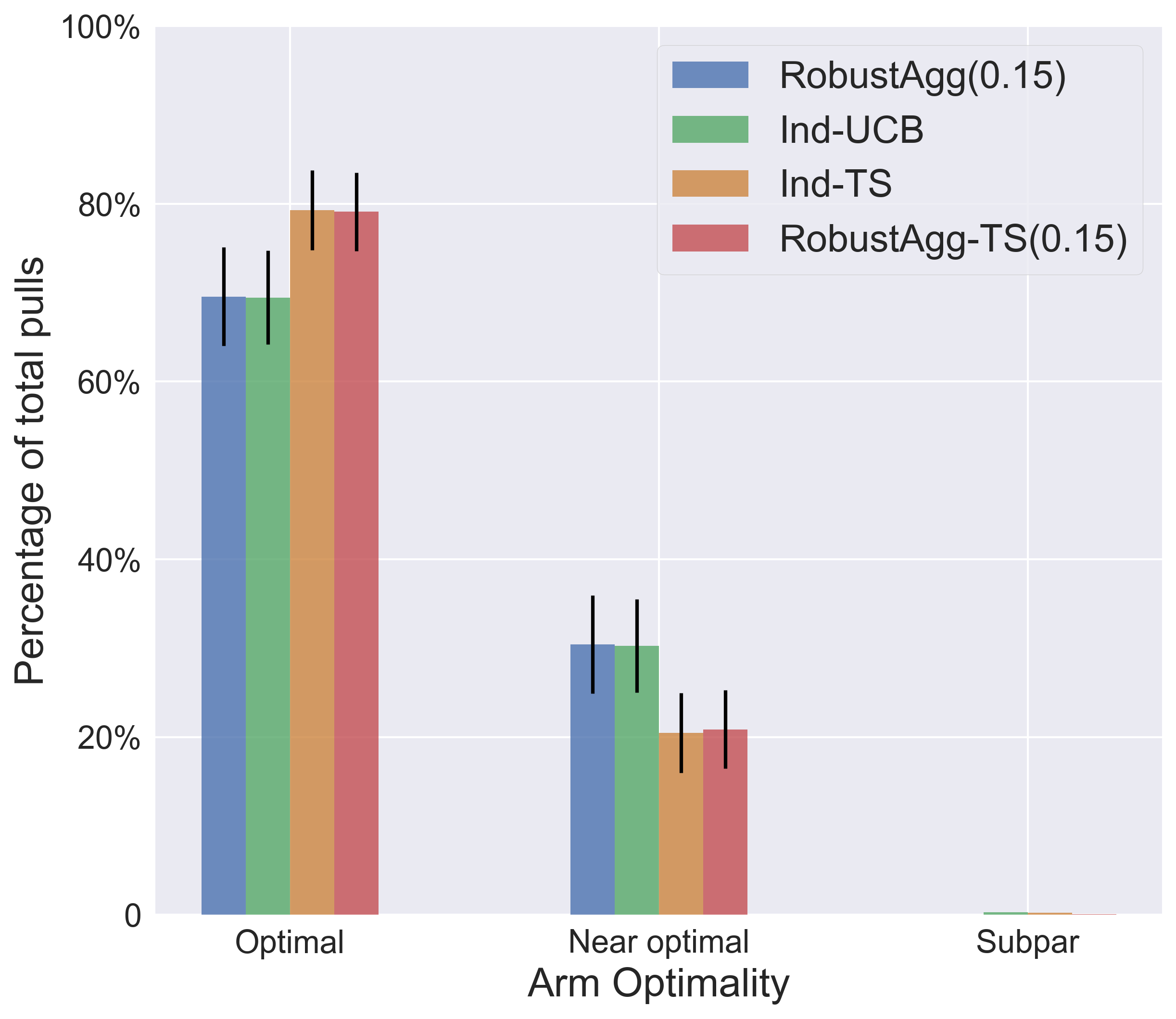}
        \end{subfigure}
        \begin{subfigure}{.32\textwidth}
            \centering
            \includegraphics[height=0.7\linewidth]{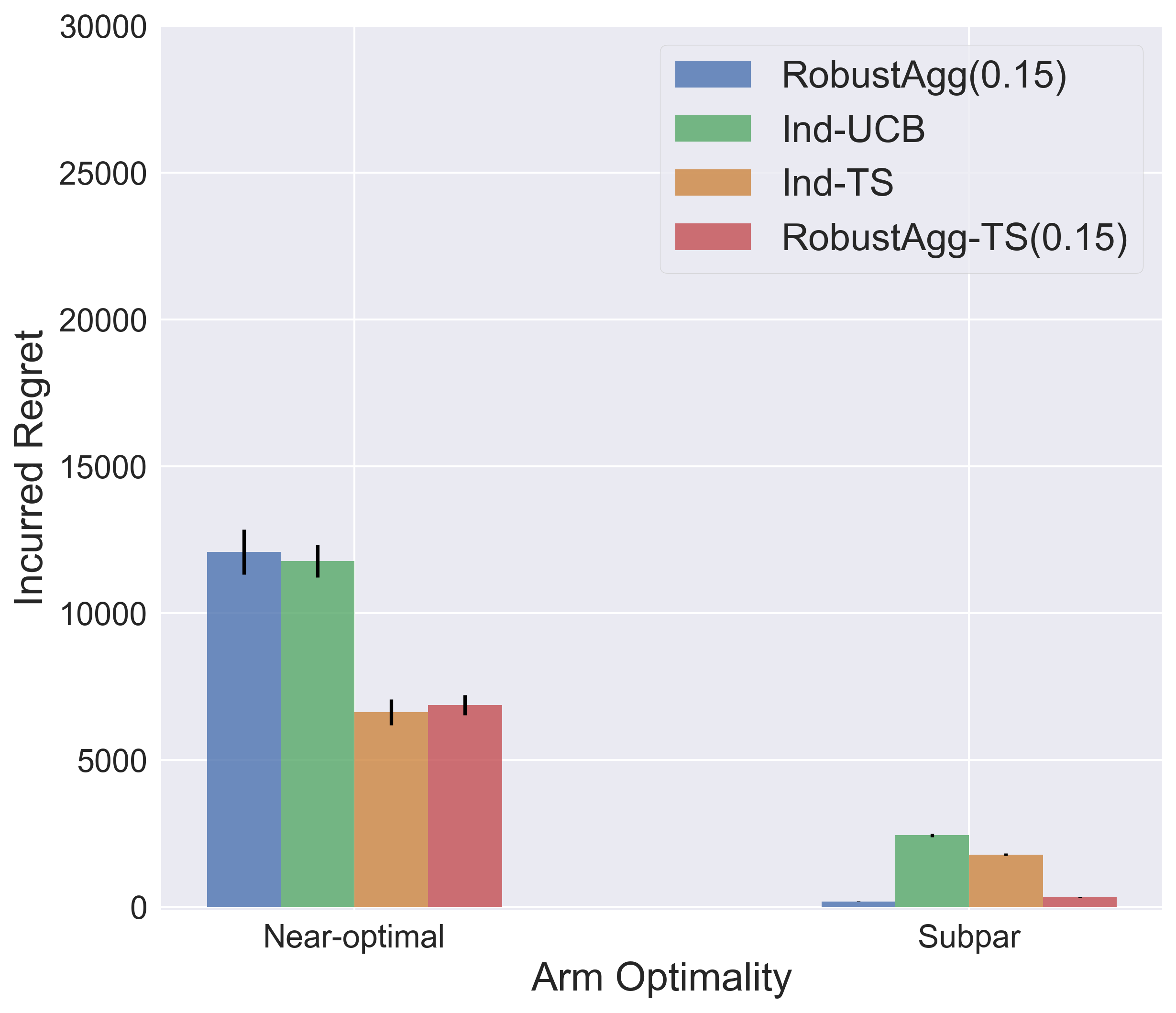}
        \end{subfigure}
        \caption{$\abr{\Ical_{5\epsilon}} = 5$}
        \label{figure:exp_Ical_5_cumulative}
    \end{subfigure}
    \caption{
    Compares the average performance of the algorithms on $30$ randomly generated problem instances with $\abr{\Ical_{5\epsilon}} = 8$ and $\abr{\Ical_{5\epsilon}} = 5$ in a horizon of $T = 50000$ rounds. Figures in the left column plot the cumulative collective regret over time; figures in the middle column demonstrate the percentages of pulls of optimal arms, non-subpar yet non-optimal arms (referred to as near-optimal arms), and subpar arms; figures in the right column then show the incurred cumulative regret by arm optimality.
    }
    \label{figure:experiment1}
\end{figure*}

\section{Related Work}
\label{sec:related_work}

There exist many prior works that study multi-player or multi-task bandits with heterogeneous reward distributions.
For example, \citet{cgz13} use Laplacian-based regularization to learn a network of bandit problem instances such that connected problems have similar parameters;
\citet{glz14}, among others, study clustering of bandit problem instances.
The $\epsilon$-MPMAB problem studied in this paper is introduced by \citet{wzsrc21};
see Appendix A thereof for a detailed comparison with related work.
More recently, \citet{zhang2021provably} generalize the $\epsilon$-MPMAB problem to episodic, tabular Markov decision processes. 
We note that while the methods in the above-mentioned works are UCB-based, we study TS-type algorithms in this work.

TS is initially proposed by \citet{thompson1933likelihood} decades ago, but its frequentist analysis has not emerged until recent years \citep[e.g.,][]{agrawal2012analysis,kaufmann2012thompson}. \citet{jin2021mots} present the first minimax optimal TS-type algorithm.
Our proof techniques in this paper are mostly inspired by the work of \citet{an17}.

TS algorithms have been studied in multi-task Bayesian bandits.
For example, several recent works study the setting of interacting with a sequence of $M$ bandit problem instances (tasks) sampled from a common, unknown prior distribution, with a goal of minimizing the $M$-instance Bayesian regret \citep{bastani2021meta,kveton21a,peleg2021metalearning,basu2021no}.
The recent work of \citet{hong2021hierarchical} proposes a hierarchical Bayesian bandit problem that generalizes many multi-task bandit settings, and analyzes the Bayes regret.
In contrast, we use {frequentist} regret as our performance metric, and we do not assume a shared prior distribution over the players' problem instances/tasks.
\citet{wan2021metadatabased} study multi-task TS in a hierarchical Bayesian model and assume knowledge of metadata of each task; while they provide a frequentist regret bound, we study the $\epsilon$-MPMAB problem which models task relations differently.

Similar models on {\em sequential} transfer between problem instances have also been studied by \citet{alb13} and \citet{soaremulti}. \citet{zhang2017transfer,zailn19,sbss20} investigate warm-starting bandits from misaligned data.
In this work, we focus on a more general interaction protocol, under which the players may interact with the environment concurrently.

\section{Empirical Evaluation}
\label{sec:experiments}

In this section, we present an empirical evaluation of $\robustaggTS(\epsilon)$ on synthetic data\footnote{Our code is available at \url{https://github.com/zhiwang123/eps-MPMAB-TS}.}. We focus on the concurrent setting ($\Pcal_t = [M]$ for all $t$), which is the setting studied in the experiments of \citep{wzsrc21}. Our goal is to address the following two questions:

(1) How does $\robustaggTS(\epsilon)$ perform in comparison with the UCB-based algorithm, $\robustagg(\epsilon)$, and the baseline algorithms without transfer learning? 

(2) Does the notion of subpar arms characterize the performance of the algorithms in practice?

\paragraph{Experimental Setup.} We compared the performance of 4 algorithms: (1) $\robustaggTS(\epsilon)$ with constants $c_1 = \frac12$ and $c_2 = 1$; (2) $\robustagg(\epsilon)$ \citep[Section 6.1]{wzsrc21}; (3) $\indts$, the baseline algorithm that runs TS with Gaussian priors for each player individually; and (4) $\inducb$, the baseline algorithm that runs UCB-1 for each player individually. 

The algorithms were evaluated on randomly generated $0.15$-MPMAB problem instances with different numbers of subpar arms. To stay consistent with the work of \citet{wzsrc21}, we followed the same instance generation procedure and considered $\Ical_{5\epsilon}$ to be the set of subpar arms---we set the number of players $M = 20$ and the number of arms $K = 10$; then, for each integer value $v \in [0, 9]$, we generated $30$ $0.15$-MPMAB problem instances with Bernoulli reward distributions and $\abr{\Ical_{5\epsilon}} = v$. We ran the algorithms on each instance for a horizon of $T = 50,000$ rounds. %

\paragraph{Results and Discussion.} Figure~\ref{figure:experiment1} compares the average performance of the algorithms on instances with $\abr{\Ical_{5\epsilon}} = 8$ and $5$. We defer the rest of the results to Appendix~\ref{sec:appendix_exp}.

From the left column, we first observe that, while the UCB-based algorithm, $\robustagg(\epsilon)$, outperforms its counterpart, $\inducb$, in the cumulative collective regret ($\sum_{t \in [T]} \sum_{p \in \Pcal_t} \mu_*^p - \mu_{i_t^p}^p$), its empirical performance is underwhelming in comparison with TS algorithms. In particular, even on instances with half of the arms {\em subpar} ($\abr{\Ical_{5\epsilon}} = 5$), $\robustagg(\epsilon)$ is outperformed by the $\indts$ baseline without transfer learning. Importantly, we note that $\robustaggTS(\epsilon)$ shows a superior performance than the other algorithms.

The figures in the middle and right columns illustrate the arm selection of each algorithm. We categorize all arms into three groups: optimal arms, subpar arms, and near-optimal arms which are neither subpar nor optimal. Comparing the TS-type algorithms with the UCB-based algorithms, we observe that the former algorithms perform better mainly because they pull near-optimal arms a smaller number of times and incur less regret on these arms. 

Furthermore, we observe that $\robustagg(\epsilon)$ and $\robustaggTS(\epsilon)$, when compared with their counterparts ($\inducb$ and $\indts$, respectively), incur a similar amount of regret from near-optimal arms. Meanwhile, they make fewer pulls on subpar arms. This may be less obvious from the plots on the percentage of total pulls because none of the algorithms pull subpar arms extensively over the horizon. However, since the suboptimality gaps of subpar arms are large, we see from the figures in the right column that $\robustagg(\epsilon)$ and $\robustaggTS(\epsilon)$ incur far less regret on subpar arms. These results thereby demonstrate that the notion of subpar arms can capture the amenability of transfer learning in subpar arms but not near-optimal arms. 

In addition, the results show that, empirically, our proposed algorithm $\robustaggTS(\epsilon)$ can robustly leverage transfer for arms in $\Ical_{5\epsilon} \supseteq \Ical_{10\epsilon}$---this suggests that our upper bounds may be improved; we leave this as future work.

\section{Conclusion}
In this work, we studied transfer learning in multi-task bandits under the framework of a generalized version of the $\epsilon$-MPMAB problem \citep{wzsrc21}. 
We proposed a TS-type algorithm, $\robustaggTS(\epsilon)$, which can robustly leverage auxiliary data collected for other tasks.
We showed that $\robustaggTS(\epsilon)$ is empirically superior when evaluated on synthetic data, and also near-optimal in gap-dependent and gap-independent frequentist guarantees. 
In our analysis, we also proved a novel concentration inequality for multi-task data aggregation, which can be of independent interest in the analysis of other multi-task online learning problems.
For future work, we are interested in improving the lower-order terms in our regret bounds and evaluating our algorithm in real-world applications.

\section{Acknowledgements}
We thank Geelon So for insightful discussions.
ZW and KC thank the National Science Foundation under IIS 1915734 and CCF 1719133 for research support. 
CZ acknowledges startup funding support from the University of Arizona. We also thank the anonymous reviewers for their constructive feedback.

\bibliography{ref}

\begin{thebibliography}{27}
\providecommand{\natexlab}[1]{#1}
\providecommand{\url}[1]{\texttt{#1}}
\expandafter\ifx\csname urlstyle\endcsname\relax
  \providecommand{\doi}[1]{doi: #1}\else
  \providecommand{\doi}{doi: \begingroup \urlstyle{rm}\Url}\fi

\bibitem[Agrawal \& Goyal(2012)Agrawal and Goyal]{agrawal2012analysis}
Agrawal, S. and Goyal, N.
\newblock Analysis of thompson sampling for the multi-armed bandit problem.
\newblock In \emph{Conference on learning theory}, pp.\  39--1. JMLR Workshop
  and Conference Proceedings, 2012.

\bibitem[Agrawal \& Goyal(2017)Agrawal and Goyal]{an17}
Agrawal, S. and Goyal, N.
\newblock Near-optimal regret bounds for thompson sampling.
\newblock \emph{J. ACM}, 64\penalty0 (5), sep 2017.
\newblock ISSN 0004-5411.
\newblock \doi{10.1145/3088510}.
\newblock URL \url{https://doi.org/10.1145/3088510}.

\bibitem[Auer et~al.(2002)Auer, Cesa-Bianchi, and Fischer]{acf02}
Auer, P., Cesa-Bianchi, N., and Fischer, P.
\newblock Finite-time analysis of the multiarmed bandit problem.
\newblock \emph{Machine learning}, 47\penalty0 (2-3):\penalty0 235--256, 2002.

\bibitem[Azar et~al.(2013)Azar, Lazaric, and Brunskill]{alb13}
Azar, M.~G., Lazaric, A., and Brunskill, E.
\newblock Sequential transfer in multi-armed bandit with finite set of models.
\newblock In Burges, C. J.~C., Bottou, L., Welling, M., Ghahramani, Z., and
  Weinberger, K.~Q. (eds.), \emph{Advances in Neural Information Processing
  Systems 26}, pp.\  2220--2228. Curran Associates, Inc., 2013.

\bibitem[Bastani et~al.(2021)Bastani, Simchi-Levi, and Zhu]{bastani2021meta}
Bastani, H., Simchi-Levi, D., and Zhu, R.
\newblock Meta dynamic pricing: Transfer learning across experiments.
\newblock \emph{Management Science}, 2021.

\bibitem[Basu et~al.(2021)Basu, Kveton, Zaheer, and Szepesv{\'a}ri]{basu2021no}
Basu, S., Kveton, B., Zaheer, M., and Szepesv{\'a}ri, C.
\newblock No regrets for learning the prior in bandits.
\newblock \emph{arXiv preprint arXiv:2107.06196}, 2021.

\bibitem[Ben-David et~al.(2010)Ben-David, Blitzer, Crammer, Kulesza, Pereira,
  and Vaughan]{bbckpv10}
Ben-David, S., Blitzer, J., Crammer, K., Kulesza, A., Pereira, F., and Vaughan,
  J.~W.
\newblock A theory of learning from different domains.
\newblock \emph{Machine learning}, 79\penalty0 (1-2):\penalty0 151--175, 2010.

\bibitem[Cesa-Bianchi et~al.(2013)Cesa-Bianchi, Gentile, and Zappella]{cgz13}
Cesa-Bianchi, N., Gentile, C., and Zappella, G.
\newblock A gang of bandits.
\newblock In \emph{Advances in Neural Information Processing Systems}, pp.\
  737--745, 2013.

\bibitem[Chapelle \& Li(2011)Chapelle and Li]{chapelle2011empirical}
Chapelle, O. and Li, L.
\newblock An empirical evaluation of thompson sampling.
\newblock \emph{Advances in neural information processing systems},
  24:\penalty0 2249--2257, 2011.

\bibitem[Gentile et~al.(2014)Gentile, Li, and Zappella]{glz14}
Gentile, C., Li, S., and Zappella, G.
\newblock Online clustering of bandits.
\newblock In \emph{International Conference on Machine Learning}, pp.\
  757--765, 2014.

\bibitem[Gordon(1941)]{gordon1941millsratio}
Gordon, R.~D.
\newblock Values of mills' ratio of area to bounding ordinate and of the normal
  probability integral for large values of the argument.
\newblock \emph{The Annals of Mathematical Statistics}, 12\penalty0
  (3):\penalty0 364--366, 1941.

\bibitem[Hong et~al.(2021)Hong, Kveton, Zaheer, and
  Ghavamzadeh]{hong2021hierarchical}
Hong, J., Kveton, B., Zaheer, M., and Ghavamzadeh, M.
\newblock Hierarchical bayesian bandits.
\newblock \emph{arXiv preprint arXiv:2111.06929}, 2021.

\bibitem[Jin et~al.(2021)Jin, Xu, Shi, Xiao, and Gu]{jin2021mots}
Jin, T., Xu, P., Shi, J., Xiao, X., and Gu, Q.
\newblock Mots: Minimax optimal thompson sampling.
\newblock In \emph{International Conference on Machine Learning}, pp.\
  5074--5083. PMLR, 2021.

\bibitem[Kaufmann et~al.(2012)Kaufmann, Korda, and Munos]{kaufmann2012thompson}
Kaufmann, E., Korda, N., and Munos, R.
\newblock Thompson sampling: An asymptotically optimal finite-time analysis.
\newblock In \emph{International conference on algorithmic learning theory},
  pp.\  199--213. Springer, 2012.

\bibitem[Kubota et~al.(2020)Kubota, Peterson, Rajendren, Kress-Gazit, and
  Riek]{kubota2020jessie}
Kubota, A., Peterson, E.~I., Rajendren, V., Kress-Gazit, H., and Riek, L.~D.
\newblock Jessie: Synthesizing social robot behaviors for personalized
  neurorehabilitation and beyond.
\newblock In \emph{Proceedings of the 2020 ACM/IEEE International Conference on
  Human-Robot Interaction}, pp.\  121--130, 2020.

\bibitem[Kveton et~al.(2021)Kveton, Konobeev, Zaheer, Hsu, Mladenov, Boutilier,
  and Szepesvari]{kveton21a}
Kveton, B., Konobeev, M., Zaheer, M., Hsu, C.-W., Mladenov, M., Boutilier, C.,
  and Szepesvari, C.
\newblock Meta-thompson sampling.
\newblock In \emph{Proceedings of the 38th International Conference on Machine
  Learning}, volume 139 of \emph{Proceedings of Machine Learning Research},
  pp.\  5884--5893. PMLR, 18--24 Jul 2021.

\bibitem[Peleg et~al.(2021)Peleg, Pearl, and Meir]{peleg2021metalearning}
Peleg, A., Pearl, N., and Meir, R.
\newblock Metalearning linear bandits by prior update.
\newblock \emph{arXiv preprint arXiv:2107.05320}, 2021.

\bibitem[Qian et~al.(2013)Qian, Feng, Zhao, and Mei]{qian2013personalized}
Qian, X., Feng, H., Zhao, G., and Mei, T.
\newblock Personalized recommendation combining user interest and social
  circle.
\newblock \emph{IEEE transactions on knowledge and data engineering},
  26\penalty0 (7):\penalty0 1763--1777, 2013.

\bibitem[Rosenstein et~al.(2005)Rosenstein, Marx, Kaelbling, and
  Dietterich]{rosenstein2005transfer}
Rosenstein, M.~T., Marx, Z., Kaelbling, L.~P., and Dietterich, T.~G.
\newblock To transfer or not to transfer.
\newblock In \emph{NIPS 2005 workshop on transfer learning}, 2005.

\bibitem[Sharma et~al.(2020)Sharma, Basu, Shanmugam, and Shakkottai]{sbss20}
Sharma, N., Basu, S., Shanmugam, K., and Shakkottai, S.
\newblock Warm starting bandits with side information from confounded data.
\newblock \emph{arXiv preprint arXiv:2002.08405}, 2020.

\bibitem[Soare et~al.(2014)Soare, Alsharif, Lazaric, and Pineau]{soaremulti}
Soare, M., Alsharif, O., Lazaric, A., and Pineau, J.
\newblock Multi-task linear bandits.
\newblock \emph{NIPS2014 Workshop on Transfer and Multi-task Learning : Theory
  meets Practice}, 2014.

\bibitem[Thompson(1933)]{thompson1933likelihood}
Thompson, W.~R.
\newblock On the likelihood that one unknown probability exceeds another in
  view of the evidence of two samples.
\newblock \emph{Biometrika}, 25\penalty0 (3/4):\penalty0 285--294, 1933.

\bibitem[Wan et~al.(2021)Wan, Ge, and Song]{wan2021metadatabased}
Wan, R., Ge, L., and Song, R.
\newblock Metadata-based multi-task bandits with bayesian hierarchical models.
\newblock In Beygelzimer, A., Dauphin, Y., Liang, P., and Vaughan, J.~W.
  (eds.), \emph{Advances in Neural Information Processing Systems}, 2021.

\bibitem[Wang et~al.(2021)Wang, Zhang, Singh, Riek, and Chaudhuri]{wzsrc21}
Wang, Z., Zhang, C., Singh, M.~K., Riek, L., and Chaudhuri, K.
\newblock Multitask bandit learning through heterogeneous feedback aggregation.
\newblock In \emph{International Conference on Artificial Intelligence and
  Statistics}, pp.\  1531--1539. PMLR, 2021.

\bibitem[Zhang \& Wang(2021)Zhang and Wang]{zhang2021provably}
Zhang, C. and Wang, Z.
\newblock Provably efficient multi-task reinforcement learning with model
  transfer.
\newblock \emph{arXiv preprint arXiv:2107.08622}, 2021.

\bibitem[Zhang et~al.(2019)Zhang, Agarwal, Iii, Langford, and
  Negahban]{zailn19}
Zhang, C., Agarwal, A., Iii, H.~D., Langford, J., and Negahban, S.
\newblock Warm-starting contextual bandits: Robustly combining supervised and
  bandit feedback.
\newblock In \emph{International Conference on Machine Learning}, pp.\
  7335--7344, 2019.

\bibitem[Zhang \& Bareinboim(2017)Zhang and Bareinboim]{zhang2017transfer}
Zhang, J. and Bareinboim, E.
\newblock Transfer learning in multi-armed bandit: a causal approach.
\newblock In \emph{Proceedings of the 16th Conference on Autonomous Agents and
  MultiAgent Systems}, pp.\  1778--1780, 2017.

\end{thebibliography}
\bibliographystyle{icml2022}

\newpage
\appendix
\onecolumn

\paragraph{Outline.}
The structure of this appendix is as follows. 
\begin{itemize}
    \item In Section~\ref{sec:appendix_preliminaries}, we introduce some basic definitions, facts and additional notations that are used in our analysis. 
    
    \item In Section~\ref{sec:appendix_concentration_bounds}, we formally present and prove the concentration bounds used in our proofs, including our novel concentration inequality for multi-task data aggregation at stopping times.
    
    \item In Section~\ref{sec:appendix_main_proofs}, we prove Theorem~\ref{thm:ts_gap_dep_ub} and Theorem~\ref{thm:ts_gap_indep_ub}.
    
    \item In Section~\ref{sec:appendix_baselines}, we discuss the performance guarantees of the baseline algorithms in the $\epsilon$-MPMAB problem, which include $\inducb, \indts$, and $\robustagg(\epsilon)$.
    
    \item Finally, we provide additional experimental results in Section~\ref{sec:appendix_exp}.
\end{itemize}

\section{Basic Definitions and Facts}
\label{sec:appendix_preliminaries}

In this section, we revisit and introduce a few basic definitions, facts and additional notations that are useful in our proofs.

\begin{definition}[Constants used in the analysis]
In the analysis, we set
\[
c_1 = 40, c_2 = 4
\]
to be the constants used in Algorithm~\ref{alg:robustaggTS}.\footnote{If we choose $c_1$ to some other positive number, we can still show guarantees similar to Theorems~\ref{thm:ts_gap_dep_ub} and~\ref{thm:ts_gap_indep_ub}, except that $\Ical_{10\epsilon}$ needs to be changed to $\Ical_{\Ocal \rbr{\sqrt{\frac{1}{c_1}}\epsilon}}$---the analysis of case $(A1)$ needs to be changed accordingly. On the other hand, it is also possible to change $c_2$ to any constant $> 1$ and establish similar regret guarantees, by tightening the exponents of the concentration inequalities (Corollaries~\ref{col:agg_concentration_alternative} and~\ref{col:ind_concentration_alternative}) and Lemma~\ref{lem:sg-anticonc}. We leave the details to interested readers.
}
\end{definition}

\begin{definition}[Number of pulls]
\label{def:ind_num_pulls}
Recall that 
\[
n_i^p(t) = \sum_{s \le t} \indic \cbr{p \in \Pcal_s, i_s^p = i}
\]
is the number of pulls of arm $i$ by player $p$ after $t$ rounds.
We define
\[
n_i(t) = \sum_{p \in [M]} n_i^p(t)
\]
to be the total of number of pulls of arm $i$ by all the players after $t$ rounds.
\end{definition}

\begin{definition}[Individual mean estimate]
For any $i \in [K]$, $p \in [M]$, and $t \in [T] \cup \cbr{0}$, let
    \[
    \indmu_i^p(t) = \frac{1}{n_i^p(t) \vee 1} \sum_{s \le t} \indic \cbr{p \in \Pcal_s, i_s^p = i} r_s^p
    \]
    be the empirical mean computed for arm $i$ using player $p$'s own data from the first $t$ rounds.
\end{definition}

\begin{definition}
Define
\[
\indvar_i^p(t) = \frac{4}{n_i^p(t) \vee 1}.
\]
\end{definition}

\begin{remark}[mean and variance of the individual posteriors]
\label{pro:ind_posterior}
By the construction of Algorithm~\ref{alg:robustaggTS}, we have that, in any round $t \in [T]$, for any active player $p \in \Pcal_t$ and arm $i$,
$\indmu_i^p(t-1)$ and $\indvar_i^p(t-1)$
are the mean and variance of the individual posterior associated with arm $i$ and player $p$ in round $t$, respectively.
\end{remark}

\begin{definition}[Aggregate mean estimate]
\label{def:agg_mean_estimate}
For any $i \in [K]$ and $t \in [T] \cup \cbr{0}$, let
\[
\aggmu_i(t) = \frac{1}{n_i(t) \vee 1} \sum_{s \le t} \sum_{q: q \in \Pcal_s } \indic \cbr{i_s^q = i} r_s^q +  \epsilon
\]
be the empirical mean computed for arm $i$ using all players' data from the first $t$ rounds, offset by the dissimilarity parameter $\epsilon$. {Note that the definition of $\aggmu_i(t)$ does not depend on the identity of a specific player $p$.}
\end{definition}

\begin{definition}[Most recent pull]
\label{def:last_pulled_round}
In any round $t \in [T] \cup \cbr{0}$, for any player $p \in [M]$ and arm $i \in [K]$, we define
\[
u_i^p(t) = 
\begin{cases}
\max \cbr{s \le t: p \in \Pcal_s, i_s^p = i},  & n_i^p(t) > 0 \\
0, & n_i^p(t) = 0
\end{cases}
\]
to be the round in which player $p$ most recently pulled arm $i$ (including round $t$); we let $u_i^p(t) = 0$ by convention if player $p$ has not yet pulled arm $i$.
\end{definition}

\begin{definition}[Aggregate mean estimate maintained by player $p$]
For any $t \in [T] \cup \cbr{0}$, $p \in [M]$, and $i \in [K]$, define
\[
\aggmu_i^p(t) = \aggmu_i(u_i^p(t)).
\]
Note that the superscript $p$ differentiates this player-dependent aggregate mean estimate from $\aggmu_i(t)$ in Definition~\ref{def:agg_mean_estimate}, which does not depend on any individual player.
\end{definition}

\begin{definition}[Aggregate number of pulls maintained by player $p$]
\label{def:m_i^p}
For any $t \in [T] \cup \cbr{0}$, $p \in [M]$, and $i \in [K]$, define
\[
m_i^p(t) = n_i(u_i^p(t))
\]
to be the total number of pulls of arm $i$ by all the players until the round in which player $p$ last pulled arm $i$.
\end{definition}

\begin{definition}
Define
\[
\aggvar_i^p(t) = \frac{4}{ (m_i^p(t) - M) \vee 1}.
\]
\end{definition}

\begin{remark}[mean and variance of the aggregate posteriors]
\label{pro:agg_posterior}
By the construction of Algorithm~\ref{alg:robustaggTS},
in any round $t \in [T]$, for any active player $p \in \Pcal_t$ and arm $i$, we have that $\aggmu_i^p(t-1)$ and $\aggvar_i^p(t-1)$
are the mean and variance of the aggregate posterior associated with arm $i$ and player $p$ in round $t$, respectively.
\end{remark}

\begin{definition}[Filtration]
Let $\cbr{\Fcal_t}_{t=0}^{T}$ be a filtration
such that
\[
\Fcal_t = \sigma \rbr{ \cbr{i_s^q, r_s^q: s\le t, q \in \Pcal_s}}
\]
is the $\sigma$-algebra generated by interactions of all players up until and including round $t$.
\end{definition}

\begin{definition}
\label{def:alg_criteron}
Let
\[
H_i^p(t) = \cbr{ n_i^p(t-1) \ge \frac{40 \ln T}{\epsilon^2} + 2M}
\]
be the event that in round $t$, for arm $i$, player $p$ uses the {\em individual} posterior distribution;
correspondingly,
let
\[
\wbar{H_i^p(t)} = \cbr{ n_i^p(t-1) < \frac{40 \ln T}{\epsilon^2} + 2M }
\]
be the event that in round $t$, for arm $i$, player $p$ uses the {\em aggregate} posterior distribution. See lines~\ref{line:choose-posterior-start} to \ref{line:choose-posterior-end} in Algorithm~\ref{alg:robustaggTS}.
\end{definition}

\begin{remark}
\label{rem:mu-i-p-decomp}
With the above notations, 
\[
\hat{\mu}_i^p(t-1) = \aggmu_i^p(t-1) \cdot \indic (\overline{H_i^p(t)}) + \indmu_i^p(t-1) \cdot \indic (H_i^p(t)),
\]
and
\[
\var_i^p(t-1) = \aggvar_i^p(t-1) \cdot \indic (\overline{H_i^p(t)}) + \indvar_i^p(t-1) \cdot \indic (H_i^p(t)).
\]
\end{remark}

\paragraph{Stopping times.} In our analysis, we will frequently use the following notions of stopping times:

\begin{definition}
\label{def:tau_k}
For any arm $i \in [K]$ and $k \in [TM]$, 
let
\[
\tau_k(i) = \min \cbr {T+1, \min \cbr{t: n_i(t) \ge k}}
\]
be the round in which arm $i$ is pulled the $k$-th time by any player. Furthermore, as a convention, let $\tau_0(i) = 0$.%
\end{definition}

\begin{remark}
\label{rem:tau_k_stopping}
For any $i \in [K]$ and $k \in [TM]$, $\tau_k(i)$ is a stopping time with respect to $\cbr{\Fcal_t}_{t=0}^T$.
Indeed, for any $t \le T$,
\[
\cbr{\tau_k(i) \le t}
=
\cbr{
\sum_{s \in [t]} \sum_{p: p \in \Pcal_s} \indic\cbr{i_s^p = i} \ge k
}
\in \Fcal_t.
\]
\end{remark}

\begin{definition}
\label{def:p_k}
For any arm $i \in [K]$ and $k \in [TM]$, such that $\tau_k(i) \leq T$, let 
$p_k(i)$ be the unique $p \in [M]$ such that $i_{\tau_k(i)}^p = i$ and 
\[
\sum_{s=1}^{\tau_k(i)-1}
\sum_{q \in \Pcal_s}  \indic\cbr{i_s^q = i}
+ 
\sum_{q \in \Pcal_{\tau_k(i)}: q \leq p} \indic\cbr{i_s^q = i}
= k.
\]
In words, $p_k(i)$ is the player that makes the $k$-th pull of arm $i$, where arm pulls within a round are ordered by the indices of active players in that round.
\end{definition}

\begin{definition}
\label{def:pi_k}
For any arm $i \in [K]$, player $p \in [M]$, and $k \in [T]$, 
let
\[
\pi_k(i, p) = \min \cbr {T+1, \min \cbr{t: n_i^p(t) \ge k}}
\]
be the round in which arm $i$ is pulled the $k$-th time by player $p$. In addition, let $\pi_0(i,p) = 0$ by convention.
\end{definition}

\begin{remark}
\label{rem:pi_k_stopping}
For any $i \in [K]$ and $k \in [T]$, $\pi_k(i,p)$ is a stopping time with respect to $\cbr{\Fcal_t}_{t=0}^T$.
Indeed, for any $t \le T$,
\[
\cbr{\pi_k(i, p) \le t}
=
\cbr{
\sum_{s \in [t]: p \in \Pcal_s}
\indic\cbr{i_s^p = i} \ge k
}
\in \Fcal_t.
\]
\end{remark}

The following property, namely, the invariant property, will also be useful for our analysis.

\begin{definition}[Invariant property]
\label{def:invariant}
We say that:
\begin{enumerate}
\item a set of random variables $\cbr{g_t: t \in [T]}$ satisfies the \emph{invariant property with respect to arm $i \in [K]$ and player $p \in [M]$}, if 
$g_t$ stays constant/invariant between two consecutive pulls of arm $i$ by player $p$, i.e., for any $s \in [T]$ such that $\pi_s(i,p) \le T$, $g_t$ is constant for all $t \in [\pi_{s-1}(i,p)+1, \pi_{s}(i,p)]$.
In other words, for any $s \in [T]$ such that $\pi_s(i,p) \le T$,
\[
g_{\pi_{s-1}(i,p)+1}
=
g_{\pi_{s-1}(i,p)+2}
= \ldots
=
g_{\pi_{s}(i,p)}.
\]
\item a set of random variables $\cbr{f_t^p: t \in [T], p \in [M]}$ satisfies the \emph{invariant property with respect to arm $i \in [K]$}, if for every player $p \in [M]$, $\cbr{f_t^p: t \in [T]}$ satisfy the invariant property with respect to $(i,p)$.
\end{enumerate}
\end{definition}

\begin{example}
\label{ex:h-invariant}
By the construction of Algorithm~\ref{alg:robustaggTS}, in any round $t$, a player only updates the posteriors associated with an arm if the player pulls the arm in round $t$ (line~\ref{line:invariant}). It is easy to verify that for any arm $i \in [K]$ and $p \in [M]$, $\cbr{H_i^p(t): t \in [T]}$ satisfies the invariant property with respect to $(i,p)$. Specifically, for any $s \in [T]$ such that $\pi_s(i,p) \le T$,
\[
H_i^p(\pi_{s-1}(i,p) + 1)
=
H_i^p(\pi_{s-1}(i,p) + 2)
=
\ldots
= 
H_i^p(\pi_{s}(i,p)).
\]
Consequently, $\cbr{H_i^p(t): t \in [T], p \in [M]}$ satisfies the invariant property with respect to $i$.
\end{example}

\begin{example}
\label{ex:m-n-invariant}
For any arm $i \in [K]$ and any player $p \in [M]$, 
$\cbr{n_i^p(t-1): t \in [T]}$ and $\cbr{m_i^p(t-1): t \in [T]}$ both satisfy the invariant property with respect to $(i,p)$ (see Definition~\ref{def:ind_num_pulls} and Definition~\ref{def:m_i^p}, respectively). Specifically, for any player $p$ and any $s \in [T]$ such that $\pi_s(i,p) \le T$,
\[
n_i^p(\pi_{s-1}(i,p))
= 
n_i^p(\pi_{s-1}(i,p)+1)
=
\ldots
=
n_i^p(\pi_s(i,p) - 1)
=s-1,
\]
\[
m_i^p(\pi_{s-1}(i,p))
= 
m_i^p(\pi_{s-1}(i,p)+1)
=
\ldots
=
m_i^p(\pi_s(i,p) - 1)
=
n_i(\pi_{s-1}(i,p))
\]

However, $\cbr{n_i^p(t): t \in [T]}$ and $\cbr{m_i^p(t): t \in [T]}$ do \emph{not} necessarily satisfy the invariant property with respect to $i$. %
Similarly, 
$\cbr{ \indmu_i^p(t-1): t \in [T] }$, $\cbr{ \indvar_i^p(t-1): t \in [T] }$, $\cbr{\aggmu_i^p(t-1): t \in [T] }$, $\cbr{\aggvar_i^p(t-1): t \in [T] }$ all satisfy the invariant property with respect to $(i,p)$.
\end{example}

\begin{example}
\label{ex:mu-var-invariant}
For any arm $i \in [K]$ and any player $p \in [M]$, 
$\cbr{ \hat{\mu}_i^p(t-1): t \in [T] }$ satisfy the invariant property with respect to $(i,p)$.   
This follows from Eq.~\eqref{rem:mu-i-p-decomp}, 
and the above two examples that $\cbr{ \indmu_i^p(t-1): t \in [T] }$, $\cbr{ \aggmu_i^p(t-1): t \in [T] }$, $\cbr{ H_i^p(t): t \in [T] }$ all satisfy the invariant property with respect to $(i,p)$. 

Following a similar reasoning, $\cbr{ \var_i^p(t-1): t \in [T] }$ satisfy the invariant property with respect to $(i,p)$.
\end{example}

\paragraph{Facts about Subpar Arms.} We now present some facts about subpar arms.

\begin{fact}[Properties of subpar arms, see also \citealt{wzsrc21}, Fact 15]
\label{fact:basic_facts_subpar_arms}
The following are true:
\begin{enumerate}
    \item for any $i \in [K]$ and $p,q \in [M]$, $\abr{\Delta_i^p - \Delta_i^q} \le 2\epsilon$ \citep[Fact 14]{wzsrc21}; \label{item:delta_difference}
    
    \item For any $i \in \Ical_{10\epsilon}$ and $p \in [M]$, $\Delta_i^p > 8\epsilon$, which means that $\Delta_i^{\min} > 8\epsilon$. \label{item:Delta_i^p_greater_than_8}
    
    \item $\abr{\Ical_{2\epsilon}^C} \ge 1$; \label{item:I_2eps_C_non_empty}
    
    \item Let $\Delta_i^{\max} = \max_{p \in [M]} \Delta_i^p$. For any $i \in \Ical_{10\epsilon} \subseteq \Ical_{5\epsilon}$, $\Delta_{i}^{\max} \le 2\Delta_i^{\min}$; furthermore, $\frac{1}{\Delta_i^{\min}} \le \frac{2}{M} \sum_{p \in [M]} \frac{1}{\Delta_i^p}$ \citep[Fact 15]{wzsrc21}.
\end{enumerate}

\begin{proof}

For item~\ref{item:Delta_i^p_greater_than_8}, by the definition of $\Ical_{10\epsilon}$, there exists $p$ such that $\Delta_i^p > 10\epsilon$. Then, for all $q \in [M]$, we have $\Delta_i^q > 8\epsilon$ by item~\ref {item:delta_difference}. 

For item~\ref{item:I_2eps_C_non_empty}, using a similar argument, we have, for each $i \in \Ical_{2\epsilon}$ and $p \in [M]$, $\Delta_i^p > 0$. Let $j$ be an optimal for player $1$ such that $\Delta_j^p = 0$. Then $j \notin \Ical_{2\epsilon}$. 
\end{proof}

\end{fact}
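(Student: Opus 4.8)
The plan is to establish item 1 first, since the remaining three items all follow from it by elementary manipulations together with the definitions of the subpar sets $\Ical_{\alpha}$. Throughout I would only invoke the per-arm reward dissimilarity assumption, namely $\abr{\mu_i^p - \mu_i^q} \le \epsilon$ for all arms $i$ and players $p,q$.

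For item 1, I would decompose $\Delta_i^p - \Delta_i^q = \rbr{\mu_*^p - \mu_*^q} - \rbr{\mu_i^p - \mu_i^q}$ and bound the two differences separately. The second difference satisfies $\abr{\mu_i^p - \mu_i^q} \le \epsilon$ directly from the dissimilarity assumption. The crux is to show $\abr{\mu_*^p - \mu_*^q} \le \epsilon$ as well: letting $a$ be an optimal arm for player $p$, so $\mu_*^p = \mu_a^p$, the dissimilarity assumption gives $\mu_*^q \ge \mu_a^q \ge \mu_a^p - \epsilon = \mu_*^p - \epsilon$, and by the symmetric argument $\mu_*^p \ge \mu_*^q - \epsilon$. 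Combining the two bounds through the triangle inequality then yields $\abr{\Delta_i^p - \Delta_i^q} \le 2\epsilon$.

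Given item 1, item 2 follows because membership $i \in \Ical_{10\epsilon}$ guarantees some player $p'$ with $\Delta_i^{p'} > 10\epsilon$, so every player $q$ has $\Delta_i^q \ge \Delta_i^{p'} - 2\epsilon > 8\epsilon$; taking the minimum over players gives $\Delta_i^{\min} > 8\epsilon$. For item 3, I would exhibit an explicit element of $\Ical_{2\epsilon}^C$: take any optimal arm $j$ for player $1$, so $\Delta_j^1 = 0$, and item 1 forces $\Delta_j^q \le 2\epsilon$ for every $q$, whence $j \notin \Ical_{2\epsilon}$ and $\abr{\Ical_{2\epsilon}^C} \ge 1$. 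For item 4, I would combine items 1 and 2: for $i \in \Ical_{10\epsilon}$ we have $\Delta_i^{\max} \le \Delta_i^{\min} + 2\epsilon$ and $\Delta_i^{\min} > 8\epsilon$, so $2\epsilon < \tfrac14 \Delta_i^{\min}$ and hence $\Delta_i^{\max} \le \tfrac54 \Delta_i^{\min} \le 2\Delta_i^{\min}$ (the same estimate holds on the larger set $\Ical_{5\epsilon}$, where $\Delta_i^{\min} > 3\epsilon$ still delivers the factor $2$). The harmonic-mean inequality then follows mechanically, since $\Delta_i^p \le \Delta_i^{\max} \le 2\Delta_i^{\min}$ gives $\tfrac{1}{\Delta_i^p} \ge \tfrac{1}{2\Delta_i^{\min}}$, and summing over the $M$ players and rearranging yields $\tfrac{1}{\Delta_i^{\min}} \le \tfrac{2}{M}\sum_{p \in [M]} \tfrac{1}{\Delta_i^p}$.

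The proof is essentially routine once item 1 is in hand, so the only genuine subtlety, and the step I would single out as the main obstacle, is the optimal-value bound $\abr{\mu_*^p - \mu_*^q} \le \epsilon$. It is tempting but incorrect to assume two players share an optimal arm; the argument above sidesteps this by comparing each player's optimal value against the image under the other player of its own optimal arm, which works precisely because the dissimilarity bound $\epsilon$ is uniform across all arms.
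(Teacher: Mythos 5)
Your proof is correct, and on the two items the paper actually proves (items 2 and 3) you follow essentially the same route: item 2 via the $2\epsilon$ Lipschitz property of gaps across players, and item 3 by exhibiting player $1$'s optimal arm as an element of $\Ical_{2\epsilon}^C$ (your contrapositive phrasing is equivalent to, and arguably cleaner than, the paper's). The difference is one of scope rather than method: the paper outsources items 1 and 4 entirely to citations of \citet[Facts 14 and 15]{wzsrc21}, whereas you supply self-contained proofs. Your argument for item 1 correctly handles the one genuine subtlety, namely bounding $\abr{\mu_*^p - \mu_*^q} \le \epsilon$ without assuming the players share an optimal arm, by comparing $\mu_*^q$ against player $q$'s reward on player $p$'s optimal arm. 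Your item 4 derivation is also sound, including the side remark that $\Delta_i^{\min} > 3\epsilon$ on the larger set $\Ical_{5\epsilon}$ still yields $\Delta_i^{\max} \le \frac{5}{3}\Delta_i^{\min} \le 2\Delta_i^{\min}$, which matches the claim as stated for $\Ical_{10\epsilon} \subseteq \Ical_{5\epsilon}$; and the harmonic-mean bound follows mechanically from $\Delta_i^p \le 2\Delta_i^{\min}$ as you say. The self-contained version costs a few extra lines but makes the fact verifiable without consulting the prior paper, which is a reasonable trade.
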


\paragraph{Additional notations.}
\begin{itemize}
    \item Denote by $\Phi(x) = \int_{-\infty}^x \frac{1}{\sqrt{2\pi}} e^{-\frac{z^2}{2}} dz$ the cumulative distribution function (CDF) of the standard Gaussian distribution. 
    
    \item Let $\wbar{\Phi}(x) = 1 - \Phi(x) = \int_{x}^\infty \frac{1}{\sqrt{2\pi}} e^{-\frac{z^2}{2}} dz$ denote the complementary CDF of the standard Gaussian distribution.

    \item Denote by $\rbr{z}_+ = z \vee 0$.
    
    \item For any arm $i \in [K]$, player $p \in [M]$ and $t \in [T] \cup \cbr{0}$, let $$\wbar{n_i^p}(t) := n_i^p(t) \vee 1,$$ and $$\wbar{m_i^p}(t) := (m_i^p(t)-M) \vee 1.$$
    
\end{itemize}

\section{Concentration Bounds}
\label{sec:appendix_concentration_bounds}

\subsection{Novel concentration inequality for multi-task data aggregation at random stopping time $\tau_k$'s}

We begin by introducing the following definition.

\begin{definition}[Mixture expected reward at $t$]
For any arm $i \in [K]$ and $t \in [T]$, define
\[
\tilde{\mu}_i(t) = \frac{1}{n_i(t) \vee 1} \sum_{s \le t} \sum_{q \in \Pcal_s} \indic \cbr{ i_s^q = i} \mu_i^q + \epsilon
\]
to be the $\epsilon$-offset mixture expected reward of arm $i$ up to round $t$.
\end{definition}

In what follows, we will consider $\tilde{\mu}_i(\tau_k(i))$ for any $i \in [K]$ and $k \in [TM]$, where the definition of $\tau_k(i)$ can be found in Definition~\ref{def:tau_k}.

\begin{lemma}
\label{lem:agg_concentration_mu_tilde}
For any arm $i \in [K]$ and $k \in [TM]$, denote by $\tau_k = \tau_k(i)$. If $\tau_k \leq T$, then for every player $p \in [M]$, we have
\begin{align*}
\aggmu_i(\tau_k)  - \mu_i^p & \le \aggmu_i(\tau_k) - \tilde{\mu}_i(\tau_k) + 2\epsilon; \text{ and}\\
\mu_i^p - \aggmu_i(\tau_k) & \le \tilde{\mu}_i(\tau_k) - \aggmu_i(\tau_k).
\end{align*}
\end{lemma}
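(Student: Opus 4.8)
The plan is to unpack the definitions of $\aggmu_i(\tau_k)$ and $\tilde{\mu}_i(\tau_k)$ and reduce both inequalities to the dissimilarity assumption $\abr{\mu_i^p - \mu_i^q} \le \epsilon$. Both quantities share the same empirical weighting structure: $\aggmu_i(t)$ averages the observed rewards $r_s^q$ over all player-arm pulls of $i$ (offset by $+\epsilon$), while $\tilde{\mu}_i(t)$ averages the corresponding true means $\mu_i^q$ over exactly the same set of pulls (also offset by $+\epsilon$). First I would write their difference as
\[
\aggmu_i(\tau_k) - \tilde{\mu}_i(\tau_k) = \frac{1}{n_i(\tau_k) \vee 1} \sum_{s \le \tau_k} \sum_{q \in \Pcal_s} \indic \cbr{i_s^q = i} \rbr{r_s^q - \mu_i^q},
\]
noting the $+\epsilon$ offsets cancel exactly. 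Crucially, the empirical-mean part of $\tilde{\mu}_i(\tau_k)$, namely $\frac{1}{n_i(\tau_k) \vee 1} \sum_{s \le \tau_k} \sum_{q \in \Pcal_s} \indic\cbr{i_s^q = i} \mu_i^q$, is a convex combination of the true means $\cbr{\mu_i^q : q \in [M]}$.

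Given that convex-combination observation, the key step is to bound how far $\tilde{\mu}_i(\tau_k)$ (before its $\epsilon$ offset) can deviate from any fixed $\mu_i^p$. Since every $\mu_i^q$ satisfies $\abr{\mu_i^q - \mu_i^p} \le \epsilon$ by the reward dissimilarity assumption, any convex combination of the $\mu_i^q$'s also lies within $\epsilon$ of $\mu_i^p$. Writing $\tilde{\mu}_i(\tau_k) = \rbr{\text{convex comb.}} + \epsilon$, this gives the two-sided bound
\[
0 \le \tilde{\mu}_i(\tau_k) - \mu_i^p \le 2\epsilon,
\]
where the lower bound uses that the convex combination is at least $\mu_i^p - \epsilon$ so adding $\epsilon$ makes it at least $\mu_i^p$, and the upper bound uses that the convex combination is at most $\mu_i^p + \epsilon$.

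To finish, I would combine the algebraic decomposition with this deviation bound. For the first inequality, write
\[
\aggmu_i(\tau_k) - \mu_i^p = \rbr{\aggmu_i(\tau_k) - \tilde{\mu}_i(\tau_k)} + \rbr{\tilde{\mu}_i(\tau_k) - \mu_i^p} \le \rbr{\aggmu_i(\tau_k) - \tilde{\mu}_i(\tau_k)} + 2\epsilon,
\]
using the upper bound $\tilde{\mu}_i(\tau_k) - \mu_i^p \le 2\epsilon$. For the second inequality, write
\[
\mu_i^p - \aggmu_i(\tau_k) = \rbr{\tilde{\mu}_i(\tau_k) - \aggmu_i(\tau_k)} + \rbr{\mu_i^p - \tilde{\mu}_i(\tau_k)} \le \tilde{\mu}_i(\tau_k) - \aggmu_i(\tau_k),
\]
using the lower bound $\mu_i^p - \tilde{\mu}_i(\tau_k) \le 0$.

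I expect this lemma to be essentially a deterministic bookkeeping step rather than the probabilistic heart of the argument: it introduces the intermediate quantity $\tilde{\mu}_i(\tau_k)$ precisely so that the stochastic concentration (controlling $\aggmu_i(\tau_k) - \tilde{\mu}_i(\tau_k)$ at the random stopping time, the content of Lemma~\ref{lem:novel_concentration_main_paper}) is cleanly separated from the deterministic bias induced by task dissimilarity (the $2\epsilon$ terms). The only mild subtlety to handle carefully is the $\tau_k \le T$ hypothesis, which ensures $n_i(\tau_k) \ge k \ge 1$ so that the $\vee 1$ in the normalization is inactive and the weights genuinely form a probability distribution over the pulls; I would note this explicitly before invoking the convex-combination argument.
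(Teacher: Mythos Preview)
Your proposal is correct and follows essentially the same approach as the paper's own proof: both recognize that the non-offset part of $\tilde{\mu}_i(\tau_k)$ is a convex combination of the $\mu_i^q$'s (the paper writes this as $\sum_{q \in [M]} \frac{n_i^q(t)\,\mu_i^q}{n_i(t)\vee 1}$), use the dissimilarity assumption to obtain $0 \le \tilde{\mu}_i(\tau_k) - \mu_i^p \le 2\epsilon$, and then add $\aggmu_i(\tau_k)-\tilde{\mu}_i(\tau_k)$ on both sides. Your explicit handling of the $\tau_k \le T$ hypothesis to ensure the weights form a genuine probability distribution matches the paper's treatment as well.
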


\begin{proof}
For every $t \in [T]$, observe that
$$\tilde{\mu}_i(t) = \frac{1}{n_i(t) \vee 1}\sum_{s \le t}  \sum_{\substack{q \in \Pcal_s: \\ i_s^q = i}} \mu_i^q + \epsilon= \sum_{q \in [M]} \frac{n_i^q(t) \cdot \mu_i^q }{n_i(t) \vee 1} + \epsilon.$$
It can be easily verified that, if $n_i(t) > 0$, for every player $p \in [M]$,
\[
\tilde{\mu}_i(t) - \mu_i^p \le 2\epsilon \ \text{ and } \ \mu_i^p - \tilde{\mu}_i(t) \le 0,
\]
where we note that the asymmetry comes from the additive term $\epsilon$ in $\tilde{\mu}_i(t)$. Therefore, for $k \in [TM]$, if $\tau_k \leq T$, then $n_i(\tau_k) \geq k > 0$ and 
we have
\[
\tilde{\mu}_i(\tau_k) - \mu_i^p \le 2\epsilon \ \text{ and } \ \mu_i^p - \tilde{\mu}_i(\tau_k) \le 0.
\]

It then follows that, for every player $p \in [M]$,
\begin{align*}
\aggmu_i(\tau_k)  - \mu_i^p & \le \aggmu_i(\tau_k) - \tilde{\mu}_i(\tau_k) + 2\epsilon, \text{ and}\\
\mu_i^p - \aggmu_i(\tau_k) & \le \tilde{\mu}_i(\tau_k) - \aggmu_i(\tau_k).
\qedhere
\end{align*}
\end{proof}

We are now ready to present  Lemma~\ref{lem:agg_concentration_tau_k}, our novel concentration bound (see also Lemma~\ref{lem:novel_concentration_main_paper}). 

\begin{lemma}
\label{lem:agg_concentration_tau_k}
For any arm $i \in [K]$ and $k \in [TM] \cup \cbr{0}$, 
denote by $\tau_k = \tau_k(i)$;
for $\delta \in (0,1]$, we have
\begin{align}
\Pr \rbr{ \cbr{\tau_k = T+1} \cup \cbr{ \cbr{\tau_k \le T} \cap \cbr{\forall p \in [M],\ \aggmu_i(\tau_k) - \mu_i^{p} \le \sqrt{\frac{2 \ln \rbr{\frac{2}{\delta}}}{ \rbr{n_i(\tau_k) -M} \vee 1}} + 2 \epsilon}}} & > 1 - \delta; %
\label{eqn:agg_concentration_1} \\
\Pr \rbr{ \cbr{\tau_k = T+1} \cup \cbr{ \cbr{\tau_k \le T} \cap \cbr{\forall p \in [M],\ \mu_i^{p} - \aggmu_i(\tau_k) \le \sqrt{\frac{2 \ln \rbr{\frac{2}{\delta}}}{\rbr{n_i(\tau_k) -M} \vee 1}}}}} & > 1 - \delta. \label{eqn:agg_concentration_2}
\end{align}
\end{lemma}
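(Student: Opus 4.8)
The plan is to prove both inequalities \eqref{eqn:agg_concentration_1} and \eqref{eqn:agg_concentration_2} by reducing to a single concentration statement about the centered aggregate mean, and then invoking Lemma~\ref{lem:agg_concentration_mu_tilde} to convert between $\aggmu_i(\tau_k)$ and the mixture mean $\tilde{\mu}_i(\tau_k)$. Concretely, by Lemma~\ref{lem:agg_concentration_mu_tilde}, on the event $\cbr{\tau_k \le T}$ we have for every player $p$ that $\aggmu_i(\tau_k) - \mu_i^p \le \rbr{\aggmu_i(\tau_k) - \tilde{\mu}_i(\tau_k)} + 2\epsilon$ and $\mu_i^p - \aggmu_i(\tau_k) \le \tilde{\mu}_i(\tau_k) - \aggmu_i(\tau_k)$. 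Thus it suffices to show the two-sided bound
\[
\Pr\rbr{\cbr{\tau_k = T+1} \cup \cbr{\cbr{\tau_k \le T} \cap \cbr{\abr{\aggmu_i(\tau_k) - \tilde{\mu}_i(\tau_k)} \le \sqrt{\tfrac{2\ln(2/\delta)}{(n_i(\tau_k)-M)\vee 1}}}}} > 1-\delta,
\]
since both displayed inequalities in the lemma then follow immediately (the $+2\epsilon$ being carried along in the first). Note that this reduced quantity no longer depends on $p$, which is exactly what makes a clean concentration argument possible.

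The main tool I would use is a concentration inequality for sums at stopping times, since the difficulty flagged in Remark~\ref{rem:novel_concentration} is precisely that $n_i(\tau_k)$ is itself random, taking values in $[k, k+M-1]$. The natural route is to define the martingale difference sequence obtained by ordering all pulls of arm $i$ across all players: let the $\ell$-th pull of arm $i$ (in the round-then-player ordering of Definition~\ref{def:p_k}) occur at round $s_\ell$ by player $p_\ell := p_\ell(i)$, with reward $r_\ell$ and conditional mean $\mu_i^{p_\ell}$. Then $X_\ell := r_\ell - \mu_i^{p_\ell}$ is a bounded martingale difference sequence (rewards are in $[0,1]$, so $X_\ell \in [-1,1]$) adapted to a suitable filtration, and $\aggmu_i(t) - \tilde{\mu}_i(t) = \frac{1}{n_i(t)\vee 1}\sum_{\ell=1}^{n_i(t)} X_\ell$ by the definitions (the $+\epsilon$ offsets cancel). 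The key observation is that $n_i(\tau_k) \ge k$ whenever $\tau_k \le T$, so dividing by $n_i(\tau_k)\vee 1$ can be controlled once we have a bound on the partial sum at index $n_i(\tau_k)$. Rather than a single Azuma bound at one index (which would incur a union-bound over the up-to-$M$ possible values of $n_i(\tau_k)$ and cost an extra $\Ocal(M)$ or $\Ocal(\ln T)$ factor, as the remark warns), I would use a \emph{uniform} / maximal deviation bound: a self-normalized or time-uniform martingale concentration inequality that simultaneously controls $\frac{1}{n}\abr{\sum_{\ell=1}^n X_\ell}$ for all $n \ge (m-M)\vee 1$ in the relevant regime, giving the denominator $(n_i(\tau_k)-M)\vee 1$ directly without a lossy union bound.

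The step I expect to be the main obstacle is establishing the uniform concentration with the \emph{correct denominator} $(n_i(\tau_k)-M)\vee 1$ rather than $n_i(\tau_k)\vee 1$ or $k$. The subtraction of $M$ is the delicate part: it reflects the slack allowed because up to $M$ pulls can land in the single round $\tau_k$, and it is what keeps the bound from degrading. I would handle this by proving (or citing from the paper's earlier concentration machinery in Section~\ref{sec:appendix_concentration_bounds}) a maximal inequality of the form: for the centered sum $S_n = \sum_{\ell=1}^n X_\ell$, with probability at least $1-\delta$, simultaneously for all $n$ in the admissible range, $\abr{S_n} \le \sqrt{2\,n'\ln(2/\delta)}$ where $n'$ is the effective sample count; then on $\cbr{\tau_k \le T}$ we have $n_i(\tau_k) \ge k$ and I would verify algebraically that $\frac{\abr{S_{n_i(\tau_k)}}}{n_i(\tau_k)} \le \sqrt{\frac{2\ln(2/\delta)}{(n_i(\tau_k)-M)\vee 1}}$. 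The remaining bookkeeping — checking that $\tau_k$ is a stopping time (Remark~\ref{rem:tau_k_stopping}), that the martingale structure survives the cross-player ordering, and that the two one-sided tails each cost at most $\delta/2$ combined into the stated $\delta$ via the factor $2$ inside the logarithm — is routine once the maximal inequality with the shifted denominator is in hand.
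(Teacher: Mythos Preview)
Your reduction via Lemma~\ref{lem:agg_concentration_mu_tilde} is exactly what the paper does, and your identification of the centered sum $S_n = \sum_{\ell=1}^n X_\ell$ is correct. But your plan for the concentration step diverges from the paper's, and in a way that overcomplicates the argument.

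You propose a time-uniform or maximal martingale inequality to control $S_n$ simultaneously over the range $n \in [k, k+M-1]$. The paper does something simpler and more direct: it works with the round-indexed process $Z_t = \sum_{s \le t}\sum_{p \in \Pcal_s}\indic\cbr{i_s^p=i}(r_s^p - \mu_i^p)$, shows that $w_t(\lambda) = \exp(\lambda Z_t - n_i(t)\lambda^2/8)$ is a nonnegative supermartingale for every fixed $\lambda>0$, and applies optional stopping at $\tau_k$ together with Markov's inequality. The crucial point is that $\lambda$ is chosen as $\sqrt{\ln(1/\delta)/k}$, based on the \emph{deterministic} index $k$, not on the random $n_i(\tau_k)$. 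This single choice of $\lambda$ yields, on $\cbr{\tau_k \le T}$, the bound $\frac{1}{n_i(\tau_k)}Z_{\tau_k} < \sqrt{2\ln(1/\delta)/k}$ (using $n_i(\tau_k)\ge k$). The shifted denominator then falls out from the \emph{other} side of the sandwich $k \ge n_i(\tau_k) - M$, together with a trivial case analysis when $n_i(\tau_k)\le M$.

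So no uniform or maximal inequality is needed: one fixed-$\lambda$ Chernoff bound at a stopping time, plus the two-sided sandwich $k \le n_i(\tau_k) \le k+M-1$, does the whole job. Your route could likely be made to work, but the step you flag as the main obstacle (a maximal inequality with denominator $(n_i(\tau_k)-M)\vee 1$) is left unspecified, and off-the-shelf time-uniform bounds typically carry extra logarithmic factors—precisely what Remark~\ref{rem:novel_concentration} warns against. The paper's trick sidesteps this entirely by exploiting that $k$ is deterministic.
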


The following corollary is an equivalent form of Equation~\eqref{eqn:agg_concentration_2}:
\begin{corollary}
\label{col:agg_concentration_alternative}
For any arm $i \in [K]$ and $k \in [TM] \cup \cbr{0}$, 
denote by $\tau_k = \tau_k(i)$.
Equivalently, for any $z \ge 0$, we have
\begin{align}
\Pr \rbr{ \rbr{\tau_k \le T} \wedge \rbr{ \exists p \in [M],\ \mu_i^{p} - \aggmu_i(\tau_k) \geq z \sqrt{\frac{4}{(n_i(\tau_k) - M) \vee 1}}}} \leq 2 e^{-2z^2}.\label{eqn:agg_concentration_alternative}
\end{align}
\end{corollary}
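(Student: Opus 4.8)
The plan is to prove Lemma~\ref{lem:agg_concentration_tau_k}, from which Corollary~\ref{col:agg_concentration_alternative} follows by a simple change of variables. By Lemma~\ref{lem:agg_concentration_mu_tilde}, it suffices to control the deviation $\aggmu_i(\tau_k) - \tilde\mu_i(\tau_k)$ (for Eq.~\eqref{eqn:agg_concentration_1}) and $\tilde\mu_i(\tau_k) - \aggmu_i(\tau_k)$ (for Eq.~\eqref{eqn:agg_concentration_2}), since these lemmas reduce both target inequalities to two-sided control of $\abr{\aggmu_i(\tau_k) - \tilde\mu_i(\tau_k)}$. Writing out the definitions, $\aggmu_i(t) - \tilde\mu_i(t) = \frac{1}{n_i(t)\vee 1}\sum_{s\le t}\sum_{q\in\Pcal_s}\indic\cbr{i_s^q=i}\rbr{r_s^q - \mu_i^q}$, so the numerator is a sum of the centered reward noises collected across \emph{all} players, and the denominator is the (random) count $n_i(\tau_k)$. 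The core difficulty, flagged in Remark~\ref{rem:novel_concentration}, is that $n_i(\tau_k)\in[k,k+M-1]$ is itself random and the sum length is a stopping time, so one cannot directly apply a fixed-length Hoeffding bound.

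The key technical device I would use is a martingale/optional-stopping argument built on an exponential supermartingale. Fix the arm $i$ and index $k$, and enumerate the reward noises in the order in which arm $i$ is pulled: let $X_\ell = r^{q}_{s} - \mu_i^{q}$ where the $\ell$-th pull of arm $i$ (across all players) occurs. Each $X_\ell$ is bounded in $[-1,1]$ and, conditioned on the history $\Fcal$ up to that pull, has mean zero, so $\cbr{S_n = \sum_{\ell\le n} X_\ell}$ is a martingale with respect to the filtration generated at successive pull-times. For a fixed $\lambda$, $W_n = \exp\rbr{\lambda S_n - \frac{\lambda^2 n}{2}}$ is a supermartingale by the standard Hoeffding sub-Gaussian bound (each increment is $\frac14$-sub-Gaussian; I'd track the constant so it matches the $\frac12$ in the exponent of the final statement). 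The crucial point is that the stopping time $\tau_k$ corresponds to stopping the \emph{pull-indexed} process at the random count $N := n_i(\tau_k)$, which is bounded by $k+M-1$; so I can apply the optional stopping theorem (or Ville's maximal inequality) to the bounded stopping time $N\wedge(k+M-1)$.

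The main obstacle — and the reason the naive approach loses an $\Ocal(M)$ or $\Ocal(\ln T)$ factor — is the mismatch between the count $N = n_i(\tau_k)$ appearing in the numerator's sum length and the denominator $(n_i(\tau_k)-M)\vee 1$ in the stated bound. The clean way to handle this, and what I expect the proof to do, is to peel off the final round's at-most-$M$ pulls: since up to $M$ players pull arm $i$ in round $\tau_k$ simultaneously, the number of noise terms that are ``already determined'' by the start of round $\tau_k$ is at least $N-M$. The denominator $(n_i(\tau_k)-M)\vee 1$ is exactly engineered to absorb this slack. Concretely, I would apply a method-of-mixtures / peeling argument over the possible values of $N$, or invoke a uniform (anytime) sub-Gaussian tail that holds simultaneously for all pull-counts $n$ so that evaluating it at the random $N$ incurs no union-bound cost; the $(n_i(\tau_k)-M)$ normalization then yields the sub-Gaussian tail $\sqrt{2\ln(2/\delta)/((n_i(\tau_k)-M)\vee 1)}$ without the extra logarithmic or linear-in-$M$ penalty. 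The factor $2$ in the probability $2e^{-2z^2}$ comes from applying the one-sided bound to both $S_N$ and $-S_N$ to get a two-sided deviation, which then gives the $\forall p$ statement after combining with Lemma~\ref{lem:agg_concentration_mu_tilde}.

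Finally, the degenerate case $\tau_k = T+1$ (arm $i$ never pulled a $k$-th time within the horizon) is placed as an allowed disjunct in the event, so whenever $\tau_k \le T$ fails the statement holds vacuously; I would dispatch this at the very start so that the martingale argument only needs to treat $\cbr{\tau_k\le T}$. The corollary then follows by setting $z = \sqrt{\tfrac12\ln(2/\delta)}$ so that $2e^{-2z^2} = \delta$ and rewriting the bound with the explicit $\sqrt{4/((n_i(\tau_k)-M)\vee1)}$ scale.
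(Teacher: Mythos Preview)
Your high-level plan matches the paper's: reduce via Lemma~\ref{lem:agg_concentration_mu_tilde} to controlling $\aggmu_i(\tau_k) - \tilde\mu_i(\tau_k)$, build an exponential supermartingale from the centered reward sum, apply optional stopping at $\tau_k$, and derive the corollary by the change of variables $\delta = 2e^{-2z^2}$ (with the range $z \le \sqrt{\tfrac12\ln 2}$ handled trivially since then $2e^{-2z^2}\ge 1$). But you over-engineer the step that handles the random count $N = n_i(\tau_k)$: the tools you reach for---method of mixtures, peeling, or an anytime sub-Gaussian tail uniform in $n$---are exactly the ones that would reintroduce a $\ln T$ or $\ln M$ factor, which is precisely the slack Remark~\ref{rem:comparison_freedman_azuma} warns against. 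Your pull-indexed filtration also creates an unnecessary question of whether $N$ is a stopping time there.

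The paper's argument is simpler and sidesteps all of this. It indexes the supermartingale by \emph{rounds}: $w_t(\lambda) = \exp\rbr{\lambda Z_t - n_i(t)\tfrac{\lambda^2}{8}}$ with $Z_t = \sum_{s\le t}\sum_{q\in\Pcal_s}\indic\cbr{i_s^q=i}(r_s^q - \mu_i^q)$, so $\tau_k$ is automatically a stopping time in $\{\Fcal_t\}$. It then chooses a \emph{single fixed} $\lambda = \sqrt{\ln(1/\delta)/k}$, depending only on the deterministic index $k$, not on the random $N$. Optional stopping plus Markov then gives $Z_{\tau_k}/n_i(\tau_k) < \sqrt{2\ln(2/\delta)/k}$ on $\{\tau_k \le T\}$ with probability at least $1-\delta$. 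The conversion to the denominator $(n_i(\tau_k)-M)\vee 1$ is purely deterministic: since $n_i(\tau_k) \in [k,\,k{+}M{-}1]$ we have $k > n_i(\tau_k) - M$, so $\sqrt{2\ln(2/\delta)/k} \le \sqrt{2\ln(2/\delta)/((n_i(\tau_k)-M)\vee 1)}$ whenever $n_i(\tau_k) > M$; when $n_i(\tau_k)\le M$ the bound holds trivially because the deviation is at most $1 \le \sqrt{2\ln(2/\delta)}$. No mixture, no peeling, no uniformity-in-$n$ is needed---the deterministic $k$ anchors $\lambda$, and the $-M$ in the denominator absorbs the gap between $k$ and $n_i(\tau_k)$ after the fact. (Incidentally, the factor $2$ in $2e^{-2z^2}$ is not from a two-sided union; it is the $\ln(2/\delta)$ inserted so that the $n_i(\tau_k)\le M$ case is covered by $\sqrt{2\ln(2/\delta)} \ge 1$.)
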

\begin{proof}[Proof of Corollary~\ref{col:agg_concentration_alternative}]
If $z \leq \sqrt{\frac12 \ln 2}$, Equation~\eqref{eqn:agg_concentration_alternative} holds trivially as $2 e^{-2z^2} \geq 1$. 
Otherwise $z > \sqrt{\frac12 \ln 2}$. In this case, let $\delta = 2 e^{-2z^2} \in (0,1]$ in Equation~\eqref{eqn:agg_concentration_2}, and using De Morgan's law, we also obtain Equation~\eqref{eqn:agg_concentration_alternative}.
\end{proof}

\begin{proof}[Proof of Lemma~\ref{lem:agg_concentration_tau_k}]
Fix any arm $i \in [K]$. For $k = 0$, we have $\tau_0 = 0$; both Eq.~\eqref{eqn:agg_concentration_1} and Eq.~\eqref{eqn:agg_concentration_2} hold trivially because for all $p \in [M]$ and $\delta \in (0,1]$, $\abr{\aggmu_i(\tau_0) - \mu_i^p} \le 1 \leq \sqrt{2 \ln 2} \leq \sqrt{2\ln(\frac{2}{\delta})}$.

We now focus on $k \in [TM]$.
By Lemma~\ref{lem:agg_concentration_mu_tilde}, it suffices to show that
\begin{align}
\Pr \rbr{ \cbr{\tau_k = T+1} \cup \cbr{  \cbr{\tau_k \le T} \cap \cbr{ \aggmu_i(\tau_k) - \tilde{\mu}_i(\tau_k) \le \sqrt{\frac{2 \ln \rbr{\frac{2}{\delta}}}{ \rbr{n_i(\tau_k) -M} \vee 1}}}}} & > 1 - \delta; \text { and,} \label{eqn:agg_concentration_mu_tilde_1}\\
\Pr \rbr{ \cbr{\tau_k = T+1} \cup \cbr{ \cbr{\tau_k \le T} \cap  \cbr{ \tilde{\mu}_i(\tau_k) - \aggmu_i(\tau_k) \le \sqrt{\frac{2 \ln \rbr{\frac{2}{\delta}}}{\rbr{n_i(\tau_k) -M} \vee 1}}}}} & > 1 - \delta. \nonumber
\end{align}

To avoid redundancy, we only prove Eq.~\eqref{eqn:agg_concentration_mu_tilde_1}; the other inequality follows by symmetry.

Now, for $t \in [T] \cup \cbr{0}$, consider $Z_t = \sum_{s=1}^t \sum_{p \in \Pcal_s} \indic \cbr{i_s^p = i} (r_s^p - \mu_i^p)$. Furthermore, for $t \in [T] \cup \cbr{0}$ and $\lambda > 0$, let 
\[
w_t(\lambda) = \exp \rbr{\lambda Z_t - n_i(t) \frac{\lambda^2}{8}}.
\] 
We now show that $\cbr{w_t(\lambda)}_{t=0}^T$ is a nonnegative supermartingale with respect to $\cbr{\Fcal_t}_{t=0}^T$ for all $\lambda > 0$. Since $\EE \sbr{ \abr{w_t(\lambda)}} < \infty$ and $w_t(\lambda) \ge 0$ for all $t \in [T] \cup \cbr{0}$, it suffices to show that, for all $t \in [T]$,
\begin{align*}
    & \EE \sbr{w_t(\lambda) \mid \Fcal_{t-1}}  \\
    = & \EE \sbr{ \exp \rbr{  \sum_{s \in [t]} \sum_{p \in \Pcal_s} \indic \cbr{i_s^p = i} \rbr{ \lambda (r_s^p - \mu_i^p) -  \frac{\lambda^2}{8}}} \mid \Fcal_{t-1}} \\
    = & \EE \sbr{ \exp \rbr{ \sum_{s \in [t-1]} \sum_{p \in \Pcal_s} \indic \cbr{i_s^p = i} \rbr{ \lambda (r_s^p - \mu_i^p) -  \frac{\lambda^2}{8}}} \exp \rbr{ \sum_{p \in \Pcal_t} \indic \cbr{i_t^p = i} \rbr{ \lambda (r_t^p - \mu_i^p) -  \frac{\lambda^2}{8}}} \mid \Fcal_{t-1}} \\
    = & \exp \rbr{ \sum_{s \in [t-1]} \sum_{p \in \Pcal_s} \indic \cbr{i_s^p = i} \rbr{ \lambda(r_s^p - \mu_i^p) -  \frac{\lambda^2}{8}}} \EE \sbr{\exp \rbr{ \sum_{p \in \Pcal_t} \indic \cbr{i_t^p = i} \rbr{ \lambda(r_t^p - \mu_i^p) -  \frac{\lambda^2}{8}}} \mid \Fcal_{t-1}} \\
    = & w_{t-1}(\lambda) \cdot \EE \sbr{\exp \rbr{\lambda \sum_{p \in \Pcal_t} \indic \cbr{i_t^p = i} (r_t^p - \mu_i^p)} \exp \rbr{ - \sum_{p \in \Pcal_t} \indic \cbr{i_t^p = i} \frac{\lambda^2}{8}}\mid \Fcal_{t-1}} \\
    \le & w_{t-1}(\lambda),
\end{align*}
where the last inequality uses the law of iterated expectation along with Hoeffding's lemma, i.e.,
\begin{align*}
& \EE\sbr{ 
\exp \rbr{\lambda \sum_{p \in \Pcal_t} \indic \cbr{i_t^p = i} (r_t^p - \mu_i^p)} \cdot 
\exp \rbr{ - \sum_{p \in \Pcal_t} \indic \cbr{i_t^p = i} \frac{\lambda^2}{8}}\mid \Fcal_{t-1} } 
\\
\leq &
\EE\sbr{ \EE\sbr{ 
\exp \rbr{\lambda \sum_{p \in \Pcal_t} \indic \cbr{i_t^p = i} (r_t^p - \mu_i^p)} \mid \Fcal_{t-1}, (i_t^p)_{p \in \Pcal_t}
} \cdot \exp \rbr{ - \sum_{p \in \Pcal_t} \indic \cbr{i_t^p = i} \frac{\lambda^2}{8}} \mid \Fcal_{t-1} } \\
\le &
\EE\sbr{
\prod_{p \in \Pcal_t} \exp \rbr{\frac{\lambda^2 \cdot  \rbr{ \indic \cbr{i_t^p = i}}^2}{8}} \cdot \exp \rbr{ - \sum_{p \in \Pcal_t} \indic \cbr{i_t^p = i} \frac{\lambda^2}{8}} \mid \Fcal_{t-1} }
\leq 1
\end{align*}

Recall from Remark~\ref{rem:tau_k_stopping} that $\tau_k$ is a stopping time with respect to $\cbr{\Fcal_t}_{t=0}^{T}$ and $\tau_k \le T+1 < \infty$ almost surely, it follows that, by the optional sampling theorem, for all $\lambda > 0$,
\begin{align}
    \EE \sbr{ \indic \cbr{\tau_k \le T} \cdot w_{\tau_k}(\lambda) } \le \EE \sbr{w_0(\lambda)} = 1. \label{eqn:optional_sampling_tau_k}
\end{align}

Rewriting Eq.~\eqref{eqn:optional_sampling_tau_k}, we have
\begin{align*}
    \EE \sbr{ \indic \cbr{\tau_k \le T} \cdot \exp \rbr{\lambda Z_{\tau_k} - n_i(\tau_k) \frac{\lambda^2}{8}} } \le 1.
\end{align*}

It then follows that, by Markov's inequality, for any $\delta > 0$,
\begin{align*}
    \Pr \rbr{\indic \cbr{\tau_k \le T} \cdot \exp \rbr{\lambda Z_{\tau_k} - n_i(\tau_k) \frac{\lambda^2}{8}} \ge \frac{1}{\delta}} & \le \frac{ \EE \sbr{ \indic \cbr{\tau_k \le T} \cdot \exp \rbr{\lambda Z_{\tau_k} - n_i(\tau_k) \frac{\lambda^2}{8}} }}{\frac{1}{\delta}} 
     \le \delta;
\end{align*}
therefore,
\[
\Pr \rbr{ \cbr{\tau_k \le T} \cap \cbr{\exp \rbr{\lambda Z_{\tau_k} - n_i(\tau_k) \frac{\lambda^2}{8}} \ge \frac{1}{\delta}}} \le \delta.
\]

Rearranging the terms in the above inequality, we have, for any $\lambda > 0$,
\begin{align*}
    \Pr \rbr{ \cbr{\tau_k = T + 1} \cup \cbr{ \cbr{\tau_k \le T} \cap \cbr{ \frac{1}{n_i(\tau_k)} Z_{\tau_k} - \frac{\lambda}{8} < \frac{\ln \rbr{\frac{1}{\delta}}}{n_i(\tau_k) \cdot \lambda}}}} > 1 - \delta,
\end{align*}
where we use the elementary fact that for sets $A$ and $B$, $\neg (A \cap B) = \neg A \cup \rbr{A \cap \neg B}$.

Choosing $\lambda = \sqrt{\frac{\ln (\frac{1}{\delta})}{k}}$ and using the fact that $n_i(\tau_k) \ge k$, we have
\begin{align*}
    \Pr \rbr{ \cbr{\tau_k = T + 1} \cup \cbr{\cbr{\tau_k \le T} \cap \cbr{ \frac{1}{n_i(\tau_k)} Z_{\tau_k} < \sqrt{\frac{2\ln (\frac{1}{\delta})}{k}} }}} > 1 - \delta;
\end{align*}
it then follows that
\begin{align}
    \Pr \rbr{ \cbr{\tau_k = T + 1} \cup \cbr{ \cbr{\tau_k \le T} \cap \cbr{ \frac{1}{n_i(\tau_k)} Z_{\tau_k} < \sqrt{\frac{2\ln (\frac{2}{\delta})}{k}} }}} > 1 - \delta. \label{eqn:agg_concentration_k}
\end{align}

We now consider two cases:
\begin{enumerate}
    \item $n_i(\tau_k) \le M$. We have $\frac{1}{n_i(\tau_k)}Z_{\tau_k} \le 1 < \sqrt{\frac{2\ln(\frac{2}{\delta})}{(n_i(\tau_k) - M) \vee 1}} = \sqrt{2\ln(\frac{2}{\delta})}$ trivially for $\delta \in (0,1]$.
    
    \item $n_i(\tau_k) \ge M + 1$. Since $k \ge n_i(\tau_k) - M$, we have $\sqrt{\frac{2\ln(\frac{2}{\delta})}{k}} \le \sqrt{\frac{2\ln(\frac{2}{\delta})}{n_i(\tau_k) - M}} = \sqrt{\frac{2\ln(\frac{2}{\delta})}{(n_i(\tau_k) - M) \vee 1}}$.
\end{enumerate}

Eq.~\eqref{eqn:agg_concentration_mu_tilde_1} then follows from Eq.~\eqref{eqn:agg_concentration_k} and the elementary fact that $A \subseteq B$ if $(A \cap C) \subseteq B$ and $(A \cap \neg C) \subseteq B$.
This completes the proof.
\end{proof}

\subsection{Other concentration bounds}

Recall the definition of stopping times $\pi_k(i,p)$ for any arm $i$ and player $p$ (see Definition~\ref{def:pi_k}). 
\begin{lemma}
\label{lem:ind_concentration_pi_k}
For any $i \in [K]$, $p \in [M]$, $k \in [T] \cup \cbr{0}$, and $\delta \in (0,1]$, we have
\begin{align}
\Pr \rbr{ \cbr{\pi_k(i,p) = T+1} \cup \cbr{ \cbr{\pi_k(i,p) \le T} \cap \cbr{ \bigg\lvert \indmu_i^p(\pi_k(i,p)) - \mu_i^{p} \bigg\rvert \le \sqrt{\frac{2 \ln \rbr{\frac{4}{\delta}}}{ n_i^p(\pi_k(i,p)) \vee 1}} }}} & > 1 - \delta.
\label{eqn:ind_concentration_1}
\end{align}
\end{lemma}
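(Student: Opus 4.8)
The plan is to adapt the proof of Lemma~\ref{lem:agg_concentration_tau_k} to the single arm--player pair $(i,p)$, where the argument in fact simplifies because each active round contributes at most one pull of arm $i$ by player $p$. First I would dispose of the degenerate case $k=0$: since $\pi_0(i,p)=0$ and $\indmu_i^p(0)=0$, we have $\abr{\indmu_i^p(\pi_0(i,p))-\mu_i^p} = \abr{\mu_i^p}\le 1 \le \sqrt{2\ln(4/\delta)}$ for every $\delta\in(0,1]$, so Eq.~\eqref{eqn:ind_concentration_1} holds trivially.

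For $k\in[T]$, I would construct an exponential supermartingale tailored to player $p$ and arm $i$. Let $Z_t = \sum_{s\le t}\indic\cbr{p\in\Pcal_s, i_s^p=i}(r_s^p-\mu_i^p)$, and for $\lambda>0$ set $w_t(\lambda)=\exp\rbr{\lambda Z_t - n_i^p(t)\frac{\lambda^2}{8}}$. The claim is that $\cbr{w_t(\lambda)}_{t=0}^T$ is a nonnegative supermartingale with respect to $\cbr{\Fcal_t}_{t=0}^T$. As in Lemma~\ref{lem:agg_concentration_tau_k}, the per-round increment factor is $\exp\rbr{\indic\cbr{p\in\Pcal_t, i_t^p=i}\rbr{\lambda(r_t^p-\mu_i^p)-\frac{\lambda^2}{8}}}$; conditioning on $\Fcal_{t-1}$ together with the arm choice $i_t^p$ (which is drawn from the Thompson samples and hence independent of the fresh reward $r_t^p$), Hoeffding's lemma applied to $r_t^p\in[0,1]$ gives $\EE\sbr{\exp\rbr{\lambda(r_t^p-\mu_i^p)}\mid\Fcal_{t-1}, i_t^p=i}\le\exp\rbr{\frac{\lambda^2}{8}}$, whence $\EE\sbr{w_t(\lambda)\mid\Fcal_{t-1}}\le w_{t-1}(\lambda)$. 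Note that only a single term appears per round here, rather than a product over $\Pcal_t$, so this step is strictly simpler than its aggregate counterpart.

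Next I would apply the optional sampling theorem at the bounded stopping time $\pi_k(i,p)\le T+1$ (a stopping time by Remark~\ref{rem:pi_k_stopping}) to obtain $\EE\sbr{\indic\cbr{\pi_k(i,p)\le T}\, w_{\pi_k(i,p)}(\lambda)}\le 1$, and then invoke Markov's inequality to produce, for each fixed $\lambda>0$, a one-sided tail bound on $\frac{1}{n_i^p(\pi_k(i,p))}Z_{\pi_k(i,p)}-\frac{\lambda}{8}$. Choosing $\lambda=\sqrt{\ln(2/\delta')/k}$ and using the crucial fact that $n_i^p(\pi_k(i,p))=k$ whenever $\pi_k(i,p)\le T$ (because player $p$ pulls exactly one arm per active round), this yields the upper-tail statement that, with probability at least $1-\delta'$, either $\pi_k(i,p)=T+1$ or $\indmu_i^p(\pi_k(i,p))-\mu_i^p<\sqrt{2\ln(2/\delta')/(n_i^p(\pi_k(i,p))\vee 1)}$. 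The two-sided bound of Eq.~\eqref{eqn:ind_concentration_1} then follows by running the identical argument on $-Z_t$ for the lower tail, setting $\delta'=\delta/2$ in each of the two one-sided bounds, and taking a union bound; this $\delta/2$ split is exactly what converts $\ln(2/\delta')$ into $\ln(4/\delta)$.

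The main obstacle is bookkeeping rather than anything conceptual: one must verify the supermartingale property for both the upper and lower tails and carefully track the factor-of-two loss from the union bound so that the final radius reads $\sqrt{2\ln(4/\delta)/\rbr{n_i^p(\pi_k(i,p))\vee 1}}$. The one genuine simplification to exploit is that, unlike the aggregate stopping time $\tau_k(i)$ of Lemma~\ref{lem:agg_concentration_tau_k} for which $n_i(\tau_k)$ can range over $[k,k+M-1]$, the single-player stopping time $\pi_k(i,p)$ satisfies $n_i^p(\pi_k(i,p))=k$ exactly on $\cbr{\pi_k(i,p)\le T}$, so the case analysis on the count (the $n_i(\tau_k)\le M$ versus $n_i(\tau_k)\ge M+1$ split there) is entirely unnecessary.
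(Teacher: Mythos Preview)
Your proposal is correct and follows essentially the same approach as the paper: handle $k=0$ trivially, build the exponential supermartingale $\exp(\lambda X_t - n_i^p(t)\lambda^2/8)$ from single-player increments, apply optional sampling at $\pi_k(i,p)$ and Markov's inequality for a one-sided bound, then symmetrize and union bound to get the two-sided $\ln(4/\delta)$ radius. You also correctly identify the simplification over Lemma~\ref{lem:agg_concentration_tau_k}, namely that $n_i^p(\pi_k(i,p))=k$ exactly, so no case split on the count is needed.
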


\begin{corollary}
\label{col:ind_concentration_alternative}
For any $i \in [K]$, $p \in [M]$, $k \in [T] \cup \cbr{0}$, and $z \ge 0$,
we have
\begin{equation}
\Pr \rbr{ \rbr{\pi_k(i,p) \le T} \wedge \rbr{ \bigg\lvert  \mu_i^{p} - \indmu_i^p(\pi_k(i,p)) \bigg\rvert \geq z \sqrt{\frac{4}{n_i^p(\pi_k(i,p)) \vee 1}}}} \leq 4 e^{-2z^2}.
\label{eqn:ind_concentration_alternative}
\end{equation}
\end{corollary}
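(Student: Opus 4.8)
The plan is to derive Corollary~\ref{col:ind_concentration_alternative} from Lemma~\ref{lem:ind_concentration_pi_k} by exactly the same two-step recipe used to obtain Corollary~\ref{col:agg_concentration_alternative} from Eq.~\eqref{eqn:agg_concentration_2}: a trivial case for small $z$, and a reparametrization $\delta = 4 e^{-2z^2}$ for large $z$, followed by De Morgan's law. No new probabilistic content is required---Lemma~\ref{lem:ind_concentration_pi_k} already carries all the work---so this is purely a restatement of the same bound in ``tail'' form.

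First I would dispose of the trivial regime. Since $4 e^{-2z^2} \ge 1$ precisely when $z \le \sqrt{\ln 2}$, in that range the claimed upper bound in Eq.~\eqref{eqn:ind_concentration_alternative} is at least $1$, so the inequality holds automatically because any probability is at most $1$. For $z > \sqrt{\ln 2}$ I would instead set $\delta = 4 e^{-2z^2}$, which then lies in $(0,1]$, and substitute it into Eq.~\eqref{eqn:ind_concentration_1}. The only computation is the identity $\sqrt{\tfrac{2 \ln(4/\delta)}{n_i^p(\pi_k(i,p)) \vee 1}} = z \sqrt{\tfrac{4}{n_i^p(\pi_k(i,p)) \vee 1}}$, which follows from $\ln(4/\delta) = 2z^2$ together with $z \ge 0$; hence the confidence radius in the lemma coincides exactly with the threshold in the corollary.

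I would then pass to complements. Because $\cbr{\pi_k(i,p) \le T}$ and $\cbr{\pi_k(i,p) = T+1}$ partition the sample space, the complement of the high-probability event in Eq.~\eqref{eqn:ind_concentration_1} simplifies---via the elementary identity $\neg(A \cup (B \cap C)) = B \cap \neg C$ valid when $A = \neg B$---to precisely $\cbr{\pi_k(i,p) \le T} \cap \cbr{\abr{\indmu_i^p(\pi_k(i,p)) - \mu_i^p} > z \sqrt{\tfrac{4}{n_i^p(\pi_k(i,p)) \vee 1}}}$, whose probability the lemma bounds by $\delta = 4 e^{-2z^2}$. Since $\abr{\mu_i^p - \indmu_i^p(\pi_k(i,p))} = \abr{\indmu_i^p(\pi_k(i,p)) - \mu_i^p}$, this is the event of Eq.~\eqref{eqn:ind_concentration_alternative} up to the choice of inequality sign.

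The only point that needs a moment of care---and the closest thing to an obstacle---is the gap between the strict ``$>$'' produced by De Morgan and the non-strict ``$\ge$'' in the corollary, which is genuinely relevant here because empirical means of bounded (possibly discrete, e.g.\ Bernoulli) rewards can place positive mass on the boundary. I would resolve it using the strictness of the lemma's guarantee ($> 1-\delta$): for any $\delta' > 4 e^{-2z^2}$ the event $\cbr{\abr{\,\cdot\,} \ge z \sqrt{4/(\,\cdot\,)}}$ is contained in $\cbr{\abr{\,\cdot\,} > \sqrt{2\ln(4/\delta')/(\,\cdot\,)}}$, whose probability is $< \delta'$; letting $\delta' \downarrow 4 e^{-2z^2}$ then yields the desired non-strict bound $\le 4 e^{-2z^2}$. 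This mirrors the reasoning tacitly used in the proof of Corollary~\ref{col:agg_concentration_alternative}, and completes the argument.
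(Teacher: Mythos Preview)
Your proposal is correct and follows essentially the same approach as the paper: split into the trivial regime $z \le \sqrt{\ln 2}$ (equivalently $z \le \sqrt{\tfrac12 \ln 4}$) where $4e^{-2z^2}\ge 1$, and otherwise set $\delta = 4e^{-2z^2}$ in Lemma~\ref{lem:ind_concentration_pi_k} and take complements via De Morgan. You are in fact more careful than the paper, which glosses over the strict/non-strict inequality gap that you resolve by the limiting argument in $\delta'$.
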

\begin{proof}[Proof of Corollary~\ref{col:ind_concentration_alternative}]
If $z \leq \sqrt{\frac12 \ln 4}$, Equation~\eqref{eqn:ind_concentration_alternative} holds trivially as $4 e^{-2z^2} \geq 1$. 
Otherwise $z > \sqrt{\frac12 \ln 4}$. In this case, let $\delta = 4 e^{-2z^2} \in (0,1]$ in Equation~\eqref{eqn:ind_concentration_1}, and using De Morgan's law, we also obtain Equation~\eqref{eqn:ind_concentration_alternative}.
\end{proof}

\begin{proof}[Proof of Lemma~\ref{lem:ind_concentration_pi_k}]
The proof of Lemma~\ref{lem:ind_concentration_pi_k} is largely similar to the one for Lemma~\ref{lem:agg_concentration_tau_k}. Therefore, we omit some details here to avoid redundancy. See the proof of Lemma~\ref{lem:agg_concentration_tau_k} for full details.

Let us fix any arm $i \in [K]$ and player $p \in [M]$.
Throughout this proof, 
to ease the exposition, we use $\pi_k$ to denote $\pi_k(i,p)$.

We first observe that when $k = 0$, we have $\pi_k = 0$, $\indmu_i^p(0) = 0$, and $n_i^p(0) = 0$. It follows that $\bigg\lvert \indmu_i^p(\pi_k) - \mu_i^{p} \bigg\rvert \le 1 \le \sqrt{2\ln \rbr{\frac 4 \delta}}$ trivially.

It then suffices to only consider the case when $k \in [T]$. Note that $n_i^p(\pi_k) = k \ge 1$. We will show that
\begin{align}
\Pr \rbr{ \cbr{\pi_k = T+1} \cup \cbr{ \cbr{\pi_k \le T} \cap \cbr{ \indmu_i^p(\pi_k) - \mu_i^{p} \le \sqrt{\frac{2 \ln \rbr{\frac{2}{\delta}}}{ n_i^p(\pi_k) }} }}} & > 1 - \delta.
\label{eqn:ind_concentration_pi_direction1}
\end{align}

For $t \in [T]\cup \cbr{0}$, let $X_t =  \sum_{s \in [t]} \indic \cbr{p \in \Pcal_s, i_s^p = i} (r_s^p - \mu_i^p)$; and for $\lambda > 0$, further define $\xi_t(\lambda) = \exp \rbr{\lambda X_t - n_i^p(t) \frac{\lambda^2}{8}}$.
It can be verified that $\cbr{\xi_t(\lambda)}_{t=0}^T$ is a nonnegative supermartingale with respect to $\cbr{\Fcal_t}_{t=0}^T$ for all $\lambda > 0$:
\begin{enumerate}
    \item $\EE \sbr{ \abr{\xi_t(\lambda)}} < \infty$ for all $t \in [T] \cup \cbr{0}$;
    
    \item $\xi_t(\lambda) \ge 0$ for all $t \in [T] \cup \cbr{0}$;
    
    \item $\EE \sbr{\xi_t(\lambda) \mid \Fcal_{t-1}} \le \xi_{t-1}(\lambda)$ for all $t \in [T]$.
\end{enumerate}
Item 3 is true because
\begin{align*}
    &\ \EE \sbr{\xi_t(\lambda) \mid \Fcal_{t-1}} \\
    = &\ \exp \rbr{ \sum_{s=1}^{t-1} \indic \cbr{p \in \Pcal_s, i_s^p = i} \rbr{ \lambda(r_s^p - \mu_i^p) -  \frac{\lambda^2}{8}}} \EE \sbr{\exp \rbr{ \indic \cbr{p \in \Pcal_t, i_t^p = i} \rbr{ \lambda(r_t^p - \mu_i^p) -  \frac{\lambda^2}{8}}} \mid \Fcal_{t-1}} \\
    = &\ \xi_{t-1}(\lambda) \cdot \EE \sbr{\exp \rbr{\lambda \cdot \indic \cbr{p \in \Pcal_t, i_t^p = i} (r_t^p - \mu_i^p)} \exp \rbr{ - \indic \cbr{p \in \Pcal_t, i_t^p = i} \frac{\lambda^2}{8}}\mid \Fcal_{t-1}} \\
    = & \xi_{t-1}(\lambda) \cdot \EE\sbr{ \EE \sbr{\exp \rbr{\lambda \cdot \indic \cbr{p \in \Pcal_t, i_t^p = i} (r_t^p - \mu_i^p)} \mid \Fcal_{t-1}, i_t^p  }  \cdot \exp \rbr{ - \indic \cbr{p \in \Pcal_t, i_t^p = i} \frac{\lambda^2}{8}}   \mid \Fcal_{t-1}  } \\
    \le &\ \xi_{t-1}(\lambda),
\end{align*}
where we use the law of total expectation, the observation that $\xi_{t-1}(\lambda)$ is $\Fcal_{t-1}$-measurable, and Hoeffding's Lemma. %

Recall from Remark~\ref{rem:pi_k_stopping} that $\pi_k$ is a stopping time with respect to $\cbr{\Fcal_t}_{t=0}^{T}$ and $\pi_k \le T+1 < \infty$ almost surely. Then, by the optional sampling theorem, for all $\lambda > 0$,
\begin{align}
    \EE \sbr{ \indic \cbr{\pi_k \le T} \cdot \xi_{\pi_k}(\lambda) } \le \EE \sbr{\xi_0(\lambda)} = 1. \label{eqn:optional_sampling_pi_k}
\end{align}
In other words,
\begin{align*}
    \EE \sbr{ \indic \cbr{\pi_k \le T} \cdot \exp \rbr{\lambda X_{\pi_k} - n_i^p(\pi_k) \frac{\lambda^2}{8}} } \le 1.
\end{align*}

By Markov's inequality, we have
\begin{align*}
    \Pr \rbr{\indic \cbr{\pi_k \le T} \cdot \exp \rbr{\lambda X_{\pi_k} - n_i^p(\pi_k) \frac{\lambda^2}{8}} \ge \frac{1}{\delta}} \le \delta;
\end{align*}
and thus,
\[
\Pr \rbr{ \cbr{\pi_k \le T} \cap \cbr{\exp \rbr{\lambda X_{\pi_k} - n_i^p(\pi_k) \frac{\lambda^2}{8}} \ge \frac{1}{\delta}}} \le \delta.
\]

Using the elementary fact that for sets $A$ and $B$, $\neg (A \cap B) = \neg A \cup \rbr{A \cap \neg B}$, we have, for any $\lambda > 0$,
\begin{align*}
    \Pr \rbr{ \cbr{\pi_k = T + 1} \cup \cbr{ \cbr{\pi_k \le T} \cap \cbr{ \frac{1}{n_i^p(\pi_k)} X_{\pi_k} - \frac{\lambda}{8} < \frac{\ln \rbr{\frac{1}{\delta}}}{n_i^p(\pi_k) \cdot \lambda}}}} > 1 - \delta,
\end{align*}
where we slightly rearrange the terms.

Choose $\lambda = \sqrt{\frac{\ln (\frac{1}{\delta})}{k}}$ and observe that $n_i^p(\pi_k) = k$. It follows that
\begin{align*}
    \Pr \rbr{ \cbr{\pi_k = T + 1} \cup \cbr{\cbr{\pi_k \le T} \cap \cbr{ \frac{1}{n_i^p(\pi_k)} X_{\pi_k} < \sqrt{\frac{2\ln (\frac{1}{\delta})}{n_i^p(\pi_k)}} }}} > 1 - \delta.
\end{align*}

Eq.~\eqref{eqn:ind_concentration_pi_direction1} follows trivially by the observation that $\ln (\frac{2}{\delta}) > \ln (\frac{1}{\delta})$. By symmetry, it can also be shown that the following inequality is true:
\begin{align*}
\Pr \rbr{ \cbr{\pi_k = T+1} \cup \cbr{ \cbr{\pi_k \le T} \cap \cbr{ \mu_i^{p} - \indmu_i^p(\pi_k) \le \sqrt{\frac{2 \ln \rbr{\frac{2}{\delta}}}{ n_i^p(\pi_k)}} }}} & > 1 - \delta.
\end{align*}
The proof is then completed by applying the union bound.
\end{proof}

\begin{definition}
\label{def:event_e_tau_pi}
For any $\delta \in (0,1]$, let
\begin{align*}
& E_{\text{agg}}(\delta) = \Vast\{ 
\forall i \in [K], \forall k \in [TM] \cup \cbr{0}, \rbr{\tau_k(i) = T + 1} \vee \vast( \rbr{\tau_k(i) \le T} \wedge \\         
& \rbr{\forall p \in [M],\ \aggmu_i(\tau_k(i)) - \mu_i^{p} \le \sqrt{\frac{2 \ln \rbr{\frac{2}{\delta}}}{ \rbr{n_i(\tau_k(i)) -M} \vee 1}} + 2 \epsilon, \mu_i^{p} - \aggmu_i(\tau_k(i)) \le \sqrt{\frac{2 \ln \rbr{\frac{2}{\delta}}}{\rbr{n_i(\tau_k(i)) -M} \vee 1}}} \vast)
\Vast\},
\end{align*}
and
\begin{align*}
E_{\text{ind}}(\delta) = \vast\{
\forall i \in [K], \forall p \in [M], & \forall k \in [T] \cup \cbr{0}, \rbr{\pi_k(i,p) = T + 1} \vee \\
& \rbr{ \rbr{\pi_k(i,p) \le T} \wedge \rbr{ \bigg\lvert\indmu_i^p(\pi_k(i,p)) - \mu_i^p \bigg\rvert \le \sqrt{\frac{2 \ln (\frac{4}{\delta})}{n_i^p(\pi_k(i,p)) \vee 1}}}} \vast\}.
\end{align*}
Furthermore, let
\[
E(\delta) = E_{\text{agg}}(\delta) \cap E_{\text{ind}}(\delta).
\]
\end{definition}

\begin{corollary}
\label{cor:E}
For $\delta \in (0,1]$,
\[
\Pr (E(\delta)) \ge 1 - 6T^3\delta.
\]
\end{corollary}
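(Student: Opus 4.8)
The plan is to prove Corollary~\ref{cor:E} by a union bound over all the stopping-time indices, applying Lemmas~\ref{lem:agg_concentration_tau_k} and~\ref{lem:ind_concentration_pi_k} at each index with a suitably scaled failure probability. First I would unpack the definitions: $E(\delta) = E_{\text{agg}}(\delta) \cap E_{\text{ind}}(\delta)$, so by the union bound it suffices to bound $\Pr(\neg E_{\text{agg}}(\delta))$ and $\Pr(\neg E_{\text{ind}}(\delta))$ separately and add them. For the aggregate event, $E_{\text{agg}}(\delta)$ is an intersection over all arms $i \in [K]$ and all indices $k \in [TM] \cup \{0\}$ of exactly the event whose complement is controlled by Lemma~\ref{lem:agg_concentration_tau_k}. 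That lemma gives, for each fixed $(i,k)$, that each of Eq.~\eqref{eqn:agg_concentration_1} and Eq.~\eqref{eqn:agg_concentration_2} fails with probability at most $\delta$; combining the two via another union bound, the conjoined two-sided event for a fixed $(i,k)$ fails with probability at most $2\delta$.

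Next I would count the indices and sum the failure probabilities. There are $K$ arms and $|[TM] \cup \{0\}| = TM + 1$ values of $k$, so the aggregate bad event fails with probability at most $K(TM+1)\cdot 2\delta$. Similarly, for the individual event, Lemma~\ref{lem:ind_concentration_pi_k} gives that each fixed triple $(i,p,k)$ with $k \in [T] \cup \{0\}$ fails with probability at most $\delta$; there are $K$ arms, $M$ players, and $|[T] \cup \{0\}| = T+1$ values of $k$, so $\Pr(\neg E_{\text{ind}}(\delta)) \le KM(T+1)\delta$. Adding the two contributions gives a total failure probability of at most $2KM(T+1)\delta + KM(T+1)\delta = 3KM(T+1)\delta$, and then I would crudely bound this using $K \le T$, $M \le T$, and $T + 1 \le 2T$ (valid since $T > \max(K,M) \ge 1$, so all three of $K,M < T$ hold) to get $3 \cdot T \cdot T \cdot 2T \cdot \delta = 6T^3 \delta$, which yields $\Pr(E(\delta)) \ge 1 - 6T^3\delta$ as claimed.

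The only real subtlety — and the main thing to verify carefully — is that the quantified event defining $E_{\text{agg}}(\delta)$ matches the per-index event of Lemma~\ref{lem:agg_concentration_tau_k} precisely, including the ``$\tau_k = T+1$ or ($\tau_k \le T$ and the concentration holds)'' disjunction structure. The lemma is stated exactly in this disjunctive form, so the complement of the universally-quantified event $E_{\text{agg}}(\delta)$ is the union over $(i,k)$ of the per-index complements, each of which splits into a failure of Eq.~\eqref{eqn:agg_concentration_1} or of Eq.~\eqref{eqn:agg_concentration_2}; the crude index-counting union bound is then routine. I do not expect any genuine obstacle here, since the heavy lifting (the supermartingale construction and optional sampling at the random stopping times) has already been done inside Lemmas~\ref{lem:agg_concentration_tau_k} and~\ref{lem:ind_concentration_pi_k}; Corollary~\ref{cor:E} is essentially bookkeeping. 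The one place to be slightly generous is the final polynomial bound: rather than tracking the exact constant $3KM(T+1)$, I would simply over-bound it by $6T^3$ to obtain a clean statement, noting that any looser polynomial-in-$T$ bound would suffice downstream since $\delta$ will ultimately be chosen polynomially small in $1/T$.
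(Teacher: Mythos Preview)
Your proposal is correct and follows essentially the same approach as the paper's proof: a union bound over all $(i,k)$ for $E_{\text{agg}}$ (using the two one-sided bounds of Lemma~\ref{lem:agg_concentration_tau_k} at cost $2\delta$ each) and over all $(i,p,k)$ for $E_{\text{ind}}$ (using Lemma~\ref{lem:ind_concentration_pi_k} at cost $\delta$ each), followed by the crude replacement $K,M \le T$ and $T+1 \le 2T$. The paper bounds the two pieces separately as $4T^3\delta$ and $2T^3\delta$ before summing, whereas you first sum to $3KM(T+1)\delta$ and then bound, but the argument and final constant are identical.
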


\begin{proof}
By the union bound, Lemma~\ref{lem:agg_concentration_tau_k}, Lemma~\ref{lem:ind_concentration_pi_k}, and the assumption that $T \ge \max(K, M)$, we have
\begin{align*}
    \Pr (E_{\text{agg}}(\delta)) & \ge 1 - K(TM+1) (2\delta) \ge 1 - 4T^3\delta. \\
    \Pr (E_{\text{ind}}(\delta)) & \ge 1 - KM(T+1) \delta \ge 1 - 2T^3\delta.
\end{align*}
The corollary then follows by the union bound.
\end{proof}

\subsection{Clean Event}

We now define our notion of ``clean'' event for each $t$.
\begin{definition}
\label{def:event_e}
For any $t \in [T+1]$, let
\begin{align*}
\Ecal_t = \vast\{
\forall p \in [M],
\forall i \in [K],
\ \Big\lvert{\indmu_i^p(t-1) - \mu_i^p} \Big\rvert & \le \sqrt{\frac{ 10 \ln T}{ \wbar{n_i^p}(t-1)}}, \\
\aggmu_i^p(t-1) - \mu_i^p & \le \sqrt{\frac{ 10 \ln T}{ \wbar{m_i^p}(t-1) } } + 2\epsilon, \\
\mu_i^p - \aggmu_i^p(t-1) & \le \sqrt{\frac{ 10 \ln T}{  \wbar{m_i^p}(t-1) }}
\qquad \quad \vast\},
\end{align*}
where we recall that $\wbar{n_i^p}(t-1) = n_i^p(t-1) \vee 1$, $\wbar{m_i^p}(t-1) = (m_i^p(t-1) - M) \vee 1$. 
Furthermore, let $\wbar{\Ecal_t}$ denote the complement of $\Ecal_t$.
\end{definition}

The following lemma shows that the clean event happens with high probability.
\begin{lemma}
\label{lem:clean_event}
\[
\Pr (\Ecal_t) > 1 - \frac{24}{T^2}.
\]
\end{lemma}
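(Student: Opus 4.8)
The plan is to derive the clean event directly from the stopping-time concentration event $E(\delta)$ of Corollary~\ref{cor:E}, via a single carefully tuned choice of $\delta$. Concretely, I would fix $t \in [T+1]$ and set $\delta = 4/T^5$. With this choice the two scale factors line up exactly, $2\ln(4/\delta) = 10\ln T$ and $2\ln(2/\delta) \le 10\ln T$, while simultaneously $6T^3\delta = 24/T^2$. Hence it suffices to prove the inclusion $E(\delta) \subseteq \Ecal_t$, after which Corollary~\ref{cor:E} gives $\Pr(\Ecal_t) \ge \Pr(E(\delta)) \ge 1 - 24/T^2$. (Note that $T \ge 2$ under the standing assumption $T > \max(K,M)$, so $\delta \in (0,1]$.)

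The heart of the argument is to transfer the concentration guarantees, which $E_{\text{agg}}(\delta)$ and $E_{\text{ind}}(\delta)$ supply at the \emph{random} stopping times $\tau_k(i)$ and $\pi_k(i,p)$, onto the deterministic-round estimates appearing in $\Ecal_t$. Fix $i \in [K]$ and $p \in [M]$, and set $k := n_i^p(t-1)$ and $\ell := m_i^p(t-1)$. I would first identify the relevant stopping times using Definitions~\ref{def:last_pulled_round}, \ref{def:pi_k}, \ref{def:tau_k} and~\ref{def:m_i^p}: the most recent pull satisfies $u_i^p(t-1) = \pi_k(i,p)$, and since $n_i(u_i^p(t-1)) = \ell$ while arm $i$ is pulled in round $u_i^p(t-1)$ (by player $p$), the total count reaches $\ell$ exactly at that round, so $\tau_\ell(i) = u_i^p(t-1) = \pi_k(i,p)$. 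The invariant property (Examples~\ref{ex:m-n-invariant} and~\ref{ex:mu-var-invariant}) then yields $\indmu_i^p(t-1) = \indmu_i^p(\pi_k(i,p))$ with $n_i^p(\pi_k(i,p)) = k$, while $\aggmu_i^p(t-1) = \aggmu_i(u_i^p(t-1)) = \aggmu_i(\tau_\ell(i))$ holds by definition with $n_i(\tau_\ell(i)) = \ell$.

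With these identifications in hand, I would observe that the realized stopping times never land in the $T+1$ branch: $k \le t-1 \le T$ forces $\pi_k(i,p) \le T$ (or $\pi_0 = 0$), and $u_i^p(t-1) \le t-1 \le T$ forces $\tau_\ell(i) \le T$ (or $\tau_0 = 0$). Therefore, on $E_{\text{ind}}(\delta)$ the bound at $\pi_k(i,p)$ applies, giving $\abr{\indmu_i^p(t-1) - \mu_i^p} \le \sqrt{2\ln(4/\delta)/\wbar{n_i^p}(t-1)} = \sqrt{10\ln T/\wbar{n_i^p}(t-1)}$; and on $E_{\text{agg}}(\delta)$ the two-sided bound at $\tau_\ell(i)$ gives both $\aggmu_i^p(t-1) - \mu_i^p \le \sqrt{2\ln(2/\delta)/\wbar{m_i^p}(t-1)} + 2\epsilon \le \sqrt{10\ln T/\wbar{m_i^p}(t-1)} + 2\epsilon$ and $\mu_i^p - \aggmu_i^p(t-1) \le \sqrt{10\ln T/\wbar{m_i^p}(t-1)}$. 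Since $i$ and $p$ were arbitrary, these are exactly the three inequalities defining $\Ecal_t$, establishing $E(\delta) \subseteq \Ecal_t$.

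The main obstacle is the second step: one must argue cleanly that a bound holding uniformly over all indices $k$ at the stopping times can be specialized to the random realized count $k = n_i^p(t-1)$, and that the stopping times $\pi_k(i,p)$ and $\tau_\ell(i)$ genuinely coincide with the most recent pull $u_i^p(t-1)$. This is precisely where the algorithm's delayed-update design (line~\ref{line:invariant}) and the resulting invariant property are indispensable, since they guarantee that the fixed-round estimates equal their stopping-time counterparts. The remaining pieces---matching the logarithmic constants through $\delta = 4/T^5$, and checking the $k=0$ / $\ell = 0$ edge cases (where $\indmu_i^p = 0$, $\aggmu_i = \epsilon$, and the trivial bounds $\le 1 \le \sqrt{10\ln T}$ suffice)---are routine.
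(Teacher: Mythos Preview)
Your proposal is correct and follows essentially the same approach as the paper's own proof: choose $\delta = 4/T^5$, show the inclusion $E(\delta) \subseteq \Ecal_t$ by identifying the round-$(t-1)$ estimates with their stopping-time counterparts via $u_i^p(t-1) = \pi_{n_i^p(t-1)}(i,p) = \tau_{m_i^p(t-1)}(i)$, and then invoke Corollary~\ref{cor:E}. The paper's proof is slightly terser in naming these indices but is otherwise identical in structure and content.
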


\begin{proof}[Proof]

The proof of Lemma~\ref{lem:clean_event} follows from Corollary~\ref{cor:E}.
It suffices to show that, for any $t$, $E(\frac{4}{T^5}) \subseteq \Ecal_t$. To this end, we will show that if $E(\frac{4}{T^5})$ happens, then $\Ecal_t$ must happen. 

For any $t \in [T+1]$, $i \in [K]$, $p \in [M]$, let $u = u_i^p(t-1)$ be the round in which player $p$ last pulls arm $i$ (see Definition~\ref{def:last_pulled_round}). In addition, let $s = n_i^p(u) \in ([T]\cup \cbr{0})$ and $k = n_i(u) \in  ([TM]\cup \cbr{0})$.
Note that $\pi_s(i,p) = u \le T$ and $\tau_k(i) = u \le T$.

It then follows by definition that,
\begin{align*}
    \indmu_i^p(t-1) = \indmu_i^p(\pi_s(i,p)), & \quad n_i^p(t-1) = n_i^p(\pi_s(i,p)); \\
    \aggmu_i^p(t-1) = \aggmu_i(\tau_k(i)), & \quad m_i^p(t-1) = n_i(\tau_k(p)).
\end{align*}
The proof is then completed straightforwardly by the definition of $E(\frac{4}{T^5})$, which indicates that for all $s \in [T]\cup \cbr{0}$ and $k \in  [TM]\cup \cbr{0}$,
\begin{align*}
    \Big\lvert{ \indmu_i^p(\pi_s(i,p)) - \mu_i^p} \Big\rvert & \le \sqrt{\frac{ 10 \ln T}{ n_i^p(\pi_s(i,p)) \vee 1}}, \\
    \aggmu_i(\tau_k(i)) - \mu_i^p & \le \sqrt{\frac{ 10 \ln T}{ \rbr{n_i(\tau_k(p)) -M} \vee 1}} + 2\epsilon, \text{ and}\\
    \mu_i^p - \aggmu_i(\tau_k(i)) & \le \sqrt{\frac{ 10 \ln T}{ \rbr{n_i(\tau_k(p)) -M} \vee 1}}.
    \qedhere
\end{align*}
\end{proof}

\newpage
\section{Proofs of Theorem~\ref{thm:ts_gap_dep_ub} and Theorem~\ref{thm:ts_gap_indep_ub}}
\label{sec:appendix_main_proofs}

The following lemmas are central to our proofs of Theorem~\ref{thm:ts_gap_dep_ub} and Theorem~\ref{thm:ts_gap_indep_ub}. In Section~\ref{sec:appendix_subpar_arms_proof}, we prove Lemma~\ref{lem:i-eps-robustagg-ts}. In Section~\ref{sec:appendix_nonsubpar_arms}, we prove Lemma~\ref{lem:i-eps-c-robustagg-ts}. We then conclude our proofs in Section~\ref{sec:appendix_concluding_proofs}.

\begin{lemma}[Subpar arms]
For any arm $i \in \Ical_{10\epsilon}$,
\[
\EE \sbr{n_i(T)} \le \Ocal \rbr{\frac{\ln T}{(\Delta_i^{\min})^2} + M},
\]
where we recall that $\Delta_i^{\min} = \min_{p \in [M]} \Delta_i^p$.
\label{lem:i-eps-robustagg-ts}
\end{lemma}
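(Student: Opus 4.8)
The plan is to assemble the three-way decomposition of $\EE[n_i(T)]$ outlined in Section~\ref{sec:pf-ingr} and to invoke the component bounds established there. Fix an arm $i \in \Ical_{10\epsilon}$. By Fact~\ref{fact:basic_facts_subpar_arms} there is a common near-optimal arm $\dagger \in \Ical_{2\epsilon}^C$, and by Fact~\ref{fact:small_delta} the quantity $\delta_i^p := \mu_\dagger^p - \mu_i^p$ is strictly positive for every player $p$; I set $y_i^p := \mu_i^p + \tfrac12 \delta_i^p \in (\mu_i^p, \mu_*^p)$. Writing $\EE[n_i(T)] = \EE[\sum_{t\in[T]}\sum_{p\in\Pcal_t}\indic\{i_t^p=i\}]$ and splitting each summand according to whether the clean event $\Ecal_t$ holds and whether $\theta_i^p(t) > y_i^p$, I obtain $\EE[n_i(T)] = (A) + (B) + R$, where $R$ collects the contribution of $\wbar{\Ecal_t}$. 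Since $n_i(T) \le \sum_t |\Pcal_t| \le MT$ deterministically and $\Pr(\wbar{\Ecal_t}) \le 24/T^2$ by Lemma~\ref{lem:clean_event}, a union bound gives $R \le MT \cdot 24/T^2 = \Ocal(M/T) = \Ocal(1)$ using $T > M$.

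Second, I would bound the overshoot term $(A)$. Intuitively, once arm $i$ has accumulated enough collective pulls, the posterior mean $\hat\mu_i^p$ used in round $t$ concentrates below $y_i^p$ and the posterior variance shrinks, so the event $\{i_t^p = i,\ \theta_i^p(t) > y_i^p,\ \Ecal_t\}$ becomes rare. The crucial point, which removes the factor of $M$ relative to the no-transfer baseline, is that under the aggregate posterior the variance decays in the \emph{total} pull count $n_i$ rather than a per-player count; Lemma~\ref{lem:bounding_term_A_appendix} makes this precise. Using that $i \in \Ical_{10\epsilon}$ forces $\Delta_i^p > 8\epsilon$ and $|\Delta_i^p - \Delta_i^q| \le 2\epsilon$ (Fact~\ref{fact:basic_facts_subpar_arms}), so that $\delta_i^p = \Theta(\Delta_i^{\min})$ for every $p$, I conclude $(A) = \Ocal(\ln T/(\Delta_i^{\min})^2)$.

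Third, I would control the undershoot term $(B)$, the harder half, following the chain from Section~\ref{sec:pf-ingr}. The event in $(B)$ forces $\theta_j^p(t) \le y_i^p$ for every arm $j$, in particular for $\dagger$; trading the pull of $i$ against the posterior probability $\phi_{i,t}^p$ that $\theta_\dagger^p(t)$ exceeds $y_i^p$ gives $(B) \le (B*)$ (Lemma~\ref{lem:bounding_term_B}). I then split $(B*) = (b1) + (b2)$ by whether player $p$ uses the individual or aggregate posterior for $\dagger$, and further split $(b2) = (b2.1) + (b2.2)$ according to whether $m_\dagger^p(t-1) < L$ or $m_\dagger^p(t-1) \ge L$, with $L = \Theta(\ln T/(\Delta_i^{\min})^2 + M)$. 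Terms $(b1)$ and $(b2.2)$ are each $\Ocal(M)$ (Lemmas~\ref{lem:b1} and~\ref{lem:b22}), since in both regimes $\dagger$ has been pulled enough (by $p$, resp.\ collectively) that $\frac{1}{\phi_{i,t}^p} - 1 \le \frac1T$.

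Finally, I expect the crux to be term $(b2.1)$ (Lemma~\ref{lem:bounding_term_b21}). Here $\dagger$ has \emph{not} yet received $L$ collective pulls, so $\frac{1}{\phi_{i,t}^p} - 1$ is not yet negligible and must be summed carefully. The plan is to use the invariant property (Definition~\ref{def:invariant}) to reindex the sum by the stopping times $\tau_k = \tau_k(\dagger)$ at which $\dagger$ receives its $k$-th collective pull, reducing matters to showing each reindexed term is $\Ocal(1)$ (Lemma~\ref{lem:key_lemma_constant}); this in turn rests on the novel stopping-time concentration inequality for the aggregate estimate $\aggmu_\dagger(\tau_k)$ (Lemma~\ref{lem:agg_concentration_tau_k}, i.e.\ Lemma~\ref{lem:novel_concentration_main_paper}), which is delicate precisely because the number of samples $n_\dagger(\tau_k) \in [k, k+M-1]$ is itself random. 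Granting $(b2.1) = \Ocal(L) = \Ocal(\ln T/(\Delta_i^{\min})^2 + M)$, summing $(A)$, $(b1)$, $(b2.1)$, $(b2.2)$, and $R$ yields $\EE[n_i(T)] = \Ocal(\ln T/(\Delta_i^{\min})^2 + M)$, as claimed.
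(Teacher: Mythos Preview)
Your proposal is correct and mirrors the paper's own proof essentially step for step: the same $(A)/(B)/\wbar{\Ecal_t}$ decomposition, the same reduction of $(B)$ to $(B*)$ via Lemma~\ref{lem:bounding_term_B}, the same $(b1)/(b2.1)/(b2.2)$ split with threshold $L$, and the same reliance on the invariant property and Lemma~\ref{lem:key_lemma_constant} (backed by Lemma~\ref{lem:novel_concentration_main_paper}) for the key term $(b2.1)$. One small imprecision: Lemma~\ref{lem:bounding_term_A_appendix} actually gives $(A) = \Ocal(\ln T/(\Delta_i^{\min})^2 + M)$, not $\Ocal(\ln T/(\Delta_i^{\min})^2)$, but the extra $M$ is absorbed in the final bound anyway.
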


\begin{lemma}[Non-subpar arms]
For any arm $i \in \Ical_{10\epsilon}^C$ and player $p \in [M]$,
\[
\EE \sbr{n_i^p(T)} \le \Ocal \rbr{\frac{\ln T}{(\Delta_i^p)^2} + M}.
\]
\label{lem:i-eps-c-robustagg-ts}
\end{lemma}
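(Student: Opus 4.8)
\textbf{Proof plan for Lemma~\ref{lem:i-eps-c-robustagg-ts} (non-subpar arms).}

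The plan is to reduce the collective bound for a non-subpar arm to the familiar single-task analysis, since for $i \in \Ical_{10\epsilon}^C$ we cannot hope to share across players—the lower bound in Section~\ref{sec:existing-results} already tells us each player must pay its own $\Ocal(\ln T / (\Delta_i^p)^2)$ price. The crucial structural observation is that once $n_i^p(t-1) \ge \frac{40 \ln T}{\epsilon^2} + 2M$, player $p$ permanently switches to the \emph{individual} posterior for arm $i$ (event $H_i^p(t)$ in Definition~\ref{def:alg_criteron}); and because $i \in \Ical_{10\epsilon}^C$ means $\Delta_i^p \le 10\epsilon$ for all $p$ (so $1/(\Delta_i^p)^2 \gtrsim 1/(100\epsilon^2)$), the threshold $\frac{40\ln T}{\epsilon^2}$ is itself of order $\frac{\ln T}{(\Delta_i^p)^2}$ up to the constant. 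Thus I would split $\EE[n_i^p(T)]$ into pulls made under the aggregate posterior $\wbar{H_i^p(t)}$ and pulls made under the individual posterior $H_i^p(t)$. The aggregate-phase pulls are bounded deterministically: by definition of $\wbar{H_i^p(t)}$, once $p$ has pulled $i$ more than $\frac{40\ln T}{\epsilon^2} + 2M$ times the aggregate posterior is never used again, so the number of such pulls is at most $\frac{40\ln T}{\epsilon^2}+2M = \Ocal\!\rbr{\frac{\ln T}{(\Delta_i^p)^2} + M}$, using $\Delta_i^p \le 10\epsilon$.

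For the individual-phase pulls, I would replay the $(A)$/$(B)$ decomposition from Section~\ref{sec:pf-ingr} but specialized to player $p$ alone and restricted to the event $H_i^p(t)$. Set a threshold $y_i^p = \mu_i^p + \tfrac12 \Delta_i^p \in (\mu_i^p, \mu_*^p)$, and write the individual-phase count as $(A') + (B')$ where $(A')$ collects rounds with $i_t^p = i,\ \theta_i^p(t) > y_i^p$ and $(B')$ collects rounds with $i_t^p = i,\ \theta_i^p(t) \le y_i^p$, each intersected with the clean event $\Ecal_t$ and with $H_i^p(t)$. Term $(A')$ is controlled exactly as for $(A)$: on $\Ecal_t$ the individual mean $\indmu_i^p(t-1)$ concentrates to within $\sqrt{10\ln T / \wbar{n_i^p}(t-1)}$ of $\mu_i^p$, the individual variance $\indvar_i^p$ shrinks like $1/n_i^p$, so after $\Ocal(\ln T/(\Delta_i^p)^2)$ individual pulls the Gaussian tail $\wbar\Phi$ forces $\Pr(\theta_i^p(t) > y_i^p \mid \Fcal_{t-1})$ below $1/T$, contributing $\Ocal(1)$ to the sum. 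Term $(B')$ is handled by the same $\frac{1}{\phi} - 1$ argument as Lemma~\ref{lem:bounding_term_B} and Lemma~\ref{lem:key_lemma_constant}, except it is strictly easier: since only the \emph{individual} posterior is in play, I only need the single-task concentration (Lemma~\ref{lem:ind_concentration_pi_k} / Corollary~\ref{col:ind_concentration_alternative}) at the individual stopping times $\pi_k(i,p)$, invoking the invariant property (Example~\ref{ex:m-n-invariant}) to pass from per-round sums to per-pull sums. This yields $(B') \le \Ocal\rbr{\frac{\ln T}{(\Delta_i^p)^2}}$, and combining with the near-optimal arm $\dagger \in \Ical_{2\epsilon}^C$ (which satisfies $\mu_\dagger^p > y_i^p$ by Fact~\ref{fact:small_delta}) gives the geometric-sum control of $\EE[1/\phi_{i,t}^p - 1]$ over at most $L = \Theta(\ln T/(\Delta_i^p)^2)$ pulls.

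The main obstacle—and the reason this is not merely a transcription of the standard single-task TS analysis of~\citet{an17}—is cleanly isolating the individual-posterior phase from the aggregate-posterior phase while the two posteriors are interleaved round by round and the switch-over condition $H_i^p(t)$ is itself a random event coupled to the sample path. I would lean on the invariant property (Examples~\ref{ex:h-invariant} and~\ref{ex:mu-var-invariant}): $\cbr{H_i^p(t)}$ is constant between consecutive pulls of arm $i$ by $p$, so the phase boundary occurs at a well-defined individual pull index, namely the first $k$ with $k \ge \frac{40\ln T}{\epsilon^2} + 2M$. This lets me decompose $\EE[n_i^p(T)]$ over the stopping times $\pi_k(i,p)$ and assign each pull unambiguously to one phase, after which the two contributions add to the claimed $\Ocal\rbr{\frac{\ln T}{(\Delta_i^p)^2} + M}$. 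Summing over the (at most $T$) non-clean rounds contributes only the $\Ocal(1)$ absorbed via Lemma~\ref{lem:clean_event}, completing the bound.
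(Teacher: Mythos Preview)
Your plan has the right overall shape (bound the threshold-many pulls trivially, then run the $(A)/(B)$ split), but the treatment of the $(B')$ term contains a genuine gap. You write that ``only the individual posterior is in play'' and so you need only the single-task concentration Corollary~\ref{col:ind_concentration_alternative}. That is not so: the event $H_i^p(t)$ fixes which posterior player $p$ uses for arm $i$, but $\phi_{i,t}^p = \Pr(\theta_{\mathrm{ref}}^p(t) > y_i^p \mid \Fcal_{t-1})$ depends on the posterior for the \emph{reference} arm, and that posterior is governed by $H_{\mathrm{ref}}^p(t)$, which is entirely decoupled from $H_i^p(t)$. When $\wbar{H_{\mathrm{ref}}^p(t)}$ holds, $\phi_{i,t}^p$ is computed from $\aggmu_{\mathrm{ref}}^p(t-1)$ and $\aggvar_{\mathrm{ref}}^p(t-1)$, and to control $\EE[1/\phi_{i,t}^p - 1]$ at the corresponding stopping times you need exactly the multi-task concentration bound (Lemma~\ref{lem:novel_concentration_main_paper} / Corollary~\ref{col:agg_concentration_alternative}), not just the individual one. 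The paper's proof makes this explicit: after passing to $(E*)$ it splits on $H_{\optarm}^p(t)$ versus $\wbar{H_{\optarm}^p(t)}$, and the aggregate branch $(e2.1)$ requires summing over all players (Eq.~\eqref{eqn:e_2_1_f}) so that Lemma~\ref{lem:auxiliary_sum_f_indic} and the $\tau_k(\optarm)$-based concentration can be invoked.

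There is also a secondary problem with your choice of reference arm. You invoke $\dagger \in \Ical_{2\epsilon}^C$ and Fact~\ref{fact:small_delta}, but that fact applies only to $i \in \Ical_{10\epsilon}$. For $i \in \Ical_{10\epsilon}^C$ the gap $\Delta_i^p$ can be arbitrarily small relative to $\Delta_\dagger^p \le 2\epsilon$, so $\mu_\dagger^p > y_i^p$ can fail (take $\Delta_i^p < 2\Delta_\dagger^p$). The paper instead uses player $p$'s own optimal arm $\optarm = \optarm_p$ as the reference, for which $\mu_\optarm^p = \mu_*^p > z_i^p$ always holds; this is why the non-subpar analysis is per-player and uses a player-specific reference, unlike the subpar case where a common $\dagger$ works for all players.
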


Our analysis in the following Section~\ref{sec:appendix_subpar_arms_proof} and Section~\ref{sec:appendix_nonsubpar_arms} involve various proofs by cases. Figure~\ref{fig:case_division} provides an overview of the case division rules used in our analysis.

\begin{figure}
    \begin{subfigure}{.49\textwidth}
        \centering
        \includegraphics[height=0.72\linewidth]{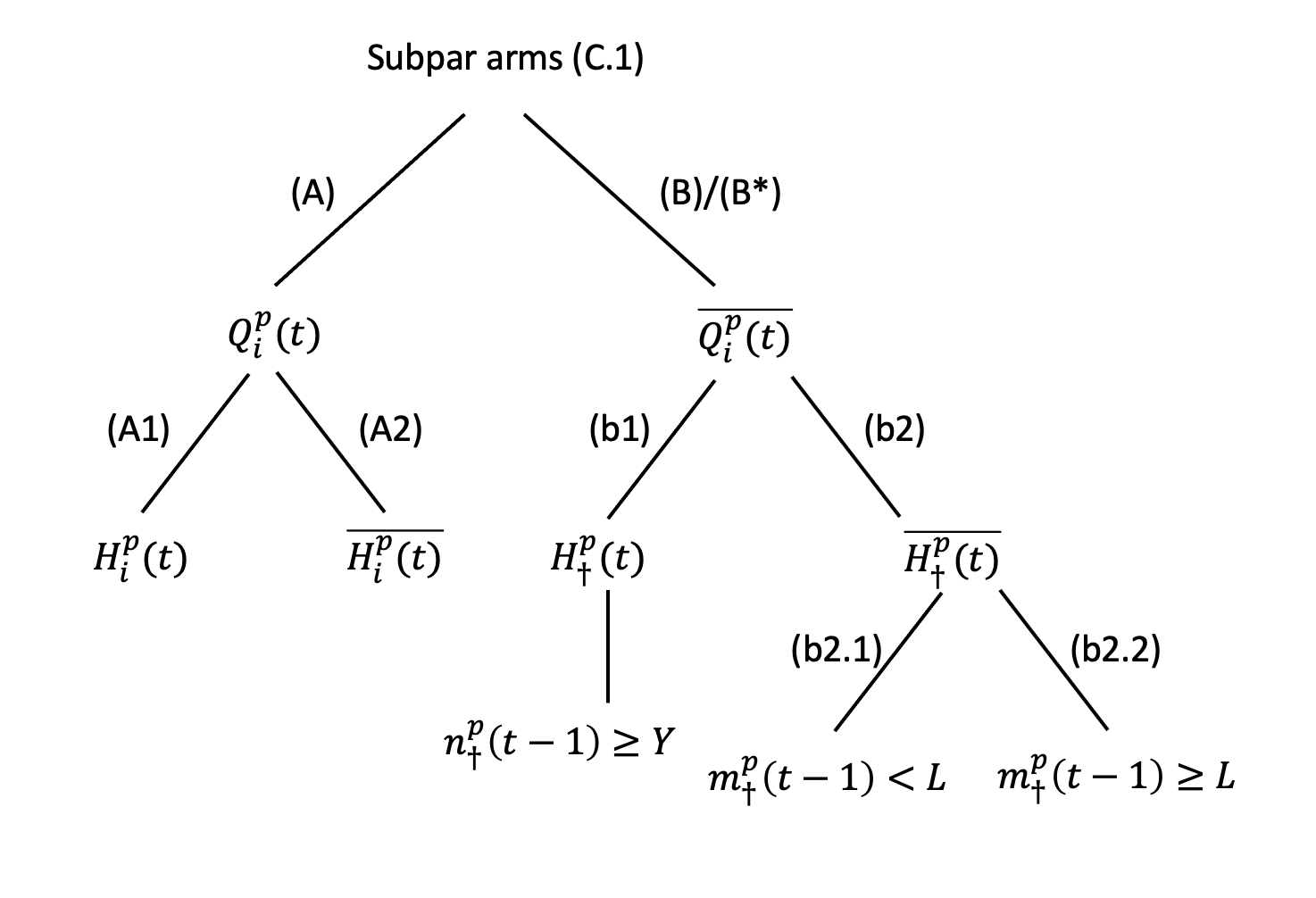}
        \caption{Subpar arms (Section~\ref{sec:appendix_subpar_arms_proof})}
    \end{subfigure}
    \begin{subfigure}{.49\textwidth}
        \centering
        \includegraphics[height=0.7\linewidth]{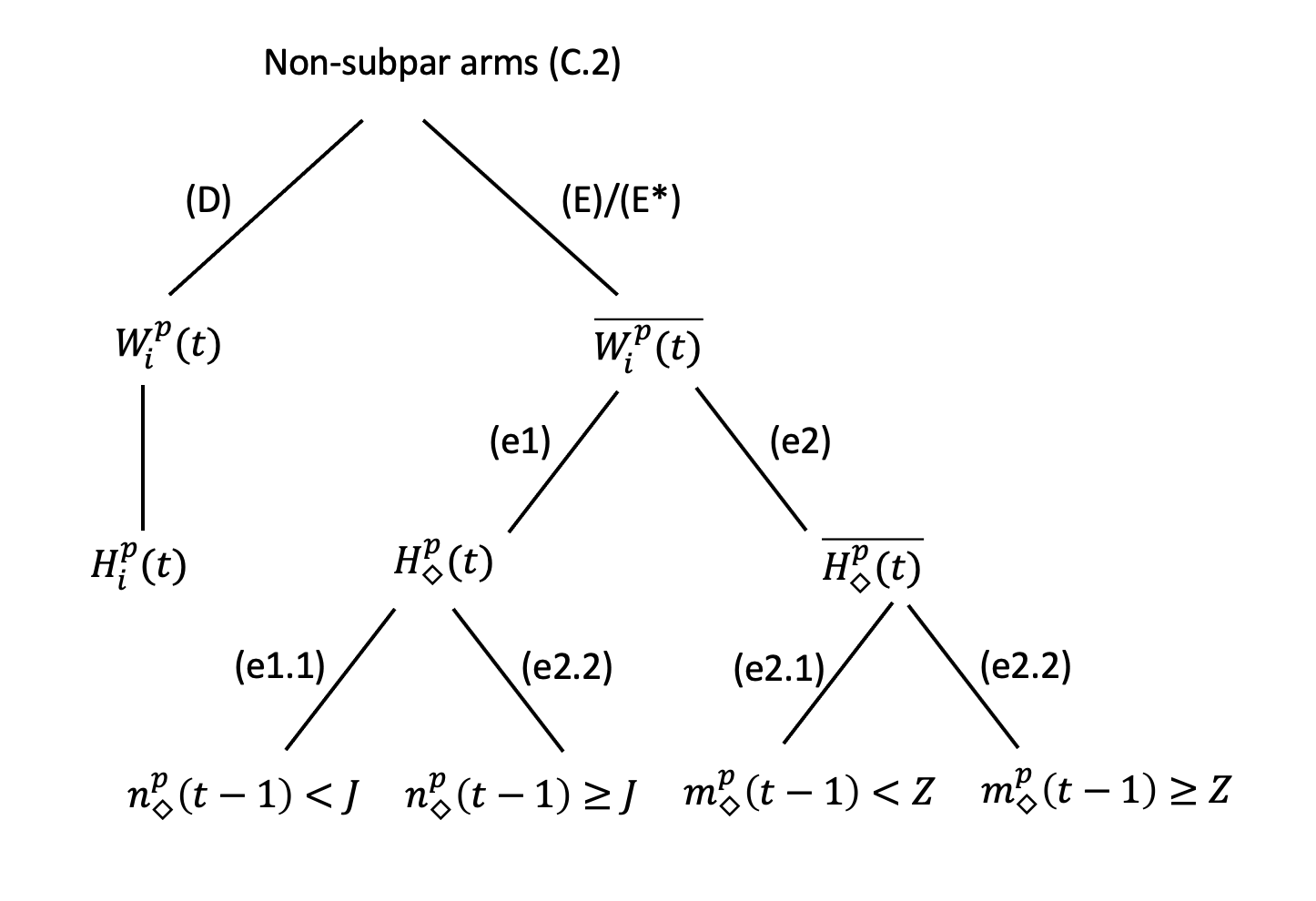}
        \caption{Non-subpar arms (Section~\ref{sec:appendix_nonsubpar_arms})}
    \end{subfigure}
    \caption{Illustrations of the case division rules used in the proofs of Theorem~\ref{thm:ts_gap_dep_ub} and Theorem~\ref{thm:ts_gap_indep_ub}, respectively. Formal definitions of the notions used in the figure can be found in Section~\ref{sec:appendix_preliminaries}, Section~\ref{sec:appendix_subpar_arms_proof} and Section~\ref{sec:appendix_nonsubpar_arms}.}
    \label{fig:case_division}
\end{figure}

\subsection{Subpar Arms}
\label{sec:appendix_subpar_arms_proof}

In this section, we prove Lemma~\ref{lem:i-eps-robustagg-ts}.

Fix any subpar arm $i \in \Ical_{10\epsilon}$ and an arm $\dagger \in \Ical_{2\epsilon}^C$. See Fact~\ref{fact:basic_facts_subpar_arms} for the existence of such an arm. We first consider the following definitions.

\begin{definition}
For any arm $i \in \Ical_{10\epsilon}$ and any player $p$, let 
\[
\delta_i^p = \mu_{\dagger}^p - \mu_i^p > 0.
\]
\end{definition}

\begin{fact}
\label{fact:small_delta}
For any $i \in \Ical_{10\epsilon}$ and player $p \in [M]$,
\[
\frac34 \Delta_i^p < \delta_i^p \le \Delta_i^p.
\]
\end{fact}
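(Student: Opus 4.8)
The goal is to establish, for any subpar arm $i \in \Ical_{10\epsilon}$ and any player $p \in [M]$, the two-sided bound $\frac34 \Delta_i^p < \delta_i^p \le \Delta_i^p$, where $\delta_i^p = \mu_\dagger^p - \mu_i^p$ and $\dagger \in \Ical_{2\epsilon}^C$ is the common near-optimal arm fixed above. The plan is to unpack both quantities in terms of the player's own means $\mu_i^p, \mu_\dagger^p, \mu_*^p$ and then control the gap between $\mu_\dagger^p$ and the optimal mean $\mu_*^p$ using the membership $\dagger \in \Ical_{2\epsilon}^C$ together with the already-established lower bound $\Delta_i^{\min} > 8\epsilon$ for subpar arms (Fact~\ref{fact:basic_facts_subpar_arms}, item~\ref{item:Delta_i^p_greater_than_8}).

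\textbf{Upper bound.} First I would prove $\delta_i^p \le \Delta_i^p$, which is the easy direction. By definition $\delta_i^p = \mu_\dagger^p - \mu_i^p$ and $\Delta_i^p = \mu_*^p - \mu_i^p$. Since $\mu_\dagger^p \le \mu_*^p = \max_{j} \mu_j^p$ by the very definition of the optimal mean for player $p$, subtracting $\mu_i^p$ from both sides yields $\delta_i^p \le \Delta_i^p$ immediately. (This also re-confirms the positivity claim $\delta_i^p > 0$ used to define $y_i^p$, since $\dagger$ has larger mean than a subpar arm.)

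\textbf{Lower bound.} The more interesting direction is $\delta_i^p > \frac34 \Delta_i^p$. Writing $\delta_i^p = (\mu_*^p - \mu_i^p) - (\mu_*^p - \mu_\dagger^p) = \Delta_i^p - \Delta_\dagger^p$, the claim is equivalent to showing $\Delta_\dagger^p < \frac14 \Delta_i^p$. To bound $\Delta_\dagger^p$ from above, I would use that $\dagger \in \Ical_{2\epsilon}^C$ means $\dagger$ is \emph{not} a $2\epsilon$-subpar arm, i.e.\ $\Delta_\dagger^q \le 2\epsilon$ for every player $q$, and in particular $\Delta_\dagger^p \le 2\epsilon$. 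Meanwhile, since $i \in \Ical_{10\epsilon}$, Fact~\ref{fact:basic_facts_subpar_arms} gives $\Delta_i^p > 8\epsilon$, so $\frac14 \Delta_i^p > 2\epsilon \ge \Delta_\dagger^p$. Chaining these gives $\delta_i^p = \Delta_i^p - \Delta_\dagger^p > \Delta_i^p - \frac14\Delta_i^p = \frac34 \Delta_i^p$, as desired.

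The main (and only real) obstacle is correctly invoking the definition of $\Ical_{2\epsilon}^C$: one must read $\dagger \notin \Ical_{2\epsilon}$ as ``for all $q$, $\Delta_\dagger^q \le 2\epsilon$'' (the negation of ``$\exists q,\ \Delta_\dagger^q > 2\epsilon$''), which is exactly what bounds $\Delta_\dagger^p \le 2\epsilon$. Everything else is a short chain of elementary inequalities, so no heavy calculation is needed; the proof is essentially two displayed lines once the definitions are substituted. I would present it by first substituting definitions, then applying $\Delta_\dagger^p \le 2\epsilon$ (from $\dagger \in \Ical_{2\epsilon}^C$) and $\Delta_i^p > 8\epsilon$ (from Fact~\ref{fact:basic_facts_subpar_arms}), and concluding both inequalities.
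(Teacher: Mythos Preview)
Your proposal is correct and follows essentially the same approach as the paper: both arguments use $\Delta_\dagger^p \le 2\epsilon$ from $\dagger \in \Ical_{2\epsilon}^C$ together with $\Delta_i^p > 8\epsilon$ from Fact~\ref{fact:basic_facts_subpar_arms} to compare $\delta_i^p = \Delta_i^p - \Delta_\dagger^p$ with $\Delta_i^p$. The only cosmetic difference is that the paper phrases the lower bound as a ratio $\frac{\Delta_\dagger^p}{\Delta_i^p} \le \frac{2\epsilon}{8\epsilon} = \frac14$ rather than your additive form $\Delta_\dagger^p \le 2\epsilon < \frac14\Delta_i^p$, but these are identical.
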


\begin{proof}
For any player $p \in [M]$, since $\dagger \in \Ical_{2\epsilon}^C$, we have $\Delta_{\dagger}^p = \mu_*^p - \mu_{\dagger}^p \le 2\epsilon$ by the definition of $\Ical_{2\epsilon}^C$. Furthermore, for any $i \in \Ical_{10\epsilon}$, $\Delta_i^p = \mu_*^p - \mu_i^p > 8\epsilon$. Therefore, we have
\begin{enumerate}
    \item $\delta_i^p = \mu_{\dagger}^p - \mu_i^p \le \mu_*^p - \mu_i^p = \Delta_i^p$;
    
    \item Note that $\frac{\mu_*^p - \mu_{\dagger}^p}{\mu_*^p - \mu_i^p} \leq \frac{2\epsilon}{8\epsilon} \leq \frac 1 4$. 
    This implies that 
    $\frac{\delta_i^p}{\Delta_i^p} = 1 - \frac{\mu_*^p - \mu_{\dagger}^p}{\mu_*^p - \mu_i^p} \geq \frac 3 4$.
    \qedhere
\end{enumerate}
\end{proof}

\begin{definition}
For any player $p$, let $y_i^p = \mu_i^p + \frac12 \delta_i^p$ be a threshold;
in any round $t$, further define
\[
Q_i^p(t) = \cbr{\theta_i^p(t) > y_i^p}
\]
to be the event that the sample $\theta_i^p(t)$ from the posterior distribution associated with arm $i$ and player $p$ in round $t$ is greater than the threshold $y_i^p$. 
In addition, let $\wbar{Q_i^p(t)} = \cbr{\theta_i^p(t) \le y_i^p}$.
\end{definition}

\subsubsection{Subpar Arms---Decomposition}
We can then decompose $\EE \sbr{n_i(T)}$ as follows.

\begin{align}
\EE \sbr{n_i(T)} 
= &
\EE \sbr{\sum_{t \in [T]} \sum_{p \in \Pcal_t} \indic \cbr{i_t^p = i}}
\nonumber
\\
\leq &  \EE \sbr{\sum_{t \in [T]} \sum_{p \in \Pcal_t} \indic \cbr{i_t^p = i, Q_i^p(t), \Ecal_t}} +
\EE \sbr{\sum_{t \in [T]} \sum_{p \in \Pcal_t} \indic \cbr{i_t^p = i, \wbar{Q_i^p(t)}, \Ecal_t}}
+
\EE \sbr{\sum_{t \in [T]} \sum_{p \in \Pcal_t} \indic \cbr{ \wbar{\Ecal_t} }}
\nonumber
\\
\leq & 
\underbrace{\EE \sbr{\sum_{t \in [T]} \sum_{p \in \Pcal_t} \indic \cbr{i_t^p = i, Q_i^p(t), \Ecal_t}}}_{(A)} +
\underbrace{\EE \sbr{\sum_{t \in [T]} \sum_{p \in \Pcal_t} \indic \cbr{i_t^p = i, \wbar{Q_i^p(t)}, \Ecal_t}}}_{(B)}
+ \order\rbr{1},
\label{eqn: subpar_first_decomposition}
\end{align}
where the second inequality follows from Lemma~\ref{lem:clean_event}.
In the following two subsections, we bound term $(A)$ and $(B)$, respectively.

\subsubsection{Bounding Term $(A)$}
The following lemma provides an upper bound on term $(A)$.
\begin{lemma}
\label{lem:bounding_term_A_appendix}
\begin{align}
(A) \le \Ocal \rbr{\frac{\ln T}{(\Delta_i^{\min})^2} + M},
\end{align}
where we recall that $\Delta_i^{\min} = \min_{p \in [M]} \Delta_i^p$.
\end{lemma}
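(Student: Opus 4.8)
The plan is to bound term $(A)$, which counts rounds where player $p$ pulls arm $i$ while its posterior sample $\theta_i^p(t)$ exceeds the threshold $y_i^p$, by showing that this event becomes exponentially unlikely once arm $i$ has been pulled enough times. First I would fix a player $p$ and note that $\indic\{i_t^p = i, Q_i^p(t)\}$ can only occur when $\theta_i^p(t) > y_i^p = \mu_i^p + \tfrac12\delta_i^p$, where $\delta_i^p > \tfrac34\Delta_i^p$ by Fact~\ref{fact:small_delta}. On the clean event $\Ecal_t$, the posterior mean $\hat\mu_i^p(t-1)$ is close to $\mu_i^p$; specifically either the individual estimate satisfies $|\indmu_i^p(t-1) - \mu_i^p| \le \sqrt{10\ln T / \wbar{n_i^p}(t-1)}$, or the aggregate estimate satisfies $\aggmu_i^p(t-1) - \mu_i^p \le \sqrt{10\ln T / \wbar{m_i^p}(t-1)} + 2\epsilon$. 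Since $i \in \Ical_{10\epsilon}$ forces $\Delta_i^p > 8\epsilon$ (Fact~\ref{fact:basic_facts_subpar_arms}, item~2), the additive $2\epsilon$ bias in the aggregate case is dominated by the gap, so in either case the posterior mean stays below $y_i^p$ once the relevant pull count exceeds a threshold of order $\ln T / (\delta_i^p)^2$.

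The key step is then a Gaussian tail argument: conditioned on $\Ecal_t$ and on a large enough pull count, $\Pr(\theta_i^p(t) > y_i^p \mid \Fcal_{t-1})$ is at most $\wbar{\Phi}$ evaluated at an argument growing with the number of pulls, hence summable. I would split the count of rounds in $(A)$ into two regimes according to whether the pull count (individual or aggregate, depending on which posterior is active) has crossed the threshold $L' = \Theta(\ln T / (\delta_i^p)^2 + M)$. Before the threshold, the contribution is bounded deterministically by $L'$ itself (the arm simply cannot be pulled more than $L'$ times in that regime per player, summing to a collective $\Ocal(\ln T/(\Delta_i^{\min})^2 + M)$ after using $\delta_i^p = \Theta(\Delta_i^p)$ and Fact~\ref{fact:basic_facts_subpar_arms} item~4 to convert $\sum_p 1/(\Delta_i^p)^2$-type quantities into $1/(\Delta_i^{\min})^2$). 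After the threshold, I would show each term $\Pr(Q_i^p(t)\mid\Fcal_{t-1})\indic\{\Ecal_t\}$ is at most $1/T^2$ or similar, so that the total over all $t$ and $p$ is $\Ocal(1)$.

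The main obstacle will be handling the aggregate-posterior case cleanly, because there the relevant pull count is $m_i^p(t-1)$ (the aggregate count maintained by player $p$, which lags behind $n_i(t-1)$ due to the delayed-update design of line~\ref{line:invariant}), and the variance $\aggvar_i^p(t-1) = 4/\wbar{m_i^p}(t-1)$ involves the $-M$ offset. I would need to argue that once the aggregate posterior is in use, enough pulls have accumulated that $\sqrt{10\ln T/\wbar{m_i^p}(t-1)} + 2\epsilon$ is safely below $\tfrac12\delta_i^p$; the $2\epsilon$ term is precisely where the $\Ical_{10\epsilon}$ (rather than $\Ical_{5\epsilon}$) requirement is consumed, since we need a constant fraction of $\delta_i^p \gtrsim \Delta_i^p \gtrsim \epsilon$ to absorb the bias with room to spare. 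Relating $m_i^p$ back to the collective count $n_i(T)$ and summing the per-player thresholds $\ln T/(\delta_i^p)^2$ into the collective $\frac{\ln T}{(\Delta_i^{\min})^2}$ via Fact~\ref{fact:basic_facts_subpar_arms} (item~4) and $\delta_i^p \ge \tfrac34\Delta_i^p$ is the bookkeeping that needs the most care, and is where the additive $\Ocal(M)$ term originates.
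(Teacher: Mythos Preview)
Your proposal has the right skeleton (threshold split plus Gaussian tail after the threshold), but the accounting for the ``before threshold'' regime loses a factor of $M$. You write that the per-player before-threshold contribution is at most $L' = \Theta(\ln T/(\delta_i^p)^2 + M)$ and then claim the sum over players collapses to $\Ocal(\ln T/(\Delta_i^{\min})^2 + M)$ via Fact~\ref{fact:basic_facts_subpar_arms} item~4. But that fact only says $\Delta_i^{\max} \le 2\Delta_i^{\min}$, hence $\sum_p 1/(\Delta_i^p)^2 = \Theta(M/(\Delta_i^{\min})^2)$, not $\Theta(1/(\Delta_i^{\min})^2)$; a per-player threshold summed over $M$ players yields $\Ocal(M\ln T/(\Delta_i^{\min})^2 + M^2)$, which is off by a factor of $M$ from the claimed bound.

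The paper closes this gap with two observations you did not make explicit. First, for the individual-posterior case (event $H_i^p(t)$), there is \emph{no} before-threshold regime at all: the algorithm's switching rule (line~\ref{line:choose-posterior-start}) forces $n_i^p(t-1) \ge c_1\ln T/\epsilon^2$, and since $\Delta_i^p > 8\epsilon$ this already exceeds any threshold of order $\ln T/(\Delta_i^p)^2$; every such term is therefore immediately $\le 1/T^2$ by the Gaussian tail, giving $(A1) = \Ocal(1)$ with no per-player counting needed. Second, for the aggregate-posterior case, the before-threshold count is bounded \emph{collectively} rather than per-player: once the global pull count $n_i(t)$ reaches $l$, each player can pull arm $i$ at most once more before its delayed counter $m_i^p$ updates to a value $\ge l$, so $\sum_{t}\sum_{p \in \Pcal_t} \indic\{i_t^p = i,\ m_i^p(t-1) < l\} \le l + M$ (this is Eq.~\eqref{eqn:A2_m_decomposition} in the paper, or equivalently Lemma~\ref{lem:auxiliary_sum_f_indic} with $f_t^p \equiv 1$). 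This collective counting argument---not a per-player sum---is what delivers $\Ocal(\ln T/(\Delta_i^{\min})^2 + M)$ without the extra $M$.
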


\begin{proof}[Proof of Lemma~\ref{lem:bounding_term_A_appendix}]

Recall the definition of $\Ecal_t$ in Definition~\ref{def:event_e} and the definition of $H_i^p(t)$ in Definition~\ref{def:alg_criteron}, we have
\begin{align*}
(A) =
\underbrace{ \sum_{t \in [T]} \sum_{p \in \Pcal_t} \EE \sbr{ \indic \cbr{i_t^p = i, Q_i^p(t), \Ecal_t, H_i^p(t)}}}_{(A1)} 
+
\underbrace{ \sum_{t \in [T]} \sum_{p \in \Pcal_t} \EE \sbr{ \indic \cbr{i_t^p = i, Q_i^p(t), \Ecal_t, \wbar{H_i^p(t)}}}}_{(A2)}.
\end{align*}

We first consider term $(A1)$. 
Recall that, for simplicity, we let $\wbar{n_i^p}(t-1)$ denote $n_i^p(t-1) \vee 1$; also recall that $\wbar{\Phi}(\cdot)$ is the complementary CDF of the standard Gaussian distribution, and $\rbr{z}_+ = z \vee 0$.
We have
\begin{align*}
(A1) 
& \le \sum_{t \in [T]} \sum_{p \in \Pcal_t} \EE \sbr{ \indic \cbr{Q_i^p(t), \Ecal_t, H_i^p(t)}} \\
& = \sum_{t \in [T]} \sum_{p \in \Pcal_t} \EE \sbr{ \EE \sbr{\indic \cbr{Q_i^p(t), \Ecal_t, H_i^p(t)} \mid \Fcal_{t-1}}} \\
& = \sum_{t \in [T]} \sum_{p \in \Pcal_t} \EE \sbr{ \indic \cbr{\Ecal_t, H_i^p(t)} \cdot \EE \sbr{\indic \cbr{\theta_i^p(t) > y_i^p} \mid \Fcal_{t-1}}} \\
& = 
\sum_{t \in [T]} \sum_{p \in \Pcal_t} \EE \sbr{ \indic \cbr{\Ecal_t, H_i^p(t)} \cdot \wbar{\Phi}\rbr{  \sqrt{ \wbar{n_i^p}(t-1) / 4 } \rbr{y_i^p - \indmu_i^p(t-1) } } } \\
& \leq \sum_{t \in [T]} \sum_{p \in \Pcal_t} \EE \sbr{ \indic \cbr{\Ecal_t, H_i^p(t)} \cdot \exp \rbr{ - \frac{  n_i^p(t-1) (y_i^p - \indmu_i^p(t-1) )_+^2 }{ 8 } } }
\\
& \leq \sum_{t \in [T]} \sum_{p \in \Pcal_t} \EE \sbr{ \indic \cbr{\Ecal_t, H_i^p(t)} \cdot \exp \rbr{ - \frac{  n_i^p(t-1) (\mu_i^p + \frac38 \Delta_i^p - \mu_i^p - \frac1{16}\Delta_i^p )_+^2 }{ 8 } } } \\
&  \leq \sum_{t \in [T]} \sum_{p \in \Pcal_t} \EE \sbr{ \indic \cbr{\Ecal_t, H_i^p(t)} \cdot \exp \rbr{ - \frac{ n_i^p(t-1) (\Delta_i^p)^2 }{ 8(16) } } } \\
& \leq \sum_{t \in [T]} \sum_{p \in \Pcal_t} \frac{1}{T^2} = \order\rbr{ 1 }.
\end{align*}
where
the first inequality drops the indicator $\indic\cbr{i_t^p = i}$; 
the first equality uses the law of total expectation;
the second equality follows from the observation that $\Ecal_t$ and $H_i^p(t)$ are $\Fcal_{t-1}$-measurable;
the third equality follows from the observation that when $H_i^p(t)$ happens,
$ \EE \sbr{\indic \cbr{\theta_i^p(t) > y_i^p} \mid \Fcal_{t-1}} = \PP \rbr{ \theta_i^p(t) > y_i^p \mid \Fcal_{t-1}} = \wbar{\Phi}\rbr{ \frac{ y_i^p - \indmu_i^p(t-1) }{ \sqrt{ 4 / \wbar{n_i^p}(t-1) } } }$;
the second inequality is from Lemma~\ref{lem:phi-bounds} and that $\wbar{n_i^p}(t-1) \ge n_i^p(t-1)$;
the third inequality follows from the facts that when $\Ecal_t$ and $H_i^p(t)$ happen, 
\begin{enumerate}
    \item $\wbar{n_i^p}(t-1) \ge n_i^p(t-1) \ge \frac{40\ln T}{\epsilon^2} \ge \frac{2560\ln T}{(\Delta_i^p)^2}$ (see Fact~\ref{fact:basic_facts_subpar_arms}), 
    
    \item $\indmu_i^p(t-1) \leq \mu_i^p + \sqrt{\frac{10 \ln T}{\wbar{n_i^p}(t-1)}} \leq \mu_i^p + \frac{1}{16} \Delta_i^p$ (see Definition~\ref{def:event_e}), and
    
    \item $y_i^p = \mu_i^p + \frac12 \delta_i^p > \mu_i^p + \frac 3 8 \Delta_i^p$ (see Fact~\ref{fact:small_delta});
\end{enumerate}
the fourth inequality is by algebra; 
and the fifth inequality again uses the observation that when $H_i^p(t)$ happens, $n_i^p(t-1) \ge \frac{2560\ln T}{(\Delta_i^p)^2}$.

We now turn our attention to term $(A2)$. With foresight, let $l = \frac{ 10240 \ln T}{\rbr{\Delta_i^{\min}}^2} + M$. We have
\begin{align}
(A2)
& = \sum_{t \in [T]} \sum_{p \in \Pcal_t} \EE \sbr{ \indic \cbr{i_t^p = i, Q_i^p(t), \Ecal_t, \wbar{H_i^p(t)}}} \nonumber \\
& \le \sum_{t \in [T]} \sum_{p \in \Pcal_t} \EE \sbr{ \indic \cbr{i_t^p = i, Q_i^p(t), \Ecal_t, \wbar{H_i^p(t)}, m_i^p(t-1) < l}} \nonumber \\
& \hspace{180pt} + \sum_{t \in [T]} \sum_{p \in \Pcal_t} \EE \sbr{ \indic \cbr{i_t^p = i, Q_i^p(t), \Ecal_t, \wbar{H_i^p(t)}, m_i^p(t-1) \ge l}} \nonumber \\
& \le (l + M) + \sum_{t \in [T]} \sum_{p \in \Pcal_t} \EE \sbr{ \indic \cbr{i_t^p = i, Q_i^p(t), \Ecal_t, \wbar{H_i^p(t)}, m_i^p(t-1) \ge l}}. \label{eqn:A2_m_decomposition}
\end{align}

To see why Eq.~\eqref{eqn:A2_m_decomposition} is true, it suffices to show that, with probability $1$,
\[
\sum_{t \in [T]} \sum_{p \in \Pcal_t} \indic \cbr{i_t^p = i, m_i^{p}(t-1) < l }
\leq l + M.
\]
Indeed, let us define $\iota = \min \cbr{ t: n_i(t) = \sum_{s \in [t]} \sum_{p \in \Pcal_s} \indic \cbr{i_s^p = i} \geq l }$. The above summation can be simplified as
\begin{align*}
& \sum_{t = 1}^T \sum_{p \in \Pcal_t} \indic \cbr{i_t^p = i, m_i^{p}(t-1) < l } \\
= & \sum_{t=1}^{\iota-1} \sum_{p \in \Pcal_t} \indic \cbr{i_t^p = i, m_i^{p}(t-1) < l } 
+ 
\sum_{t=\iota}^{T} \sum_{p \in \Pcal_t} \indic \cbr{i_t^p = i, m_i^{p}(t-1) < l }  \\
\leq & 
\sum_{t=1}^{\iota-1} \sum_{p \in \Pcal_t} \indic \cbr{i_t^p = i }  
+ 
\sum_{p \in [M]} \sum_{t \geq \iota: p \in \Pcal_t} \indic \cbr{i_t^p = i, m_i^{p}(t-1) < l } \\
\leq & 
(l - 1) + M,
\end{align*}

where the $\sum_{p \in [M]} \sum_{t \geq \iota: p \in \Pcal_t} \indic \cbr{i_t^p = i, m_i^{p}(t-1) < l } \leq M$ follows from the observation that, 
once the total number of pulls of arm $i$ by all players has reached $l$, 
any player $p$ cannot pull arm $i$ more than once before the aggregate number of pulls of $i$ maintained by $p$ is updated to a value $\ge l$ (see Definition~\ref{def:m_i^p}).

\begin{remark}
Eq.~\eqref{eqn:A2_m_decomposition} can also be deducted from the more general  Lemma~\ref{lem:auxiliary_sum_f_indic} in Section~\ref{sec:appendix_auxiliary}, by taking $f_t^p = 1$ for all $t,p$.
\end{remark}

Now, recall that we denote $\rbr{ m_i^p(t-1) -M } \vee 1$ by  $\wbar{m_i^p}(t-1)$.
And again, recall that $\wbar{\Phi}(\cdot)$ is the complementary CDF of the standard Gaussian distribution, and $\rbr{z}_+ = z \vee 0$.
It follows from Eq.~\eqref{eqn:A2_m_decomposition} that
\begin{align*}
(A2) 
& \le \rbr{l + M} + \sum_{t \in [T]} \sum_{p \in \Pcal_t} \EE \sbr{ \EE \sbr{ \indic \cbr{Q_i^p(t), \Ecal_t, \wbar{H_i^p(t)}, m_i^p(t-1) \ge l} \mid \Fcal_{t-1}}  }\\
& = \rbr{l + M} + \sum_{t \in [T]} \sum_{p \in \Pcal_t} \EE \sbr{ \indic \cbr{\Ecal_t, \wbar{H_i^p(t)}, m_i^p(t-1) \ge l} \EE \sbr{ \indic \cbr{ \theta_i^p(t) > y_i^p } \mid \Fcal_{t-1}}  }\\
& = \rbr{l + M} + \sum_{t \in [T]} \sum_{p \in \Pcal_t} \EE \sbr{ \indic \cbr{\Ecal_t, \wbar{H_i^p(t)},  m_i^p(t-1) \ge l} \cdot \wbar{\Phi}\rbr{  \sqrt{ \wbar{m_i^p}(t-1) / 4 } \rbr{y_i^p - \aggmu_i^p(t-1) } } } \\
& \le \rbr{l + M} + \sum_{t \in [T]} \sum_{p \in \Pcal_t} \EE \sbr{ \indic \cbr{\Ecal_t, \wbar{H_i^p(t)},  m_i^p(t-1) \ge l} \cdot \exp \rbr{ - \frac{ \wbar{m_i^p}(t-1) \rbr{y_i^p - \aggmu_i^p(t-1) }_+^2  }{8}   }} \\
& \le \rbr{l + M} + \sum_{t \in [T]} \sum_{p \in \Pcal_t} \EE \sbr{ \indic \cbr{\Ecal_t, \wbar{H_i^p(t)},  m_i^p(t-1) \ge l} \cdot \exp \rbr{ - \frac{ \wbar{m_i^p}(t-1) \rbr{\mu_i^p + \frac38 \Delta_i^p - \mu_i^p - \frac{9}{32} \Delta_i^p }_+^2  }{8}   }} \\
& \le \rbr{l + M} + \sum_{t \in [T]} \sum_{p \in \Pcal_t} \EE \sbr{ \indic \cbr{ \Ecal_t, \wbar{H_i^p(t)},  m_i^p(t-1) \ge l} \exp \rbr{ - \frac{\wbar{m_i^p}(t-1) \rbr{ \Delta_i^{\min}}^2}{(8)(256)}}  } \\
& \le (l+M) + \sum_{t \in [T]} \sum_{p \in \Pcal_t} \frac{1}{T^2} \\
& = \Ocal \rbr{ \frac{\ln T}{\rbr{\Delta_i^{\min}}^2} + M},
\end{align*}
where 
the first inequality is from Eq.~\eqref{eqn:A2_m_decomposition}, dropping the indicator $\indic\cbr{i_t^p = i}$ and 
using the law of total expectation; 
the first equality follows from the observation that $\Ecal_t$, $\wbar{H_i^p(t)}$, and $\cbr{m_i^p(t-1) \ge l}$ are $\Fcal_{t-1}$-measurable; 
the second equality follows from the observation that when $\wbar{H_i^p(t)}$ happens,
$ \EE \sbr{\indic \cbr{\theta_i^p(t) > y_i^p} \mid \Fcal_{t-1}} = \PP \rbr{ \theta_i^p(t) > y_i^p \mid \Fcal_{t-1}} = \wbar{\Phi}\rbr{ \frac{ y_i^p - \aggmu_i^p(t-1) }{ \sqrt{ 4 / \wbar{m_i^p}(t-1) } } }$;
the second inequality follows from Lemma~\ref{lem:phi-bounds};
the third inequality uses the facts that
\begin{enumerate}
    \item when $\cbr{m_i^p(t-1) \ge l}$ happens, $\wbar{m_i^p}(t-1) \ge m_i^p(t-1) - M \ge l-M = \frac{10240 \ln T}{ \rbr{\Delta_i^{\min}}^2}$,
    
    \item $y_i^p = \mu_i^p + \frac12 \delta_i^p > \mu_i^p + \frac38 \Delta_i^{\min}$ (see Fact~\ref{fact:small_delta}), and
    
    \item when $\Ecal_t$ happens, $\aggmu_i^p(t-1) \le \mu_i^p + \sqrt{\frac{10 \ln T}{\wbar{m_i^p}(t-1) }} + 2\epsilon < \mu_i^p + \frac{1}{32} \Delta_i^{\min} + \frac{1}{4}\Delta_i^{\min} = \mu_i^p + \frac{9}{32} \Delta_i^{\min}$ (see Definition~\ref{def:event_e} and Fact~\ref{fact:basic_facts_subpar_arms});
\end{enumerate}
the fourth inequality is by algebra;
and the fifth inequality again uses the fact that when $\cbr{m_i^p(t-1) \ge l}$ happens, $\wbar{m_i^p}(t-1) \ge m_i^p(t-1) - M \ge \frac{10240 \ln T}{ \rbr{\Delta_i^{\min}}^2}$.

In summary, we have
\[
    (A) \le (A1) + (A2) + \Ocal(1) \le \Ocal \rbr{ \frac{\ln T}{\rbr{\Delta_i^{\min}}^2} + M}.
    \qedhere
\]
\end{proof}

\subsubsection{Bounding Term $(B)$}
We now bound term $(B)$ in Eq.~\eqref{eqn: subpar_first_decomposition}. 

\begin{lemma}
\label{lem:B}
\[
(B) \le \Ocal \rbr{ \frac{\ln T}{(\Delta_i^{\min})^2} + M}.
\]
\end{lemma}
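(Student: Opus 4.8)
The plan is to first invoke Lemma~\ref{lem:bounding_term_B} to pass from $(B)$ to
\[
(B*) = \sum_{t \in [T]} \sum_{p \in \Pcal_t} \EE \sbr{ \rbr{\frac{1}{\phi_{i,t}^p} - 1} \indic \cbr{i_t^p = \dagger, \Ecal_t} },
\]
so that $(B) \le (B*)$. This trades the event that the suboptimal arm $i$ is pulled despite a small posterior sample for one concerning the common near-optimal arm $\dagger$: whenever $\cbr{i_t^p = i, \wbar{Q_i^p(t)}}$ occurs the sample $\theta_{\dagger}^p(t)$ also fails to exceed $y_i^p$, and the technique of~\citet{an17} converts this into the inverse-probability weight $\frac{1}{\phi_{i,t}^p} - 1$. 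I would then split $(B*)$ according to whether player $p$ uses the individual or the aggregate posterior for $\dagger$ (the events $H_{\dagger}^p(t)$ and $\wbar{H_{\dagger}^p(t)}$), and, within the aggregate case, according to whether the aggregate pull count $m_{\dagger}^p(t-1)$ has crossed the threshold $L = \Theta\rbr{\frac{\ln T}{(\Delta_i^{\min})^2} + M}$, giving $(B*) = (b1) + (b2.1) + (b2.2)$.

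The two saturated terms $(b1)$ and $(b2.2)$ are the easy cases. On $H_{\dagger}^p(t)$ player $p$ has already pulled $\dagger$ at least $\frac{40 \ln T}{\epsilon^2} + 2M$ times, while on $\cbr{m_{\dagger}^p(t-1) \ge L}$ the players have collectively pulled $\dagger$ at least $L$ times; in either situation the relevant posterior for $\dagger$ concentrates well above $y_i^p$ on $\Ecal_t$, so $\phi_{i,t}^p$ is close to $1$ and $\frac{1}{\phi_{i,t}^p} - 1 \le \frac{1}{T}$. Summing these negligible weights over all $(t,p)$ yields $(b1), (b2.2) \le \Ocal(M)$; I would cite Lemma~\ref{lem:b1} and Lemma~\ref{lem:b22} for the details.

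The crux is $(b2.1)$, where the aggregate posterior is used but $\dagger$ has been pulled fewer than $L$ times, so $\frac{1}{\phi_{i,t}^p} - 1$ need not be small and a counting argument is unavoidable. The plan is to exploit the invariant property (Definition~\ref{def:invariant}): since the algorithm refreshes the $(\dagger, p)$ posteriors only when $p$ pulls $\dagger$, each summand stays constant between consecutive pulls, so the double sum over $(t,p)$ collapses onto the stopping times $\tau_k = \tau_k(\dagger)$, reducing $(b2.1)$ to a round-zero term plus a sum of $L-1$ expectations indexed by $k$, each built from $\frac{1}{\phi_{i,\tau_k+1}^{p_k}} - 1$ evaluated just after the $k$-th pull of $\dagger$ by player $p_k = p_k(\dagger)$. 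It then suffices to bound each such summand by $\Ocal(1)$, which is Lemma~\ref{lem:key_lemma_constant}, whence $(b2.1) \le \Ocal(L)$ by Lemma~\ref{lem:bounding_term_b21}.

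I expect Lemma~\ref{lem:key_lemma_constant} to be the main obstacle. Controlling $\EE\sbr{1/\phi_{i,\tau_k+1}^{p_k}}$ amounts to showing that the aggregate mean $\aggmu_{\dagger}(\tau_k)$ feeding $\phi_{i,\tau_k+1}^{p_k}$ lies close to $\mu_{\dagger}^{p_k}$; but this is an average of $n_{\dagger}(\tau_k)$ rewards whose \emph{length is itself random} and can exceed $k$ by up to $M-1$, since as many as $M$ players may pull $\dagger$ in round $\tau_k$. As noted in Remark~\ref{rem:novel_concentration}, a naive Azuma--Hoeffding-plus-union-bound or Freedman argument loses an extra $\Ocal(M)$ or $\Ocal(\ln T)$ factor here; instead I would apply the martingale-at-stopping-time concentration inequality of Lemma~\ref{lem:agg_concentration_tau_k} (equivalently Lemma~\ref{lem:novel_concentration_main_paper}) together with Gaussian anti-concentration to obtain the clean $\Ocal(1)$ bound. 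Combining the three pieces gives
\[
(B) \le (B*) = (b1) + (b2.1) + (b2.2) \le \Ocal(M) + \Ocal(L) + \Ocal(M) = \Ocal\rbr{\frac{\ln T}{(\Delta_i^{\min})^2} + M},
\]
as claimed.
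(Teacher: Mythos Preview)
Your proposal is correct and follows essentially the same approach as the paper: reduce $(B)$ to $(B*)$ via Lemma~\ref{lem:bounding_term_B} (equivalently Lemma~\ref{lem:B*}), decompose $(B*) = (b1) + (b2.1) + (b2.2)$ along the events $H_{\dagger}^p(t)$ and $\cbr{m_{\dagger}^p(t-1) \lessgtr L}$, bound the saturated terms $(b1)$ and $(b2.2)$ by $\Ocal(M)$ using concentration of the posterior on $\Ecal_t$, and handle $(b2.1)$ by collapsing onto the stopping times $\tau_k(\dagger)$ via the invariant property and applying Lemma~\ref{lem:key_lemma_constant} (powered by the stopping-time concentration inequality of Lemma~\ref{lem:agg_concentration_tau_k}) to each of the $L-1$ summands. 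This is precisely the route taken in the paper via Lemmas~\ref{lem:B*}, \ref{lem:bounding_B*}, \ref{lem:b1}, \ref{lem:b21}, and \ref{lem:b22}.
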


\begin{proof}
Lemma~\ref{lem:B} follows from Lemmas~\ref{lem:B*} and~\ref{lem:bounding_B*}, which we present shortly.
\end{proof}

Consider the following definition.
\begin{definition}
In any round $t \in [T]$, for any active player $p \in \Pcal_t$, define
\[
\phi_{i,t}^p = \Pr \rbr{\theta_{\dagger}^p(t) > y_i^p \mid \Fcal_{t-1}}.
\]
\end{definition}
\begin{remark}

Recall that $\wbar{\Phi}(\cdot)$ denotes the complementary CDF of the standard Gaussian distribution;
and recall $\wbar{n_i^p}(t-1) = n_i^p(t-1) \vee 1$, and $\wbar{m_i^p}(t-1) = (m_i^p(t-1)-M) \vee 1$.
$\phi_{i,t}^p$ can be explicitly written as: 
\begin{align}
\phi_{i,t}^p 
= & \bPhi \rbr{\frac{y_i^p - \hat{\mu}_{\dagger}^p(t-1)}{\sqrt{ \var_{\dagger}^{p}(t-1) }}} 
\label{eqn:phi_pre_decomposition}
\\ 
= & \bPhi \rbr{( y_i^p - \indmu_{\dagger}^p(t-1) ) \sqrt{\wbar{n_\dagger^p}(t-1)/4}} \cdot \indic \cbr{H_{\dagger}^p(t)} 
+ \bPhi \rbr{(y_i^p - \aggmu_{\dagger}^p(t-1))\sqrt{ \wbar{m_{\dagger}^p}(t-1)/4}} \cdot \indic \cbr{\wbar{H_{\dagger}^p(t)}}. \label{eqn:phi_decomposition_new}
\end{align}
\label{rem:phi_explicit}
\end{remark}

\begin{proof}[Proof of Remark~\ref{rem:phi_explicit}]
We have
\begin{align*}
\phi_{i,t}^p
= & \Pr \rbr{\theta_\dagger^p(t) > y_i^p \mid \hat{\mu}_{\dagger}^{p}(t-1), \var_{\dagger}^{p}(t-1)} \\
= & 1 - \Pr \rbr{\theta_\dagger^p(t) \le y_i^p \mid \hat{\mu}_{\dagger}^{p}(t-1), \var_{\dagger}^{p}(t-1)} \\
= & 1 - \Phi \rbr{\frac{y_i^p - \hat{\mu}_{\dagger}^p(t-1)}{\sqrt{ \var_{\dagger}^{p}(t-1) }}} 
= \wbar{\Phi} \rbr{\frac{y_i^p - \hat{\mu}_{\dagger}^p(t-1)}{\sqrt{ \var_{\dagger}^{p}(t-1) }}}.
\end{align*}

Eq.~\eqref{eqn:phi_decomposition_new} now follows by observing that:
\begin{enumerate}
    \item if $H_{\dagger}^p(t)$ happens, then $\hat{\mu}_{\dagger}^p(t-1) = \indmu_{\dagger}^p(t-1)$ and 
    $\var_{\dagger}^{p}(t-1) = \frac{4}{n_\dagger^p(t-1) \vee 1}$;
    \item if $\overline{H_{\dagger}^p(t)}$ happens, then $\hat{\mu}_{\dagger}^p(t-1) = \aggmu_{\dagger}^p(t-1)$ and 
    $\var_{\dagger}^{p}(t-1) = \frac{4}{(m_\dagger^p(t-1)-M) \vee 1}$.
    \qedhere
\end{enumerate}
\end{proof}

We now present the following lemma, which is inspired by a technique introduced in the work of \citep{an17}.

\begin{lemma}
\label{lem:B*}
\begin{align*}
(B) \le \underbrace{\sum_{t \in [T]} \sum_{p \in \Pcal_t} \EE \sbr{  {\rbr{\frac{1 - \phi_{i,t}^p}{\phi_{i,t}^p}} \indic \cbr{i_t^p = \dagger, \Ecal_t} }}}_{(B*)}.
\end{align*}
\end{lemma}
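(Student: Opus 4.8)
The plan is to prove Lemma~\ref{lem:B*} by a per-round, per-player conditional argument that charges each potential pull of arm $i$ \emph{below} the threshold $y_i^p$ to a pull of the common near-optimal arm $\dagger$, at the cost of the factor $\frac{1-\phi_{i,t}^p}{\phi_{i,t}^p}$. The starting observation is that the event $\cbr{i_t^p = i, \wbar{Q_i^p(t)}}$ (recall $\wbar{Q_i^p(t)} = \cbr{\theta_i^p(t) \le y_i^p}$) forces $\theta_\dagger^p(t) \le \theta_i^p(t) \le y_i^p$, since $i_t^p = \argmax_{j} \theta_j^p(t)$ and $i \ne \dagger$ (indeed $\dagger \in \Ical_{2\epsilon}^C \subseteq \Ical_{10\epsilon}^C$ while $i \in \Ical_{10\epsilon}$, see Fact~\ref{fact:basic_facts_subpar_arms}). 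Hence arm $i$ is played below threshold only in a round where $\dagger$'s own sample also lands below $y_i^p$, an event of conditional probability $1 - \phi_{i,t}^p$.

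Since $(B)$ and $(B*)$ are both sums of per-round, per-player expectations, by linearity it suffices to establish the pointwise inequality, for every $t \in [T]$ and $p \in \Pcal_t$,
\[
\EE\sbr{\indic\cbr{i_t^p = i, \theta_i^p(t) \le y_i^p} \mid \Fcal_{t-1}} \le \frac{1-\phi_{i,t}^p}{\phi_{i,t}^p}\, \EE\sbr{\indic\cbr{i_t^p = \dagger}\mid\Fcal_{t-1}},
\]
and then reinsert $\indic\cbr{\Ecal_t}$, take total expectations, and sum. This is legitimate because both $\Ecal_t$ (which by Definition~\ref{def:event_e} depends only on round-$(t-1)$ statistics) and $\phi_{i,t}^p$ are $\Fcal_{t-1}$-measurable, so they factor out of the conditional expectations cleanly.

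To prove the pointwise bound I would condition further on $\Fcal_{t-1}$ together with the collection $\theta_{-\dagger}$ of player-$p$ samples for all arms other than $\dagger$; this is valid because, by line~\ref{line:draw-sample}, the samples $\cbr{\theta_j^p(t)}_{j \in [K]}$ are mutually independent given $\Fcal_{t-1}$, so $\theta_\dagger^p(t)$ keeps its posterior law under this extra conditioning. Writing $M_{-\dagger} = \max_{j \ne \dagger}\theta_j^p(t)$ and $j^\star = \argmax_{j \ne \dagger}\theta_j^p(t)$, one checks that $\cbr{i_t^p = i, \theta_i^p(t) \le y_i^p} = \cbr{j^\star = i,\ M_{-\dagger} \le y_i^p,\ \theta_\dagger^p(t) \le M_{-\dagger}}$, while $\cbr{i_t^p = \dagger} = \cbr{\theta_\dagger^p(t) \ge M_{-\dagger}}$ (ties have probability zero for Gaussian samples). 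On the $\theta_{-\dagger}$-measurable event $\cbr{j^\star = i, M_{-\dagger} \le y_i^p}$, taking conditional probability over $\theta_\dagger^p(t)$ alone gives $\Pr(\theta_\dagger^p(t) \le M_{-\dagger}\mid \cdots) \le \Pr(\theta_\dagger^p(t) \le y_i^p \mid \Fcal_{t-1}) = 1 - \phi_{i,t}^p$ and $\Pr(\theta_\dagger^p(t)\ge M_{-\dagger}\mid \cdots)\ge \Pr(\theta_\dagger^p(t) > y_i^p\mid\Fcal_{t-1}) = \phi_{i,t}^p$, both using $M_{-\dagger} \le y_i^p$. Integrating these against $\indic\cbr{j^\star = i, M_{-\dagger}\le y_i^p}$ yields $\EE[\indic\cbr{i_t^p=i,\theta_i^p(t)\le y_i^p}\mid\Fcal_{t-1}] \le (1-\phi_{i,t}^p)\,\Pr(j^\star=i,M_{-\dagger}\le y_i^p\mid\Fcal_{t-1})$ and $\EE[\indic\cbr{i_t^p=\dagger}\mid\Fcal_{t-1}] \ge \phi_{i,t}^p\,\Pr(j^\star=i,M_{-\dagger}\le y_i^p\mid\Fcal_{t-1})$; dividing gives the claim. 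Division is always valid since $\theta_\dagger^p(t)$ is Gaussian, so $\phi_{i,t}^p \in (0,1)$ strictly.

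The step I expect to require the most care is the conditioning setup: isolating the randomness of $\theta_\dagger^p(t)$ from the remaining samples so that the single threshold $y_i^p$ simultaneously controls the numerator ($1-\phi_{i,t}^p$, from $\theta_\dagger^p(t) \le M_{-\dagger} \le y_i^p$) and the denominator ($\phi_{i,t}^p$, from $\theta_\dagger^p(t) \ge M_{-\dagger}$). Notably, the multi-task aspect does not complicate this particular lemma, since for a fixed active player $p$ the selection rule depends only on $p$'s own samples; the interval- and stopping-time machinery is deferred to the subsequent bound on $(B*)$.
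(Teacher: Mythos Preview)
Your proposal is correct and follows essentially the same approach as the paper: both exploit the conditional independence of $\theta_\dagger^p(t)$ from the other samples given $\Fcal_{t-1}$ to trade the event $\cbr{i_t^p=i,\ \theta_i^p(t)\le y_i^p}$ for $\cbr{i_t^p=\dagger}$ at cost $\frac{1-\phi_{i,t}^p}{\phi_{i,t}^p}$, then multiply by the $\Fcal_{t-1}$-measurable indicator $\indic\cbr{\Ecal_t}$ and take expectations. The only packaging difference is that the paper first relaxes $\cbr{i_t^p=i,\ \theta_i^p(t)\le y_i^p}$ to $\cbr{\forall j,\ \theta_j^p(t)\le y_i^p}$ and factorizes directly (abstracted as the reusable ``change-of-arm'' Lemma~\ref{lem:change-arm}), whereas you retain the constraint $j^\star=i$ and condition explicitly on $\theta_{-\dagger}$.
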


\begin{proof}
In any round $t$ and for any active player $p \in \Pcal_t$, consider
\begin{align}
    & \Pr \rbr{i_t^p = i, \wbar{Q_i^p(t)}, \Ecal_t \mid \Fcal_{t-1}} \nonumber \\
    = & \Pr \rbr{i_t^p = i, \theta_i^p(t) \leq y_i^p \mid \Fcal_{t-1}} \cdot \indic \cbr{\Ecal_t} \nonumber \\
    \leq & \Pr \rbr{i_t^p = \dagger \mid \Fcal_{t-1}} \cdot 
    \frac{\Pr \rbr{\theta_\dagger^p(t) \leq y_i^p \mid \Fcal_{t-1}}}{\Pr \rbr{\theta_\dagger^p(t) > y_i^p \mid \Fcal_{t-1}}}
    \cdot \indic \cbr{\Ecal_t} \nonumber \\
    = & 
    \rbr{ \frac{1-\phi_{i,t}^p}{\phi_{i,t}^p} } \cdot \Pr \rbr{i_t^p = \dagger \mid \Fcal_{t-1}}
    \cdot \indic \cbr{\Ecal_t} \nonumber \\
    = & 
    \rbr{\frac{1 - \phi_{i,t}^p}{\phi_{i,t}^p}} \Pr \rbr{i_t^p = \dagger, \Ecal_t \mid \Fcal_{t-1}}, \label{eqn:phi_intermediate}
\end{align}

where the first equality follows from the definition of $Q_i^p(t)$ and that $\Ecal_t$ is $\Fcal_{t-1}$-measurable; the first inequality uses Lemma~\ref{lem:change-arm} with $l = \dagger$ and $z = y_i^p$; the second equality inequality is from the definition of $\phi_{i,t}^p$; 
and the last equality is again because $\Ecal_t$ is $\Fcal_{t-1}$-measurable.

Finally, we have
\begin{align*}
\EE \sbr{ \indic \cbr{i_t^p = i, \wbar{Q_i^p(t)}, \Ecal_t} } 
& = \EE \sbr{ \Pr \rbr{i_t^p = i, \wbar{Q_i^p(t)}, \Ecal_t \mid \Fcal_{t-1}} } \\ 
& \le \EE \sbr{ \rbr{\frac{1 - \phi_{i,t}^p}{\phi_{i,t}^p}} \Pr \rbr{i_t^p = \dagger, \Ecal_t \mid \Fcal_{t-1}}} \\
& = \EE \sbr{ \EE \sbr{\rbr{\frac{1 - \phi_{i,t}^p}{\phi_{i,t}^p}} \indic \cbr{i_t^p = \dagger, \Ecal_t} \mid \Fcal_{t-1}}} \\
& = \EE \sbr{ {\rbr{\frac{1 - \phi_{i,t}^p}{\phi_{i,t}^p}} \indic \cbr{i_t^p = \dagger, \Ecal_t} }},
\end{align*}
where we use the law of total expectation and Eq.~\eqref{eqn:phi_intermediate}. The lemma follows by summing over all $t, p$'s.
\end{proof}

With foresight, let {$L = \frac{2560 \ln T}{\rbr{\Delta_i^{\min}}^2} + M$}. We further decompose term $(B*)$ as follows.

\begin{align}
(B*) & = \sum_{t \in [T]} \sum_{p \in \Pcal_t} \EE \sbr{  {\rbr{\frac{1 - \phi_{i,t}^p}{\phi_{i,t}^p}} \indic \cbr{i_t^p = \dagger, \Ecal_t} }} \nonumber \\
& =
\underbrace{\sum_{t \in [T]} \sum_{p \in \Pcal_t} \EE \sbr{ {\rbr{\frac{1 - \phi_{i,t}^p}{\phi_{i,t}^p}} \indic \cbr{i_t^p = \dagger, \Ecal_t, H_{\dagger}^p(t)} }}}_{(b1)} 
+
\underbrace{\sum_{t \in [T]} \sum_{p \in \Pcal_t} \EE \sbr{ {\rbr{\frac{1 - \phi_{i,t}^p}{\phi_{i,t}^p}} \indic \cbr{i_t^p = \dagger, \Ecal_t, \wbar{H_{\dagger}^p(t)}} }}}_{(b2)}, \nonumber \\
& = 
(b1)
+
\underbrace{\sum_{t \in [T]} \sum_{p \in \Pcal_t} \EE \sbr{ {\rbr{\frac{1 - \phi_{i,t}^p}{\phi_{i,t}^p}} \indic \cbr{i_t^p = \dagger, \Ecal_t, \wbar{H_{\dagger}^p(t)}, m_{\dagger}^p(t-1) < L} }}}_{(b2.1)} \nonumber \\
& \hspace{150pt} +
\underbrace{\sum_{t \in [T]} \sum_{p \in \Pcal_t} \EE \sbr{ {\rbr{\frac{1 - \phi_{i,t}^p}{\phi_{i,t}^p}} \indic \cbr{i_t^p = \dagger, \Ecal_t, \wbar{H_{\dagger}^p(t)}, m_{\dagger}^p(t-1) \ge L} }}}_{(b2.2)}.  \label{eqn:B*_decomposition_new}
\end{align}
where the inequality uses Lemma~\ref{lem:B*}.

\begin{lemma}
\label{lem:bounding_B*}
\[
(B*) \le \Ocal \rbr{\frac{\ln T}{\rbr{\Delta_i^{\min}}^2} + M}.
\]
\end{lemma}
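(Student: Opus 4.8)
The plan is to prove the bound on $(B*)$ by controlling each of the three terms in the decomposition $(B*) = (b1) + (b2.1) + (b2.2)$ recorded in Eq.~\eqref{eqn:B*_decomposition_new}, and then adding the three bounds. The organizing principle is that $\frac{1 - \phi_{i,t}^p}{\phi_{i,t}^p}$ is small precisely when $\phi_{i,t}^p = \bPhi(\cdot)$ is close to $1$, i.e., when the posterior for the common near-optimal arm $\dagger$ concentrates above the threshold $y_i^p$; since $y_i^p < \mu_\dagger^p$ with a gap that is a constant fraction of $\Delta_i^{\min}$ (Fact~\ref{fact:small_delta} and Fact~\ref{fact:basic_facts_subpar_arms}), this happens as soon as $\dagger$ has been sampled enough times. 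The whole difficulty is therefore isolated in the regime where the aggregate posterior is used but $\dagger$ has been pulled only a few times, namely term $(b2.1)$.

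First I would dispose of the two ``easy'' terms $(b1)$ and $(b2.2)$, each of which I expect to be $\Ocal(M)$. For $(b1)$, the event $H_\dagger^p(t)$ forces $n_\dagger^p(t-1) \ge \frac{40 \ln T}{\epsilon^2} + 2M$, so on the clean event $\Ecal_t$ the individual estimate $\indmu_\dagger^p(t-1)$ lies within $\sqrt{10 \ln T / \wbar{n_\dagger^p}(t-1)}$ of $\mu_\dagger^p$ and hence exceeds $y_i^p$ by a constant fraction of $\Delta_i^p$; substituting into Eq.~\eqref{eqn:phi_decomposition_new} and applying the Gaussian tail estimate (Lemma~\ref{lem:phi-bounds}) makes $1 - \phi_{i,t}^p$ of order $T^{-2}$, so $\frac{1 - \phi_{i,t}^p}{\phi_{i,t}^p} = \Ocal(T^{-2})$ and the sum over the at most $MT$ indices with $i_t^p = \dagger$ collapses to $\Ocal(M)$. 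Term $(b2.2)$ is handled identically, with $m_\dagger^p(t-1) \ge L$ in the place of ``many individual pulls'' and the aggregate bounds in $\Ecal_t$ (the $2\epsilon$ bias being dominated since $\Delta_i^{\min} > 8\epsilon$) guaranteeing $\aggmu_\dagger^p(t-1) > y_i^p$ by a constant fraction of $\Delta_i^{\min}$. These two arguments are Lemma~\ref{lem:b1} and Lemma~\ref{lem:b22}.

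The crux is $(b2.1)$, where $\wbar{H_\dagger^p(t)}$ holds and $m_\dagger^p(t-1) < L$, so concentration of $\aggmu_\dagger^p$ above $y_i^p$ is unavailable and $\phi_{i,t}^p$ can be small. Here I would follow the anti-concentration strategy of \citet{an17}, adapted to the multi-task setting, in two steps. The first step uses the invariant property (Definition~\ref{def:invariant} and Examples~\ref{ex:h-invariant}, \ref{ex:mu-var-invariant}): because the aggregate posterior for $\dagger$ only changes at rounds in which some player pulls $\dagger$, the quantities inside the $(b2.1)$ summand stay frozen between consecutive pulls, so the sum over $(t,p)$ can be reindexed by the stopping times $\tau_k = \tau_k(\dagger)$ and reduced to $\sum_{k=1}^{L-1} \EE \sbr{ \rbr{\frac{1}{\phi_{i,\tau_k+1}^{p_k}} - 1} \indic \cbr{\tau_k \le T, \wbar{H_\dagger^{p}(\tau_k+1)}} }$ (this reindexing is carried out by Lemma~\ref{lem:b21} together with the auxiliary counting lemma). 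The second step bounds each reindexed summand by $\Ocal(1)$ through Lemma~\ref{lem:key_lemma_constant}, so that the $(L-1)$-term sum is $\Ocal(L) = \Ocal(\frac{\ln T}{(\Delta_i^{\min})^2} + M)$; this is exactly Lemma~\ref{lem:bounding_term_b21}.

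I expect Lemma~\ref{lem:key_lemma_constant} to be the main obstacle, and it is where the novel concentration inequality (Lemma~\ref{lem:agg_concentration_tau_k}, i.e., Lemma~\ref{lem:novel_concentration_main_paper}) becomes indispensable. The issue is that $\frac{1}{\phi_{i,\tau_k+1}^{p_k}}$ blows up whenever the frozen aggregate mean $\aggmu_\dagger(\tau_k)$ sits well below $\mu_\dagger^{p}$, so one must show this bad event is rare while simultaneously controlling $\frac{1}{\phi}$ on the good event by Gaussian anti-concentration (Lemma~\ref{lem:sg-anticonc}). The subtlety unique to data aggregation is that the number of samples entering $\aggmu_\dagger(\tau_k)$, namely $n_\dagger(\tau_k)$, is itself random and can exceed $k$ by up to $M - 1$ because several players may pull $\dagger$ in the same round; a crude union bound or a Freedman-type inequality would leak an extra $\Ocal(M)$ or $\Ocal(\ln T)$ factor (Remark~\ref{rem:novel_concentration}). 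The optional-stopping supermartingale argument behind Lemma~\ref{lem:agg_concentration_tau_k} delivers the deviation $\sqrt{2 \ln(2/\delta) / ((n_\dagger(\tau_k) - M) \vee 1)}$ at the random stopping time without any such loss, which is what yields the per-term $\Ocal(1)$ bound. Combining $(b1), (b2.2) = \Ocal(M)$ with $(b2.1) = \Ocal(\frac{\ln T}{(\Delta_i^{\min})^2} + M)$ then gives the claimed bound on $(B*)$.
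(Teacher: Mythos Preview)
Your proposal is correct and follows essentially the same approach as the paper: decompose $(B*) = (b1) + (b2.1) + (b2.2)$ via Eq.~\eqref{eqn:B*_decomposition_new}, then invoke Lemmas~\ref{lem:b1}, \ref{lem:b21}, and \ref{lem:b22} for the three pieces, with the invariant-property reindexing (Lemma~\ref{lem:auxiliary_sum_f_indic}) and the stopping-time concentration (Lemma~\ref{lem:agg_concentration_tau_k}) feeding into Lemma~\ref{lem:key_lemma_constant} for the hard term $(b2.1)$. The only minor slip is cosmetic: in $(b1)$ the paper obtains $\frac{1-\phi_{i,t}^p}{\phi_{i,t}^p}\le \frac{1}{T}$ rather than $\Ocal(T^{-2})$, and it drops the indicator $\indic\{i_t^p=\dagger\}$ before summing over all $(t,p)$, but the resulting $\Ocal(M)$ bound is unchanged.
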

\begin{proof}
Lemma~\ref{lem:bounding_B*} follows directly from Eq.~\eqref{eqn:B*_decomposition_new} and the following Lemma~\ref{lem:b1}, Lemma~\ref{lem:b21} and Lemma~\ref{lem:b22}, which provide  upper bounds on terms $(b1)$, $(b2.1)$ and $(b2.2)$, respectively.
\end{proof}

\begin{lemma}[Bounding term $(b1)$]
\label{lem:b1}
\begin{align*}
(b1) \le 
\Ocal \rbr{M}.
\end{align*}
\end{lemma}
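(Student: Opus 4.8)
The plan is to reduce $(b1)$ to a deterministic pointwise estimate on the event $\Ecal_t \cap H_{\dagger}^p(t)$ and then sum trivially over all rounds and players. First I would observe that $\phi_{i,t}^p$ is $\Fcal_{t-1}$-measurable, being a conditional probability given $\Fcal_{t-1}$, and so are the events $\Ecal_t$ and $H_{\dagger}^p(t)$ (Definitions~\ref{def:event_e} and~\ref{def:alg_criteron}). Hence I can drop the factor $\indic\cbr{i_t^p = \dagger}$ to upper bound
\[
(b1) \le \sum_{t \in [T]} \sum_{p \in \Pcal_t} \EE\sbr{ \frac{1-\phi_{i,t}^p}{\phi_{i,t}^p}\, \indic\cbr{\Ecal_t, H_{\dagger}^p(t)}},
\]
so that it suffices to bound $\frac{1-\phi_{i,t}^p}{\phi_{i,t}^p}$ deterministically on $\Ecal_t \cap H_{\dagger}^p(t)$.

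Next I would invoke Remark~\ref{rem:phi_explicit}: on $H_{\dagger}^p(t)$ we have $\phi_{i,t}^p = \bPhi\rbr{(y_i^p - \indmu_{\dagger}^p(t-1))\sqrt{\wbar{n_\dagger^p}(t-1)/4}}$, and the goal is to show this argument is a large negative multiple of $\sqrt{\ln T}$. Writing $\mu_{\dagger}^p - y_i^p = \tfrac12\delta_i^p$ and using the clean-event concentration $\indmu_{\dagger}^p(t-1) \ge \mu_{\dagger}^p - \sqrt{10\ln T/\wbar{n_\dagger^p}(t-1)}$, I obtain $y_i^p - \indmu_{\dagger}^p(t-1) \le -\tfrac12\delta_i^p + \sqrt{10\ln T/\wbar{n_\dagger^p}(t-1)}$. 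On $H_{\dagger}^p(t)$ we have $\wbar{n_\dagger^p}(t-1) \ge 40\ln T/\epsilon^2$, so the slack term is at most $\epsilon/2$; and since $i \in \Ical_{10\epsilon}$, Fact~\ref{fact:small_delta} together with Fact~\ref{fact:basic_facts_subpar_arms} give $\delta_i^p > \tfrac34\Delta_i^p > 6\epsilon$, whence $y_i^p - \indmu_{\dagger}^p(t-1) \le -\tfrac52\epsilon < 0$. Multiplying by $\sqrt{\wbar{n_\dagger^p}(t-1)/4} \ge \tfrac12\sqrt{40\ln T/\epsilon^2}$ then drives the argument of $\bPhi$ below $-c\sqrt{\ln T}$ for an absolute constant $c = \tfrac52\sqrt{10}$. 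Since the argument is negative, $\phi_{i,t}^p = \bPhi(\text{neg}) > \tfrac12$, so $\frac{1-\phi_{i,t}^p}{\phi_{i,t}^p} \le 2(1-\phi_{i,t}^p) = 2\bPhi(c\sqrt{\ln T})$, and the Gaussian tail bound (Lemma~\ref{lem:phi-bounds}) gives $\bPhi(c\sqrt{\ln T}) \le e^{-c^2\ln T/2} \le T^{-2}$.

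Finally, I would sum the pointwise bound over the at most $\sum_{t\in[T]}\abr{\Pcal_t} \le MT$ summands, each bounded by $2/T^2$, to conclude $(b1) \le 2M/T = \Ocal(1) \le \Ocal(M)$. In fact this argument establishes the stronger bound $\Ocal(1)$; the looser $\Ocal(M)$ is stated only for uniformity with the estimates on $(A2)$, $(b2.1)$, and $(b2.2)$. I do not anticipate a genuine obstacle here. The only delicate point is verifying that the algorithm's switching threshold $40\ln T/\epsilon^2$ is large enough, relative to the clean-event slack and the $\Ical_{10\epsilon}$ gap lower bound $\delta_i^p > 6\epsilon$, to drive the Gaussian argument below $-2\sqrt{\ln T}$; this is precisely where the constants $c_1 = 40$, $c_2 = 4$ and the choice of subpar level $10\epsilon$ are consumed.
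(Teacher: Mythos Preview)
The proposal is correct and follows essentially the same approach as the paper: drop the $\indic\cbr{i_t^p=\dagger}$ factor, use the clean-event concentration together with the $H_\dagger^p(t)$ threshold $n_\dagger^p(t-1)\ge 40\ln T/\epsilon^2$ to show the argument of $\bPhi$ is a large negative multiple of $\sqrt{\ln T}$, apply the Gaussian tail bound to make each summand at most $\Ocal(1/T)$, and sum over at most $MT$ terms. Your parameterization in terms of $\epsilon$ rather than $\Delta_i^p$ is a cosmetic difference, and your observation that the bound is in fact $\Ocal(1)$ (since $T>\max(K,M)$) is a correct sharpening of the paper's stated $\Ocal(M)$.
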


\begin{proof}[Proof of Lemma~\ref{lem:b1}]
For any player $p \in [M]$ and $t \in [T]$, recall that $\wbar{n_{\dagger}^p}(t-1) = n_{\dagger}^p(t-1) \vee 1$ and $\rbr{z}_+ = z \vee 0$.
When $\Ecal_t$ and $H_{\dagger}^p(t)$ happen, 
$n_\dagger^p(t-1) \ge \frac{40\ln T}{\epsilon^2} =: Y$; we have:
\begin{align*}
    & 1 - \phi_{i,t}^p \\
    = & \Pr \rbr{ \theta_{\dagger}^p(t) \le y_i^p \mid \Fcal_{t-1}} \\
    = & \Phi\rbr{ (y_i^p - \indmu_\dagger^p(t-1)) \sqrt{ \wbar{n_\dagger^p}(t-1) / 4 } } \\
    \leq & \exp \del{ - 
    \frac{ \wbar{n_\dagger^p}(t-1)(\indmu_\dagger^p(t-1) - y_i^p)_+^2  }{ 8 }  } \\
     \leq & \exp \del{ - 
    \frac{ n_\dagger^p(t-1)(\mu_\dagger^p - \frac14 \Delta_i^p - \mu_\dagger^p + \frac38 \Delta_i^p)_+^2  }{ 8 }  } \\
     \leq & \exp \del{ - 
    \frac{ n_\dagger^p(t-1) (\Delta_i^p)^2  }{ 8 (64) }  } \\
    \leq & \frac{1}{T+1},
\end{align*}
where 
the second equality uses Remark~\ref{rem:phi_explicit};
the first inequality uses Lemma~\ref{lem:phi-bounds};
the second inequality follows from the observations that, when $\Ecal_t$ and $H_{\dagger}^p(t)$ happen:
\begin{enumerate}
    \item $\wbar{n_\dagger^p}(t-1) \ge n_\dagger^p(t-1) \ge Y = \frac{40\ln T}{\epsilon^2} \ge \frac{2560\ln T}{(\Delta_i^p)^2}$ (see Fact~\ref{fact:small_delta}),
    
    \item $\indmu_{\dagger}^p(t-1) \ge \mu_{\dagger}^p - \sqrt{ \frac{10 \ln T}{\wbar{n_\dagger^p}(t-1)}} \ge \mu_{\dagger}^p - \frac{1}{4}\Delta_i^p$ (see Definition~\ref{def:event_e}), and
    
    \item $y_i^p = \mu_{\dagger}^p - \frac12 \delta_i^p < \mu_{\dagger}^p - \frac38 \Delta_i^p$;
\end{enumerate}
the third inequality is by algebra;
and the last inequality follows because, again, when $H_{\dagger}^p(t)$ happens, $n_\dagger^p(t-1) \ge Y =  \frac{40\ln T}{\epsilon^2} \ge \frac{2560\ln T}{(\Delta_i^p)^2} \ge \frac{1280 \ln (T+1)}{(\Delta_i^p)^2}$ for $T > 1$.

It follows that, when $\Ecal_t$ and $H_{\dagger}^p(t)$ happen, $\phi_{i,t}^p \ge \frac{T}{T+1}$ and $\frac{1 - \phi_{i,t}^p}{\phi_{i,t}^p} \le \frac{1}{T}$. Hence, 
\[
(b1) \le \sum_{p \in [M]} \sum_{t: p \in \Pcal_t}  \EE \sbr{ {\rbr{\frac{1 - \phi_{i,t}^p}{\phi_{i,t}^p}} \indic \cbr{\Ecal_t, H_{\dagger}^p(t)} }} \le M.
\]

\end{proof}

\begin{lemma}[Bounding term $(b2.1)$]
\label{lem:b21}
\begin{align*}
(b2.1) 
& \le \Ocal \rbr{\frac{\ln T}{\rbr{\Delta_i^{\min}}^2} + M}.
\end{align*}

\end{lemma}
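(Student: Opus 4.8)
The plan is to collapse the round-indexed sum defining $(b2.1)$ onto the global pulls of the near-optimal arm $\dagger$, truncate it at $L-1$, and then bound each surviving term by a universal constant. First I would record that the per-round weight is invariant between pulls: by Remark~\ref{rem:phi_explicit}, $\phi_{i,t}^p$ is a function of $\hat{\mu}_\dagger^p(t-1)$, $\var_\dagger^p(t-1)$ and $\indic\cbr{H_\dagger^p(t)}$ only, each of which stays constant between two consecutive pulls of $\dagger$ by player $p$ (Examples~\ref{ex:h-invariant} and~\ref{ex:mu-var-invariant}). Hence $f_t^p := \rbr{\tfrac{1}{\phi_{i,t}^p}-1}\indic\cbr{\wbar{H_\dagger^p(t)}}$ satisfies the invariant property with respect to arm $\dagger$. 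Applying the reindexing lemma (Lemma~\ref{lem:auxiliary_sum_f_indic}) to this $f_t^p$ together with the constraint $\cbr{m_\dagger^p(t-1) < L}$, I would obtain
\begin{align*}
(b2.1) \le \sum_{p=1}^M \EE\sbr{ f_1^p } + \sum_{k=1}^{L-1} \EE\sbr{ f_{\tau_k+1}^{p_k}\,\indic\cbr{\tau_k \le T} },
\end{align*}
where $\tau_k = \tau_k(\dagger)$ and $p_k = p_k(\dagger)$ (Definitions~\ref{def:tau_k} and~\ref{def:p_k}). The truncation at $k \le L-1$ is precisely what the indicator $\cbr{m_\dagger^p(t-1) < L}$ provides: once the global count of $\dagger$ reaches $L$, each player can contribute at most one more pull before its maintained aggregate count is refreshed past $L$, which is the same $\iota$-counting argument used to justify Eq.~\eqref{eqn:A2_m_decomposition}.

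Next I would dispatch the initial term. At $t=1$ nothing has been pulled, so $\wbar{H_\dagger^p(1)}$ holds, and the round-$1$ aggregate posterior has variance $\aggvar_\dagger^p(0)=4$ and a mean lying in $[0,1]$; thus $\phi_{i,1}^p = \bPhi\rbr{\tfrac{y_i^p-\aggmu_\dagger^p(0)}{2}}$ with argument in $[-\tfrac12,\tfrac12]$, so $\phi_{i,1}^p$ is bounded below by a universal positive constant and $f_1^p = \Ocal(1)$. Summing over the $M$ players gives $\sum_p \EE\sbr{f_1^p} = \Ocal(M)$.

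For the truncated tail I would invoke Lemma~\ref{lem:key_lemma_constant}, which gives $\EE\sbr{ f_{\tau_k+1}^{p_k}\indic\cbr{\tau_k \le T}} = \Ocal(1)$ for each $k$. Summing the $L-1$ terms and combining with the initial term yields
\begin{align*}
(b2.1) \le \Ocal(M) + \Ocal(L) = \Ocal(L) = \Ocal\rbr{\frac{\ln T}{(\Delta_i^{\min})^2} + M},
\end{align*}
since $L = \frac{2560\ln T}{(\Delta_i^{\min})^2} + M$, which is the claimed bound.

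The bookkeeping above is routine; the weight of the argument sits in its two inputs. The reindexing (Lemma~\ref{lem:auxiliary_sum_f_indic}) relies on the delayed-update design of Algorithm~\ref{alg:robustaggTS} (line~\ref{line:invariant}): it is exactly this that makes $f_t^p$ invariant and lets me assign the $k$-th global pull its frozen value $f_{\tau_k+1}^{p_k}$, and one must check that the boundary effects cost only an additional $\Ocal(M)$. The genuinely hard part, however, is Lemma~\ref{lem:key_lemma_constant} itself: bounding $\EE\sbr{\tfrac{1}{\phi_{i,\tau_k+1}^{p_k}}-1}$ requires a Gaussian anti-concentration (lower) bound on $\phi$, hence an upper bound on $y_i^p - \aggmu_\dagger^{p_k}(\tau_k)$ that must hold \emph{at the random stopping time} $\tau_k$, whose induced sample size $n_\dagger(\tau_k)$ is itself random and can exceed $k$ by up to $M-1$. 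A naive union bound or a Freedman-type inequality would leak an extra $\Ocal(M)$ or $\Ocal(\ln T)$ factor here (Remark~\ref{rem:novel_concentration}); keeping each tail term at $\Ocal(1)$ is exactly where the novel stopping-time concentration inequality (Lemma~\ref{lem:novel_concentration_main_paper}/Lemma~\ref{lem:agg_concentration_tau_k}) is indispensable.
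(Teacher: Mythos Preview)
Your proposal is correct and follows essentially the same route as the paper: establish the invariant property of $f_t^p = \bigl(\tfrac{1}{\phi_{i,t}^p}-1\bigr)\indic\{\wbar{H_\dagger^p(t)}\}$ with respect to $\dagger$, apply Lemma~\ref{lem:auxiliary_sum_f_indic} to reindex onto the first $L-1$ global pulls of $\dagger$, bound the $M$ initial terms by $\Ocal(1)$ each via the constant initial posterior variance, and then invoke Lemma~\ref{lem:key_lemma_constant} on each of the $L-1$ tail terms. The only minor omission is that you should explicitly drop the $\Ecal_t$ indicator before reindexing (it is not invariant with respect to $\dagger$), which is harmless here since all summands are nonnegative.
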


The remark below is useful for proving Lemma~\ref{lem:b21}.
\begin{remark}[Invariant property]
\label{rem:invariant}
Recall from Example~\ref{ex:h-invariant} that 
$\cbr{H_\dagger^p(t): t \in [T], p \in [M]}$ satisfies the invariant property with respect to $\dagger$. 

Moreover, the construction of Algorithm~\ref{alg:robustaggTS} enforces that 
$\cbr{\phi_{i,t}^p: t \in [T], p \in [M]}$ satisfies the invariant property with respect to $\dagger$ (note that it does not necessarily satisfy the invariant property with respect to $i$). Indeed, this follows from Eq.~\eqref{eqn:phi_pre_decomposition}, along with  Example~\ref{ex:mu-var-invariant} that shows that the posterior parameters, $\cbr{ (\hat{\mu}_\dagger^p(t-1), \var_\dagger^p(t-1)): t \in [T], p \in [M]}$, satisfy the invariant property with respect to $\dagger$. 

Combining the two observations above, $\cbr{\rbr{ \frac{1}{\phi_{i,t}^p}-1} \indic \cbr{ \wbar{H_{\dagger}^p(t)}}: t \in [T], p \in [M]}$ also satisfies the invariant property with respect to arm $\dagger$. 
\end{remark}

\begin{proof}[Proof of Lemma~\ref{lem:b21}]
Proving Lemma~\ref{lem:b21} requires more special care.
Recall that
\begin{align*}
(b2.1) = & \sum_{t \in [T]} \sum_{p \in \Pcal_t} \EE \sbr{ {\rbr{\frac{1 - \phi_{i,t}^p}{\phi_{i,t}^p}} \indic \cbr{i_t^p = \dagger, \Ecal_t, \wbar{H_{\dagger}^p(t)}, m_{\dagger}^p(t-1) < L} }} \\
\le & \sum_{t \in [T]} \sum_{p \in \Pcal_t} \EE \sbr{ {\rbr{\frac{1 - \phi_{i,t}^p}{\phi_{i,t}^p}} \indic \cbr{i_t^p = \dagger, \wbar{H_{\dagger}^p(t)}, m_{\dagger}^p(t-1) < L} }}.
\end{align*}
Also recall the definition of stopping time $\tau_k(\dagger)$ (Definition~\ref{def:tau_k}), the round in which $\dagger$ is pulled the $k$-th time by any player. To ease exposition, we abuse the notation and denote $\tau_k(\dagger)$ by $\tau_k$. Similarly, let $p_k := p_k(\dagger)$ denote the player that issues the $k$-th pull of arm $\dagger$ (recall Definition~\ref{def:p_k}). %

Since $\cbr{\rbr{ \frac{1}{\phi_{i,t}^p}-1} \indic \cbr{ \wbar{H_{\dagger}^p(t)}}: t \in [T], p \in [M]}$ satisfies the invariant property with respect to arm $\dagger$, by Lemma~\ref{lem:auxiliary_sum_f_indic}, we have
\begin{align}
(b2.1) \le \sum_{p=1}^M \EE \sbr{ \rbr{\frac{1}{\phi_{i,1}^p} - 1} \indic \cbr{ \wbar{H_{\dagger}^p(1)}}} + \sum_{k=1}^{L-1} \EE \sbr{ \rbr{\frac{1}{\phi_{i, \tau_k+1}^{p_k}} - 1} \indic \cbr{\tau_k \le T, \wbar{H_{\dagger}^p(\tau_k + 1)}}}, \label{eqn:b21_phi}
\end{align}
where we also use the linearity of expectations.

Since the variance of the aggregate posteriors are initialized as the constant $c_2 = 4$ in $\robustaggTS(\epsilon)$, we have that $\rbr{\frac{1}{\phi_{i,1}^p} - 1} \indic \cbr{ \wbar{H_{\dagger}^p(1)}}  \le \Ocal \rbr{1}$ with probability 1.
Therefore, 
\begin{align}
    \sum_{p=1}^M \EE \sbr{ \rbr{\frac{1}{\phi_{i,1}^p} - 1} \indic \cbr{ \wbar{H_{\dagger}^p(1)}}} \le \Ocal \rbr{M}. \label{eqn:constant_variance}
\end{align}
It then suffices to bound the second term in Eq.~\eqref{eqn:b21_phi}---it follows straightforwardly from Lemma~\ref{lem:key_lemma_constant_appendix}, which we present shortly, that the second term is bounded by $\Ocal \rbr{L}$.
It then follows from Eq.~\eqref{eqn:b21_phi}, Eq.~\eqref{eqn:constant_variance}, and Lemma~\ref{lem:key_lemma_constant_appendix} that $(b2.1) \le \Ocal \rbr{\frac{\ln T}{\rbr{\Delta_i^{\min}}^2} + M}$.
\end{proof}

\begin{lemma}
\label{lem:key_lemma_constant_appendix}
For any $k \in [TM]$,
\[
\EE \sbr{\rbr{\frac{1}{\phi_{i, \tau_k+1}^{p_k}} - 1} \indic \cbr{\tau_k \le T, \wbar{H_{\dagger}^p(\tau_k+1)}}} \le \Ocal \rbr{1},
\]
where we recall that $\tau_k = \tau_k(\dagger)$ and $p_k = p_k(\dagger)$ is the player that issues the $k$-th pull of arm $\dagger$.
\end{lemma}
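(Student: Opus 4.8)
The plan is to reduce the quantity to a ratio of Gaussian tail probabilities evaluated at the random stopping time $\tau_k$, and then control its expectation using the stopping-time concentration inequality (Corollary~\ref{col:agg_concentration_alternative}). First I would observe that on the event $\cbr{\tau_k \le T, \wbar{H_{\dagger}^{p_k}(\tau_k+1)}}$, player $p_k$ uses the \emph{aggregate} posterior for arm $\dagger$ in round $\tau_k+1$, so by Remark~\ref{rem:phi_explicit} we have $\phi_{i,\tau_k+1}^{p_k} = \bPhi\rbr{(y_i^{p_k} - \aggmu_{\dagger}^{p_k}(\tau_k))\sqrt{\wbar{m_\dagger^{p_k}}(\tau_k)/4}}$. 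Since $p_k$ pulls $\dagger$ in round $\tau_k$, we have $u_\dagger^{p_k}(\tau_k) = \tau_k$, hence $\aggmu_{\dagger}^{p_k}(\tau_k) = \aggmu_{\dagger}(\tau_k)$ and $m_\dagger^{p_k}(\tau_k) = n_\dagger(\tau_k)$; writing $N := \wbar{m_\dagger^{p_k}}(\tau_k) = (n_\dagger(\tau_k)-M)\vee 1$, all these quantities are $\Fcal_{\tau_k}$-measurable, which is what makes the concentration-at-$\tau_k$ bound applicable.

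Next, setting $\Delta := \tfrac12\delta_i^{p_k} = \mu_\dagger^{p_k} - y_i^{p_k}$ (positive by Fact~\ref{fact:small_delta}) and $D := \mu_\dagger^{p_k} - \aggmu_{\dagger}(\tau_k)$, I would rewrite $y_i^{p_k} - \aggmu_{\dagger}(\tau_k) = D - \Delta$ and set $a := (\Delta - D)\sqrt{N}/2$, so that $\phi_{i,\tau_k+1}^{p_k} = \bPhi(-a) = \Phi(a)$ and therefore $\frac{1}{\phi_{i,\tau_k+1}^{p_k}} - 1 = \frac{\bPhi(a)}{\Phi(a)}$. This term is large exactly when $a$ is very negative, i.e. when $D \gg \Delta$, which is the rare event that the aggregate estimate of arm $\dagger$ severely underestimates $\mu_\dagger^{p_k}$.

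I would then bound the expectation by splitting on the magnitude of $D$. On the good part $\cbr{D \le \Delta/2}$ we have $a \ge \Delta\sqrt{N}/4 \ge 0$, so $\Phi(a) \ge \tfrac12$ and $\frac{\bPhi(a)}{\Phi(a)} \le 2\bPhi(a) \le 1 = \Ocal(1)$. On the bad part $\cbr{D > \Delta/2}$, where $a$ may be small or negative, I would use Gaussian anti-concentration (Gordon's Mills-ratio inequality, as packaged in Lemma~\ref{lem:sg-anticonc}) to obtain $\frac{\bPhi(a)}{\Phi(a)} \lesssim \mathrm{poly}(|a|)\,e^{a^2/2}$ with $a^2/2 = (D-\Delta)^2 N/8$, and then integrate this blow-up against the tail of $D$ furnished by Corollary~\ref{col:agg_concentration_alternative}, namely $\Pr(\tau_k \le T,\ D \ge s) \le 2e^{-s^2 N/2}$ (from $z = s\sqrt N/2$). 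Since the concentration decay rate $s^2 N/2$ strictly dominates the Mills-ratio growth rate $s^2 N/8$, the resulting integral converges to an absolute constant, giving the $\Ocal(1)$ bound.

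The hard part will be this last step: the ``length'' $N = (n_\dagger(\tau_k)-M)\vee 1$ of the aggregated reward sequence is itself random and determined by $\Fcal_{\tau_k}$, so the tail bound on $D$ must hold uniformly at the random stopping time with the correct per-sample sub-Gaussian scaling. This is exactly what the novel stopping-time concentration inequality (Lemma~\ref{lem:agg_concentration_tau_k}) supplies. A cruder estimate obtained via a union bound over the possible values $n_\dagger(\tau_k) \in [k, k+M-1]$, or via Freedman's inequality, would inflate the tail constant enough that the decay rate no longer beats $e^{a^2/2}$, producing an extra $\Ocal(M)$ or $\Ocal(\ln T)$ factor instead of $\Ocal(1)$ (cf. Remark~\ref{rem:novel_concentration}). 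A secondary point to check is that the constants align so that a strict margin remains ($N/2$ against $N/8$, which traces back to the aggregate variance $c_2/N = 4/N$), ensuring the integral is finite.
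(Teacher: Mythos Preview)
Your proposal is correct and follows the same approach as the paper. The paper's execution is slightly cleaner: rather than case-splitting on $D$, it uses $y_i^{p_k} \le \mu_\dagger^{p_k}$ and the monotonicity of $z\mapsto 1/\bPhi(z)$ to bound the whole expectation by $\EE\bigl[\tfrac{1}{\bPhi((\mu_\dagger^{p_k}-\aggmu_\dagger(\tau_k))\sqrt{N/4})}\,\indic\{\tau_k\le T\}\bigr]$ and then applies Lemma~\ref{lem:sg-anticonc} directly with $X=(\aggmu_\dagger(\tau_k)-\mu_\dagger^{p_k})\sqrt{N/4}$, whose tail hypothesis is exactly Corollary~\ref{col:agg_concentration_alternative}; this also sidesteps the minor imprecision in your write-up where the random $N$ appears on the right-hand side of the tail bound, since the usable form is $\Pr(D\sqrt{N}/2 \ge z,\,\tau_k\le T)\le 2e^{-2z^2}$ rather than $\Pr(D\ge s)\le 2e^{-s^2 N/2}$.
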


\begin{proof}
Using Remark~\ref{rem:phi_explicit}, we observe that
\begin{align}
    \phi_{i,\tau_k+1}^{p_k} = & \sbr{\wbar{ \Phi} \rbr{\frac{y_i^p - \indmu_{\dagger}^p(\tau_k)}{2}\sqrt{\rbr{n_\dagger^p(\tau_k)} \vee 1}}} \cdot \indic \cbr{H_{\dagger}^p(\tau_k+1)} \nonumber \\
    & + \sbr{\wbar { \Phi} \rbr{\frac{y_i^p - \aggmu_{\dagger}^p(\tau_k)}{2}\sqrt{ \rbr{m_{\dagger}^p(\tau_k)-M} \vee 1}}} \cdot \indic \cbr{\wbar{H_{\dagger}^p(\tau_k+1)}}. \label{eqn:phi_decomposition_tau}
\end{align}

We have
\begin{align}
& \EE \sbr{ \rbr{\frac{1}{\phi_{i,\tau_k+1}^{p_k}} - 1} \indic \cbr{\tau_k \le T, \wbar{H_{\dagger}^{p_k}(\tau_k+1)}} } \nonumber \\
= & \EE \sbr{ \rbr{\frac{1}{\bPhi \rbr{\rbr{y_i^{p_k} - \aggmu_{\dagger}^{p_k}(\tau_k)}\sqrt{ \rbr{ \rbr{m_{\dagger}^{p_k}(\tau_k) - M} \vee 1 }/4}}} - 1} \indic \cbr{\tau_k \le T, \wbar{H_{\dagger}^{p_k}(\tau_k+1)}} } \nonumber \\
\leq & 
\EE \sbr{ \frac{1}{\bPhi \rbr{\rbr{\mu_\dagger^{p_k} - \aggmu_{\dagger}(\tau_k)}\sqrt{ \rbr{\rbr{n_{\dagger}(\tau_k) - M} \vee 1} / 4}}} \indic \cbr{\tau_k \le T} },
\label{eqn:inv_comp_cdf_tau_k}
\end{align}
where the last inequality uses the observations that $y_i^{p_k} \leq \mu_\dagger^{p_k}$, $\aggmu_{\dagger}^{p_k}(\tau_k) = \aggmu_{\dagger}(\tau_k)$ and $m_{\dagger}^{p_k}(\tau_k) = n_{\dagger}(\tau_k)$, as well as the monotonic increasing property of $z \mapsto \frac{1}{\bPhi(z)}$.

Observe that, from Corollary~\ref{col:agg_concentration_alternative}, for any $z \ge 1$,
\begin{align*}
& \Pr \rbr{ \rbr{\tau_k \le T} \wedge \rbr{ \mu_{\dagger}^{p_k} - \aggmu_\dagger(\tau_k) \geq z \sqrt{\frac{4}{(n_{\dagger}(\tau_k) - M) \vee 1}}}} \\
\le & 
\Pr \rbr{ \rbr{\tau_k \le T} \wedge \rbr{ \exists p \in [M],\ \mu_{\dagger}^{p} - \aggmu_{\dagger}(\tau_k) \geq z \sqrt{\frac{4}{(n_{\dagger}(\tau_k) - M) \vee 1}}}} \\
\le & 2 e^{-2z^2},
\end{align*}

Applying Lemma~\ref{lem:sg-anticonc} with $X = 
\rbr{\aggmu_{\dagger}(\tau_k) - \mu_{\dagger}^{p_k}}\sqrt{ \rbr{\rbr{n_{\dagger}(\tau_k) - M} \vee 1} / 4}$ and $E = \cbr{ \tau_k \leq T }$, we conclude the proof. 
\end{proof}

\begin{remark}
\label{rem:comparison_freedman_azuma}
Note that it follows from our novel concentration inequality (Corollary~\ref{col:agg_concentration_alternative}) that 
\[
\Pr \rbr{ \tau_k \le T, \mu_{\dagger}^{p} - \aggmu_{\dagger}(\tau_k) > { \sqrt{\frac{2\ln \rbr{\frac{2}{\delta}}}{ \rbr{n_{\dagger}(\tau_k) -M} \vee 1}}}} < \delta;
\]
this tight bound enables us to bound Eq.~\eqref{eqn:inv_comp_cdf_tau_k} by $\order\rbr{1}$, which is essential to our proof of Lemma~\ref{lem:key_lemma_constant_appendix}.

Since $n_i(\tau_k) \le [k, k+M-1]$, using the Azuma-Hoeffding inequality and the union bound, one can obtain
\[
\Pr \rbr{ \tau_k \le T, \mu_{\dagger}^{p} - \aggmu_{\dagger}(\tau_k) > \Ocal \rbr{\sqrt{ \frac{\ln \rbr{\frac{M}{\delta}}}{ \rbr{n_{\dagger}(\tau_k) -M} \vee 1}}}} < \delta;
\]
and using Freedman's inequality \citep[e.g.,][Lemma 17]{wzsrc21}, one can obtain
\[
\Pr \rbr{ \tau_k \le T, \mu_{\dagger}^{p} - \aggmu_{\dagger}(\tau_k) > \Ocal \rbr{\sqrt{ \frac{\ln \rbr{\frac{\ln T}{\delta}}}{ \rbr{n_{\dagger}(\tau_k) -M} \vee 1}}}} < \delta.
\]
However, naively combining the above bounds with Lemma~\ref{lem:sg-anticonc}, one needs to set $C_1$ in Lemma~\ref{lem:sg-anticonc} to be $\Ocal \rbr{ M}$ or $\Ocal \rbr{ \ln T}$, which
incurs extra (undesirable) $\Ocal\rbr{ M}$ or $\Ocal\rbr{ \ln T}$ factors for bounding Eq.~\eqref{eqn:inv_comp_cdf_tau_k}.
\end{remark}

\begin{lemma}[Bounding term $(b2.2)$]
\label{lem:b22}
\begin{align*}
(b2.2) 
& \le \Ocal \rbr{ M }.
\end{align*}

\end{lemma}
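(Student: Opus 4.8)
The plan is to mirror the proof of Lemma~\ref{lem:b1} for term $(b1)$, except that we now operate in the aggregate-posterior regime $\wbar{H_{\dagger}^p(t)}$, with the large-count guarantee $m_{\dagger}^p(t-1) \ge L$ playing the role that $n_{\dagger}^p(t-1) \ge \frac{40\ln T}{\epsilon^2}$ played there. First I would drop the indicator $\indic\cbr{i_t^p = \dagger}$, so that the remaining event $\cbr{\Ecal_t, \wbar{H_{\dagger}^p(t)}, m_{\dagger}^p(t-1) \ge L}$ is $\Fcal_{t-1}$-measurable; it then suffices to show that on this event the posterior-tail ratio is tiny, namely $\frac{1-\phi_{i,t}^p}{\phi_{i,t}^p} \le \frac1T$.

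Next I would argue that the aggregate posterior mean sits well above the threshold $y_i^p$. On $\wbar{H_{\dagger}^p(t)}$ we have $\phi_{i,t}^p = \bPhi\rbr{(y_i^p - \aggmu_{\dagger}^p(t-1))\sqrt{\wbar{m_{\dagger}^p}(t-1)/4}}$ by Remark~\ref{rem:phi_explicit}. The clean event $\Ecal_t$ gives the favorable-direction bound $\aggmu_{\dagger}^p(t-1) \ge \mu_{\dagger}^p - \sqrt{\frac{10\ln T}{\wbar{m_{\dagger}^p}(t-1)}}$ (the one without the $+2\epsilon$ offset, see Definition~\ref{def:event_e}); and $m_{\dagger}^p(t-1) \ge L$ yields $\wbar{m_{\dagger}^p}(t-1) \ge L - M = \frac{2560\ln T}{(\Delta_i^{\min})^2}$, so the correction term is at most $\frac{1}{16}\Delta_i^{\min}$. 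Combined with $y_i^p = \mu_{\dagger}^p - \frac12\delta_i^p < \mu_{\dagger}^p - \frac38\Delta_i^p \le \mu_{\dagger}^p - \frac38\Delta_i^{\min}$ (Fact~\ref{fact:small_delta} and $\Delta_i^p \ge \Delta_i^{\min}$), this produces a margin $\aggmu_{\dagger}^p(t-1) - y_i^p \ge \frac{5}{16}\Delta_i^{\min}$.

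Then I would feed this margin into the Gaussian tail estimate of Lemma~\ref{lem:phi-bounds}: $1 - \phi_{i,t}^p = \Phi\rbr{-(\aggmu_{\dagger}^p(t-1) - y_i^p)\sqrt{\wbar{m_{\dagger}^p}(t-1)/4}} \le \exp\rbr{-\frac{\wbar{m_{\dagger}^p}(t-1)(\aggmu_{\dagger}^p(t-1) - y_i^p)_+^2}{8}}$. Since $\wbar{m_{\dagger}^p}(t-1)\cdot\rbr{\frac{5}{16}\Delta_i^{\min}}^2 \ge 250\ln T$, the exponent is at least $\frac{250}{8}\ln T$, forcing $1 - \phi_{i,t}^p \le T^{-31} \le \frac{1}{T+1}$, hence $\frac{1-\phi_{i,t}^p}{\phi_{i,t}^p} \le \frac1T$. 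Summing this bound over the at most $T$ active rounds of each of the $M$ players gives $(b2.2) \le M = \Ocal(M)$.

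I do not expect a serious obstacle here, since the argument is structurally identical to Lemma~\ref{lem:b1}; the only care needed is to keep track of the fact that it is $m_{\dagger}^p(t-1)$ (the aggregate count maintained by player $p$), rather than $n_{\dagger}(t-1)$, that controls the aggregate variance $\wbar{m_{\dagger}^p}(t-1)$, and to verify that the numerical constants line up so that the exponent dominates $\ln T$. Using $\Delta_i^p \ge \Delta_i^{\min}$ at the threshold and the clean-event concentration in the $\mu_{\dagger}^p - \aggmu_{\dagger}^p$ direction (so the $+2\epsilon$ bias never enters) keeps every inequality pointing in the helpful direction.
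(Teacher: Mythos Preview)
Your proposal is correct and follows essentially the same approach as the paper's proof: drop $\indic\cbr{i_t^p=\dagger}$, use the explicit form of $\phi_{i,t}^p$ under $\wbar{H_\dagger^p(t)}$, combine the clean-event lower bound on $\aggmu_\dagger^p(t-1)$ with $\wbar{m_\dagger^p}(t-1)\ge L-M$ and the threshold inequality $y_i^p<\mu_\dagger^p-\frac38\Delta_i^p$, then apply the Gaussian tail bound to get $\frac{1-\phi_{i,t}^p}{\phi_{i,t}^p}\le\frac1T$ and sum. The only cosmetic difference is that the paper tracks the margin in terms of $\Delta_i^p$ (getting $\frac18\Delta_i^p$) whereas you uniformly use $\Delta_i^{\min}$ (getting $\frac{5}{16}\Delta_i^{\min}$); both choices are valid and lead to the same conclusion.
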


\begin{proof}[Proof of Lemma~\ref{lem:b22}]

For any player $p \in [M]$ and $t \in [T]$, recall that $\wbar{m_{\dagger}^p}(t-1) = (m_{\dagger}^p(t-1) -M) \vee 1$ and $\rbr{z}_+ = z \vee 0$.
When $\Ecal_t, \cbr{m_{\dagger}^p(t-1) \ge L}$ and $\wbar{H_{\dagger}^p(t)}$ happen, 
\begin{align*}
    & 1 - \phi_{i,t}^p \\
    = & \Pr \rbr{ \theta_{\dagger}^p(t) \le y_i^p \mid \Fcal_{t-1}} \\
    = & \Phi\rbr{ (y_i^p - \aggmu_\dagger^p(t-1)) \sqrt{ \wbar{m_\dagger^p}(t-1) / 4 }} \\
    \le & \exp \rbr{ - \frac{\wbar{m_\dagger^p}(t-1) (\aggmu_\dagger^p(t-1) - y_i^p)_+^2}{8} } \\
    \le & \exp \rbr{ - \frac{\wbar{m_\dagger^p}(t-1) (\mu_\dagger^p - \frac14 \Delta_i^p - \mu_{\dagger}^p + \frac38 \Delta_i^p)_+^2}{8} } \\
    \le & \exp \rbr{ - \frac{\wbar{m_\dagger^p}(t-1) (\Delta_i^p)^2}{8(64)}} \\
    \le & \frac{1}{T+1},
\end{align*}
where 
the second equality uses Remark~\ref{rem:phi_explicit};
the first inequality uses Lemma~\ref{lem:phi-bounds};
the second inequality follows from the observations that, when $\Ecal_t, \cbr{m_{\dagger}^p(t-1) \ge L}$ and $\wbar{H_{\dagger}^p(t)}$ happen:
\begin{enumerate}

    \item $\wbar{m_\dagger^p}(t-1) \ge m_\dagger^p(t-1) - M \ge L - M \ge \frac{2560\ln T}{(\Delta_i^{p})^2}$,
    
    \item $\aggmu_{\dagger}^p(t-1) \ge \mu_{\dagger}^p - \sqrt{ \frac{10 \ln T}{\wbar{m_\dagger^p}(t-1)}} \ge \mu_{\dagger}^p - \frac{1}{4}\Delta_i^p$ (see Definition~\ref{def:event_e}), and
    
    \item $y_i^p = \mu_{\dagger}^p - \frac12 \delta_i^p < \mu_{\dagger}^p - \frac38 \Delta_i^p$;
\end{enumerate}
the third inequality is by algebra;
and the last inequality follows from the observation that $\wbar{m_\dagger^p}(t-1) \ge m_\dagger^p(t-1) - M \ge L - M \ge \frac{2560\ln T}{(\Delta_i^{p})^2} \ge \frac{1280 \ln (T+1)}{(\Delta_i^p)^2}$ for $T > 1$.

It follows that, when $\Ecal_t, \cbr{m_{\dagger}^p(t-1) \ge L}$ and $\wbar{H_{\dagger}^p(t)}$ happen, $\phi_{i,t}^p \ge \frac{T}{T+1}$ and $\frac{1 - \phi_{i,t}^p}{\phi_{i,t}^p} \le \frac{1}{T}$. Hence, 
\[
(b2.2) \le \sum_{p \in [M]} \sum_{t: p \in \Pcal_t}  \EE \sbr{ {\rbr{\frac{1 - \phi_{i,t}^p}{\phi_{i,t}^p}} \indic \cbr{\Ecal_t, \wbar{H_{\dagger}^p(t)}, m_{\dagger}^p(t-1) \ge L} }} \le M.
\qedhere
\]

\end{proof}

\subsection{Non-subpar Arms}
\label{sec:appendix_nonsubpar_arms}

In this section, we provide a proof for Lemma~\ref{lem:i-eps-c-robustagg-ts}.

Let us fix any player $p \in [M]$ and any suboptimal arm $i \in \Ical_{10\epsilon}^C$ for player $p$ such that $\Delta_i^p > 0$. In the rest of this section, let us also fix an optimal arm for player $p$, $\optarm_p$, and we abbreviate it by $\optarm$. We have $\mu_\optarm^p = \mu_*^p = \max_{j \in [K]} \mu_j^p$.

\begin{definition}
\label{def:z_and_W}
Let $z_i^p = \mu_i^p + \frac12 \Delta_i^p$ be a threshold.
In any round $t$, define
\[
W_i^p(t) = \cbr{\theta_i^p(t) > z_i^p}
\]
to be the event that the sample $\theta_i^p(t)$ from the posterior distribution associated with arm $i$ and player $p$ in round $t$ is greater than the threshold $z_i^p$.
Therefore, 
$\wbar{W_i^p(t)} = \cbr{\theta_i^p(t) \le z_i^p}$.
\end{definition}

\subsubsection{Non-subpar Arms---Decomposition}
We consider the following decomposition.
\begin{align}
& \EE \sbr{n_i^p(T)} \nonumber \\
= & \EE \sbr{\sum_{t: p \in \Pcal_t} \indic \cbr{i_t^p = i}} \nonumber \\
= & \EE \sbr{\sum_{t: p \in \Pcal_t} \indic \cbr{i_t^p = i, W_i^p(t), \Ecal_t}} +
\EE \sbr{\sum_{t: p \in \Pcal_t} \indic \cbr{i_t^p = i, \wbar{W_i^p(t)}, \Ecal_t}} + \sum_{t: p \in \Pcal_t}  \EE \sbr{ \indic \cbr{i_t^p = i, \wbar{\Ecal_t}}} \nonumber\\
\le & \underbrace{\EE \sbr{\sum_{t: p \in \Pcal_t} \indic \cbr{i_t^p = i, W_i^p(t), \Ecal_t}}}_{(D)} +
\underbrace{\EE \sbr{\sum_{t: p \in \Pcal_t} \indic \cbr{i_t^p = i, \wbar{W_i^p(t)}, \Ecal_t}}}_{(E)} + \Ocal(1),\label{eqn:nonsubpar_first_decomposition}
\end{align}
where the last inequality follows from the observation that $\EE \sbr{ \indic \cbr{i_t^p = i, \wbar{\Ecal_t}}} \le \EE \sbr{ \indic \cbr{\wbar{\Ecal_t}}}$ and Lemma~\ref{lem:clean_event}.

Following this decomposition, Lemma~\ref{lem:i-eps-c-robustagg-ts} is proved straightforwardly given Lemma~\ref{lem:D} and Lemma~\ref{lem:e} which we present in what follows.

\subsubsection{Bounding Term $(D)$}
We first bound term $(D)$ in Eq.~\eqref{eqn:nonsubpar_first_decomposition}.

\begin{lemma}
\label{lem:D}
\[
(D) \le \Ocal \rbr{ \frac{\ln T}{(\Delta_i^p)^2} + M}.
\]
\end{lemma}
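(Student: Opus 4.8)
The plan is to mirror the analysis of term $(A)$ for subpar arms (Lemma~\ref{lem:bounding_term_A_appendix}), decomposing $(D)$ according to which posterior player $p$ uses for arm $i$:
\[
(D) = \underbrace{\sum_{t: p \in \Pcal_t} \EE \sbr{\indic \cbr{i_t^p = i, W_i^p(t), \Ecal_t, H_i^p(t)}}}_{(D1)} + \underbrace{\sum_{t: p \in \Pcal_t} \EE \sbr{\indic \cbr{i_t^p = i, W_i^p(t), \Ecal_t, \wbar{H_i^p(t)}}}}_{(D2)}.
\]

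For the aggregate-posterior term $(D2)$, the key observation is that for a non-subpar arm $i \in \Ical_{10\epsilon}^C$ we have $\Delta_i^p \le 10\epsilon$, so the bias $2\epsilon$ in $\aggmu_i^p$ can be comparable to or larger than the margin $\tfrac12\Delta_i^p$; hence one cannot hope to control the aggregate posterior via concentration. Instead I would bound $(D2)$ purely by counting: every pull counted there occurs while $\wbar{H_i^p(t)}$ holds, i.e. $n_i^p(t-1) < \tfrac{40\ln T}{\epsilon^2} + 2M$, and since each such pull increments $n_i^p$, there are at most $\tfrac{40\ln T}{\epsilon^2} + 2M$ of them (drop the indicators $\indic\cbr{W_i^p(t),\Ecal_t}$). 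Using $\Delta_i^p \le 10\epsilon$ to write $\tfrac{1}{\epsilon^2} \le \tfrac{100}{(\Delta_i^p)^2}$ then gives $(D2) \le \Ocal\rbr{\tfrac{\ln T}{(\Delta_i^p)^2} + M}$.

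For the individual-posterior term $(D1)$, unlike the subpar case the condition $H_i^p(t)$ alone does not guarantee that $\sqrt{\ln T / n_i^p(t-1)}$ is small relative to $\Delta_i^p$, so I would introduce a second threshold $l_1 = \Theta\rbr{\tfrac{\ln T}{(\Delta_i^p)^2}}$ and split on $n_i^p(t-1) < l_1$ versus $n_i^p(t-1) \ge l_1$. The first piece is $\le l_1 = \Ocal\rbr{\tfrac{\ln T}{(\Delta_i^p)^2}}$ by the same counting argument. For the second piece, on $\Ecal_t$ we have $\indmu_i^p(t-1) \le \mu_i^p + \sqrt{10\ln T / \wbar{n_i^p}(t-1)} \le \mu_i^p + \tfrac14\Delta_i^p$ once the constant in $l_1$ is large enough, whence $z_i^p - \indmu_i^p(t-1) \ge \tfrac14\Delta_i^p$; conditioning on $\Fcal_{t-1}$ and applying the Gaussian tail bound (Lemma~\ref{lem:phi-bounds}) exactly as in the $(A1)$ computation gives $\Pr\rbr{W_i^p(t)\mid\Fcal_{t-1}} \le \exp\rbr{-\tfrac{n_i^p(t-1)(\Delta_i^p)^2}{128}} \le T^{-2}$, so summing over the at most $T$ relevant rounds this piece is $\Ocal(1)$.

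Combining the three pieces yields $(D) \le \Ocal\rbr{\tfrac{\ln T}{(\Delta_i^p)^2} + M}$. I expect the main obstacle to be the conceptual point handled in $(D2)$: because the arm is non-subpar, data aggregation offers no statistical benefit and the naive concentration argument breaks down, so the bound on the aggregate phase must come entirely from the algorithmic switching threshold $\tfrac{40\ln T}{\epsilon^2}+2M$ together with the defining property $\Delta_i^p \le 10\epsilon$ of $\Ical_{10\epsilon}^C$. The remaining steps are routine adaptations of the single-task, individual-posterior computation already carried out for term $(A1)$.
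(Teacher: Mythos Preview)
Your proof is correct and uses the same ingredients as the paper: the algorithmic switching threshold together with $\Delta_i^p \le 10\epsilon$ for $i\in\Ical_{10\epsilon}^C$, and the Gaussian tail bound under the individual posterior once $n_i^p(t-1)$ is of order $\ln T/(\Delta_i^p)^2$. The only difference is structural. You first split on $H_i^p(t)$ versus $\wbar{H_i^p(t)}$, bound $(D2)$ by counting via the switching threshold, and then inside $(D1)$ introduce a second threshold $l_1=\Theta(\ln T/(\Delta_i^p)^2)$. The paper instead uses a single threshold $h=\tfrac{4000\ln T}{(\Delta_i^p)^2}+2M$ and splits $(D)$ directly on $n_i^p(t-1)<h$ versus $\ge h$; the key observation is that for non-subpar arms $h\ge \tfrac{40\ln T}{\epsilon^2}+2M$, so $\{n_i^p(t-1)\ge h\}$ already forces $H_i^p(t)$, and the aggregate-posterior case never needs to be treated separately. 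Your two-step decomposition makes the roles of the switching threshold and the concentration threshold more explicit; the paper's one-step version is a bit more economical but yields the same bound.
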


\begin{proof}[Proof of Lemma~\ref{lem:D}]
With foresight, let $h = \frac{4000 \ln T}{(\Delta_i^p)^2} + 2M$. 
Recall that $H_i^p(t)$ is the event that the individual posterior is used in round $t$ by active player $p$ for arm $i$ (see Definition~\ref{def:alg_criteron}). 
We have
\begin{align*}
(D) & = \EE \sbr{ \sum_{t: p \in \Pcal_t}  \indic \cbr{i_t^p = i, W_i^p(t), \Ecal_t}} \\
& \le h + \sum_{t: p \in \Pcal_t} \EE \sbr{ \indic \cbr{i_t^p = i, W_i^p(t), \Ecal_t, n_i^p(t-1) \ge h}} \\
& = h + \underbrace{\sum_{t: p \in \Pcal_t} \EE \sbr{ \indic \cbr{i_t^p = i, W_i^p(t), \Ecal_t, H_i^p(t), n_i^p(t-1) \ge h}}}_{(d)} ,
\end{align*}
where 
the last equality follows from the observation that
$\cbr{n_i^p(t-1) \ge h}$ implies that
$H_i^p(t)$ happening. 
To see why this is true, 
recall that $H_i^p(t) = \cbr{ n_i^p(t-1) \geq \frac{40 \ln T}{\epsilon^2} + 2M }$;
and
observe that for non-subpar arm $i \in \Ical_{10\epsilon}^C$ and player $p$,
$\cbr{n_i^p(t-1) \ge h = \frac{4000\ln T}{(\Delta_i^p)^2} + 2M}$ %
implies $\cbr{n_i^p(t-1) \ge \frac{40\ln T}{\epsilon^2} + 2M}$ because $\Delta_i^p \le 10\epsilon$.

It therefore suffices to bound term $(d)$. 
We have
\begin{align*}
(d) 
\le & \sum_{t: p \in \Pcal_t} \EE \sbr{ \indic \cbr{ W_i^p(t), \Ecal_t, H_i^p(t), n_i^p(t-1) \ge h}} \\
= & \sum_{t: p \in \Pcal_t} \EE \sbr{ \indic \cbr{ \Ecal_t, H_i^p(t), n_i^p(t-1) \ge h} \EE\sbr{ \indic\cbr{ W_i^p(t) } \mid \Fcal_{t-1} } } \\
= & 
\sum_{t: p \in \Pcal_t} \EE \sbr{ \indic \cbr{ \Ecal_t, H_i^p(t), n_i^p(t-1) \ge h} \bPhi\rbr{  (z_i^p - \indmu_i^p(t-1)) \sqrt{ \wbar{n_i^p}(t-1) / 4 } } } \\
\leq & 
\sum_{t: p \in \Pcal_t} \EE \sbr{ \indic \cbr{ \Ecal_t, H_i^p(t), n_i^p(t-1) \ge h} \exp \rbr{ - \frac{  \wbar{n_i^p}(t-1) (z_i^p - \indmu_i^p(t-1) )_+^2 }{ 8} } } \\
\leq & 
\sum_{t: p \in \Pcal_t} \EE \sbr{ \indic \cbr{ \Ecal_t, H_i^p(t), n_i^p(t-1) \ge h} \exp \rbr{ - \frac{  n_i^p(t-1) (\mu_i^p + \frac12 \Delta_i^p - \mu_i^p - \frac1{16} \Delta_i^p )_+^2 }{ 8 } } }
\\
\leq & 
\sum_{t: p \in \Pcal_t} \EE \sbr{ \indic \cbr{ \Ecal_t, H_i^p(t), n_i^p(t-1) \ge h} \exp \rbr{ - \frac{  n_i^p(t-1) ( \Delta_i^p )^2 }{ 8 (16) } } }
\\
\leq & \Ocal \rbr{1}.
\end{align*}
where the first inequality drops the indicator $\indic\cbr{i_t^p = i}$; 
the first equality uses the law of total expectation and the observation that $\Ecal_t$, $H_i^p(t)$ and $\cbr{n_i^p(t-1) \ge h}$ are $\Fcal_{t-1}$-measurable;
the second inequality follows from Lemma~\ref{lem:phi-bounds};
the third inequality is from the observations that when $\Ecal_t$ and $H_i^p(t)$ happen: 
\begin{enumerate}
    \item $\wbar{n_i^p}(t-1) \ge n_i^p(t-1) \ge h = \frac{4000 \ln T}{(\Delta_i^p)^2} + 2M$,
    \item $\indmu_i^p(t-1) \leq \mu_i^p + \sqrt{ \frac{10 \ln T}{ \wbar{n_i^p}(t-1) } } \leq \mu_i^p + \frac{1}{16}\Delta_i^p$ (see Definition~\ref{def:event_e}), and
    \item $z_i^p = \mu_i^p + \frac12 \Delta_i^p$;
\end{enumerate}
the fourth inequality is by algebra; 
and the last inequality is from the observation that when $n_i^p(t-1) \geq h$, $\exp \rbr{ - \frac{  n_i^p(t-1) ( \Delta_i^p )^2 }{ 8 (16) } } \le \frac 1 T$.

In summary, $(D) \le h + (d) \le \Ocal \rbr{ \frac{\ln T}{(\Delta_i^p)^2} + M}$.
\end{proof}

\subsubsection{Bounding Term $(E)$}
We now bound $(E)$ in Eq.~\eqref{eqn:nonsubpar_first_decomposition}:

\begin{lemma}
\label{lem:e}
\[
(E) \le \Ocal \rbr{ \frac{\ln T}{(\Delta_i^p)^2} + M}.
\]
\end{lemma}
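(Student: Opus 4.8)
The plan is to mirror the analysis of term $(B)$ for subpar arms, replacing the common near-optimal arm $\dagger$ by the fixed optimal arm $\optarm$ of player $p$ and the threshold $y_i^p$ by $z_i^p = \mu_i^p + \frac12\Delta_i^p$. Note first that $z_i^p = \frac12(\mu_i^p + \mu_*^p) < \mu_*^p = \mu_\optarm^p$, so $\optarm$ plays exactly the role for $(E)$ that $\dagger$ did for $(B)$. Define $\psi_{i,t}^p = \Pr(\theta_\optarm^p(t) > z_i^p \mid \Fcal_{t-1})$. The event $\{i_t^p = i, \wbar{W_i^p(t)}\}$ forces $\theta_\optarm^p(t) \le z_i^p$, so the change-of-arm argument (exactly as in Lemma~\ref{lem:B*}, invoking Lemma~\ref{lem:change-arm} with $l=\optarm$ and $z=z_i^p$) yields
\[
(E) \le (E*) := \sum_{t: p \in \Pcal_t} \EE\sbr{\rbr{\frac{1-\psi_{i,t}^p}{\psi_{i,t}^p}}\indic\cbr{i_t^p = \optarm, \Ecal_t}}.
\]
I would then split $(E*)$ according to which posterior player $p$ uses for $\optarm$, i.e.\ on $H_\optarm^p(t)$ versus $\wbar{H_\optarm^p(t)}$ (Definition~\ref{def:alg_criteron}), and within each branch split on the relevant pull count against a threshold of order $\frac{\ln T}{(\Delta_i^p)^2}+M$.

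The aggregate branch $(e2)$ is structurally identical to $(b2)=(b2.1)+(b2.2)$. When $m_\optarm^p(t-1)\ge L$, the clean event $\Ecal_t$ forces $\aggmu_\optarm^p(t-1)$ to exceed $z_i^p$ by a constant fraction of $\Delta_i^p$, so Lemma~\ref{lem:phi-bounds} gives $\frac{1-\psi_{i,t}^p}{\psi_{i,t}^p}\le\frac1T$ and the contribution is $\Ocal(M)$, exactly as in Lemma~\ref{lem:b22}. When $m_\optarm^p(t-1)<L$, I would use that $(\frac{1}{\psi_{i,t}^p}-1)\indic\{\wbar{H_\optarm^p(t)}\}$ is invariant with respect to $(\optarm,p)$ to pass from the per-round sum to a sum over player $p$'s pull times $\pi_k(\optarm,p)$ (Lemma~\ref{lem:auxiliary_sum_f_indic}), and bound each summand by $\Ocal(1)$ via the analog of Lemma~\ref{lem:key_lemma_constant_appendix} for $\optarm$ and $z_i^p$. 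This analog goes through because $z_i^p\le\mu_\optarm^p$ and because each such round is one at which the aggregate count of $\optarm$ increases, hence coincides with some $\tau_m(\optarm)$, so Corollary~\ref{col:agg_concentration_alternative} combined with the anti-concentration bound (Lemma~\ref{lem:sg-anticonc}) applies; the total is $\Ocal(L)=\Ocal(\frac{\ln T}{(\Delta_i^p)^2}+M)$.

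The individual branch $(e1)$ is where the non-subpar case genuinely departs from $(b1)$. For subpar arms, $H_\dagger^p(t)$ already forces $n_\dagger^p(t-1)\ge\frac{40\ln T}{\epsilon^2}\ge\frac{2560\ln T}{(\Delta_i^p)^2}$ (since $\Delta_i^p>8\epsilon$), so $(b1)$ needed no count split; here $\Delta_i^p\le 10\epsilon$, so the individual-posterior threshold $\frac{40\ln T}{\epsilon^2}$ need not dominate $\frac{\ln T}{(\Delta_i^p)^2}$. I would therefore split $(e1)$ on $n_\optarm^p(t-1)$ against $h=\Theta(\frac{\ln T}{(\Delta_i^p)^2}+M)$: the high-count part contributes $\Ocal(M)$ by the same Gaussian-tail argument (on $\Ecal_t$, $\indmu_\optarm^p(t-1)$ lies within a constant fraction of $\Delta_i^p$ of $\mu_\optarm^p$), and the low-count part is bounded by $\Ocal(h)$ by passing to player $p$'s own stopping times $\pi_k(\optarm,p)$ (the summand is invariant with respect to $(\optarm,p)$) and applying the single-task version of the per-pull bound, now using the individual concentration inequality Corollary~\ref{col:ind_concentration_alternative} in place of Corollary~\ref{col:agg_concentration_alternative} together with Lemma~\ref{lem:sg-anticonc}.

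Summing the four pieces gives $(E)\le(E*)=(e1)+(e2)\le\Ocal(\frac{\ln T}{(\Delta_i^p)^2}+M)$, which together with Lemma~\ref{lem:D} establishes Lemma~\ref{lem:i-eps-c-robustagg-ts}. The main obstacle is the low-count aggregate term inside $(e2)$: one must carefully pass from the sum over player $p$'s pulls of $\optarm$ to a concentration statement for $\aggmu_\optarm$ evaluated at a data-dependent stopping time, where the length of the aggregated sequence, $n_\optarm$, is itself random and can jump by up to $M$ in a single round. It is precisely here that the loose $\sqrt{\ln T}$ bound supplied by the clean event $\Ecal_t$ is insufficient---it would cost an extra $\Ocal(\ln T)$ factor (cf.\ Remark~\ref{rem:comparison_freedman_azuma})---and the tight, $\delta$-dependent novel concentration inequality is needed to keep the per-pull cost at $\Ocal(1)$.
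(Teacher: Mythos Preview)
Your overall plan mirrors the paper closely: the reduction $(E)\le(E*)$ via Lemma~\ref{lem:change-arm}, the split into $(e1)$ and $(e2)$ on $H_\optarm^p(t)$, and the further count-threshold splits all match, and your treatment of $(e1)$ and of the high-count part $(e2.2)$ is essentially the paper's argument.

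The gap is in the low-count aggregate term $(e2.1)$. You propose to stay with the single player $p$, pass to $p$'s own pull times $\pi_k(\optarm,p)$ via the invariant property (you cite Lemma~\ref{lem:auxiliary_sum_f_indic}, but that lemma produces $\tau_k$'s; the single-player version is Lemma~\ref{lem:auxiliary_sum_g_indic}), and then argue that since each $\pi_k(\optarm,p)$ coincides with some $\tau_m(\optarm)$, Corollary~\ref{col:agg_concentration_alternative} applies. But that index $m$ is \emph{random}: at $\pi_k(\optarm,p)$ the aggregate count $n_\optarm(\pi_k)$ can be arbitrarily large depending on other players' pulls, and the corollary controls the tail only for \emph{fixed} $m$. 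A union bound over $m$ destroys the $e^{-2z^2}$ tail, and a direct self-normalized bound for $\aggmu_\optarm$ at the stopping time $\pi_k(\optarm,p)$ is exactly the Freedman-type route the paper's own remark (after Lemma~\ref{lem:e2.1}) flags as costing an extra $\ln T$ factor---which breaks the $\Ocal(1)$-per-summand input needed for Lemma~\ref{lem:sg-anticonc}. The paper sidesteps this by a step you do not mention: before passing to stopping times it replaces $z_i^p$ by $\mu_\optarm^p$ and then \emph{adds in} nonnegative analogous terms for all other players $q\neq p$, turning the single-player sum into one over all $(t,q)$ with $i_t^q=\optarm$ and $m_\optarm^q(t-1)<Z$; Lemma~\ref{lem:auxiliary_sum_f_indic} then yields a sum over $\tau_k(\optarm)$ with $k<Z$ \emph{fixed}, so the corollary feeds directly into Lemma~\ref{lem:sg-anticonc}. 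Thus $(e2.1)$ is not ``structurally identical'' to $(b2.1)$---in the latter the outer sum already ranges over all players---and this ``inflate to all players'' trick is the missing idea.
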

\begin{proof}
Lemma~\ref{lem:e} follows from Lemma~\ref{lem:psi_decomposition_of_e}, Eq.~\eqref{eqn:e*_decomposition}, Lemma~\ref{lem:e1}, and Lemma~\ref{lem:e2} which we present shortly.
\end{proof}

We begin with the following definition, similar to the notion of $\phi_{i,t}^p$ used for subpar arms.
\begin{definition}
\label{def:psi}
Recall that $p$ is a fixed player, $i$ is a fixed suboptimal arm for $p$, and $\optarm$ is a fixed optimal arm for $p$.  In any round $t$, define
\[
\psi_{i,t}^p = \Pr \rbr{\theta_{\optarm}^p(t) > z_i^p \mid \Fcal_{t-1}}.
\]

\end{definition}
\begin{remark}
Recall that $\wbar{n_\optarm^p}(t-1) = n_\optarm^p(t-1) \vee 1$ and $\wbar{m_\optarm^p}(t-1) = \rbr{m_\optarm^p(t-1) - M} \vee 1$.
$\psi_{i,t}^p$ can be explicitly written as: 
\begin{align}
\psi_{i,t}^p 
= & \bPhi \rbr{\frac{z_i^p - \hat{\mu}_{\optarm}^p(t-1)}{\sqrt{ \var_{\optarm}^{p}(t-1) }}}  
\label{eqn:psi_pre_decomposition}
\\ 
= & \bPhi \rbr{( z_i^p - \indmu_{\optarm}^p(t-1) ) \sqrt{\wbar{n_{\optarm}^p}(t-1)/4}} \cdot \indic \cbr{H_{\optarm}^p(t)}
+ \bPhi \rbr{(z_i^p - \aggmu_{\optarm}^p(t-1))\sqrt{ \wbar{m_{\optarm}^p}(t-1)/4}} \cdot \indic \cbr{\wbar{H_{\optarm}^p}(t)}.
\end{align}
\label{rem:psi_explicit}
\end{remark}
The proof for the above remark is omitted, as it is very similar to that of Remark~\ref{rem:phi_explicit}.

We now present the following lemma.
\begin{lemma}
\label{lem:psi_decomposition_of_e}
\begin{align*}
(E) = \EE \sbr{\sum_{t: p \in \Pcal_t} \indic \cbr{i_t^p = i, \wbar{W_i^p(t)}, \Ecal_t}} 
\le 
\underbrace{\sum_{t: p \in \Pcal_t} \EE \sbr{  {\rbr{\frac{1 - \psi_{i,t}^p}{\psi_{i,t}^p}} \indic \cbr{i_t^p = \optarm, \Ecal_t} }}}_{(E*)}
\end{align*}
\end{lemma}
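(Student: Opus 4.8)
The plan is to mirror the proof of Lemma~\ref{lem:B*} (the subpar-arm analogue) essentially verbatim, substituting the common near-optimal arm $\dagger$ by the fixed optimal arm $\optarm$, the threshold $y_i^p$ by $z_i^p$, the event $Q_i^p(t)$ by $W_i^p(t)$, and the quantity $\phi_{i,t}^p$ by $\psi_{i,t}^p$. The driving observation is that on the event $\cbr{i_t^p = i, \wbar{W_i^p(t)}}$, arm $i$ is the $\argmax$ of the sampled indices yet $\theta_i^p(t) \le z_i^p$; hence \emph{every} arm's sample, including that of the optimal arm $\optarm$, must also be $\le z_i^p$. This lets us trade the rare event that $i$ is pulled with a small sample for the more frequent event that $\optarm$ is pulled, at the cost of the odds ratio $\frac{1 - \psi_{i,t}^p}{\psi_{i,t}^p}$.

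Concretely, I would first fix a round $t$ and an active player $p \in \Pcal_t$, and condition on $\Fcal_{t-1}$. Since $\Ecal_t$ is $\Fcal_{t-1}$-measurable, the indicator $\indic \cbr{\Ecal_t}$ factors out of the conditional probability, leaving $\Pr \rbr{i_t^p = i, \theta_i^p(t) \le z_i^p \mid \Fcal_{t-1}}$. I would then invoke the change-of-arm lemma (Lemma~\ref{lem:change-arm}) with $l = \optarm$ and $z = z_i^p$ to bound this by $\frac{\Pr \rbr{\theta_\optarm^p(t) \le z_i^p \mid \Fcal_{t-1}}}{\Pr \rbr{\theta_\optarm^p(t) > z_i^p \mid \Fcal_{t-1}}} \cdot \Pr \rbr{i_t^p = \optarm \mid \Fcal_{t-1}}$, which by Definition~\ref{def:psi} equals $\rbr{\frac{1 - \psi_{i,t}^p}{\psi_{i,t}^p}} \Pr \rbr{i_t^p = \optarm \mid \Fcal_{t-1}}$. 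Re-attaching $\indic \cbr{\Ecal_t}$ and using its $\Fcal_{t-1}$-measurability again turns this into $\rbr{\frac{1 - \psi_{i,t}^p}{\psi_{i,t}^p}} \Pr \rbr{i_t^p = \optarm, \Ecal_t \mid \Fcal_{t-1}}$.

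To finish, I would take a full expectation of $\indic \cbr{i_t^p = i, \wbar{W_i^p(t)}, \Ecal_t}$, apply the tower property to rewrite it as the expectation of the conditional probability above, substitute the bound, and then fold the $\Fcal_{t-1}$-measurable factor $\frac{1 - \psi_{i,t}^p}{\psi_{i,t}^p}$ back inside the conditional expectation to recover $\EE \sbr{ \rbr{\frac{1 - \psi_{i,t}^p}{\psi_{i,t}^p}} \indic \cbr{i_t^p = \optarm, \Ecal_t}}$. Summing over all $t$ with $p \in \Pcal_t$ then yields $(E) \le (E*)$, as claimed.

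I do not expect a genuine obstacle here, since the derivation is algebraically identical to that of Lemma~\ref{lem:B*}; the only technical point worth flagging is that the odds ratio is well defined, i.e.\ $\psi_{i,t}^p > 0$ almost surely. This holds because the posterior variance used to draw $\theta_\optarm^p(t)$ is bounded below (both $\indvar$ and $\aggvar$ are of order at least $1/(n \vee 1)$ and are initialized to the positive constant $c_2$), so the Gaussian upper-tail probability $\psi_{i,t}^p = \bPhi \rbr{(z_i^p - \hat{\mu}_\optarm^p(t-1))/\sqrt{\var_\optarm^p(t-1)}}$ is strictly positive. The substantive work of the non-subpar analysis is then deferred to bounding $(E*)$ through its decomposition into the terms $(e1)$ and $(e2)$, not to this reduction step.
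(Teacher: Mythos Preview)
Your proposal is correct and follows essentially the same argument as the paper's own proof: condition on $\Fcal_{t-1}$, use that $\Ecal_t$ is $\Fcal_{t-1}$-measurable to pull the indicator outside, apply Lemma~\ref{lem:change-arm} with $l = \optarm$ and $z = z_i^p$, and then undo the conditioning via the tower property. Your added remark that $\psi_{i,t}^p > 0$ because the posterior variance is strictly positive is a nice bit of care that the paper leaves implicit.
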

\begin{proof}
The proof largely follows the same outline as that of Lemma~\ref{lem:B*}.

In any round $t$ and such that $p \in \Pcal_t$, consider
\begin{align}
    & \Pr \rbr{i_t^p = i, \wbar{Q_i^p(t)}, \Ecal_t \mid \Fcal_{t-1}} \nonumber \\
    = & \Pr \rbr{i_t^p = i, \theta_i^p(t) \leq z_i^p \mid \Fcal_{t-1}} \cdot \indic \cbr{\Ecal_t} \nonumber \\
    \leq & \Pr \rbr{i_t^p = \optarm \mid \Fcal_{t-1}} \cdot 
    \frac{\Pr \rbr{\theta_\optarm^p(t) \leq z_i^p \mid \Fcal_{t-1}}}{\Pr \rbr{\theta_\optarm^p(t) > z_i^p \mid \Fcal_{t-1}}}
    \cdot \indic \cbr{\Ecal_t} \nonumber \\
    = & 
    \rbr{ \frac{1-\psi_{i,t}^p}{\psi_{i,t}^p} } \cdot \Pr \rbr{i_t^p = \optarm \mid \Fcal_{t-1}}
    \cdot \indic \cbr{\Ecal_t} \nonumber \\
    = & 
    \rbr{\frac{1 - \psi_{i,t}^p}{\psi_{i,t}^p}} \Pr \rbr{i_t^p = \optarm, \Ecal_t \mid \Fcal_{t-1}}, \label{eqn:psi_intermediate}
\end{align}
where the first equality follows from the definition of $Q_i^p(t)$ and that $\Ecal_t$ is $\Fcal_{t-1}$-measurable; the first inequality uses Lemma~\ref{lem:change-arm} with $l = \optarm$ and $z = z_i^p$; and the second equality inequality is from the definition of $\psi_{i,t}^p$; 
the last equality is again because $\Ecal_t$ is $\Fcal_{t-1}$-measurable.

Finally, we have
\begin{align*}
\EE \sbr{ \indic \cbr{i_t^p = i, \wbar{Q_i^p(t)}, \Ecal_t} } 
& = \EE \sbr{ \Pr \rbr{i_t^p = i, \wbar{Q_i^p(t)}, \Ecal_t \mid \Fcal_{t-1}} } \\ 
& \le \EE \sbr{ \rbr{\frac{1 - \psi_{i,t}^p}{\psi_{i,t}^p}} \Pr \rbr{i_t^p = \optarm, \Ecal_t \mid \Fcal_{t-1}}} \\
& = \EE \sbr{ \EE \sbr{\rbr{\frac{1 - \psi_{i,t}^p}{\psi_{i,t}^p}} \indic \cbr{i_t^p = \optarm, \Ecal_t} \mid \Fcal_{t-1}}} \\
& = \EE \sbr{ {\rbr{\frac{1 - \psi_{i,t}^p}{\psi_{i,t}^p}} \indic \cbr{i_t^p = \optarm, \Ecal_t} }},
\end{align*}
where we use the law of total expectation and Eq.~\eqref{eqn:psi_intermediate}. The lemma follows by summing over all $t$'s.
\end{proof}

Let us further decompose $(E*)$ as follows.
\begin{align}
(E*)
= \underbrace{\sum_{t: p \in \Pcal_t} \EE \sbr{  {\rbr{\frac{1 - \psi_{i,t}^p}{\psi_{i,t}^p}} \indic \cbr{i_t^p = \optarm, \Ecal_t, H_\optarm^p(t)} }}}_{(e1)}
+
\underbrace{\sum_{t: p \in \Pcal_t} \EE \sbr{  {\rbr{\frac{1 - \psi_{i,t}^p}{\psi_{i,t}^p}} \indic \cbr{i_t^p = \optarm, \Ecal_t, \wbar{H_\optarm^p(t)}} }}}_{(e2)}. \label{eqn:e*_decomposition}
\end{align}

We first consider term $(e1)$. 
\begin{lemma}
\label{lem:e1}
\[
(e1) \le \Ocal \rbr{ \frac{\ln T}{(\Delta_i^p)^2}}.
\]
\end{lemma}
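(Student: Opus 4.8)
The plan is to follow the same skeleton used to bound $(b2.1)$ in the subpar-arm analysis, but now exploiting that $(e1)$ involves only the \emph{individual} posterior of the optimal arm $\optarm$ (the event $H_\optarm^p(t)$), whose mean estimate $\indmu_\optarm^p$ concentrates around $\mu_\optarm^p$. Since $z_i^p = \mu_i^p + \tfrac12\Delta_i^p = \mu_\optarm^p - \tfrac12\Delta_i^p < \mu_\optarm^p$, the sample $\theta_\optarm^p(t)$ exceeds the threshold $z_i^p$ with probability approaching $1$ as $\optarm$ is pulled more, so $\tfrac{1-\psi_{i,t}^p}{\psi_{i,t}^p}$ decays. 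First I would drop the indicator $\indic\cbr{\Ecal_t}$ (the summand is nonnegative), obtaining $(e1) \le \sum_{t: p\in\Pcal_t}\EE\sbr{\rbr{\tfrac{1}{\psi_{i,t}^p}-1}\indic\cbr{i_t^p = \optarm,\ H_\optarm^p(t)}}$.

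Next I would invoke the invariant property. By Remark~\ref{rem:psi_explicit} and Example~\ref{ex:mu-var-invariant}, the family $\cbr{\rbr{\tfrac{1}{\psi_{i,t}^p}-1}\indic\cbr{H_\optarm^p(t)}: t\in[T]}$ satisfies the invariant property with respect to $(\optarm,p)$, so the single-player specialization of Lemma~\ref{lem:auxiliary_sum_f_indic} converts the sum against $\indic\cbr{i_t^p = \optarm}$ into a sum over the stopping times $\pi_k := \pi_k(\optarm,p)$ (Definition~\ref{def:pi_k}), with the $t=1$ term vanishing since $H_\optarm^p(1)$ is impossible:
\[
(e1) \le \sum_{k=0}^{T}\EE\sbr{\rbr{\frac{1}{\psi_{i,\pi_k+1}^p}-1}\indic\cbr{\pi_k\le T,\ H_\optarm^p(\pi_k+1)}}.
\]
Because $H_\optarm^p(\pi_k+1) = \cbr{n_\optarm^p(\pi_k) = k \ge \tfrac{40\ln T}{\epsilon^2}+2M}$, only indices $k \ge k_0 := \tfrac{40\ln T}{\epsilon^2}+2M$ contribute. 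On $H_\optarm^p(\pi_k+1)$, Remark~\ref{rem:psi_explicit} gives $\psi_{i,\pi_k+1}^p = \bPhi\rbr{(z_i^p - \indmu_\optarm^p(\pi_k))\sqrt{k}/2} = \Phi(U_k)$, where $U_k = (\indmu_\optarm^p(\pi_k) - z_i^p)\sqrt{k}/2 = \tfrac{\Delta_i^p}{4}\sqrt{k} - V_k$ and $V_k = (\mu_\optarm^p - \indmu_\optarm^p(\pi_k))\sqrt{k}/2$; hence $\tfrac{1}{\psi_{i,\pi_k+1}^p}-1 = \tfrac{\bPhi(U_k)}{\Phi(U_k)} \le \tfrac{1}{\Phi(U_k)}$.

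The core per-index estimate combines the individual-concentration tail bound of Corollary~\ref{col:ind_concentration_alternative}, which yields $\Pr\rbr{(\pi_k\le T)\wedge(V_k \ge z)} \le 4e^{-2z^2}$ and thus a sub-Gaussian left tail for $U_k$ (since $U_k \le -z$ forces $V_k \ge \tfrac{\Delta_i^p}{4}\sqrt{k}+z \ge z$), with the Gaussian anti-concentration Lemma~\ref{lem:sg-anticonc}. I would split the sum at $K_1 = \Theta\rbr{\tfrac{\ln T}{(\Delta_i^p)^2}}$. For $k_0 \le k \le K_1$, applying Lemma~\ref{lem:sg-anticonc} with $X = U_k$ and $E = \cbr{\pi_k\le T}$ bounds each $\EE\sbr{\tfrac{1}{\Phi(U_k)}\indic\cbr{\pi_k\le T}}$ by $\Ocal(1)$, exactly as in the proof of Lemma~\ref{lem:key_lemma_constant_appendix}; there are $\Ocal(K_1) = \Ocal\rbr{\tfrac{\ln T}{(\Delta_i^p)^2}}$ such terms. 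For $k > K_1$, the deterministic shift $\tfrac{\Delta_i^p}{4}\sqrt{k}$ dominates the fluctuation of $V_k$, so $U_k$ is large and positive except on a probability-$\tfrac1{T}$ event, and a Gaussian-tail computation like those bounding $(b1)$ and $(b2.2)$ shows each term is $\le \tfrac1T$, contributing $\Ocal(1)$ overall. Summing the two ranges gives $(e1) \le \Ocal\rbr{\tfrac{\ln T}{(\Delta_i^p)^2}}$; the logarithmic factor is exactly the cost of bounding the $\Ocal\rbr{\tfrac{\ln T}{(\Delta_i^p)^2}}$ small-index terms each by $\Ocal(1)$.

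The main obstacle is the per-index estimate on the small-$k$ range: when $\Delta_i^p$ is tiny, the gap between $z_i^p$ and $\mu_\optarm^p$ is negligible relative to the confidence width, so $\psi_{i,\pi_k+1}^p$ can sit appreciably below $1$ and $\tfrac{1}{\Phi(U_k)}$ can blow up on atypical realizations where $\indmu_\optarm^p(\pi_k)$ underestimates $\mu_\optarm^p$. Controlling $\EE\sbr{\tfrac{1}{\Phi(U_k)}\indic\cbr{\pi_k\le T}}$ therefore genuinely requires trading the sub-Gaussian left tail of $U_k$ against the Gaussian anti-concentration of $\tfrac{1}{\Phi}$ through Lemma~\ref{lem:sg-anticonc}, rather than the cruder clean-event bound, which is too weak in this regime.
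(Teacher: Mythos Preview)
Your approach is essentially the paper's: split by whether the number of pulls of $\optarm$ by $p$ is below or above a threshold $J=\Theta\bigl(\ln T/(\Delta_i^p)^2\bigr)$, handle the small-count part via the invariant property, Corollary~\ref{col:ind_concentration_alternative}, and Lemma~\ref{lem:sg-anticonc} (this is exactly the paper's Lemma~\ref{lem:key_lemma_constant_subpar_H}), and handle the large-count part by a Gaussian-tail computation giving $\tfrac{1-\psi}{\psi}\le\tfrac{1}{T}$.

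There is one genuine issue in the order of operations. You drop $\indic\cbr{\Ecal_t}$ at the outset and only afterward split into small and large $k$. But the ``$(b1)$/$(b2.2)$-style'' computation you invoke for large $k$ relies on $\Ecal_t$: it is the clean event that pins $\indmu_\optarm^p(\pi_k)$ within $\sqrt{10\ln T/k}$ of $\mu_\optarm^p$ deterministically, which is what forces $U_k$ to be large and hence $\tfrac{1-\psi}{\psi}\le\tfrac{1}{T}$ pointwise. Without $\Ecal_t$, you only know $\Pr(V_k>a_k)\le 4e^{-2a_k^2}$, but on that bad event $\tfrac{1}{\Phi(U_k)}$ is unbounded, so ``probability $\le 1/T$'' does not bound the expectation of the summand. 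The paper avoids this by splitting first into $(e1.1)$ and $(e1.2)$ and dropping $\Ecal_t$ only in $(e1.1)$; in $(e1.2)$ it keeps $\Ecal_t$ and obtains the pointwise bound directly (Lemma~\ref{lem:e1_2}). Your argument is easily repaired by the same reordering; as written, the large-$k$ step has a gap.
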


\begin{proof}[Proof of Lemma~\ref{lem:e1}]
With foresight, let $J = \frac{640 \ln T}{(\Delta_i^p)^2}$. We have
\begin{align*}
(e1) & = \underbrace{\sum_{t: p \in \Pcal_t} \EE \sbr{  {\rbr{\frac{1 - \psi_{i,t}^p}{\psi_{i,t}^p}} \indic \cbr{i_t^p = \optarm, \Ecal_t, H_\optarm^p(t), n_\optarm^p(t-1) < J} }}}_{(e1.1)}
+ \\
& \hspace{100pt} \underbrace{\sum_{t: p \in \Pcal_t} \EE \sbr{  {\rbr{\frac{1 - \psi_{i,t}^p}{\psi_{i,t}^p}} \indic \cbr{i_t^p = \optarm, \Ecal_t, H_\optarm^p(t), n_\optarm^p(t-1) \ge J} }}}_{(e1.2)}.
\end{align*}
Lemma~\ref{lem:e1} follows straightforwardly from Lemma~\ref{lem:e1_1} and Lemma~\ref{lem:e1_2}, which bound $(e1.1)$ and $(e1.2)$, respectively.
\end{proof}

\begin{lemma}
\label{lem:e1_1}
\[
(e1.1) \le \Ocal \rbr{ \frac{\ln T}{(\Delta_i^p)^2}}.
\]
\end{lemma}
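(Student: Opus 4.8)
The plan is to treat $(e1.1)$ as the individual-posterior, single-player analog of the subpar-arm term $(b2.1)$ handled in Lemma~\ref{lem:b21}, replacing the aggregate posterior, the aggregate stopping times $\tau_k(\dagger)$, and the aggregate concentration bound (Corollary~\ref{col:agg_concentration_alternative}) by their individual counterparts $\pi_s(\optarm,p)$ and Corollary~\ref{col:ind_concentration_alternative}. First I would drop $\indic\cbr{\Ecal_t}$ and bound
\[
(e1.1) \le \sum_{t: p \in \Pcal_t} \EE \sbr{ \rbr{\frac{1}{\psi_{i,t}^p} - 1} \indic \cbr{i_t^p = \optarm, H_\optarm^p(t), n_\optarm^p(t-1) < J} }.
\]
On the event $H_\optarm^p(t)$, Remark~\ref{rem:psi_explicit} gives $\psi_{i,t}^p = \bPhi\rbr{(z_i^p - \indmu_\optarm^p(t-1))\sqrt{\wbar{n_\optarm^p}(t-1)/4}}$, and by Example~\ref{ex:mu-var-invariant} the family $\cbr{\rbr{\frac{1}{\psi_{i,t}^p}-1}\indic\cbr{H_\optarm^p(t)}: t \in [T]}$ satisfies the invariant property with respect to $(\optarm,p)$.

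Next, invoking this invariant property together with the conversion technique of Lemma~\ref{lem:auxiliary_sum_f_indic}, I would rewrite the round-sum as a sum over player $p$'s pulls of $\optarm$. Writing $\pi_s := \pi_s(\optarm,p)$, the constraint $n_\optarm^p(t-1) < J$ keeps only the first $J$ pulls, yielding
\[
(e1.1) \le \sum_{s=0}^{J-1} \EE \sbr{ \rbr{\frac{1}{\psi_{i,\pi_s+1}^p}-1} \indic \cbr{\pi_s \le T, H_\optarm^p(\pi_s+1)} }.
\]
It then suffices to show each summand is $\Ocal(1)$. Since $z_i^p = \mu_\optarm^p - \frac12\Delta_i^p \le \mu_\optarm^p$ and $z \mapsto \frac{1}{\bPhi(z)}$ is increasing, on $H_\optarm^p(\pi_s+1)$ I can bound $\frac{1}{\psi_{i,\pi_s+1}^p} \le \frac{1}{\bPhi\rbr{(\mu_\optarm^p - \indmu_\optarm^p(\pi_s))\sqrt{(n_\optarm^p(\pi_s)\vee 1)/4}}}$.

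Finally I would apply Lemma~\ref{lem:sg-anticonc} with $X = (\mu_\optarm^p - \indmu_\optarm^p(\pi_s))\sqrt{(n_\optarm^p(\pi_s)\vee 1)/4}$ and $E = \cbr{\pi_s \le T}$; the required upper-tail bound $\Pr\rbr{E, X \ge z} \le \Ocal(e^{-2z^2})$ follows directly from Corollary~\ref{col:ind_concentration_alternative}. This gives each summand $\le \Ocal(1)$, so $(e1.1) \le \Ocal(J) = \Ocal\rbr{\frac{\ln T}{(\Delta_i^p)^2}}$. The main obstacle I anticipate is the bookkeeping in the invariant-property conversion---verifying that each pull index $s < J$ contributes at most one term and that the posterior parameters at round $\pi_s+1$ coincide with those built from exactly $s$ samples---together with the sign and scaling conventions when feeding $X$ into Lemma~\ref{lem:sg-anticonc}. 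Notably, because only a single player and the individual posterior are involved, no random stopping time over the pulling player's identity arises, so the individual concentration bound suffices and the novel aggregate inequality (Lemma~\ref{lem:novel_concentration_main_paper}) is not needed, making this strictly simpler than Lemma~\ref{lem:key_lemma_constant_appendix}.
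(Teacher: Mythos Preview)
Your proposal is correct and mirrors the paper's proof essentially step for step (the paper packages the per-pull $\Ocal(1)$ bound as a separate Lemma~\ref{lem:key_lemma_constant_subpar_H}). Two small fixes: the single-player conversion you describe is Lemma~\ref{lem:auxiliary_sum_g_indic} (individual count $n_\optarm^p(t-1)$ and stopping times $\pi_s$), not Lemma~\ref{lem:auxiliary_sum_f_indic}; and Lemma~\ref{lem:sg-anticonc} is stated for $\EE[1/\bPhi(-X)]$ with a lower-tail hypothesis on $X$, so you should take $X = \bigl(\indmu_\optarm^p(\pi_s) - \mu_\optarm^p\bigr)\sqrt{\wbar{n_\optarm^p}(\pi_s)/4}$ rather than its negative---exactly the sign bookkeeping you anticipated.
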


To prove Lemma~\ref{lem:e1_1}, we first present the following Remark~\ref{rem:invariant}.
\begin{remark}[Invariant Property]
\label{rem:invariant-more}
Similar to Remark~\ref{rem:invariant}, by the construction of Algorithm~\ref{alg:robustaggTS}, we have that for any arm $i \in [K]$, and player $p \in [M]$, $\cbr{ \psi_{i,t}^p: t \in [T] }$ 
and 
$\cbr{ H_{\optarm}^p(t): t \in [T] }$
satisfy the invariant property with respect to $(\optarm,p)$ (Definition~\ref{def:invariant}). 
Indeed, the former follows from Eq.~\eqref{eqn:psi_pre_decomposition}, along with  Example~\ref{ex:mu-var-invariant} that shows that the posterior parameters, $\cbr{ (\hat{\mu}_\optarm^p(t-1), \var_\optarm^p(t-1)): t \in [T]}$, satisfy the invariant property with respect to $(\optarm, p)$; and the latter is from Example~\ref{ex:h-invariant}. 
\end{remark}

\begin{proof}[Proof of Lemma~\ref{lem:e1_1}]

We start by rewriting $(e1.1)$ as follows, where we drop $\Ecal_t$.

\begin{align*}
(e1.1) \leq & \EE \sbr{ \sum_{t: p \in \Pcal_t}  {\rbr{\frac{1 - \psi_{i,t}^p}{\psi_{i,t}^p}} \indic \cbr{i_t^p = \optarm, H_\optarm^p(t), n_\optarm^p(t-1) < J} }} \\
= &  \EE \sbr{ \sum_{t: p \in \Pcal_t} { g_t \indic \cbr{i_t^p = \optarm, n_\optarm^p(t-1) < J} }},
\end{align*}
where in the second line, we introduce the notation $g_t := \rbr{ \frac{1 - \psi_{i,t}^p}{\psi_{i,t}^p} } \indic\cbr{ H_\optarm^p(t) }$;

We now focus on the sum inside the expectation. %
Recall that $\pi_s(\optarm,p)$ is the round in which player $p$ pulls arm $\optarm$ the $s$-th time. Here, we abuse the notation and denote $\pi_s(\optarm,p)$ by $\pi_s$.
By Remark~\ref{rem:invariant-more}, $\cbr{g_t: t \in [T]}$ satisfies the invariant property with respect to $(\optarm, p)$. Applying Lemma~\ref{lem:auxiliary_sum_g_indic} on $\cbr{g_t: t \in [T]}$'s, we have that the term inside the above expectation is at most:
\[
 \sum_{s=1}^{J-1} \rbr{\frac{1}{\psi_{i, \pi_s+1}^{p}} - 1} \indic \cbr{\pi_s \le T, H_\optarm^p(\pi_s+1)},
\]
where we also use the observation that $\rbr{\frac{1}{\psi_{i,1}^p} - 1} \indic\cbr{ H_\optarm^p(1) } = 0$.

Therefore, by the linearity of expectation, we have
\begin{align}
(e1.1) 
\le \sum_{s=1}^{J-1} \EE \sbr{ \rbr{\frac{1}{\psi_{i, \pi_s+1}^{p}} - 1} \indic \cbr{\pi_s \le T, H_\optarm^p(\pi_s+1)}}. \nonumber \label{eqn:e1_psi}
\end{align}
Therefore, the following Lemma~\ref{lem:key_lemma_constant_subpar_H} suffices to prove Lemma~\ref{lem:e1_1}, which we prove next.
\end{proof}

\begin{lemma}
\label{lem:key_lemma_constant_subpar_H}
For any $s \in [T]$,
\[
\EE \sbr{\rbr{\frac{1}{\psi_{i, \pi_s+1}^{p}} - 1} \indic \cbr{\pi_s \le T, H_\optarm^p(\pi_s+1)}} \le \Ocal(1),
\]
where we recall that $\pi_s = \pi_s(\optarm,p)$ is the round in which player $p$ pulls arm $\optarm$ the $s$-th time.
\end{lemma}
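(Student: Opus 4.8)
The plan is to mirror the proof of the subpar-arm analogue, Lemma~\ref{lem:key_lemma_constant_appendix}, but with the \emph{individual} posterior in place of the aggregate one, and with the individual-data concentration bound (Corollary~\ref{col:ind_concentration_alternative}) in place of the aggregate bound (Corollary~\ref{col:agg_concentration_alternative}). In fact the argument should come out \emph{cleaner} here: the individual posterior for arm $\optarm$ only uses player $p$'s own pulls, so on $\cbr{\pi_s \le T}$ we have exactly $n_\optarm^p(\pi_s) = s$, with no $[k,k+M-1]$ ambiguity, and hence no appeal to the novel multi-task concentration inequality is needed.

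First I would invoke the explicit form of $\psi$ (Remark~\ref{rem:psi_explicit}): on the event $H_\optarm^p(\pi_s+1)$ the individual posterior is in force, so $\psi_{i,\pi_s+1}^p = \bPhi\rbr{(z_i^p - \indmu_\optarm^p(\pi_s))\sqrt{\wbar{n_\optarm^p}(\pi_s)/4}}$, which on $\cbr{\pi_s\le T}$ equals $\bPhi\rbr{(z_i^p - \indmu_\optarm^p(\pi_s))\sqrt{s/4}}$. Next, recalling $z_i^p = \mu_i^p + \tfrac12\Delta_i^p \le \mu_i^p + \Delta_i^p = \mu_*^p = \mu_\optarm^p$ (Definition~\ref{def:z_and_W}), and using that $z \mapsto 1/\bPhi(z)$ is increasing while $\tfrac{1}{\bPhi(\cdot)} - 1 \ge 0$, I can replace $z_i^p$ by $\mu_\optarm^p$ and then drop the indicator $\indic\cbr{H_\optarm^p(\pi_s+1)}$ to obtain
\[
\EE\sbr{\rbr{\tfrac{1}{\psi_{i,\pi_s+1}^p}-1}\indic\cbr{\pi_s\le T,\, H_\optarm^p(\pi_s+1)}}
\le
\EE\sbr{\rbr{\tfrac{1}{\bPhi\rbr{(\mu_\optarm^p - \indmu_\optarm^p(\pi_s))\sqrt{s/4}}}-1}\indic\cbr{\pi_s \le T}}.
\]

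Then I would set $X = (\indmu_\optarm^p(\pi_s) - \mu_\optarm^p)\sqrt{s/4}$ and $E = \cbr{\pi_s \le T}$, so that the right-hand side above is exactly $\EE\sbr{(1/\bPhi(-X) - 1)\indic\cbr{E}}$. Applying Corollary~\ref{col:ind_concentration_alternative} with arm $\optarm$, player $p$, and $k = s$, and using $n_\optarm^p(\pi_s) = s$ on $E$, gives for every $z \ge 0$ the tail bound $\Pr\rbr{\cbr{-X \ge z} \cap E} \le \Pr\rbr{\cbr{\abr{X} \ge z} \cap E} \le 4 e^{-2z^2}$. This is precisely the hypothesis of the anti-concentration Lemma~\ref{lem:sg-anticonc} (with the constant $4$ in place of the $2$ used in the subpar case), and invoking it yields the claimed $\Ocal(1)$ bound.

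I do not expect a genuine obstacle here, since every ingredient is already in place and combines more directly than in the subpar case; the work is essentially bookkeeping. The only points requiring care are getting the direction of the inequality $z_i^p \le \mu_\optarm^p$ correct when applying the monotonicity of $1/\bPhi$, and confirming that Corollary~\ref{col:ind_concentration_alternative} is being evaluated at the stopping time $\pi_s(\optarm,p)$ rather than at a fixed round, so that the deterministic identity $n_\optarm^p(\pi_s) = s$ is legitimately available inside the expectation.
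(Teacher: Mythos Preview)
Your proposal is correct and follows essentially the same approach as the paper's own proof: write $\psi_{i,\pi_s+1}^p$ explicitly via Remark~\ref{rem:psi_explicit} on the event $H_\optarm^p(\pi_s+1)$, use $z_i^p \le \mu_\optarm^p$ and the monotonicity of $1/\bPhi$, drop the $H$-indicator, and finish with Corollary~\ref{col:ind_concentration_alternative} plus Lemma~\ref{lem:sg-anticonc}. The only cosmetic differences are that the paper bounds $\tfrac{1}{\psi}-1$ by $\tfrac{1}{\psi}$ before invoking Lemma~\ref{lem:sg-anticonc} and keeps $\wbar{n_\optarm^p}(\pi_s)$ rather than substituting $s$, neither of which changes the argument.
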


\begin{proof}[Proof of Lemma~\ref{lem:key_lemma_constant_subpar_H}]
We note that this proof is similar to that of Lemma~\ref{lem:key_lemma_constant_appendix}.
We have
\begin{align*}
& \EE \sbr{\rbr{\frac{1}{\psi_{i, \pi_s+1}^{p}} - 1} \indic \cbr{\pi_s \le T, H_\optarm^p(\pi_s+1)}} \\
= & \EE \sbr{ \rbr{\frac{1}{\bPhi \rbr{\rbr{z_i^{p} - \indmu_{\optarm}^{p}(\pi_s)}\sqrt{ \wbar{n_{\optarm}^{p}}(\pi_s)/4}}} - 1} \indic \cbr{\pi_s \le T, H_{\optarm}^{p}(\pi_s+1)} } \\
\leq & 
\EE \sbr{ \frac{1}{\bPhi \rbr{\rbr{\mu_\optarm^{p} - \indmu_{\optarm}^{p}(\pi_s)}\sqrt{ \wbar{n_{\optarm}^{p}}(\pi_s)/4}}}  \indic \cbr{\pi_s \le T} },
\end{align*}
where the inequality drops $H_\optarm^p(\tau_s+1)$ and uses the observation that $z_i^{p} \leq \mu_\optarm^{p}$, and the monotonic increasing property of $z \mapsto \frac{1}{\bPhi(z)}$.
Now, using Lemma~\ref{lem:sg-anticonc} and Corollary~\ref{col:ind_concentration_alternative}, we conclude that this is at most $\Ocal (1)$.
\end{proof}

\begin{lemma}
\label{lem:e1_2}
\[
(e1.2) \le \Ocal \rbr{ 1}.
\]
\end{lemma}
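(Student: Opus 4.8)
The plan is to treat $(e1.2)$ exactly as the ``easy'' companion terms $(b1)$ and $(b2.2)$ were handled: on the event appearing inside the indicator, the optimal arm $\optarm$ has already been pulled by player $p$ at least $J = \frac{640\ln T}{(\Delta_i^p)^2}$ times \emph{and} the individual posterior is in use, so the individual posterior for $\optarm$ is sharply concentrated around $\mu_\optarm^p$. Consequently the sample $\theta_\optarm^p(t)$ almost certainly exceeds the threshold $z_i^p$, i.e. $\psi_{i,t}^p$ is extremely close to $1$, and the ratio $\frac{1-\psi_{i,t}^p}{\psi_{i,t}^p}$ is deterministically at most $\frac1T$ on this event. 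The whole argument is therefore a pointwise (non-summed) tail estimate rather than an amortization over pulls, which is what makes it $\Ocal(1)$ rather than requiring a $\tau_k$-type analysis.

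Concretely, I would first invoke Remark~\ref{rem:psi_explicit}: when $H_\optarm^p(t)$ holds, $\psi_{i,t}^p = \bPhi\rbr{(z_i^p - \indmu_\optarm^p(t-1))\sqrt{\wbar{n_\optarm^p}(t-1)/4}}$, so that $1 - \psi_{i,t}^p = \Phi\rbr{(z_i^p - \indmu_\optarm^p(t-1))\sqrt{\wbar{n_\optarm^p}(t-1)/4}}$. I then apply the Gaussian tail bound (Lemma~\ref{lem:phi-bounds}) to get $1 - \psi_{i,t}^p \le \exp\rbr{-\frac{\wbar{n_\optarm^p}(t-1)\,(\indmu_\optarm^p(t-1) - z_i^p)_+^2}{8}}$. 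The key numerology is that $z_i^p = \mu_i^p + \frac12\Delta_i^p = \mu_\optarm^p - \frac12\Delta_i^p$; on the clean event $\Ecal_t$ (Definition~\ref{def:event_e}) we have $\indmu_\optarm^p(t-1) \ge \mu_\optarm^p - \sqrt{10\ln T/\wbar{n_\optarm^p}(t-1)}$, and when $n_\optarm^p(t-1) \ge J$ this deviation is at most $\frac18\Delta_i^p$, so $\indmu_\optarm^p(t-1) - z_i^p \ge \frac38\Delta_i^p$. Plugging in $\wbar{n_\optarm^p}(t-1) \ge J = \frac{640\ln T}{(\Delta_i^p)^2}$ makes the exponent equal to $\frac{90}{8}\ln T$, hence $1 - \psi_{i,t}^p \le T^{-90/8} \le \frac{1}{T+1}$, giving $\frac{1-\psi_{i,t}^p}{\psi_{i,t}^p}\le\frac1T$ on the event in question.

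Finally I would bound $(e1.2) \le \sum_{t: p \in \Pcal_t}\EE\sbr{\frac{1-\psi_{i,t}^p}{\psi_{i,t}^p}\indic\cbr{\Ecal_t, H_\optarm^p(t), n_\optarm^p(t-1)\ge J}} \le \sum_{t: p\in\Pcal_t}\frac1T \le 1 = \Ocal(1)$, where unlike $(b1)$ the sum runs over a \emph{single} fixed player's rounds (at most $T$ of them), which is precisely why the bound is $\Ocal(1)$ rather than $\Ocal(M)$. I do not expect any genuine obstacle here: the only care needed is to verify that the chosen $J$ is large enough to drive the exponent past $\ln(T+1)$, and to keep track that $z_i^p$ lies a constant fraction of $\Delta_i^p$ below $\mu_\optarm^p$ so that the clean-event slack $\sqrt{10\ln T/\wbar{n_\optarm^p}(t-1)}$ is dominated. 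This is the mirror image of Lemma~\ref{lem:b1}, with $\psi,\optarm,z_i^p,J$ replacing $\phi,\dagger,y_i^p,Y$, and with the summation restricted to player $p$.
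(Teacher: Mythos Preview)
Your proposal is correct and essentially identical to the paper's own proof: both drop the indicator $\indic\cbr{i_t^p = \optarm}$, use Remark~\ref{rem:psi_explicit} and Lemma~\ref{lem:phi-bounds} to bound $1-\psi_{i,t}^p$ on the event $\Ecal_t \cap H_\optarm^p(t) \cap \cbr{n_\optarm^p(t-1)\ge J}$, deduce $\frac{1-\psi_{i,t}^p}{\psi_{i,t}^p}\le \frac1T$ pointwise, and sum over the at most $T$ rounds where $p\in\Pcal_t$. The only difference is cosmetic numerics (you bound the clean-event deviation by $\tfrac18\Delta_i^p$ while the paper uses the looser $\tfrac14\Delta_i^p$), which does not affect the argument.
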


\begin{proof}

Recall that
\[
(e1.2) = \sum_{t: p \in \Pcal_t} \EE \sbr{  {\rbr{\frac{1 - \psi_{i,t}^p}{\psi_{i,t}^p}} \indic \cbr{i_t^p = \optarm, \Ecal_t, H_\optarm^p(t), n_\optarm^p(t-1) \ge J} }}.
\]
Dropping $\indic\cbr{i_t^p = i_\optarm^p}$, we have
\[
(e1.2) \leq \sum_{t: p \in \Pcal_t} \EE \sbr{  {\rbr{\frac{1 - \psi_{i,t}^p}{\psi_{i,t}^p}} \indic \cbr{\Ecal_t, H_\optarm^p(t), n_\optarm^p(t-1) \geq J} }},
\]

When $\Ecal_t$, $H_\optarm^p(t)$, and $\cbr{n_\optarm^p(t-1) \geq J}$ happen, we have 
\begin{align*}
    & 1 - \psi_{i,t}^p \\
    = & \Pr \rbr{ \theta_{\optarm}^p(t) \le z_i^p \mid \Fcal_{t-1}} \\
    = & \Phi\rbr{ (z_i^p - \indmu_\optarm^p(t-1)) \sqrt{ \wbar{n_\optarm^p}(t-1) / 4 } } \\
    \leq & \exp \del{ - 
    \frac{ \wbar{n_\optarm^p}(t-1) \rbr{\indmu_\optarm^p(t-1) - z_i^p}_+^2   }{ 8 }  } \\
     \leq & \exp \del{ - 
    \frac{ n_\optarm^p(t-1) \rbr{\mu_\optarm^p - \frac14 \Delta_i^p - \mu_\optarm^p + \frac12 \Delta_i^p}_+^2  }{ 8 }  } \\
     \leq & \exp \del{ - 
    \frac{ n_\optarm^p(t-1) (\Delta_i^p)^2  }{ 8 (16) }  } \\
    \leq & \frac{1}{T+1},
\end{align*}
where the first inequality uses Lemma~\ref{lem:phi-bounds}; 
the second inequality uses the observations that, when $\Ecal_t$ and $\cbr{n_\optarm^p(t-1) \geq J}$ happen:
\begin{enumerate}
    \item $\wbar{n_\optarm^p}(t-1) \ge n_\optarm^p(t-1) \ge J = \frac{640 \ln T}{(\Delta_i^p)^2}$,
    
    \item $\indmu_{\optarm}^p(t-1) \ge \mu_{\optarm}^p - \sqrt{ \frac{10 \ln T}{\wbar{n_\optarm^p}(t-1)}} \ge \mu_{\optarm}^p - \frac{1}{4}\Delta_i^p$ (see Definition~\ref{def:event_e}), and
    
    \item $z_i^p = \mu_{\optarm}^p - \frac12 \Delta_i^p$;
\end{enumerate}
the third inequality is by algebra; 
and the last inequality follows because when $\cbr{n_\optarm^p(t-1) \geq J}$ happens, $n_\optarm^p(t-1) \ge \frac{640 \ln T}{(\Delta_i^p)^2} \ge \frac{320 \ln (T+1)}{(\Delta_i^p)^2}$ for $T > 1$. 

It follows that, when $\Ecal_t$ and $\cbr{n_\optarm^p(t-1) \geq J}$ happen, $\psi_{i,t}^p \ge \frac{T}{T+1}$ and $\frac{1 - \psi_{i,t}^p}{\psi_{i,t}^p} \le \frac{1}{T}$. Hence, $(e1.2) \le 1$.
\end{proof}

We now consider term $(e2)$. Recall that
\[
(e2) = \sum_{t: p \in \Pcal_t} \EE \sbr{  {\rbr{\frac{1 - \psi_{i,t}^p}{\psi_{i,t}^p}} \indic \cbr{i_t^p = \optarm, \Ecal_t, \wbar{H_\optarm^p(t)}} }}
\]
\begin{lemma}
\label{lem:e2}
\[
(e2) \le \Ocal \rbr{ \frac{\ln T}{(\Delta_i^p)^2} + M}.
\]
\end{lemma}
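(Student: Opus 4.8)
The plan is to bound $(e2)$ by closely mirroring the analysis of term $(b2)$ for subpar arms (Lemma~\ref{lem:b21} and Lemma~\ref{lem:b22}), with the common near-optimal arm $\dagger$ replaced by the fixed optimal arm $\optarm$, the threshold $y_i^p$ replaced by $z_i^p = \mu_i^p + \frac12\Delta_i^p = \mu_\optarm^p - \frac12\Delta_i^p$, and $\phi_{i,t}^p$ replaced by $\psi_{i,t}^p$. First I would split $(e2)$ according to whether the aggregate pull count is small or large: with foresight set $L' = \frac{2560\ln T}{(\Delta_i^p)^2} + M$ and write $(e2) = (e2.1) + (e2.2)$, where $(e2.1)$ collects the summands with $m_\optarm^p(t-1) < L'$ and $(e2.2)$ those with $m_\optarm^p(t-1) \ge L'$.

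Term $(e2.2)$ is the easy one and follows the template of Lemma~\ref{lem:b22}. When $\Ecal_t$ and $\cbr{m_\optarm^p(t-1)\ge L'}$ hold, Definition~\ref{def:event_e} gives $\aggmu_\optarm^p(t-1) \ge \mu_\optarm^p - \sqrt{10\ln T/\wbar{m_\optarm^p}(t-1)} \ge \mu_\optarm^p - \frac14\Delta_i^p$, which exceeds $z_i^p = \mu_\optarm^p - \frac12\Delta_i^p$ by a margin of $\frac14\Delta_i^p$. Using the Gaussian tail bound (Lemma~\ref{lem:phi-bounds}) exactly as in Lemma~\ref{lem:b22}, this forces $1 - \psi_{i,t}^p \le \frac{1}{T+1}$, hence $\frac{1-\psi_{i,t}^p}{\psi_{i,t}^p} \le \frac1T$; summing over the (at most $T$) rounds in which $p$ is active yields $(e2.2) \le \Ocal(1)$.

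The substance is in $(e2.1)$, which parallels Lemma~\ref{lem:b21}. By Remark~\ref{rem:invariant-more}, the sequence $\cbr{\rbr{\frac{1}{\psi_{i,t}^p}-1}\indic\cbr{\wbar{H_\optarm^p(t)}}}$ satisfies the invariant property with respect to $(\optarm, p)$, so the auxiliary summation lemma (Lemma~\ref{lem:auxiliary_sum_g_indic}) converts $(e2.1)$ into a sum over the rounds $\pi_s(\optarm, p)$ in which player $p$ pulls $\optarm$; because $m_\optarm^p < L'$ forces $n_\optarm^p < L'$, at most $\Ocal(L')$ terms survive, each of the form $\EE\sbr{\rbr{\frac{1}{\psi_{i,\pi_s+1}^p} - 1}\indic\cbr{\pi_s\le T, \wbar{H_\optarm^p(\pi_s+1)}}}$. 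It then suffices to prove the analog of Lemma~\ref{lem:key_lemma_constant_appendix}, namely that each such term is $\Ocal(1)$, which gives $(e2.1) \le \Ocal(L') = \Ocal\rbr{\frac{\ln T}{(\Delta_i^p)^2} + M}$ and, together with $(e2.2)$, completes the proof.

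The main obstacle is establishing this per-term $\Ocal(1)$ bound with the correct normalization. Since the aggregate posterior is built from all players' data, its variance scales with the \emph{global} count $n_\optarm$, so a concentration inequality applied directly at the player-specific stopping time $\pi_s(\optarm, p)$ — which only controls player $p$'s own count — would be normalized by the wrong quantity and, through a union bound or a method-of-mixtures argument, cost an extra $\Ocal(\ln T)$ or $\Ocal(M)$ factor of the kind warned against in Remark~\ref{rem:novel_concentration}. The fix is to observe that each round $\pi_s(\optarm, p)$ coincides with a global stopping time $\tau_k(\optarm)$ for the index $k$ at which $p$ makes that pull (so that $p_k(\optarm) = p$), and that at this round $\aggmu_\optarm^p(\pi_s) = \aggmu_\optarm(\tau_k)$ and $m_\optarm^p(\pi_s) = n_\optarm(\tau_k)$; after re-indexing player $p$'s pulls as the subset $\cbr{k : p_k(\optarm) = p}$ of global pulls, each term is controlled exactly as in Lemma~\ref{lem:key_lemma_constant_appendix}: using $z_i^p \le \mu_\optarm^p$ and the monotonicity of $z\mapsto \frac{1}{\bPhi(z)}$ to pass to $\frac{1}{\bPhi\rbr{(\mu_\optarm^p - \aggmu_\optarm(\tau_k))\sqrt{((n_\optarm(\tau_k)-M)\vee 1)/4}}}$, and then applying the Gaussian anti-concentration bound (Lemma~\ref{lem:sg-anticonc}) together with the novel concentration inequality at $\tau_k(\optarm)$ (Corollary~\ref{col:agg_concentration_alternative}), whose per-index tail $2e^{-2z^2}$ carries no stray logarithmic or $M$ factor. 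This is precisely where the sharpness of the novel concentration bound is essential.
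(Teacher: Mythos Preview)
Your decomposition $(e2)=(e2.1)+(e2.2)$ and the treatment of $(e2.2)$ match the paper. The gap is in $(e2.1)$: after Lemma~\ref{lem:auxiliary_sum_g_indic} you have $L'-1$ terms indexed by $s$ at the player-specific stopping times $\pi_s=\pi_s(\optarm,p)$, and you propose to show each is $\Ocal(1)$ by identifying $\pi_s=\tau_{K_s}(\optarm)$ and invoking Corollary~\ref{col:agg_concentration_alternative}. But $K_s$ is \emph{random} --- it depends on how many pulls of $\optarm$ the other players made before player $p$'s $s$-th pull --- whereas the corollary is a tail bound for a \emph{fixed} $k$. You cannot apply a fixed-$k$ inequality at a random index; union-bounding over the possible values of $K_s$ destroys the constant, and any self-normalized bound valid at the arbitrary stopping time $\pi_s$ (Freedman, method of mixtures) reintroduces exactly the $\ln T$ factor you set out to avoid. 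The paper's Remark following the proof of Lemma~\ref{lem:e2.1} flags precisely this obstruction to the single-player route.

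The paper takes a different path: before any summation lemma is applied, it \emph{adds back} the analogous nonnegative terms for all players $q\neq p$ (relaxing Eq.~\eqref{eqn:e_2_1_f_p} to Eq.~\eqref{eqn:e_2_1_f}) and then invokes the multi-player Lemma~\ref{lem:auxiliary_sum_f_indic}, which is natively indexed by the deterministic global count $k\in[Z-1]$; each resulting expectation is for a fixed $k$, and Corollary~\ref{col:agg_concentration_alternative} together with Lemma~\ref{lem:sg-anticonc} apply cleanly. Your re-indexing idea can in fact be salvaged, but only if you keep the indicator $\indic\{m_\optarm^p(t-1)<L'\}$ inside $g_t$ rather than discarding it after weakening to the $n_\optarm^p$ condition: at $\pi_s$ this yields $n_\optarm(\pi_s)=m_\optarm^p(\pi_s)<L'$, hence $K_s\le L'-1$, so the \emph{pointwise sum} over $s$ is dominated by $\sum_{k=1}^{L'-1}\tilde g_k\,\indic\{\tau_k\le T\}$; only then can the expectation be split over a deterministic range of $k$ and each term bounded by $\Ocal(1)$.
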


\begin{proof}[Proof of Lemma~\ref{lem:e2}]

With foresight, let $Z = \frac{640 \ln T}{(\Delta_i^{p})^2} + M$. We have
\begin{align*}
(e2) & = \underbrace{\sum_{t: p \in \Pcal_t} \EE \sbr{  {\rbr{\frac{1 - \psi_{i,t}^p}{\psi_{i,t}^p}} \indic \cbr{ i_t^p = \optarm, \Ecal_t, \wbar{H_\optarm^p(t)}, m_\optarm^p(t-1) < Z} }}}_{(e2.1)}
+ \\
& \hspace{100pt} \underbrace{\sum_{t: p \in \Pcal_t} \EE \sbr{  {\rbr{\frac{1 - \psi_{i,t}^p}{\psi_{i,t}^p}} \indic \cbr{ i_t^p = \optarm, \Ecal_t, \wbar{H_\optarm^p(t)}, m_\optarm^p(t-1) \ge Z} }}}_{(e2.2)}.
\end{align*}

The proof follows straightforwardly from Lemma~\ref{lem:e2.1} and Lemma~\ref{lem:e2.2} which we present subsequently.
\end{proof}

\begin{lemma}
\label{lem:e2.1}
\[
(e2.1) \le \Ocal \rbr{ \frac{\ln T}{(\Delta_i^{p})^2} + M}.
\]
\end{lemma}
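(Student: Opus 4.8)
The plan is to follow the proof of term $(b2.1)$ in Lemma~\ref{lem:b21}, of which $(e2.1)$ is the non-subpar analogue: in both, the inverse-probability weight $\frac{1-\psi_{i,t}^p}{\psi_{i,t}^p}$ is accumulated only while player $p$ uses the \emph{aggregate} posterior of a reference arm (here $\optarm$) whose aggregate pull count is still below a threshold. First I would drop $\indic\cbr{\Ecal_t}$ and invoke Remark~\ref{rem:invariant-more}, which shows that $\cbr{\rbr{\frac{1}{\psi_{i,t}^p}-1}\indic\cbr{\wbar{H_\optarm^p(t)}}: t\in[T]}$ is invariant with respect to $(\optarm,p)$. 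Extending this quantity by zero to all players and applying the auxiliary summation lemma (Lemma~\ref{lem:auxiliary_sum_f_indic}, as used for $(b2.1)$), the round-indexed sum collapses to a sum over the aggregate stopping times $\tau_k(\optarm)$, retaining only the pulls for which $p_k(\optarm)=p$. The condition $m_\optarm^p(t-1)<Z$ together with $n_\optarm^p(t-1)\le m_\optarm^p(t-1)<Z$ (exactly the counting argument behind Eq.~\eqref{eqn:A2_m_decomposition}) caps the number of surviving terms at $\Ocal(Z)$.

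Next I would discard the boundary term: at the very first use of the aggregate posterior its variance is the initialized constant $c_2=4$, so $\rbr{\frac{1}{\psi_{i,1}^p}-1}\indic\cbr{\wbar{H_\optarm^p(1)}}=\Ocal(1)$ deterministically (cf. Eq.~\eqref{eqn:constant_variance}). It then remains to prove an analogue of Lemma~\ref{lem:key_lemma_constant_appendix} for the aggregate posterior of $\optarm$, namely that each surviving term $\EE\sbr{\rbr{\frac{1}{\psi_{i,\tau_k+1}^{p}}-1}\indic\cbr{\tau_k\le T,\ p_k=p,\ \wbar{H_\optarm^p(\tau_k+1)}}}$ is $\Ocal(1)$. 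Using Remark~\ref{rem:psi_explicit}, on $\wbar{H_\optarm^p}$ we have $\psi_{i,\tau_k+1}^{p}=\bPhi\rbr{(z_i^p-\aggmu_\optarm^p(\tau_k))\sqrt{\wbar{m_\optarm^p}(\tau_k)/4}}$; since $z_i^p=\mu_\optarm^p-\tfrac12\Delta_i^p\le\mu_\optarm^p$ (Definition~\ref{def:z_and_W}) and, because $p_k=p$ forces $\aggmu_\optarm^p(\tau_k)=\aggmu_\optarm(\tau_k)$ and $m_\optarm^p(\tau_k)=n_\optarm(\tau_k)$, monotonicity of $z\mapsto 1/\bPhi(z)$ gives $\frac{1}{\psi_{i,\tau_k+1}^{p}}\le \frac{1}{\bPhi\rbr{(\mu_\optarm^p-\aggmu_\optarm(\tau_k))\sqrt{((n_\optarm(\tau_k)-M)\vee 1)/4}}}$. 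I would then apply the anti-concentration Lemma~\ref{lem:sg-anticonc} with $X=(\aggmu_\optarm(\tau_k)-\mu_\optarm^p)\sqrt{((n_\optarm(\tau_k)-M)\vee 1)/4}$ and $E=\cbr{\tau_k\le T}$, whose required sub-Gaussian tail is delivered---for the player $p$ itself---by the novel aggregate concentration inequality (Corollary~\ref{col:agg_concentration_alternative}). Summing the $\Ocal(Z)$ terms yields $(e2.1)\le\Ocal(Z)=\Ocal\rbr{\frac{\ln T}{(\Delta_i^p)^2}+M}$.

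The crux, and the main obstacle, is this per-term bound, which is precisely the delicate point highlighted in Remark~\ref{rem:novel_concentration}: $\aggmu_\optarm$ is evaluated at the random stopping time $\tau_k(\optarm)$, where the data length $n_\optarm(\tau_k)$ is itself random (it may be any integer in $[k,k+M-1]$), while the posterior scale is $((n_\optarm(\tau_k)-M)\vee 1)$. A crude Azuma--Hoeffding-plus-union or Freedman argument would inflate the tail by an $\Ocal(M)$ or $\Ocal(\ln T)$ factor and destroy the $\Ocal(1)$ per-term guarantee; only the tight inequality of Lemma~\ref{lem:novel_concentration_main_paper}/Corollary~\ref{col:agg_concentration_alternative} matches the variance scale closely enough. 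A secondary subtlety is that the threshold $z_i^p$ is player-$p$-specific, unlike the varying $y_i^{p_k}$ in $(b2.1)$; this is harmless here because restricting to $p_k=p$ lets us concentrate $\aggmu_\optarm(\tau_k)$ around player $p$'s \emph{own} mean $\mu_\optarm^p$, one of the guarantees of Corollary~\ref{col:agg_concentration_alternative}, so the alignment $z_i^p\le\mu_\optarm^p$ is exactly what is needed.
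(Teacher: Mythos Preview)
Your proposal is correct and follows essentially the same route as the paper: drop $\Ecal_t$, exploit the invariant property to apply Lemma~\ref{lem:auxiliary_sum_f_indic} and collapse the sum onto the aggregate stopping times $\tau_k(\optarm)$, then bound each of the at most $Z-1$ surviving terms by $\Ocal(1)$ via Corollary~\ref{col:agg_concentration_alternative} together with Lemma~\ref{lem:sg-anticonc}. The only cosmetic difference is that you set $f_t^q=0$ for $q\neq p$ (so only the terms with $p_k=p$ survive, after which you drop that indicator), whereas the paper instead relaxes by inserting the analogous nonzero terms for \emph{all} players before invoking Lemma~\ref{lem:auxiliary_sum_f_indic}; both variants land on the same $\Ocal(Z)=\Ocal\!\rbr{\frac{\ln T}{(\Delta_i^p)^2}+M}$ bound.
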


\begin{proof}[Proof of Lemma~\ref{lem:e2.1}]
We have
\begin{align*}
(e2.1) 
& \le \EE \sbr{ \sum_{t: p \in \Pcal_t} { \rbr{\frac{1}{\psi_{i,t}^p}-1} \indic \cbr{i_t^p = \optarm, \wbar{H_\optarm^p(t)}, m_\optarm^p(t-1) < Z} }} \\
& \leq \EE \sbr{ \sum_{t: p \in \Pcal_t} {\frac{1}{\psi_{i,t}^p} \indic \cbr{i_t^p = \optarm, \wbar{H_\optarm^p(t)}, m_\optarm^p(t-1) < Z} }},
\end{align*}
where we drop $\Ecal_t$ and use the observation that $\frac{1}{\psi_{i,t}^p}-1 \le \frac{1}{\psi_{i,t}^p}$.

We now focus on sum inside the expectation. 
We denote $\tau_k(\optarm)$ by $\tau_k$ and the player that makes the $k$'s pull of $\optarm$ by $p_k := p_k(\optarm)$. Recall that we use $\wbar{m_\optarm^p}(t-1)$ to denote $\rbr{m_\optarm^p(t-1) - M} \vee 1$. 
We have
\begin{align}
& \sum_{t: p \in \Pcal_t} \frac{1}{\psi_{i,t}^p} \indic \cbr{i_t^p = \optarm, \wbar{H_\optarm^p(t)}, m_\optarm^p(t-1) < Z} \nonumber \\
= & \sum_{t: p \in \Pcal_t} \frac{1}{\bPhi\rbr{ \rbr{ z_i^p - \aggmu_\optarm^p( t-1 )} \sqrt{ \wbar{m_\optarm^p}(t-1)/4} } } \indic \cbr{i_t^p = \optarm, \wbar{H_\optarm^p(t)}, m_\optarm^p(t-1) < Z} 
\nonumber \\
\leq & \sum_{t: p \in \Pcal_t} \frac{1}{\bPhi\rbr{ \rbr{ \mu_\optarm^p - \aggmu_\optarm^p( t-1 )} \sqrt{ \wbar{m_\optarm^p}(t-1)/4} } } \indic \cbr{i_t^p = \optarm, m_\optarm^p(t-1) < Z} 
\label{eqn:e_2_1_f_p} \\
\leq & \sum_{t \in [T]} \sum_{q \in \Pcal_t} \frac{1}{\bPhi\rbr{ \rbr{ \mu_\optarm^q - \aggmu_\optarm^q( t-1 )} \sqrt{ \wbar{m_\optarm^q}(t-1)/4} } } \indic \cbr{i_t^q = \optarm, m_\optarm^q(t-1) < Z},
\label{eqn:e_2_1_f}
\end{align}

where the first equality uses Remark~\ref{rem:psi_explicit};
the first inequality drops $\wbar{H_\optarm^p(t)}$ and uses the observation that $z_i^p \le \mu_\optarm^p$ (see Definition~\ref{def:z_and_W}), along with the monotonic increasing property of $z \mapsto \frac{1}{\bPhi(z)}$.;
the second inequality adds similar terms for other players $q \neq p$. 

Now, define 
$\cbr{f_t^q: t \in [T], q \in [M]}$ where
$f_t^q = \frac{1}{\bPhi\rbr{ \rbr{ \mu_\optarm^q - \aggmu_\optarm^q( t-1 )} \sqrt{ \wbar{m_\optarm^q}(t-1)/4} } }$; recall from Example~\ref{ex:m-n-invariant} that $\cbr{\aggmu_\optarm^q(t-1): t \in [T]}$ and $\cbr{m_\optarm^q(t-1): t \in [T]}$ both satisfy the invariant property with respect to $(\optarm, q)$; therefore, $\cbr{f_t^q: t \in [T], q \in [M]}$ satisfies the invariant property with respect to $\optarm$. Applying Lemma~\ref{lem:auxiliary_sum_f_indic} to it, we have that

\[
\eqref{eqn:e_2_1_f}
\le \sum_{q \in [M]} \frac{1}{\bPhi \rbr{0}} + \sum_{k=1}^{Z-1} \frac{1}{ \bPhi \rbr{ \rbr{\mu_\optarm^{p_k} - \aggmu_\optarm^{p_k}(\tau_k) \sqrt{\wbar{m_\optarm^{p_k}} (\tau_k) /4}}    }} \indic \cbr{\tau_k \le T}.
\]

Since $\sum_{q \in [M]} \frac{1}{\bPhi \rbr{0}} \le \Ocal \rbr{M}$, it then suffices to show that for every $k \in \NN$,
\begin{align}
\EE\sbr{ \frac{1}{\bPhi\rbr{ ( \mu_\optarm^{p_k} - \aggmu_\optarm^{p_k}( \tau_k ) ) \sqrt{ \wbar{m_\optarm^{p_k}}(\tau_k)/4} } } \indic\cbr{ \tau_k \leq T } } 
\leq 
\order\rbr{ 1 }. \label{eqn:aggreg_posterior_nonsubpar_eqn}
\end{align}

Note that $\wbar{m_\optarm^{p_k}}(\tau_k) = \rbr{n_\optarm(\tau_k) - M} \vee 1$. Directly applying Corollary~\ref{col:agg_concentration_alternative} and Lemma~\ref{lem:sg-anticonc} with $X = ( \aggmu_\optarm^{p_k}( \tau_k ) - \mu_\optarm^{p_k} ) \sqrt{ \wbar{m_\optarm^{p_k}}(\tau_k)/4}$ and $E = \cbr{ \tau_k \leq T }$ proves Eq.~\eqref{eqn:aggreg_posterior_nonsubpar_eqn}.
\end{proof}

\begin{remark}
In the above proof, we relaxed Eq.~\eqref{eqn:e_2_1_f_p} to Eq.~\eqref{eqn:e_2_1_f} by adding the corresponding terms for all other players $q \neq p$. 
Alternatively, we could use the observation that $n_\optarm^p(t-1) \leq m_\optarm^p(t-1)$
to bound Eq.~\eqref{eqn:e_2_1_f_p} by 
\[
\sum_{t: p \in \Pcal_t} \frac{1}{\bPhi\rbr{ \rbr{ \mu_\optarm^p - \aggmu_\optarm^p( t-1 )} \sqrt{ \wbar{m_\optarm^p}(t-1)/4} } } \indic \cbr{i_t^p = \optarm, n_\optarm^p(t-1) < Z},
\]
and apply Lemma~\ref{lem:auxiliary_sum_g_indic} and subsequently Lemma~\ref{lem:sg-anticonc}. However, right now, we do not have tight-enough concentration inequalities for $\aggmu_\optarm^p( \pi_k(\optarm,p) )$---the best known inequality here is Freedman's inequality, which incurs an undesirable extra $\order\rbr{\ln T}$ factor in the bound for $(e2.1)$.
\end{remark}

\begin{lemma}
\label{lem:e2.2}
\[
(e2.2) \le \Ocal \rbr{ 1}.
\]
\end{lemma}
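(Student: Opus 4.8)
The plan is to mirror almost verbatim the arguments that bound $(b2.2)$ in Lemma~\ref{lem:b22} and $(e1.2)$ in Lemma~\ref{lem:e1_2}, now in the aggregate-posterior regime. The term $(e2.2)$ collects the rounds in which the fixed player $p$ pulls its optimal arm $\optarm$ using the \emph{aggregate} posterior (the event $\wbar{H_\optarm^p(t)}$) \emph{and} in which the aggregate pull count $m_\optarm^p(t-1)$ is already large (at least $Z = \frac{640\ln T}{(\Delta_i^p)^2}+M$). In this regime $\psi_{i,t}^p$ is forced extremely close to $1$, so the weight $\frac{1-\psi_{i,t}^p}{\psi_{i,t}^p}$ is at most $\frac1T$; summing over the at most $T$ rounds with $p\in\Pcal_t$ then yields the claimed $\Ocal(1)$ bound.

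Concretely, first I would invoke the explicit form of $\psi_{i,t}^p$ from Remark~\ref{rem:psi_explicit}: on the event $\wbar{H_\optarm^p(t)}$,
\[
1-\psi_{i,t}^p = \Phi\del{\rbr{z_i^p-\aggmu_\optarm^p(t-1)}\sqrt{\wbar{m_\optarm^p}(t-1)/4}},
\]
and apply the Gaussian tail bound (Lemma~\ref{lem:phi-bounds}) to get
\[
1-\psi_{i,t}^p \le \exp\del{-\frac{\wbar{m_\optarm^p}(t-1)\,\rbr{\aggmu_\optarm^p(t-1)-z_i^p}_+^2}{8}}.
\]
Next, on the clean event $\Ecal_t$ I would use the lower-bound direction of the aggregate concentration in Definition~\ref{def:event_e}, namely $\mu_\optarm^p-\aggmu_\optarm^p(t-1)\le\sqrt{10\ln T/\wbar{m_\optarm^p}(t-1)}$ (note this direction carries \emph{no} $2\epsilon$ offset). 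Since $\cbr{m_\optarm^p(t-1)\ge Z}$ gives $\wbar{m_\optarm^p}(t-1)\ge Z-M=\frac{640\ln T}{(\Delta_i^p)^2}$, this yields $\aggmu_\optarm^p(t-1)\ge \mu_\optarm^p-\tfrac14\Delta_i^p$, and together with $z_i^p=\mu_\optarm^p-\tfrac12\Delta_i^p$ (Definition~\ref{def:z_and_W}) gives $\aggmu_\optarm^p(t-1)-z_i^p\ge\tfrac14\Delta_i^p$.

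Plugging this gap into the exponential bound gives
\[
1-\psi_{i,t}^p \le \exp\del{-\frac{\wbar{m_\optarm^p}(t-1)(\Delta_i^p)^2}{128}} \le \exp\del{-5\ln T} = T^{-5} \le \frac{1}{T+1}
\]
for $T>1$, hence $\psi_{i,t}^p\ge\frac{T}{T+1}$ and $\frac{1-\psi_{i,t}^p}{\psi_{i,t}^p}\le\frac1T$ whenever $\cbr{\Ecal_t,\wbar{H_\optarm^p(t)},m_\optarm^p(t-1)\ge Z}$ holds. Summing over the at most $T$ rounds $t$ with $p\in\Pcal_t$ then bounds $(e2.2)$ by $T\cdot\frac1T=1=\Ocal(1)$. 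There is no genuine obstacle here — this is a routine adaptation of the two cited lemmas — and the only points needing care are the constant bookkeeping (verifying that $Z-M$ is large enough to push the exponent above $5\ln T$) and selecting the correct direction of the clean-event inequality, i.e. the lower bound on $\aggmu_\optarm^p(t-1)$ rather than its $2\epsilon$-offset upper bound.
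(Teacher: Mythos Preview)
Your proposal is correct and follows essentially the same approach as the paper's own proof: both use the explicit aggregate-posterior form of $\psi_{i,t}^p$, the Gaussian tail bound, the lower-bound direction of the clean event on $\aggmu_\optarm^p(t-1)$ (without the $2\epsilon$ offset), and the threshold $\wbar{m_\optarm^p}(t-1)\ge Z-M=\frac{640\ln T}{(\Delta_i^p)^2}$ to force $\frac{1-\psi_{i,t}^p}{\psi_{i,t}^p}\le\frac1T$, then sum over at most $T$ rounds. The constants and the chain of inequalities match the paper exactly (your exponent $128=8\cdot 16$ is the same as theirs).
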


\begin{proof}[Proof of Lemma~\ref{lem:e2.2}]
Recall that
\begin{align*}
(e2.2) & = \sum_{t: p \in \Pcal_t} \EE \sbr{  {\rbr{\frac{1 - \psi_{i,t}^p}{\psi_{i,t}^p}} \indic \cbr{ i_t^p = \optarm, \Ecal_t, \wbar{H_\optarm^p(t)}, m_\optarm^p(t-1) \ge Z} }}.
\end{align*}

Recall that $\wbar{m_\optarm^p}(t-1) = \rbr{m_\optarm^p(t-1) - M} \vee 1$.
When $\Ecal_t, \wbar{H_{\optarm}^p(t)}$ and $\cbr{m_\optarm^p(t-1) \ge Z}$ happen simultaneously,  
\begin{align*}
    & 1 - \psi_{i,t}^p \\
    = & \Pr \rbr{ \theta_{\optarm}^p(t) \le z_i^p \mid \Fcal_{t-1}} \\
    = & \Phi\rbr{ \rbr{z_i^p - \aggmu_\optarm^p(t-1)} \sqrt{ \wbar{m_\optarm^p}(t-1) / 4 } } \\
    \leq & \exp \del{ - 
    \frac{ \wbar{m_\optarm^p}(t-1) \rbr{\aggmu_\optarm^p(t-1) - z_i^p}_+^2  }{ 8 }  } \\
     \leq & \exp \del{ - 
    \frac{ \wbar{m_\optarm^p}(t-1) \rbr{\mu_\optarm^p - \frac14 \Delta_i^p - \mu_\optarm^p + \frac12 \Delta_i^p}_+^2  }{ 8 }  } \\
     \leq & \exp \del{ - 
    \frac{ \wbar{m_\optarm^p}(t-1) (\Delta_i^p)^2  }{ 8(16) }  } \\
    \leq & \frac{1}{T+1},
\end{align*}
where the first inequality uses Lemma~\ref{lem:phi-bounds}; 
the second inequality uses the observations that when $\Ecal_t, \wbar{H_{\optarm}^p(t)}$ and $\cbr{m_\optarm^p(t-1) \ge Z}$ happen:
\begin{enumerate}
    \item $\wbar{m_\optarm^p}(t-1) \ge m_\optarm^p(t-1) - M \ge Z - M \ge \frac{640\ln T}{(\Delta_i^{p})^2}$,
    
    \item $\aggmu_{\optarm}^p(t-1) \ge \mu_{\optarm}^p - \sqrt{ \frac{10 \ln T}{\wbar{m_\optarm^p}(t-1)}} \ge \mu_{\optarm}^p - \frac{1}{4}\Delta_i^p$ (see Definition~\ref{def:event_e}), and
    
    \item $z_i^p = \mu_{\optarm}^p - \frac12 \Delta_i^p$ (see Definition~\ref{def:z_and_W});
\end{enumerate}
the third inequality is by algebra; 
and the fourth inequality is by the fact that when $m_\optarm^p(t-1) \ge Z$, $\wbar{m_\optarm^p}(t-1)\geq Z-M = \frac{640\ln T}{(\Delta_i^{p})^2} \ge \frac{320 \ln(T+1)}{(\Delta_i^p)^2}$ for $T > 1$.

It follows that, when $\Ecal_t, \wbar{H_{\optarm}^p(t)}$ and $\cbr{m_\optarm^p(t-1) \ge Z}$ happen, $\psi_{i,t}^p \ge \frac{T}{T+1}$ and $\frac{1 - \psi_{i,t}^p}{\psi_{i,t}^p} \le \frac{1}{T}$. As a result, $(e2.2) \le \Ocal (1)$.
\end{proof}

\subsection{Concluding the proofs of Theorems~\ref{thm:ts_gap_dep_ub} and~\ref{thm:ts_gap_indep_ub}}
\label{sec:appendix_concluding_proofs}
\begin{lemma}
\label{lem:arm-pull-reg}
Let a generalized $\epsilon$-MPMAB problem instance and $\alpha > 0$ be such that for all $i \in \Ical_\alpha$ and all $p \in [M]$,  $\Delta_i^p \leq 2 \Delta_i^{\min}$.
If algorithm $\Acal$  guarantees that when interacting with this problem instance:
\begin{enumerate}
    \item For any arm $i \in \Ical_{\alpha}$,
    \begin{equation}
    \EE \sbr{n_i(T)} \le \Ocal \rbr{\frac{\ln T}{(\Delta_i^{\min})^2} + M};
    \label{eqn:i-eps}
    \end{equation}
    \item For any arm $i \in \Ical_{\alpha}^C$ and player $p \in [M]$,
    \begin{equation}
    \EE \sbr{n_i^p(T)} \le \Ocal \rbr{\frac{\ln T}{(\Delta_i^p)^2} + C},
    \label{eqn:i-eps-c}
    \end{equation}
\end{enumerate}
for some $C \geq 0$,
then it has the following regret bounds simultaneously:
\begin{enumerate}
    \item gap-dependent regret bound:
    \begin{equation}
    \Reg(T) \le \Ocal \vast( \frac 1 M \sum_{i \in \Ical_{\alpha}} \sum_{p \in [M]: \Delta_i^p>0} \frac{\ln T}{\Delta_i^p} + \sum_{i \in \Ical_{\alpha}^C} \sum_{\substack{p \in [M]: \Delta_i^p>0}} \frac{\ln T}{\Delta_i^p} + MK(1+C) \vast),
    \label{eqn:gap-dept}
    \end{equation}
    \item gap-independent regret bound:
    \begin{equation}
    \Reg(T) \le \tilde{\Ocal} \vast(  \sqrt{ |\Ical_{\alpha}| P } + \sqrt{ M \rbr{|\Ical_{\alpha}^C| -1 } P } + MK(1+C)   \vast),
    \label{eqn:gap-indept}
    \end{equation}
    where we recall that $\rounds = \sum_{t=1}^T \abr{ \Pcal_t }$.
\end{enumerate}
\end{lemma}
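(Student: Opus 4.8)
The plan is to convert the per-arm and per-(arm, player) pull bounds in Eqs.~\eqref{eqn:i-eps} and~\eqref{eqn:i-eps-c} into the two regret guarantees by splitting $\Reg(T) = \sum_{i \in [K]}\sum_{p \in [M]}\EE\sbr{n_i^p(T)}\Delta_i^p$ according to whether $i \in \Ical_\alpha$ or $i \in \Ical_\alpha^C$. A preliminary observation that drives the whole argument is that the hypothesis $\Delta_i^p \le 2\Delta_i^{\min}$ for $i \in \Ical_\alpha$ forces $\Delta_i^{\min} > 0$ whenever $i \in \Ical_\alpha$ (since $i \in \Ical_\alpha$ means some $\Delta_i^q > \alpha$, and $\Delta_i^q \le 2\Delta_i^{\min}$), so every subpar arm is strictly suboptimal for every player; equivalently, any arm that is optimal for some player $p$ must lie in $\Ical_\alpha^C$. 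Consequently, for each fixed $p$ the number of arms $i \in \Ical_\alpha^C$ with $\Delta_i^p > 0$ is at most $|\Ical_\alpha^C| - 1$.

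For the gap-dependent bound I would first handle the subpar arms: using $\Delta_i^p \le 2\Delta_i^{\min}$ I bound $\sum_{p}\EE\sbr{n_i^p(T)}\Delta_i^p \le 2\Delta_i^{\min}\EE\sbr{n_i(T)}$, plug in Eq.~\eqref{eqn:i-eps}, and use $\Delta_i^{\min}\le 1$ to obtain $\Ocal\rbr{\frac{\ln T}{\Delta_i^{\min}} + M}$; finally Fact~\ref{fact:basic_facts_subpar_arms}(4) converts $\frac{\ln T}{\Delta_i^{\min}}$ into $\frac{2}{M}\sum_{p}\frac{\ln T}{\Delta_i^p}$. The non-subpar arms are immediate from Eq.~\eqref{eqn:i-eps-c}: $\Delta_i^p\EE\sbr{n_i^p(T)} \le \Ocal\rbr{\frac{\ln T}{\Delta_i^p} + C}$ after using $\Delta_i^p \le 1$. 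Summing the two contributions and collecting the $\Ocal(M)$ and $\Ocal(C)$ residuals over the at most $K$ arms and $M$ players yields the additive term $MK(1+C)$, giving Eq.~\eqref{eqn:gap-dept}.

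For the gap-independent bound I would use the standard threshold trick. For subpar arms, I bound $\sum_{i\in\Ical_\alpha}\Delta_i^{\min}\EE\sbr{n_i(T)}$ by splitting at a threshold $\lambda$: arms with $\Delta_i^{\min}\le\lambda$ contribute at most $\lambda\sum_{i}\EE\sbr{n_i(T)}\le\lambda P$ (since $\sum_i n_i(T)=P$ deterministically), while arms with $\Delta_i^{\min}>\lambda$ contribute at most $\Ocal\rbr{\frac{|\Ical_\alpha|\ln T}{\lambda} + M|\Ical_\alpha|}$ via Eq.~\eqref{eqn:i-eps}; optimizing $\lambda = \sqrt{|\Ical_\alpha|\ln T / P}$ gives $\tilde{\Ocal}\rbr{\sqrt{|\Ical_\alpha|P} + M|\Ical_\alpha|}$. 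For non-subpar arms I apply the same trick per player: writing $P_p$ for the number of rounds in which $p$ is active (so $\sum_p P_p = P$ and $\sum_i n_i^p(T)=P_p$), a threshold $\lambda_p$ on $\Delta_i^p$ together with the bound of at most $|\Ical_\alpha^C|-1$ suboptimal non-subpar arms per player gives a per-player contribution $\tilde{\Ocal}\rbr{\sqrt{(|\Ical_\alpha^C|-1)P_p} + (|\Ical_\alpha^C|-1)C}$; summing over $p$ and applying Cauchy--Schwarz $\sum_p\sqrt{P_p}\le\sqrt{M\sum_p P_p}=\sqrt{MP}$ produces $\tilde{\Ocal}\rbr{\sqrt{M(|\Ical_\alpha^C|-1)P} + MK(1+C)}$. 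Adding the subpar and non-subpar contributions gives Eq.~\eqref{eqn:gap-indept}.

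The only genuinely non-routine step is the combinatorial observation in the first paragraph that each player has at most $|\Ical_\alpha^C|-1$ suboptimal non-subpar arms; this is exactly where the hypothesis $\Delta_i^p \le 2\Delta_i^{\min}$ (which rules out any optimal arm being subpar) is used, and it is what yields the $-1$ saving in the $\sqrt{M(|\Ical_\alpha^C|-1)P}$ term. Everything else is bookkeeping: the per-player Cauchy--Schwarz step for the gap-independent bound, and careful tracking of the additive $\Ocal(M)$ and $\Ocal(C)$ residuals so that they collapse into a single $MK(1+C)$ term using $|\Ical_\alpha| + |\Ical_\alpha^C| = K$.
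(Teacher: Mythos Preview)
Your proof is correct. The gap-dependent part is essentially identical to the paper's argument.

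For the gap-independent bound, you take a genuinely different (though equally standard) route. The paper uses the inequality $\min(A,B)\le\sqrt{AB}$: for each subpar arm it writes $\Delta_i^{\min}\,\EE[n_i(T)] \le \Delta_i^{\min}\min\bigl(\EE[n_i(T)],\,\ln T/(\Delta_i^{\min})^2+M\bigr)$ and bounds this by $\sqrt{\EE[n_i(T)]\ln T}+M$, then applies Jensen's inequality globally over $i$ (and over $(i,p)$ for non-subpar arms). For the $-1$ saving the paper does not count arms per player; instead it case-splits on $|\Ical_\alpha^C|=1$ (where the single non-subpar arm is optimal for everyone, so the contribution vanishes) versus $|\Ical_\alpha^C|\ge 2$, where it simply uses $|\Ical_\alpha^C|\le 2(|\Ical_\alpha^C|-1)$. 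Your threshold-$\lambda$ argument combined with the combinatorial observation that each player has at most $|\Ical_\alpha^C|-1$ suboptimal non-subpar arms gets the $-1$ directly without a case split, at the cost of a per-player optimization followed by Cauchy--Schwarz over players. Both approaches yield the same bound; yours is arguably cleaner in obtaining the $-1$, while the paper's avoids introducing the per-player thresholds $\lambda_p$ and the counts $P_p$. One small note: your appeal to Fact~\ref{fact:basic_facts_subpar_arms}(4) is unnecessary, since $\frac{1}{\Delta_i^{\min}}\le\frac{2}{M}\sum_p\frac{1}{\Delta_i^p}$ follows immediately from the lemma's own hypothesis $\Delta_i^p\le 2\Delta_i^{\min}$.
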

\begin{proof}
We prove the two items respectively. Recall that $\Delta_i^{\min} = \min_{p \in [M]} \Delta_i^p$. 
\begin{enumerate}
\item Note that for all $i \in \Ical_\alpha$ and all $p \in [M]$,  $\Delta_i^p \leq 2 \Delta_i^{\min}$, and $\sum_{p=1}^M \EE\sbr{n_i^p(T)} = \EE\sbr{n_i(T)}$; as a consequence,
\begin{equation}
\Reg(T) 
= 
\sum_{p=1}^M \sum_{i=1}^K \EE\sbr{n_i^p(T)} \Delta_i^p
= 
O
\rbr{
\sum_{i \in \Ical_\alpha} \EE\sbr{n_i(T)} \Delta_i^{\min}
+
\sum_{i \in \Ical_\alpha^C} \sum_{p \in [M]: \Delta_i^p > 0} \EE\sbr{n_i^p(T)} \Delta_i^p
}.
\label{eqn:reg-decomp}
\end{equation}
Using Eq.~\eqref{eqn:i-eps}, the first term can be bounded by: 
\[
\sum_{i \in \Ical_\alpha} \EE\sbr{n_i(T)} \Delta_i^{\min} 
\leq 
\order\rbr{ \sum_{i \in \Ical_\alpha}  \frac{\ln T}{\Delta_i^{\min}} + M K }
\leq 
\order\rbr{ \frac 1 M \sum_{i \in \Ical_\alpha} \sum_{p \in [M]: \Delta_i^p>0} \frac{\ln T}{\Delta_i^p} + M K },
\]
where the second inequality follows from the assumption that for all $i \in \Ical_\alpha$ and $p \in [M]$,  $\Delta_i^p \leq 2 \Delta_i^{\min}$.

Using Eq.~\eqref{eqn:i-eps-c}, the second term can be bounded by:
\[
\sum_{i \in \Ical_\alpha^C} \sum_{p \in [M]: \Delta_i^p > 0} \EE\sbr{n_i^p(T)} \Delta_i^p
\leq 
\order \rbr{ \sum_{i \in \Ical_{\alpha}^C} \sum_{\substack{p \in [M]: \Delta_i^p>0}} \frac{\ln T}{\Delta_i^p} + MKC }.
\]
Combining the above two bounds yields Eq.~\eqref{eqn:gap-dept}.

\item As with the proof of Eq.~\eqref{eqn:gap-indept}, we continue from Eq.~\eqref{eqn:reg-decomp}, but look at the two terms respectively. For the first  term,
\begin{align}
    \sum_{i \in \Ical_\alpha} \EE\sbr{n_i(T)} \Delta_i^{\min}
    \leq &  \order \rbr{ \sum_{i \in \Ical_\alpha} \min\rbr{ \EE\sbr{n_i(T)}, \frac{\ln T}{(\Delta_i^{\min})^2}  + M  } \Delta_i^{\min} } \nonumber \\
    \leq &  \order \rbr{ \sum_{i \in \Ical_\alpha} \min\rbr{ \EE\sbr{n_i(T)} \Delta_i^{\min}, \frac{\ln T}{\Delta_i^{\min}}  } + MK } \nonumber \\
    \leq & \order\rbr{ \sum_{i \in \Ical_\alpha} \sqrt{ \EE\sbr{n_i(T)} \ln T }  + MK } \nonumber \\
    \leq & \order\rbr{ \sqrt{ |\Ical_\alpha| \rounds \ln T }  + MK }
    \label{eqn:term1}
\end{align}
where the first inequality is from Eq.~\eqref{eqn:i-eps}; the second inequality is by algebra; the third inequality is from the elementary fact that $\min(A,B) \leq \sqrt{AB}$; the last inequality is from Jensen's inequality and the concavity of function $x \mapsto \sqrt{x}$, which implies that $\sum_{i \in \Ical_\alpha} \sqrt{ \EE\sbr{n_i(T)} } \leq \sqrt{ \abr{\Ical_\alpha} \rbr{\sum_{i \in \Ical_\alpha} \EE\sbr{n_i(T)} } }$, and the fact that $\sum_{i \in \Ical_\alpha} \EE\sbr{n_i(T)} \leq \sum_{i=1}^M \EE\sbr{n_i(T)} \leq \rounds$. 

For the second term in Eq.~\eqref{eqn:gap-indept}, first observe that if $\abr{\Ical_\alpha^C} = 1$, then let $i^*$ be the only element in $\Ical_\alpha^C$; it must be the case that for all $p \in [M]$, $i^*$ is the optimal arm for player $p$. As a consequence,
$\sum_{i \in \Ical_\alpha^C} \sum_{p=1}^M \EE\sbr{n_i^p(T)} \Delta_i^p = 0 = \order(\sqrt{ M (\abr{\Ical_\alpha^C} - 1) \rounds })$.

Otherwise, $\abr{\Ical_\alpha^C} \geq 2$. In this case,
\begin{align*}
    \sum_{p \in [M]} \sum_{i \in \Ical_\alpha^C} \EE\sbr{n_i^p(T)} \Delta_i^p
    \leq & 
    \order\rbr{ \sum_{p \in [M]} \sum_{i \in \Ical_\alpha^C} \min\rbr{ \EE\sbr{n_i^p(T)}, \frac{\ln T}{ (\Delta_i^p)^2 }  } \Delta_i^p + MKC } \\
    \leq & 
    \order\rbr{ \sum_{p \in [M]} \sum_{i \in \Ical_\alpha^C} \min\rbr{ \EE\sbr{n_i^p(T)} \Delta_i^p , \frac{\ln T}{ \Delta_i^p }  } + MKC } \\
    \leq &  
    \order\rbr{ \sum_{p \in [M]} \sum_{i \in \Ical_\alpha^C} \sqrt{\EE\sbr{n_i^p(T)} \ln T} + MKC } \\
    \leq &
    \order\rbr{ \sqrt{M \abr{\Ical_\alpha^C} \rounds \ln T} + MKC } \\
    \leq & 
    \order\rbr{ \sqrt{M \rbr{\abr{\Ical_\alpha^C} - 1} \rounds \ln T} + MKC }. 
\end{align*}
where the first inequality is by Eq.~\eqref{eqn:i-eps-c} and algebra; the second inequality is by algebra; the third inequality is from the elementary fact that $\min(A,B) \leq \sqrt{A B}$; the fourth inequality is from Jensen's inequality and the concavity of function $x \mapsto \sqrt{x}$, which implies that 
$\sum_{i \in \Ical_\alpha} \sqrt{ \EE\sbr{n_i(T)} } \leq \sqrt{ \abr{\Ical_\alpha} \rbr{\sum_{i \in \Ical_\alpha} \EE\sbr{n_i(T)} } }$, and the fact that $\sum_{i \in \Ical_\alpha} \EE\sbr{n_i(T)} \leq \sum_{i=1}^M \EE\sbr{n_i(T)} \leq P$; the last inequality is from the simple observation that $\abr{\Ical_\alpha^C} \leq 2(\abr{\Ical_\alpha^C} - 1)$ when $\abr{\Ical_\alpha^C} \geq 2$. 

In summary, $\sum_{p=1}^M \sum_{i \in \Ical_\alpha^C} \EE\sbr{n_i^p(T)} \Delta_i^p \leq \order\rbr{ \sqrt{M \rbr{\abr{\Ical_\alpha^C} - 1} \rounds \ln T}} + MKC$. Combining this with Eq.~\eqref{eqn:term1}, this concludes the proof of  Eq.~\eqref{eqn:gap-indept}.
\qedhere
\end{enumerate}
\end{proof}

\begin{proof}[Proofs of Theorems~\ref{thm:ts_gap_dep_ub} and~\ref{thm:ts_gap_indep_ub}]
Combining Lemmas~\ref{lem:i-eps-robustagg-ts}, ~\ref{lem:i-eps-c-robustagg-ts},~\ref{lem:arm-pull-reg} with $C = M$ and $\alpha=10\epsilon$, Theorems~\ref{thm:ts_gap_dep_ub} and~\ref{thm:ts_gap_indep_ub} follow immediately.
\end{proof}

\subsection{Auxiliary Lemmas}
\label{sec:appendix_auxiliary}

Recall that we denote by 
$\wbar{\Phi}(x) = \int_{x}^\infty \frac{1}{\sqrt{2\pi}} e^{-\frac{z^2}{2}} dz$ the complementary CDF of the standard normal distribution.

\begin{lemma}
$\bPhi$ is monotonically decreasing. In addition,
for $z \geq 0$,
\[
\frac{1}{\sqrt{2\pi}} \frac{z}{z^2+1}
\exp\rbr{-\frac{z^2}{2}}
\leq 
\wbar{\Phi}(z)
\leq 
\exp\rbr{-\frac{z^2}{2}},
\]
where the first inequality (anti-concentration) is from \cite{gordon1941millsratio}.
In addition, for any $z \in \RR$,
\[
\wbar{\Phi}(z)
\leq 
\exp\rbr{-\frac{(z)_+^2}{2}},
\quad
\Phi(z)
\leq 
\exp\rbr{-\frac{(-z)_+^2}{2}},
\]
where we recall that $(z)_+ = \max(z, 0)$.
\label{lem:phi-bounds}
\end{lemma}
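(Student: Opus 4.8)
The plan is to treat the four claims in turn, starting from the integral definition $\wbar{\Phi}(x) = \int_x^\infty \frac{1}{\sqrt{2\pi}} e^{-z^2/2}\,dz$. Monotonicity is immediate: by the fundamental theorem of calculus, $\wbar{\Phi}'(x) = -\frac{1}{\sqrt{2\pi}} e^{-x^2/2} < 0$, so $\bPhi$ is strictly decreasing on all of $\RR$.

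For the upper bound $\wbar{\Phi}(z) \leq \exp\rbr{-z^2/2}$ when $z \geq 0$, I would substitute $t = z + s$ to recenter the integral at the threshold, giving $\wbar{\Phi}(z) = e^{-z^2/2} \int_0^\infty \frac{1}{\sqrt{2\pi}} e^{-zs - s^2/2}\,ds$; since $z, s \geq 0$ forces $e^{-zs} \leq 1$, the remaining integral is at most $\int_0^\infty \frac{1}{\sqrt{2\pi}} e^{-s^2/2}\,ds = \frac12$, so in fact $\wbar{\Phi}(z) \leq \frac12 e^{-z^2/2} \leq e^{-z^2/2}$.

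The anti-concentration lower bound is exactly Gordon's inequality \citep{gordon1941millsratio}, which I would cite directly; for a self-contained argument one sets $f(z) = \wbar{\Phi}(z) - \frac{1}{\sqrt{2\pi}} \frac{z}{z^2+1} e^{-z^2/2}$ and checks that $f(z) \to 0$ as $z \to \infty$, while a direct differentiation gives $f'(z) = -\frac{1}{\sqrt{2\pi}} \frac{2}{(z^2+1)^2} e^{-z^2/2} < 0$ (the numerator collapses to a constant because $(z^2+1)^2 + (1 - 2z^2 - z^4) = 2$). Hence $f$ is strictly decreasing with limit $0$, so $f(z) \geq 0$ for all $z \geq 0$, which is the claimed lower bound.

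Finally, the two $\rbr{z}_+$ forms follow by a short case split together with the reflection identity $\Phi(z) = \wbar{\Phi}(-z)$. For $z \geq 0$ the bound $\wbar{\Phi}(z) \leq \exp\rbr{-\rbr{z}_+^2/2}$ is precisely the upper bound above, whereas for $z < 0$ we have $\rbr{z}_+ = 0$ and the claim reduces to $\wbar{\Phi}(z) \leq 1$, which holds trivially; the bound $\Phi(z) \leq \exp\rbr{-\rbr{-z}_+^2/2}$ is obtained symmetrically by applying the first bound to $\wbar{\Phi}(-z)$. I do not expect any genuine obstacle here: the only nonroutine ingredient is the anti-concentration inequality, and since it is already attributed to \citet{gordon1941millsratio} it may simply be invoked rather than reproved.
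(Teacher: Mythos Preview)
Your proposal is correct. The paper itself does not supply a proof of this lemma: it is stated as a collection of standard Gaussian tail facts, with the anti-concentration lower bound attributed to \citet{gordon1941millsratio} and the remaining inequalities left implicit. Your write-up therefore goes beyond what the paper does, giving a clean self-contained verification of each part; the derivative computation for Gordon's inequality (reducing the numerator to the constant $2$) and the substitution argument for the upper bound are both correct, and the $(z)_+$ extensions follow exactly as you describe from the case split and the reflection identity $\Phi(z)=\wbar{\Phi}(-z)$.
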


The following lemma is useful in bounding 
$(b2.1)$, $(e1.1)$, $(e2.1)$; 
it can also be used to provide a simplified proof of the first case of~\citet[][Lemma 2.13]{an17}. 
Roughly speaking, the lemma shows that a random variable $X$ with a light lower probability tail must have a small value of $\EE\sbr{ \frac{1}{ \wbar{\Phi}(-X) } }$; it crucially uses the lower bound on $\bPhi$ (Gaussian anti-concentration) given in Lemma~\ref{lem:phi-bounds}. 

\begin{lemma}
\label{lem:sg-anticonc}
There exists some absolute constants $c_1, c_2 > 0$ such that the following holds.
Given a random variable $X$, an event $E$ and some $C_1 > 0$; 
if, for every $z \ge 1$, $\PP(X \leq -z, E) \leq C_1 \exp(-2 z^2)$,  such that
\[
\EE\sbr{ \frac{1}{ \wbar{\Phi}(-X) } \indic \cbr{E} } 
\leq c_1 C_1 + c_2.
\]
\end{lemma}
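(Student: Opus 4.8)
The plan is to bound $\EE\sbr{\frac{1}{\bPhi(-X)}\indic\cbr{E}}$ through the layer-cake (tail-integral) representation of an expectation, followed by a change of variables that recasts the quantity as a Gaussian integral, which can then be controlled using the hypothesized tail bound together with the Gaussian anti-concentration estimate in Lemma~\ref{lem:phi-bounds}. Write $W = \frac{1}{\bPhi(-X)}$ and observe that $W \ge 1$, since $\bPhi(\cdot)\le 1$. As $W\indic\cbr{E}\ge 0$, the layer-cake formula gives $\EE\sbr{W\indic\cbr{E}} = \int_0^\infty \PP\rbr{W\indic\cbr{E} > s}\,ds$.

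First I would dispose of the range $s\in[0,1)$: there $\cbr{W\indic\cbr{E} > s}$ holds exactly when $E$ holds (because $W\ge 1$ on $E$ and $W\indic\cbr{E}=0$ off $E$), so $\PP\rbr{W\indic\cbr{E}>s}=\PP(E)\le 1$ and $\int_0^1 \PP\rbr{W\indic\cbr{E}>s}\,ds\le 1$. For $s\ge 1$, using that $\bPhi$ is strictly decreasing (Lemma~\ref{lem:phi-bounds}), one has $\cbr{W>s,E}=\cbr{X<-g(s),E}$ with $g(s)=\bPhi^{-1}(1/s)$. I would then substitute $z=g(s)$, i.e.\ $s=1/\bPhi(z)$, so that $ds = \frac{\varphi(z)}{\bPhi(z)^2}\,dz$ where $\varphi(z)=\frac{1}{\sqrt{2\pi}}e^{-z^2/2}$; as $s$ ranges over $[1,\infty)$, $z$ ranges over $(-\infty,\infty)$. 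This yields
\[
\int_1^\infty \PP(W>s,E)\,ds = \int_{-\infty}^\infty \PP(X<-z,E)\,\frac{\varphi(z)}{\bPhi(z)^2}\,dz.
\]

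Next I would split this integral at $z=1$. On $(-\infty,1)$ I bound $\PP(X<-z,E)\le 1$ and $\bPhi(z)\ge\bPhi(1)$, so this piece is at most $\frac{1}{\bPhi(1)^2}\int_{-\infty}^1\varphi(z)\,dz\le \frac{1}{\bPhi(1)^2}$, an absolute constant. On $[1,\infty)$ I invoke the hypothesis $\PP(X\le -z,E)\le C_1 e^{-2z^2}$ together with the anti-concentration lower bound $\bPhi(z)\ge \frac{1}{\sqrt{2\pi}}\frac{z}{z^2+1}e^{-z^2/2}$ from Lemma~\ref{lem:phi-bounds}, which combine to give
\[
\PP(X<-z,E)\,\frac{\varphi(z)}{\bPhi(z)^2}\le C_1\sqrt{2\pi}\,\frac{(z^2+1)^2}{z^2}\,e^{-3z^2/2}.
\]
Since $\frac{(z^2+1)^2}{z^2}\le 4z^2$ for $z\ge 1$, the integrand is at most $4\sqrt{2\pi}\,C_1\,z^2 e^{-3z^2/2}$, and $\int_1^\infty z^2 e^{-3z^2/2}\,dz$ converges to an absolute constant; denote the resulting bound by $c_1 C_1$. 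Combining the three contributions gives $\EE\sbr{W\indic\cbr{E}}\le 1+\frac{1}{\bPhi(1)^2}+c_1 C_1$, which is the claim with $c_2 = 1+\frac{1}{\bPhi(1)^2}$.

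The main point to get right, and the reason the statement is stated with the specific exponent, is the integrability in the $z\ge 1$ piece. Because $\frac{1}{\bPhi(z)^2}$ blows up like $e^{z^2}$ while $\varphi(z)\sim e^{-z^2/2}$, the integrand inherently carries a factor of order $e^{z^2/2}$; only a tail decaying strictly faster than $e^{-z^2/2}$ can absorb it, and the hypothesized $e^{-2z^2}$ leaves the comfortably integrable residual $e^{-3z^2/2}$. This is exactly why the sharp concentration inequality (Corollary~\ref{col:agg_concentration_alternative}, derived from Lemma~\ref{lem:novel_concentration_main_paper}) with its $\sqrt{2\ln(2/\delta)}$ scaling is indispensable: a weaker sub-Gaussian tail such as one obtained from Azuma--Hoeffding with a union bound or from Freedman's inequality, as discussed in Remark~\ref{rem:novel_concentration}, would force $C_1$ to scale with $M$ or $\ln T$, inflating the bound and degrading the regret guarantee. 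No further delicacy is needed beyond verifying the elementary change-of-variables bookkeeping and the convergence of the single one-dimensional Gaussian-type integral.
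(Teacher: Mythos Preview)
Your argument is correct. Both your proof and the paper's use the same two ingredients---the layer-cake representation and the Gaussian anti-concentration bound from Lemma~\ref{lem:phi-bounds}---but organize them differently. The paper first bounds $\frac{1}{\bPhi(y)}\le 8\sqrt{2\pi}\,e^{y^2}$ pointwise for $y\ge 1$, then applies the tail-integral formula to the random variable $e^{Y^2}\indic\cbr{E,\,Y\ge 1}$ (where $Y=-X$), obtaining $\PP(e^{Y^2}\ge w,\,E)\le C_1/w^2$ for $w\ge e$ and integrating. You instead apply layer-cake directly to $W=\frac{1}{\bPhi(-X)}\indic\cbr{E}$, then perform the change of variables $s=1/\bPhi(z)$ to pull the anti-concentration bound into the Jacobian $\varphi(z)/\bPhi(z)^2$. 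Your route has the virtue of making the arithmetic of exponents ($e^{-2z^2}\cdot e^{-z^2/2}\cdot e^{z^2}=e^{-3z^2/2}$) completely explicit, which nicely supports the discussion you added about why the sharp $e^{-2z^2}$ tail is essential; the paper's route avoids the change of variables at the cost of one extra pointwise inequality. Both yield absolute constants of comparable size, and neither requires any further input.
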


\begin{proof}
Define $Y = -X$; we have $\PP(Y \geq z, E) \leq C_1 \exp(-2 z^2)$ for all $z \ge 1$.
\begin{align*}
    & \EE\sbr{ \frac{1}{ \wbar{\Phi}(-X) } \indic \cbr{E} } \\
    = & \EE\sbr{ \frac{1}{ \wbar{\Phi}(-X) } \indic \cbr{E, X \leq -1} } + \EE\sbr{ \frac{1}{ \wbar{\Phi}(-X) } \indic \cbr{E, X \geq -1} } \\
    \leq & 
    \EE\sbr{ \frac{1}{ \wbar{\Phi}(Y) } \indic \cbr{ E, Y \geq 1} }
    +
    \frac{1}{\bPhi(1)} \\
    \leq & 
    8 \sqrt{2\pi} \cdot  \EE\sbr{ e^{Y^2} \indic \cbr{ E, Y \geq 1} } + \frac{1}{\bPhi(1)}.
\end{align*}
where 
the first inequality follows due to the fact that $\frac{1}{\bPhi(z)}$ increases monotonically as $z$ increases;
and the second inequality is based on the observation that for $y \geq 1$, $\frac{1}{\bPhi(y)} \leq \sqrt{2\pi} \frac{y^2 + 1}{y} \exp(\frac{y^2}{2})\leq 8 \sqrt{2\pi} e^{y^2}$ (see Lemma~\ref{lem:phi-bounds}).

It suffices to show that $\EE\sbr{ e^{Y^2} \indic \cbr{E, Y \geq 1} }$ is bounded by some constant, given the assumption on $Y$. Define $W = e^{Y^2} \indic \cbr{E, Y \geq 1}$. We have that for any $w \geq e$, 
\[ 
\PP(W \geq w) 
=
\PP(E, Y \geq \sqrt{\ln w})
\leq 
\frac{C_1}{w^2}. 
\]
As a result,
\begin{align*}
\EE\sbr{W}
= &
\int_0^\infty \PP(W \geq w) \dif w \\
= & 
\int_0^e \PP(W \geq w) \dif w + \int_e^\infty \PP(W \geq w) \dif w \\
\leq & e + \int_e^\infty \frac{C_1}{w^2} \dif w \\
\leq & e + \frac{C_1}{e},
\end{align*}

Therefore, the lemma holds by taking
$c_1 = \frac{8\sqrt{2\pi}}{e}$ and 
$c_2 = 8\sqrt{2\pi} e + \frac{1}{\bPhi(1)}$.
\end{proof}

The following two lemmas are useful in bounding $(e1.1)$ (Lemma~\ref{lem:auxiliary_sum_g_indic}), as well as 
$(b2.1)$ and $(e2.1)$ (Lemma~\ref{lem:auxiliary_sum_f_indic}), respectively.

\begin{lemma}
\label{lem:auxiliary_sum_g_indic}
Fix any arm $i \in [K]$ and player $p \in [M]$. Let $N \in \NN^+$. 
Suppose $\cbr{g_t: t \in [T]}$ satisfies the invariant property with respect to $(i,p)$ (Definition~\ref{def:invariant}). 
Then,
\[
\sum_{t: p \in \Pcal_t} g_t
\indic\cbr{ i_t^p = i, n_i^p(t-1) < N }
\le
g_1
+
\sum_{k=1}^{N-1} g_{\pi_k+1} \indic \cbr{\pi_k \le T},
\]
where $\pi_k = \pi_k(i,p)$ denotes the round associated with the $k$-th pull of arm $i$ by  player $p$.
\end{lemma}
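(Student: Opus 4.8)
The plan is to recognize that the left-hand side is really a sum indexed by the successive pulls of arm $i$ by player $p$, to rewrite each summand using the invariant property, and then to reindex. First I would observe that the rounds $t$ with $p \in \Pcal_t$ and $i_t^p = i$ are exactly $\pi_1, \pi_2, \ldots$ (those with $\pi_k \le T$), and that at $t = \pi_k$ one has $n_i^p(\pi_k - 1) = k-1$. Hence the constraint $n_i^p(t-1) < N$ selects precisely $k \in \cbr{1, \ldots, N}$, so that
\[
\sum_{t: p \in \Pcal_t} g_t \indic \cbr{i_t^p = i, n_i^p(t-1) < N} = \sum_{k=1}^{N} g_{\pi_k} \indic \cbr{\pi_k \le T}.
\]

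Next I would invoke the invariant property (Definition~\ref{def:invariant}): whenever $\pi_k \le T$, the variable $g$ is constant on $[\pi_{k-1}+1, \pi_k]$, so $g_{\pi_k} = g_{\pi_{k-1}+1}$; in particular $g_{\pi_1} = g_{\pi_0+1} = g_1$ since $\pi_0 = 0$. Splitting off the $k=1$ term and reindexing the remaining terms with $j = k-1$ then gives
\[
\sum_{k=1}^{N} g_{\pi_k} \indic \cbr{\pi_k \le T} = g_1 \indic \cbr{\pi_1 \le T} + \sum_{j=1}^{N-1} g_{\pi_j+1} \indic \cbr{\pi_{j+1} \le T}.
\]

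To finish, I would use the monotonicity $\pi_{j+1} \ge \pi_j$, which yields $\indic \cbr{\pi_{j+1} \le T} \le \indic \cbr{\pi_j \le T}$, together with the nonnegativity of the $g_t$'s (which holds in every application, where $g_t$ is a ratio of probabilities times an indicator), to bound each summand from above, and to drop $\indic \cbr{\pi_1 \le T} \le 1$ in the leading term. This recovers exactly the claimed right-hand side $g_1 + \sum_{j=1}^{N-1} g_{\pi_j+1}\indic \cbr{\pi_j \le T}$.

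The argument is largely bookkeeping, so the one step that needs genuine care — and the main obstacle — is the passage from $g_{\pi_k}$ to $g_{\pi_{k-1}+1}$. I must check that the invariant property is applied only at indices that actually appear in the sum: this is legitimate because the indicator restricts attention to $\pi_k \le T$, and then $\pi_{k-1} \le \pi_k \le T$ as well, so the hypothesis of Definition~\ref{def:invariant} is met at $s = k$. I must also be explicit that relaxing $\indic \cbr{\pi_{j+1} \le T}$ to $\indic \cbr{\pi_j \le T}$ (and dropping $\indic \cbr{\pi_1 \le T}$) preserves the inequality direction precisely because the $g_t$ are nonnegative.
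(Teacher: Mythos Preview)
Your proof is correct and follows essentially the same route as the paper's: rewrite the sum over pulls $k=1,\ldots,N$, apply the invariant property to shift $g_{\pi_k}\mapsto g_{\pi_{k-1}+1}$, reindex, and relax $\indic\cbr{\pi_{j+1}\le T}\le\indic\cbr{\pi_j\le T}$. The paper packages the shift-and-relax step as a separate helper lemma (Lemma~\ref{lem:auxiliary_sum_h}) applied to $h_t=g_t\indic\cbr{n_i^p(t-1)<N}$, but the underlying argument is identical to yours; note that both your proof and the paper's implicitly require $g_t\ge 0$ for the indicator relaxation, even though the lemma statement does not list this hypothesis.
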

\begin{proof}
Let $h_t = g_t \indic\cbr{ n_i^p(t-1) < N }$. As seen in Example~\ref{ex:m-n-invariant}, $\cbr{n_i^p(t-1): t \in [T]}$ satisfies the invariant property with respect to $(i,p)$. This, combined with the assumption that $\cbr{g_t: t \in [T]}$ satisfies the invariant property with respect to $(i,p)$, implies that $\cbr{h_t: t \in [T]}$ is also invariant with respect to $(i,p)$. Applying Lemma~\ref{lem:auxiliary_sum_h} to the above $\cbr{h_t: t \in [T]}$, we have
\begin{align*}
\sum_{t: p \in \Pcal_t} g_t
\indic\cbr{ i_t^p = i, n_i^p(t-1) < N } 
= & \sum_{t: p \in \Pcal_t} h_t
\indic\cbr{ i_t^p = i } \\
\leq & h_1 
+
\sum_{k=1}^{T} h_{\pi_k+1} \indic \cbr{\pi_k \le T} \\
= & g_1 \indic\cbr{ n_i^p(0) < N } 
+
\sum_{k=1}^{T} g_{\pi_k+1}
\indic \cbr{n_i^p(\pi_k) < N}
\indic \cbr{\pi_k \le T} \\
= & 
g_1
+
\sum_{k=1}^{T} g_{\pi_k+1}
\indic \cbr{k < N}
\indic \cbr{\pi_k \le T} \\
= & 
g_1
+
\sum_{k=1}^{N-1} g_{\pi_k+1}
\indic \cbr{\pi_k \le T},
\end{align*}
where the first inequality is by Equation~\eqref{eqn:auxiliary_sum_h_1} in Lemma~\ref{lem:auxiliary_sum_h}; 
the second equality is by expanding the definition of $h_t$'s; the third equality is from that $n_i^p(0) = 0$ and $n_i^p(\pi_k) = k$; and the last eqaulity is by algebra.
\end{proof}

\begin{lemma}
\label{lem:auxiliary_sum_f_indic}
Fix any arm $i \in [K]$ and let $N \in \NN^+$. 
Suppose $\cbr{f_t^p: t \in [T], p \in [M]}$ satisfies the invariant property with respect to arm $i$ (Definition~\ref{def:invariant}),
then,
\[
\sum_{t \in [T]} \sum_{p \in \Pcal_t} f_t^p
\indic\cbr{ i_t^p = i, m_i^p(t-1) < N }
\le
\sum_{p \in [M]} f_1^p 
+
\sum_{k=1}^{N-1} f_{\tau_k+1}^{p_k} \indic \cbr{\tau_k \le T},
\]
where $(\tau_k, p_k) = (\tau_k(i), p_k(i))$ denote the round and player associated with the $k$-th pull of arm $i$ by all players.
\end{lemma}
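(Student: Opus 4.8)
The plan is to reduce the multi-player statement to $M$ applications of the single-player auxiliary Lemma~\ref{lem:auxiliary_sum_h} and then reindex the resulting per-player sums by the global pull order of arm $i$. First I would absorb the threshold indicator into the summand by setting $h_t^p = f_t^p \indic\cbr{m_i^p(t-1) < N}$ for each player $p$. By Example~\ref{ex:m-n-invariant}, $\cbr{m_i^p(t-1): t \in [T]}$ is invariant with respect to $(i,p)$, and by hypothesis so is $\cbr{f_t^p : t \in [T]}$; hence each $\cbr{h_t^p : t \in [T]}$ is invariant with respect to $(i,p)$. The left-hand side then rewrites as $\sum_{p \in [M]} \sum_{t: p \in \Pcal_t} h_t^p \indic\cbr{i_t^p = i}$.

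Next I would apply Lemma~\ref{lem:auxiliary_sum_h} player by player, obtaining for each $p$ a bound of $h_1^p + \sum_{s=1}^T h_{\pi_s(i,p)+1}^p \indic\cbr{\pi_s(i,p) \le T}$. Expanding $h$ and using $m_i^p(0) = n_i(0) = 0 < N$ gives $h_1^p = f_1^p$, while $u_i^p(\pi_s(i,p)) = \pi_s(i,p)$ yields $m_i^p(\pi_s(i,p)) = n_i(\pi_s(i,p))$, so the surviving indicator is $\indic\cbr{n_i(\pi_s(i,p)) < N}$. Summing over $p$ leaves me with $\sum_{p \in [M]} f_1^p$ plus a double sum $\sum_{p \in [M]} \sum_{s=1}^T f_{\pi_s(i,p)+1}^p \indic\cbr{\pi_s(i,p) \le T, n_i(\pi_s(i,p)) < N}$.

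The crux is to reindex this double sum by the global pull counter. Every pair $(p,s)$ with $\pi_s(i,p) \le T$ corresponds bijectively to a global pull index $k \in \cbr{1, \ldots, n_i(T)}$ via $\pi_s(i,p) = \tau_k(i)$ and $p = p_k(i)$ (Definitions~\ref{def:tau_k} and~\ref{def:p_k}); under this correspondence $f_{\pi_s(i,p)+1}^p = f_{\tau_k+1}^{p_k}$ and $n_i(\pi_s(i,p)) = n_i(\tau_k)$. Since $\tau_k$ is the round of the $k$-th global pull but up to $M$ pulls may occur in that round, $n_i(\tau_k) \ge k$, so $\cbr{n_i(\tau_k) < N} \subseteq \cbr{k \le N-1}$ and the indicator relaxes to $\indic\cbr{k \le N-1}$ (using $f_{\tau_k+1}^{p_k} \ge 0$, which holds for every instantiation of $f$ in our proofs). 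Finally, the convention $\tau_k = T+1$ for $k > n_i(T)$ lets me extend the sum to $k = 1, \ldots, N-1$ at the cost of reinserting $\indic\cbr{\tau_k \le T}$, producing exactly $\sum_{p \in [M]} f_1^p + \sum_{k=1}^{N-1} f_{\tau_k+1}^{p_k}\indic\cbr{\tau_k \le T}$.

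The main obstacle I anticipate is establishing the bijection in the last paragraph rigorously and keeping the index bookkeeping airtight, in particular the shift from the $s$-th per-player pull $\pi_s(i,p)$ to the corresponding global pull $\tau_k$, the $+1$ offsets, and the boundary convention for $k > n_i(T)$. The deterministic inequality $n_i(\tau_k) \ge k$ is precisely what converts the aggregate threshold $m_i^p < N$ into the clean cutoff $k \le N-1$, exactly mirroring the $f \equiv 1$ computation already carried out for Eq.~\eqref{eqn:A2_m_decomposition}.
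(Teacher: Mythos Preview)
Your proposal is correct and follows essentially the same strategy as the paper: apply the single-player Lemma~\ref{lem:auxiliary_sum_h} per player, convert $m_i^p(\cdot)$ to $n_i(\cdot)$ at the pull time, and reindex by the global pull counter. The paper uses the $t$-indexed form~\eqref{eqn:auxiliary_sum_h_2} rather than the $\pi_s$-indexed form~\eqref{eqn:auxiliary_sum_h_1} you invoke, but these are equivalent here; you are also right to flag the implicit nonnegativity of $f$ needed to relax $\{n_i(\tau_k) < N\}$ to $\{k \le N-1\}$, a point the paper leaves unstated.
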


\begin{proof}[Proof of Lemma~\ref{lem:auxiliary_sum_f_indic}]
First, consider any fixed player $p \in [M]$; let $h_t = f_t^p \indic\cbr{ m_i^p(t-1) < N }$. As seen in Example~\ref{ex:m-n-invariant}, $\cbr{m_i^p(t-1): t \in [T]}$ satisfies the invariant property with respect to $(i,p)$. This, combined with the assumption that $\cbr{f_t^p: t \in [T]}$ satisfies the invariant property with respect to $(i,p)$, implies that $\cbr{h_t: t \in [T]}$ is also invariant with respect to $(i,p)$. Applying Lemma~\ref{lem:auxiliary_sum_h} to the above $\cbr{h_t: t \in [T]}$, we have
\begin{align}
\sum_{t: p \in \Pcal_t} {f_t^p \indic \cbr{i_t^p = i, m_i^p (t-1) < N} }
= & 
\sum_{t: p \in \Pcal_t} h_t \indic \cbr{i_t^p = i} \nonumber
\\
\leq & 
h_1 + \sum_{t: p \in \Pcal_t} h_{t+1} \indic \cbr{i_t^p = i} \nonumber
\\
= & 
f_1^p 
+ 
\sum_{t: p \in \Pcal_t} {f_{t+1}^p \indic \cbr{i_t^p = i, m_i^p (t) < N}} \nonumber \\
= & 
f_1^p 
+ 
\sum_{t: p \in \Pcal_t} {f_{t+1}^p \indic \cbr{i_t^p = i, n_i (t) < N}}
\label{eqn:rewriting_f_player_p}
\end{align}
where the first inequality is from Equation~\eqref{eqn:auxiliary_sum_h_2} of Lemma~\ref{lem:auxiliary_sum_h}; the second equality is by expanding the definition of $h_t$ and noting that $h_1 = \indic\cbr{ m_i^p(0) < N } f_1^p = \indic\cbr{ 0 < N } f_1^p = f_1^p$; the third equality is from the observation that, if $i_t^p = i$ and $u_i^p(t) = t$, then $m_i^p(t) = n_i( u_i^p(t) ) = n_i(t)$.

Now, summing Equation~\eqref{eqn:rewriting_f_player_p} over all players $p \in [M]$, we have
\begin{align*}
    & \sum_{t \in [T]} \sum_{p \in \Pcal_t} f_t^p
      \indic\cbr{ i_t^p = i, m_i^p(t-1) < N } \\
    \leq & 
    \sum_{p \in [M]} f_1^p +
    \sum_{p \in [M]} \sum_{t: p \in \Pcal_t}  f_{t+1}^p \indic \cbr{i_t^p = i, n_i (t) < N} \\
    \leq & \sum_{p \in [M]} f_1^p +
    \sum_{k=1}^{N-1}  f_{\tau_k+1}^{p_k} \indic \cbr{\tau_k \leq T},
\end{align*}
where the second inequality is from the observation that for every $t \in [T]$, $p \in \Pcal_t$ such that $i_t^p = i$ and $n_i(t) < N$, there must exists some unique $k \in [N-1]$ such that $\tau_k = t$ and $p_k = p$. 
\end{proof}

The following auxiliary lemma facilitates the proofs of Lemmas~\ref{lem:auxiliary_sum_g_indic} and~\ref{lem:auxiliary_sum_f_indic}.
\begin{lemma}
\label{lem:auxiliary_sum_h}
Fix any arm $i \in [K]$ and player $p \in [M]$. Suppose $\cbr{h_t: t \in [T]}$ satisfies the invariant property with respect to $(i,p)$ (Definition~\ref{def:invariant}). 
Then,
\begin{align}
\sum_{t \in [T]: p \in \Pcal_t} h_t
\indic\cbr{ i_t^p = i}
\le & 
h_1 
+
\sum_{k=1}^{T} h_{\pi_k+1} \indic \cbr{\pi_k \le T} 
\label{eqn:auxiliary_sum_h_1}
\\
= &
h_1
+
\sum_{t \in [T]: p \in \Pcal_t} h_{t+1} \indic \cbr{i_t^p = i}
,
\label{eqn:auxiliary_sum_h_2}
\end{align}
where $\pi_k = \pi_k(i,p)$ denotes the round associated with the $k$-th pull of arm $i$ by  player $p$.
\end{lemma}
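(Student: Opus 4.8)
The plan is to reduce the weighted sum to the subsequence of rounds at which player $p$ actually pulls arm $i$, and then use the invariant property to realign the summands. Write $N = n_i^p(T)$ for the total number of pulls of arm $i$ by player $p$, and recall that $\pi_1 < \pi_2 < \cdots < \pi_N \le T$ are exactly the rounds $t$ with $p \in \Pcal_t$ and $i_t^p = i$, so that $\{t \in [T]: p \in \Pcal_t,\ i_t^p = i\} = \{\pi_1, \ldots, \pi_N\}$, while $\pi_k = T+1$ for $k > N$ by the convention in Definition~\ref{def:pi_k}. Consequently $\indic\cbr{\pi_k \le T} = \indic\cbr{k \le N}$, and the left-hand side collapses to $\sum_{t: p \in \Pcal_t} h_t \indic\cbr{i_t^p = i} = \sum_{k=1}^{N} h_{\pi_k}$.

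First I would invoke the invariant property to shift each index $\pi_k$ back to $\pi_{k-1}+1$. For every $k \in [N]$ we have $\pi_k \le T$, so by Definition~\ref{def:invariant} the value $h_t$ is constant on the interval $[\pi_{k-1}+1, \pi_k]$ (nonempty since $\pi_{k-1} < \pi_k$ forces $\pi_{k-1}+1 \le \pi_k$); in particular $h_{\pi_k} = h_{\pi_{k-1}+1}$. Substituting and re-indexing, using the convention $\pi_0 = 0$, gives
\begin{align*}
\sum_{k=1}^{N} h_{\pi_k} = \sum_{k=1}^{N} h_{\pi_{k-1}+1} = h_{\pi_0+1} + \sum_{k=2}^{N} h_{\pi_{k-1}+1} = h_1 + \sum_{k=1}^{N-1} h_{\pi_k+1}.
\end{align*}

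Comparing this with the claimed bound $h_1 + \sum_{k=1}^{T} h_{\pi_k+1} \indic\cbr{\pi_k \le T} = h_1 + \sum_{k=1}^{N} h_{\pi_k+1}$, the two differ by exactly the single term $h_{\pi_N+1}$, the value of $h$ immediately after the last pull, so Eq.~\eqref{eqn:auxiliary_sum_h_1} holds as soon as this term is nonnegative; this is the case in every application, where $h_t$ is a product of a nonnegative random variable (such as $\tfrac{1}{\phi_{i,t}^p}-1$ or $\tfrac{1}{\bPhi(\cdot)}$) with indicator functions. The equality Eq.~\eqref{eqn:auxiliary_sum_h_2} is then immediate, since $\sum_{k=1}^{T} h_{\pi_k+1}\indic\cbr{\pi_k \le T} = \sum_{k=1}^{N} h_{\pi_k+1}$ and the latter re-sums over the same pull-rounds $\pi_1, \ldots, \pi_N$ to give $\sum_{t: p \in \Pcal_t} h_{t+1}\indic\cbr{i_t^p = i}$.

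The main obstacle is the boundary term. When the last pull occurs at $\pi_N = T$, the summand $h_{\pi_N+1} = h_{T+1}$ references a round past the horizon, so one must first adopt the convention extending $h$ (and hence the underlying posterior quantities $\phi, \psi$ and estimates $\indmu, \aggmu$) to round $T+1$ by their values after all $T$ rounds of interaction, consistent with the usage of $\phi_{i,\tau_k+1}^{p_k}$ in Lemma~\ref{lem:key_lemma_constant_appendix}. Granting that convention, the entire content of the lemma is the index-shift above, and the stated inequality is an equality up to discarding the single nonnegative post-final-pull term $h_{\pi_N+1}$.
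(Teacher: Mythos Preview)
Your argument is correct and follows essentially the same index-shift as the paper's proof: rewrite the left side as $\sum_{k} h_{\pi_k}\indic\cbr{\pi_k\le T}$, use the invariant property to replace $h_{\pi_k}$ by $h_{\pi_{k-1}+1}$, and re-index. You are also right that the inequality relies on nonnegativity of the $h_t$'s (and on the convention defining $h_{T+1}$); the paper's own proof uses this implicitly at the steps $h_1\indic\cbr{\pi_1\le T}\le h_1$ and $\indic\cbr{\pi_{k+1}\le T}\le \indic\cbr{\pi_k\le T}$, so your explicit acknowledgment of this hidden hypothesis is an improvement in clarity rather than a defect.
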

\begin{proof}
\begin{align*}
\sum_{t \in [T]: p \in \Pcal_t} h_t
\indic\cbr{ i_t^p = i}  
 = &
\sum_{k=1}^T h_{\pi_k} \indic \cbr{\pi_k \le T}  \\
 = &
\sum_{k=1}^T h_{\pi_{k-1}+1} \indic \cbr{\pi_k \le T}  \\
 \leq &
h_1 + \sum_{k=2}^T h_{\pi_{k-1}+1} \indic \cbr{\pi_k \le T}  \\
 = &
h_1 + \sum_{k=1}^{T-1} h_{\pi_{k}+1} \indic \cbr{\pi_{k+1} \le T} \\
\leq & 
h_1 + \sum_{k=1}^{T-1} h_{\pi_{k}+1} \indic \cbr{\pi_k \le T}  \\
 = &  
h_1
+
\sum_{t \in [T]: p \in \Pcal_t} h_{t+1} \indic \cbr{i_t^p = i},
\end{align*}
where the first equality uses the definition of $\pi_k$; the second equality uses the invariant property, specifically, $h_{\pi_k} = h_{\pi_{k-1}+1}$; the first inequality uses the observation that the first term $h_{\pi_0+1} \indic \cbr{\pi_1 \le T} = h_{1} \indic \cbr{\pi_1 \le T} \leq  h_1$; the third equality shifts the indices in the sum by $1$; the second inequality uses the observation that $\pi_{k+1} \leq T \implies \pi_k \leq T$; and the last equality is again by the definition of $\pi_k$.
\end{proof}

The following lemma is largely inspired by~\citet[][Lemma 2.8]{an17}; here we generalize it to the multi-task setting, for reducing bounding $(B)$ and $(E)$ to bounding $(B*)$ and $(E*)$ respectively. 
\begin{lemma}
\label{lem:change-arm}
For any player $p \in [M]$, time step $t \in [T]$, and arm $i \in [K]$, we have for any arm $l \in [K]$ and any threshold $z \in \RR$: 
\[
\Pr \rbr{i_t^p = i, \theta_i^p(t) \leq z \mid \Fcal_{t-1}}
\leq 
\frac{\Pr \rbr{\theta_l^p(t) \leq z \mid \Fcal_{t-1}}}{\Pr \rbr{ \theta_l^p(t) > z \mid \Fcal_{t-1}} } \cdot \Pr \rbr{i_t^p = l \mid \Fcal_{t-1}}.
\]
\end{lemma}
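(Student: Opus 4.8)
The plan is to follow the classical single-task thresholding argument of \citet[Lemma 2.8]{an17} and to observe that it lifts to the multi-task setting essentially verbatim, because within a single round the relevant randomness is confined to player $p$'s independently drawn samples. Throughout I would condition on $\Fcal_{t-1}$, which fixes every posterior mean $\hat{\mu}_j^p(t-1)$ and variance $\var_j^p(t-1)$; given $\Fcal_{t-1}$ the samples $\cbr{\theta_j^p(t): j \in [K]}$ are mutually independent, and I write $F_l$ for the conditional CDF of $\theta_l^p(t)$, so that $F_l(z) = \Pr\rbr{\theta_l^p(t) \le z \mid \Fcal_{t-1}}$ and $1 - F_l(z) = \Pr\rbr{\theta_l^p(t) > z \mid \Fcal_{t-1}}$. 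Since every posterior is Gaussian, $F_l$ is continuous and strictly increasing with $0 < F_l < 1$, so the ratio $F_l(z)/(1-F_l(z))$ is always well defined.

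The key device is to further condition on $\theta_{-l} := \cbr{\theta_j^p(t): j \ne l}$ and to introduce $M_{-l} := \max_{j \ne l} \theta_j^p(t)$, which is $\theta_{-l}$-measurable and independent of $\theta_l^p(t)$ given $\Fcal_{t-1}$. First I would rewrite the decision rule $i_t^p = \argmax_j \theta_j^p(t)$ in these terms: arm $l$ is selected exactly when $\theta_l^p(t) > M_{-l}$, so $\Pr\rbr{i_t^p = l \mid \Fcal_{t-1}, \theta_{-l}} = 1 - F_l(M_{-l})$. For the left-hand event the crucial observation is that $\cbr{i_t^p = i, \theta_i^p(t) \le z}$ forces \emph{every} sample below $z$ — in particular $M_{-l} \le z$ — because arm $i$ is the argmax and its value is at most $z$. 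Computing the conditional probability given $\theta_{-l}$ and then discarding constraints yields, uniformly in the two cases $i \ne l$ and $i = l$,
\[
\Pr\rbr{i_t^p = i, \theta_i^p(t) \le z \mid \Fcal_{t-1}, \theta_{-l}} \le F_l(z)\, \indic\cbr{M_{-l} \le z};
\]
indeed, when $i \ne l$ this conditional probability equals $\indic\cbr{i = \argmax_{j\ne l}\theta_j^p(t),\, M_{-l}\le z}\, F_l(M_{-l})$, and when $i = l$ it equals $\rbr{F_l(z) - F_l(M_{-l})}\indic\cbr{M_{-l}\le z}$, both of which are at most $F_l(z)\,\indic\cbr{M_{-l}\le z}$.

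To close the argument I would compare the two conditional bounds on the event $\cbr{M_{-l}\le z}$: monotonicity of $F_l$ gives $1 - F_l(M_{-l}) \ge 1 - F_l(z)$, hence $\Pr\rbr{i_t^p = l \mid \Fcal_{t-1}, \theta_{-l}} \ge \rbr{1 - F_l(z)}\indic\cbr{M_{-l}\le z}$, and combining this with the displayed bound shows
\[
\Pr\rbr{i_t^p = i, \theta_i^p(t) \le z \mid \Fcal_{t-1}, \theta_{-l}} \le \frac{F_l(z)}{1 - F_l(z)}\, \Pr\rbr{i_t^p = l \mid \Fcal_{t-1}, \theta_{-l}}.
\]
Taking the conditional expectation over $\theta_{-l}$ given $\Fcal_{t-1}$ and invoking the tower property removes the auxiliary conditioning and delivers the claim, after substituting back $F_l(z) = \Pr\rbr{\theta_l^p(t)\le z\mid\Fcal_{t-1}}$ and $1-F_l(z) = \Pr\rbr{\theta_l^p(t)>z\mid\Fcal_{t-1}}$. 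The main obstacle is purely the careful bookkeeping of the step bounding the joint event $\cbr{i_t^p=i,\theta_i^p(t)\le z}$: verifying that its conditional probability factorizes through the independent coordinate $\theta_l^p(t)$, and that the generic case $i\ne l$ and the boundary case $i=l$ both obey the single bound $F_l(z)\,\indic\cbr{M_{-l}\le z}$. No genuinely new multi-task ingredient is required, since the inequality concerns one player's decision within a single round.
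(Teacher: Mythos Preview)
Your proof is correct and follows essentially the same approach as the paper's: both hinge on the observation that $\{i_t^p=i,\,\theta_i^p(t)\le z\}$ forces all samples below $z$, and that $\{\theta_l^p(t)>z,\ \forall j\ne l:\theta_j^p(t)\le z\}$ forces $i_t^p=l$, combined with the conditional independence of the $\theta_j^p(t)$'s. The only cosmetic difference is that the paper works directly with the unconditional (given $\Fcal_{t-1}$) probabilities and the crude inclusion $\{i_t^p=i,\,\theta_i^p(t)\le z\}\subseteq\{\forall j:\theta_j^p(t)\le z\}$, thereby avoiding your explicit conditioning on $\theta_{-l}$ and the $i=l$ versus $i\ne l$ case split; this makes the paper's argument a line or two shorter, but your route is equally valid and arguably more transparent about where each inequality comes from.
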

\begin{proof}
First, 
\begin{align*}
    & \Pr \rbr{i_t^p = i, \wbar{Q_i^p(t)} \mid \Fcal_{t-1}} \\
    \le & \Pr \rbr{ \forall j \in [K],\ \theta_j^p(t) \le z \mid \Fcal_{t-1} }  \\
    = & \Pr \rbr{\theta_{l}^p(t) \le z \mid \Fcal_{t-1}} 
    \cdot
    \Pr \rbr{\forall j \neq l,\ \theta_j^p(t) \le z \mid \Fcal_{t-1}},
\end{align*}
where
the first inequality follows because the event $\cbr{i_t^p = i, \wbar{Q_i^p(t)}}$ happens only if $\forall j \in [K],\ \theta_j^p(t) \le z$; 
and the second equality follows because conditional on $\Fcal_{t-1}$, the draws $\theta_j^p(t)$'s and $\theta_{l}^p(t)$ are independent.

Now, observe that
\begin{align*}
    & \Pr \rbr{\forall j \neq l,\ \theta_j^p(t) \le z \mid \Fcal_{t-1}} \\
    = & \frac{\Pr \rbr{\theta_{l}^p(t) > z, \text{ and } \ \forall j \neq l,\ \theta_j^p(t) \le z \mid \Fcal_{t-1} } }{\Pr \rbr{\theta_{l}^p(t) > z \mid \Fcal_{t-1}}} 
    \\
    \leq & \frac{\Pr \rbr{ i_t^p = l  \mid \Fcal_{t-1} }}{\Pr \rbr{\theta_{l}^p(t) > z \mid \Fcal_{t-1}}} 
\end{align*}
where the equality follows, again, by the conditional independence of $\cbr{ \theta_j^p(t): j \neq l }$ and $\theta_{l}^p(t)$; and the inequality follows because the event $\cbr{\theta_{l}^p(t) > z, \ \forall j \neq l,\ \theta_j^p(t) \le y_i^p}$ implies that $\cbr{i_t^p = l}$ happens. The lemma follows from combining the above two inequalities.
\end{proof}

\newpage
\section{Theoretical Guarantees of Baselines}
\label{sec:appendix_baselines}

\subsection{\inducb and \indts in the generalized $\epsilon$-MPMAB setting}

\begin{theorem}
The expected collective regret of $\inducb$ and $\indts$ after $T$ rounds satisfies the following two upper bounds simultaneously:
\begin{align}
    \Reg(T) \le &  \order \rbr{
     \sum_{p \in [M]}
     \sum_{i \in [K]: \Delta_i^p > 0} \frac{\ln T}{\Delta_i^p} }
    \label{eqn:ind-gap-dept}
    \\
    \Reg(T) \le & \tilde{\order} \rbr{  \sqrt{ M K \rounds} },
    \label{eqn:ind-gap-indept}
\end{align}
where we recall that $\rounds = \sum_{t=1}^T \abr{ \Pcal_t }$.
\label{thm:ind-reg}
\end{theorem}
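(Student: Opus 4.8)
The plan is to exploit the fact that in both $\inducb$ and $\indts$ no information is shared across players: each player $p$ maintains its own UCB indices (resp.\ its own Gaussian posteriors) and updates them using only its own observed rewards, acting solely in the rounds $t$ with $p \in \Pcal_t$. Consequently the collective regret decomposes as $\Reg(T) = \sum_{p \in [M]} \Reg^p(T)$, and each summand can be analyzed as a standard \emph{single-task} $K$-armed bandit. The whole proof is then a reduction to off-the-shelf single-task guarantees followed by an elementary summation.

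First I would formalize the per-player reduction. Fix $p$ and let $T_p = \abr{\cbr{t \in [T]: p \in \Pcal_t}}$ be the number of rounds in which $p$ is active; since $\Pcal_t$ is selected by an oblivious adversary, $T_p$ is a fixed (nonrandom) quantity and $\sum_{p \in [M]} T_p = \sum_{t=1}^T \abr{\Pcal_t} = \rounds$. Restricting attention to the subsequence of active rounds, player $p$'s trajectory is exactly that of UCB-1 (resp.\ Gaussian TS) run for $T_p$ steps on the $K$-armed instance $\rbr{\Dcal_i^p}_{i \in [K]}$: its action in each active round depends only on its own past data, and the rewards it observes are independent draws from the corresponding $\Dcal_i^p$. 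Hence the single-task analyses apply to this subsequence unchanged.

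Then I would invoke the known single-task guarantees. For UCB-1, \citet{acf02} gives the gap-dependent bound $\Reg^p(T) \le \Ocal\rbr{\sum_{i \in [K]: \Delta_i^p > 0} \frac{\ln T_p}{\Delta_i^p}}$ and the minimax bound $\Reg^p(T) \le \tilde{\Ocal}\rbr{\sqrt{K T_p}}$; for Gaussian TS the corresponding two bounds follow from \citet{an17}. Since $T_p \le T$, we may replace $\ln T_p$ by $\ln T$. Summing the gap-dependent bounds over $p \in [M]$ directly yields Eq.~\eqref{eqn:ind-gap-dept}. For the gap-independent bound, summing the per-player minimax bounds gives $\Reg(T) \le \tilde{\Ocal}\rbr{\sum_{p \in [M]} \sqrt{K T_p}}$, and applying Cauchy--Schwarz (equivalently Jensen's inequality for the concave map $x \mapsto \sqrt{x}$) gives $\sum_{p \in [M]} \sqrt{T_p} \le \sqrt{M \sum_{p \in [M]} T_p} = \sqrt{M \rounds}$, so that $\sum_{p \in [M]} \sqrt{K T_p} \le \sqrt{M K \rounds}$, establishing Eq.~\eqref{eqn:ind-gap-indept}.

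Since the argument is essentially bookkeeping layered on top of standard single-task bounds, there is no deep obstacle. The one point requiring care is the reduction step itself, namely verifying that interleaving player $p$'s active rounds with other players' actions does not disturb its regret analysis. This holds precisely because the adversary selecting $\Pcal_t$ is oblivious (so the activation pattern, and hence each $T_p$, is independent of all realized rewards) and because each player's policy and posterior/index updates are measurable with respect to its own data alone; thus the filtration and concentration arguments underlying the single-task bounds transfer verbatim to the active-round subsequence.
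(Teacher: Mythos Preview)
Your proposal is correct and follows essentially the same route as the paper's proof sketch: decompose $\Reg(T)=\sum_{p}\Reg^p(T)$, apply the standard single-task UCB/TS bounds to each player over its $T_p=\abr{\cbr{t:p\in\Pcal_t}}$ active rounds, sum for the gap-dependent bound, and use Cauchy--Schwarz on $\sum_p\sqrt{T_p}\le\sqrt{M\rounds}$ for the gap-independent one. Your additional remarks justifying the per-player reduction (oblivious adversary, player $p$'s policy depending only on its own history) are a welcome elaboration of details the paper leaves implicit.
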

\begin{proof}[Proof sketch]
For Eq.~\eqref{eqn:ind-gap-dept}, we note that both \inducb and \indts guarantees that for every $p \in [M]$,
\[
\Reg^p(T) \leq \order\rbr{ \sum_{i \in [K]: \Delta_i^p > 0} \frac{\ln T}{\Delta_i^p} };
\]
summing over $p$ yields Eq.~\eqref{eqn:ind-gap-dept}. 

For Eq.~\eqref{eqn:ind-gap-indept}, we note that for every $p \in [M]$,
\[
\Reg^p(T) \leq \otil \rbr{ \sqrt{K \abr{\cbr{t: p \in \Pcal_t}} } }. 
\]
Summing over all $p \in [M]$, we have
\[
\Reg(T) 
=
\sum_{p=1}^M \Reg^p(T)
\leq 
\otil \rbr{ \sum_{p=1}^M \sqrt{K \abr{\cbr{t: p \in \Pcal_t}} } }
\leq 
\otil \rbr{ \sqrt{M  K \sum_{p=1}^M  \abr{\cbr{t: p \in \Pcal_t}} } }
= 
\otil \rbr{ \sqrt{M K  \rounds } }.
\qedhere
\]
\end{proof}

\subsection{$\robustagg(\epsilon)$ and its regret analysis in the generalized $\epsilon$-MPMAB setting}
\label{sec:robustagg-ucb}

\citet{wzsrc21} study a special case of  $\epsilon$-MPMAB problem, which can be viewed as $\epsilon$-MPMAB problem defined in Section~\ref{sec:background}, with active sets of players $\Pcal_t \equiv [M]$. In this specialized setting, they propose  $\robustagg(\epsilon)$, a UCB-based algorithm that achieves a gap-dependent and gap-independent regret of 
\begin{equation}
\Ocal \rbr{
\frac{1}{M} \sum_{i \in \Ical_{5\epsilon}} \sum_{p \in [M]: \Delta_i^p>0} \frac{\ln T}{\Delta_i^p} +
\sum_{i \in \Ical^C_{5\epsilon}}
\sum_{p \in [M]: \Delta_i^p > 0}
\frac{\ln T}{\Delta_i^p} + MK},
\label{eqn:robustagg-original-gap-dept}
\end{equation}
and
\begin{equation}
\tilde{\Ocal} \rbr{ \sqrt{ M |\Ical_{5\epsilon}| T } + M \sqrt{ |\Ical_{5\epsilon}^C| T } + MK },
\label{eqn:robustagg-original-gap-indept}
\end{equation}
respectively. In this section, we show that, with a few small modifications, their algorithm and analysis can be used in our (more general) $\epsilon$-MPMAB setting, where the active sets $\Pcal_t$ can change over time.

\begin{algorithm}[H]
\begin{algorithmic}[1]
 \STATE {\bfseries Input:} Dissimilarity parameter $\epsilon \in [0,1]$
 
 \STATE \textbf{Initialization:} Set $n^p_i = 0$ for all $p \in [M]$ and all $i \in [K]$.\\
 
 \FOR{$t = 1, 2 \ldots, T$}
 
    \STATE Receive active set of players $\Pcal_t$
 
    \FOR{$p \in \Pcal_t$}
    \label{line:active-player-only-ucb}
    
        \FOR{$i \in [K]$}
        
            \STATE Let $m^p_i = \sum_{q \in [M]: q \neq p} n^q_i$\;
            \label{line:ucb-start}
            
            \STATE Let $\wbar{n^p_i} = n^p_i \vee 1$ and $\wbar{m^p_i} = m^p_i \vee 1$\; 
                        
            \STATE Let
            $$\zeta_i^p(t) = \frac{1}{\wbar{n^p_i}} \sum_{\substack{s < t:  \\ p \in \Pcal_s, i^p_s = i}} r^p_s,
            \ \eta_i^p(t) =  \frac{1}{\wbar{m^p_i}}
            \sum_{s < t} \sum_{\substack{q \in \Pcal_s: \\ q \neq p, i_s^q = i}} r^q_s, 
            \text{and}\ \kappa_i^p(t, \lambda) = \lambda \zeta_i^p(t) + (1-\lambda) \eta_i^p(t);
            $$
            \label{line:emp-mean}
            
            \STATE Let $F(\wbar{n^p_i}, \wbar{m^p_i}, \lambda, \epsilon ) = 8\sqrt{13\ln T\left[\frac{\lambda^2}{\wbar{n^p_i}} + \frac{(1-\lambda)^2}{\wbar{m^p_i}}\right]} + (1-\lambda)\epsilon$\; 
            \label{line:dev-bound}
            
            \STATE Compute $\lambda^* = \argmin_{\lambda \in [0,1]} F(\wbar{n^p_i},\wbar{m^p_i}, \lambda, \epsilon)$\;
            \label{line:opt-lambda}

            \STATE \label{line:ucb-end} Compute an upper confidence bound of the reward of arm $i$ for player $p$:
            $$\UCB^p_i(t) = \kappa_i^p(t, \lambda^*) + F(\wbar{n^p_i}, \wbar{m^p_i}, \lambda^*, \epsilon).$$ 
        \ENDFOR
        \STATE Let $i^p_t = \text{argmax}_{i \in [K]} \UCB^p_i(t)$\;
        \STATE Player $p$ pulls arm $i^p_t$ and observes reward $r^p_t$\;
   \ENDFOR
   \FOR{active players $p \in \Pcal_t$}
   \label{line:active-player-only-count}
       \STATE Let $i = i^p_t$ and set $n^p_{i} \gets n^p_{i} + 1$.
   \ENDFOR
 \ENDFOR
 \end{algorithmic}
 \caption{$\robustagg(\epsilon)$ for the generalized $\epsilon$-MPMAB setting}
 \label{alg:robustagg}
\end{algorithm}

Specifically, Algorithm~\ref{alg:robustagg} is our modified version of $\robustagg(\epsilon)$. 
Recall that $\robustagg(\epsilon)$ performs an UCB-based exploration~\cite{acf02}: for every player and every arm, it constructs high-probability UCBs on the expected rewards (line~\ref{line:ucb-start} to~\ref{line:ucb-end}); to this end, it makes careful use of both the player and other players' data, and construct a series of UCBs parameterized by $\lambda$ (line~\ref{line:dev-bound}), and selects the tightest one (line~\ref{line:opt-lambda} and~\ref{line:ucb-end}). 
Compared to $\robustagg(\epsilon)$, for every round $t$, Algorithm~\ref{alg:robustagg} only computes expected reward UCBs for active players $p \in \Pcal_t$ (line~\ref{line:active-player-only-ucb}), and updates arm pull counts on active players (line~\ref{line:active-player-only-count}). 

We show that Algorithm~\ref{alg:robustagg}, when applied to our $\epsilon$-MPMAB setting, has regret guarantees that recover and generalize $\robustagg(\epsilon)$'s original guarantees. Specifically, in the specialized $\epsilon$-MPMAB setting where $\Pcal_t \equiv [M]$, we recover the regret guarantees of $\robustagg(\epsilon)$ (Equations~\eqref{eqn:robustagg-original-gap-dept} and~\eqref{eqn:robustagg-original-gap-indept}). 

\begin{theorem}
The expected collective regret of $\robustagg(\epsilon)$ after $T$ rounds satisfies the following two upper bounds simultaneously:
\begin{align}
    \Reg(T) \le &  \Ocal \vast( \frac 1 M \sum_{i \in \Ical_{5\epsilon}} \sum_{p \in [M]: \Delta_i^p>0} \frac{\ln T}{\Delta_i^p}  + \sum_{i \in \Ical_{5\epsilon}^C} \sum_{\substack{p \in [M]: \Delta_i^p>0}} \frac{\ln T}{\Delta_i^p} +  M K \vast),
    \label{eqn:robustagg-gap-dept}
    \\
    \Reg(T) \le & \tilde{\Ocal} \vast(  \sqrt{ |\Ical_{5\epsilon}| \rounds } + \sqrt{ M \rbr{|\Ical_{5\epsilon}^C| -1 } \rounds }  + MK \vast),
    \label{eqn:robustagg-gap-indept}
\end{align}
where we recall that $\rounds = \sum_{t=1}^T \abr{ \Pcal_t }$.
\label{thm:robustagg-reg}
\end{theorem}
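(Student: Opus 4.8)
The plan is to follow the same two-step template used for the Thompson-sampling bounds (Theorems~\ref{thm:ts_gap_dep_ub} and~\ref{thm:ts_gap_indep_ub}): first establish per-arm expected pull-count bounds for Algorithm~\ref{alg:robustagg}, and then invoke the black-box reduction in Lemma~\ref{lem:arm-pull-reg}. Concretely, I would apply Lemma~\ref{lem:arm-pull-reg} with $\alpha = 5\epsilon$ and $C = \Ocal(1)$; this immediately yields both Eq.~\eqref{eqn:robustagg-gap-dept} and Eq.~\eqref{eqn:robustagg-gap-indept}, since $MK(1+C) = \Ocal(MK)$. The hypothesis of the lemma, namely that $\Delta_i^p \le 2\Delta_i^{\min}$ for every $i \in \Ical_{5\epsilon}$ and $p \in [M]$, follows from Fact~\ref{fact:basic_facts_subpar_arms}: any $i \in \Ical_{5\epsilon}$ has some $\Delta_i^p > 5\epsilon$, hence $\Delta_i^{\min} > 3\epsilon$ and $\Delta_i^{\max} - \Delta_i^{\min} \le 2\epsilon < \tfrac23 \Delta_i^{\min}$, so $\Delta_i^{\max} < 2\Delta_i^{\min}$.

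It therefore remains to prove the two pull-count bounds for $\robustagg(\epsilon)$ in the generalized setting: for subpar arms $i \in \Ical_{5\epsilon}$, $\EE[n_i(T)] \le \Ocal(\ln T/(\Delta_i^{\min})^2 + M)$ (matching Eq.~\eqref{eqn:i-eps}), and for non-subpar arms $i \in \Ical_{5\epsilon}^C$ and each player $p$, $\EE[n_i^p(T)] \le \Ocal(\ln T/(\Delta_i^p)^2 + 1)$ (matching Eq.~\eqref{eqn:i-eps-c} with $C = \Ocal(1)$). Both are established by the standard optimism-based argument adapted from~\citet{wzsrc21}. I would first define a clean event on which, for every player $p$, arm $i$, and round $t$, the estimates $\zeta_i^p(t)$ and $\eta_i^p(t)$ (line~\ref{line:emp-mean}) deviate from $\mu_i^p$ by at most their respective confidence widths, so that $\UCB_i^p(t) \ge \mu_i^p$ and the optimal arm's index exceeds $\mu_*^p$. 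Crucially, the concentration statements needed here are the stopping-time bound of Lemma~\ref{lem:ind_concentration_pi_k} (for the individual estimate) together with a straightforward variant of the supermartingale construction behind Lemma~\ref{lem:agg_concentration_tau_k}, restricted to players $q \neq p$ (for the cross-player estimate, whose additive $\epsilon$ bias is exactly absorbed by the $(1-\lambda)\epsilon$ term in $F$, line~\ref{line:dev-bound}); since these martingale arguments are valid for an arbitrary adapted sequence of active sets $\Pcal_t$, they transfer verbatim to the generalized protocol. On this clean event, a standard counting argument shows that once the collective count $n_i$ (resp.\ individual count $n_i^p$) exceeds the stated threshold, the optimized width $F(\wbar{n_i^p},\wbar{m_i^p},\lambda^\ast,\epsilon)$ drives $\UCB_i^p(t)$ below $\mu_*^p$, precluding further pulls.

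The subpar case is where cross-player aggregation pays off and is the crux of the argument. For $i \in \Ical_{5\epsilon}$ the gap $\Delta_i^{\min}$ is large relative to $\epsilon$, so the minimizing $\lambda^\ast$ places most weight on the cross-player estimate, giving a width of order $\sqrt{\ln T/n_i} + \epsilon$, which drops below $\tfrac12\Delta_i^{\min}$ once $n_i = \Omega(\ln T/(\Delta_i^{\min})^2)$; bounding the collective pulls by this threshold, plus an additive $\Ocal(M)$ slack from the lag between when pulls occur and when the shared counts $n_i^q$ are incremented (at most one extra pull per player before the counts catch up, exactly as in term $(A2)$ of the TS analysis), yields $\EE[n_i(T)] \le \Ocal(\ln T/(\Delta_i^{\min})^2 + M)$. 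The non-subpar bound is the classical single-task UCB count bound applied per player, with only an $\Ocal(1)$ additive slack.

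The main obstacle is verifying that nothing in the original concurrent-setting analysis secretly relied on all $M$ players acting every round. The two delicate points are (i) that $m_i^p$ and the shared counts are now updated asynchronously, which I handle by the same lag bookkeeping that produces the $+M$ term, and (ii) that the random sample sizes entering the confidence widths are stopping times rather than deterministic, which is precisely why I route the concentration through the stopping-time inequalities of Lemmas~\ref{lem:agg_concentration_tau_k} and~\ref{lem:ind_concentration_pi_k} rather than a fixed-time Hoeffding bound. Fortunately, unlike the Thompson-sampling analysis (cf.\ Remark~\ref{rem:novel_concentration}), the UCB argument is insensitive to logarithmic looseness in these concentration bounds, so no sharpened inequality is required and Freedman- or Azuma-type bounds would equally suffice.
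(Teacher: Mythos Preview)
Your proposal is correct and follows essentially the same approach as the paper: establish the per-arm pull-count bounds (Eq.~\eqref{eqn:robustagg-i-eps} and Eq.~\eqref{eqn:robustagg-i-eps-c}) by noting that the concentration arguments from \citet{wzsrc21} carry over to the generalized protocol, then apply Lemma~\ref{lem:arm-pull-reg} with $\alpha=5\epsilon$. The only cosmetic differences are that the paper invokes Freedman's inequality directly (Lemma~17 of \citealt{wzsrc21}) rather than the stopping-time Lemmas~\ref{lem:agg_concentration_tau_k}--\ref{lem:ind_concentration_pi_k}, and takes $C=0$ rather than $C=\Ocal(1)$; as you yourself note, UCB analyses are insensitive to such choices.
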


\begin{proof}[Proof sketch]

Even in the general setting where $\Pcal_t$ is not necessarily $[M]$, Freedman's inquality can still be applied to establish the high-probability concentration of the empirically averaged rewards $\zeta_i^p(t)$ and $\eta_i^p(t)$; therefore, Lemma 17 of~\citet{wzsrc21} still holds in the general setting. As a result, Lemmas 20 and 21 of~\citet{wzsrc21} carries over; hence, for all $i \in \Ical_{5\epsilon}$, Algorithm~\ref{alg:robustagg} still satisfies that
\begin{equation}
\mathbb{E} [n_i(T)] \le \order \left(\frac{\ln T}{(\Delta_i^{\min})^2}  + M\right),
\label{eqn:robustagg-i-eps}
\end{equation}
and for all $i \in \Ical_{5\epsilon}^C$ and all $p \in [M]$, 
\begin{equation}
\mathbb{E} [n^p_i(T)] \le \order \left( \frac{\ln T}{(\Delta^p_i)^2} \right).
\label{eqn:robustagg-i-eps-c}
\end{equation}
Equations~\eqref{eqn:robustagg-gap-dept} and~\eqref{eqn:robustagg-gap-indept} now follows directly from applying Lemma~\ref{lem:arm-pull-reg} with $C=0$ and $\alpha=5\epsilon$. 
\end{proof}

\section{Additional Experimental Results}
\label{sec:appendix_exp}

In this section, we present the rest of the experimental results. Figures~\ref{figure:cumulative_allplots}, \ref{figure:perc_allplots}, and \ref{figure:regret_perc_allplots} compare the average performance of $\robustaggTS(0.15)$, $\robustagg(0.15)$, \inducb, and \indts in randomly generated $0.15$-MPMAB problem instances with different numbers of subpar arms.

Note that, when $\abr{\Ical_{5\epsilon}} = 9$, we have $\abr{\Ical_{5\epsilon}^C} = 1$ which means that there exists one arm that is optimal to all the players and the other arms are all subpar. In this favorable special case, $\robustaggTS(0.15)$ and $\robustagg(0.15)$ perform significantly better than the baseline algorithms without transfer, as expected.

Furthermore, when $\abr{\Ical_{5\epsilon}} = 0$, i.e., there is no subpar arm and all the arms have relatively small suboptimality gaps. In this unfavorable special case, $\robustaggTS(0.15)$'s performance is still very competitive in comparison with $\indts$, which demonstrates the robustness of our proposed algorithm.

\subsection{Empirical Comparison with $\robustaggTSV(\epsilon)$}
\label{sec:comp_v}
We empirically evaluated a variant of Algorithm~\ref{alg:robustaggTS}, which we refer to as $\robustaggTSV(\epsilon)$. $\robustaggTSV(\epsilon)$ differs from $\robustaggTS(\epsilon)$ (Algorithm~\ref{alg:robustaggTS}) in one way: in each round, instead of only updating the posteriors associated with each active player and its pulled arm (i.e., delayed update, line~\ref{line:invariant} of Algorithm~\ref{alg:robustaggTS}), $\robustaggTSV(\epsilon)$ updates the posteriors associated with every arm and player. Note that this change only affects the aggregate posteriors, as the individual posteriors associated with a player and an arm remains the same if the player does not pull the arm in this round. 

Figure~\ref{figure:v_allplots} compares the average cumulative regret of $\robustaggTS(0.15)$, $\robustaggTSV(0.15)$, $\robustagg(0.15)$, \inducb, and \indts in randomly generated $0.15$-MPMAB problem instances with different numbers of subpar arms. The instances were generated following the same procedure as the other experiments. Observe that $\robustaggTSV(0.15)$'s empirical performance is on par with that of $\robustaggTS(0.15)$. However, our analysis in this paper takes advantages of the design choice made for $\robustaggTS(\epsilon)$, i.e., delayed update which leads to the invariant property (Definition~\ref{def:invariant} and Examples~\ref{ex:h-invariant}, \ref{ex:m-n-invariant} and \ref{ex:mu-var-invariant}). It is unclear whether $\robustaggTSV(\epsilon)$ enjoys similar near-optimal guarantees.

\begin{figure}[htbp]
    \centering
    \begin{subfigure}{0.32\textwidth}
        \centering
        \includegraphics[height=0.85\linewidth]{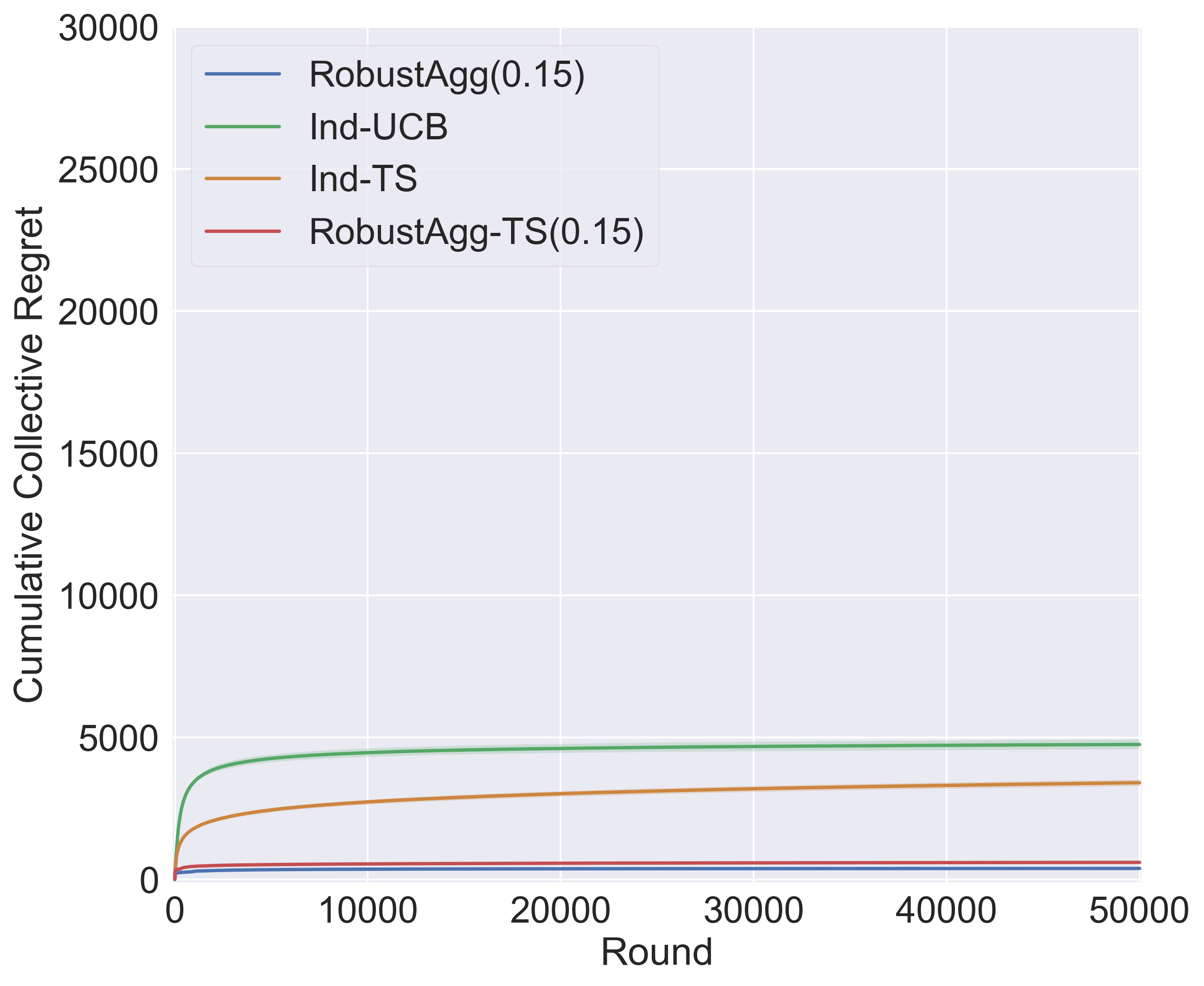}
        \caption{$|\Ical_{5\epsilon}| = 9$}
        \label{figure:exp1_9}
    \end{subfigure}
    \begin{subfigure}{0.32\textwidth}
        \centering
        \includegraphics[height=0.85\linewidth]{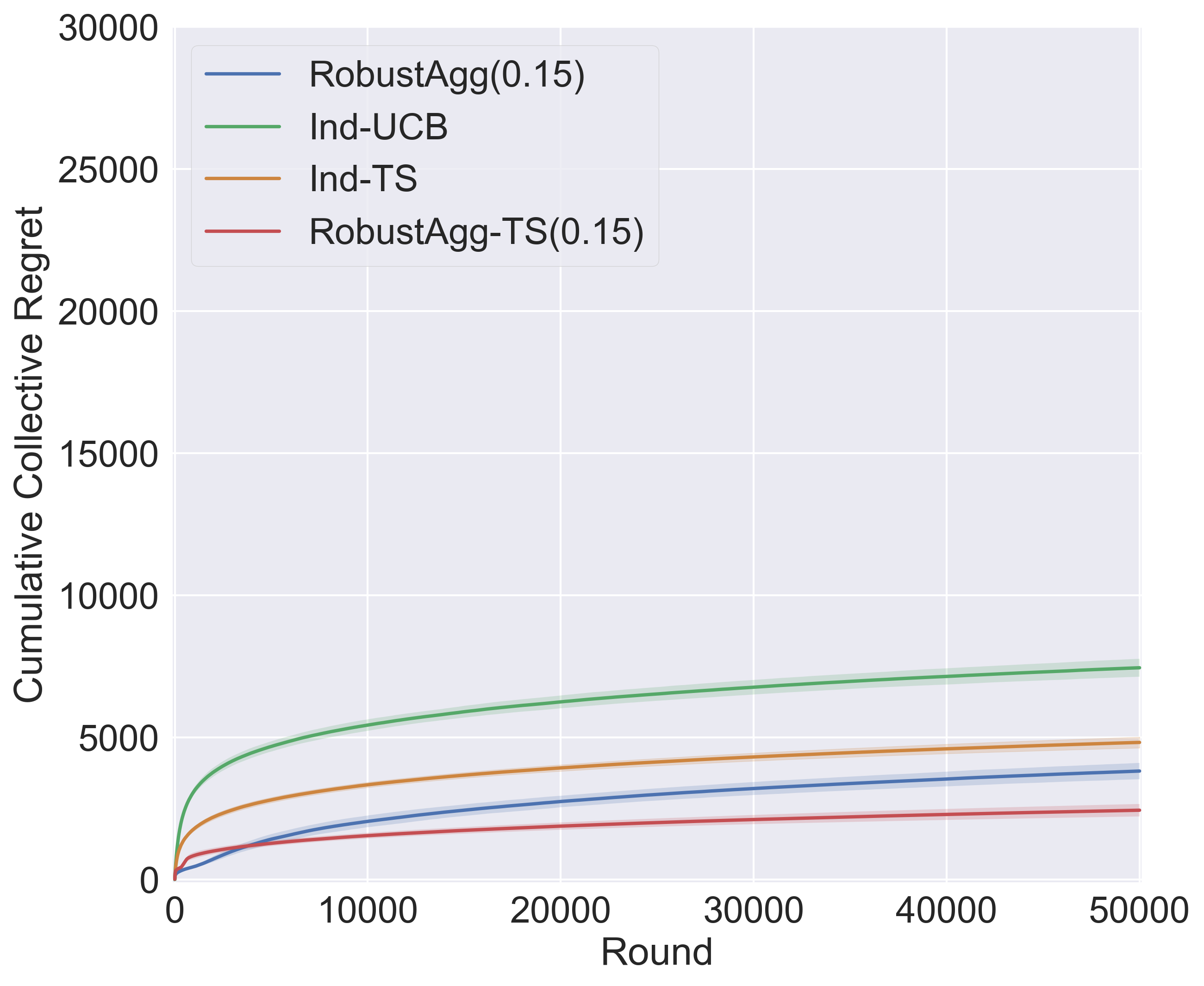}
        \caption{$|\Ical_{5\epsilon}| = 8$}
        \label{figure:exp1_8}
    \end{subfigure}
    \begin{subfigure}{0.32\textwidth}
        \centering
        \includegraphics[height=0.85\linewidth]{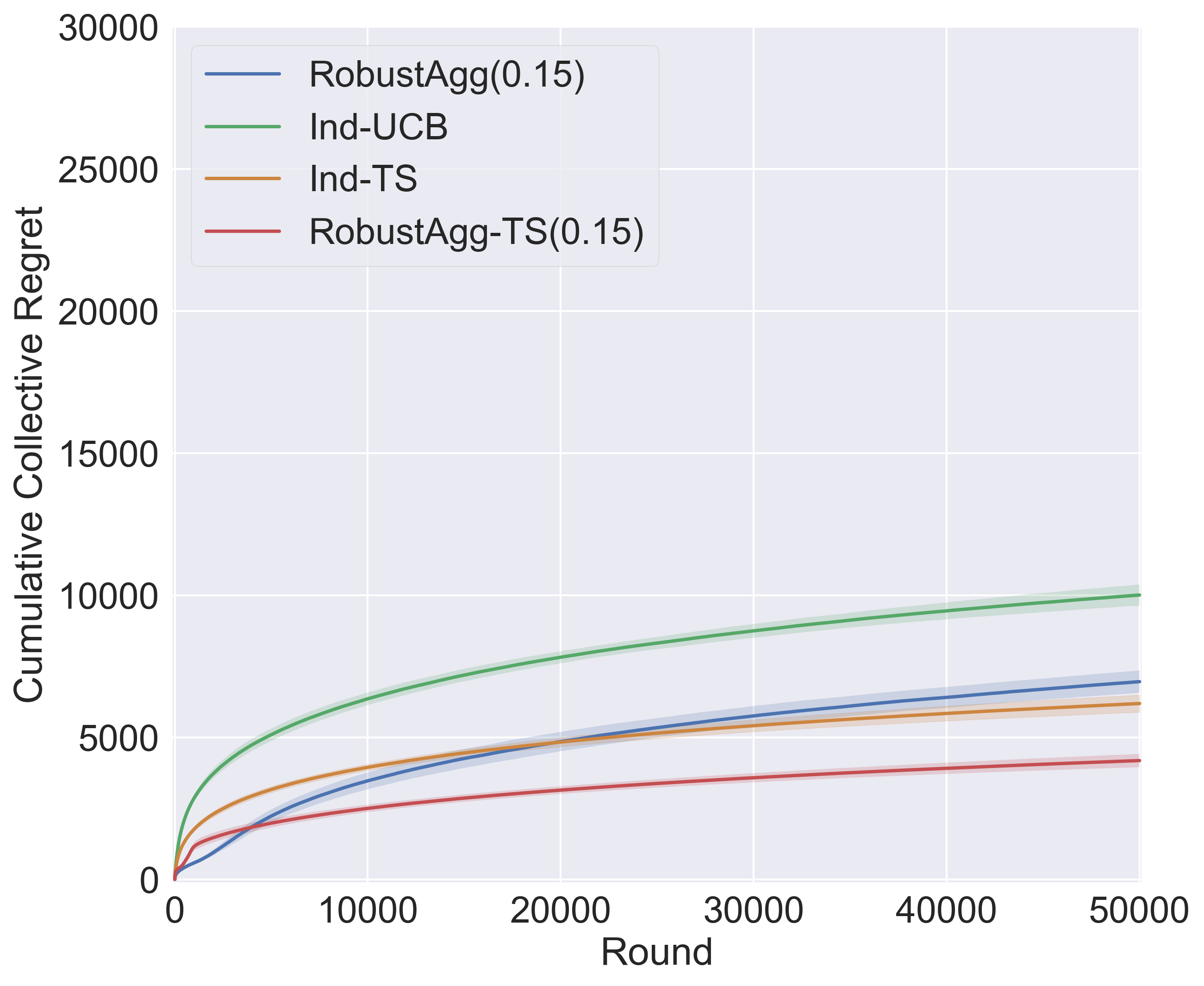}
        \caption{$|\Ical_{5\epsilon}| = 7$}
        \label{figure:exp1_7}
    \end{subfigure}
        \begin{subfigure}{.32\textwidth}
        \centering
        \includegraphics[height=0.85\linewidth]{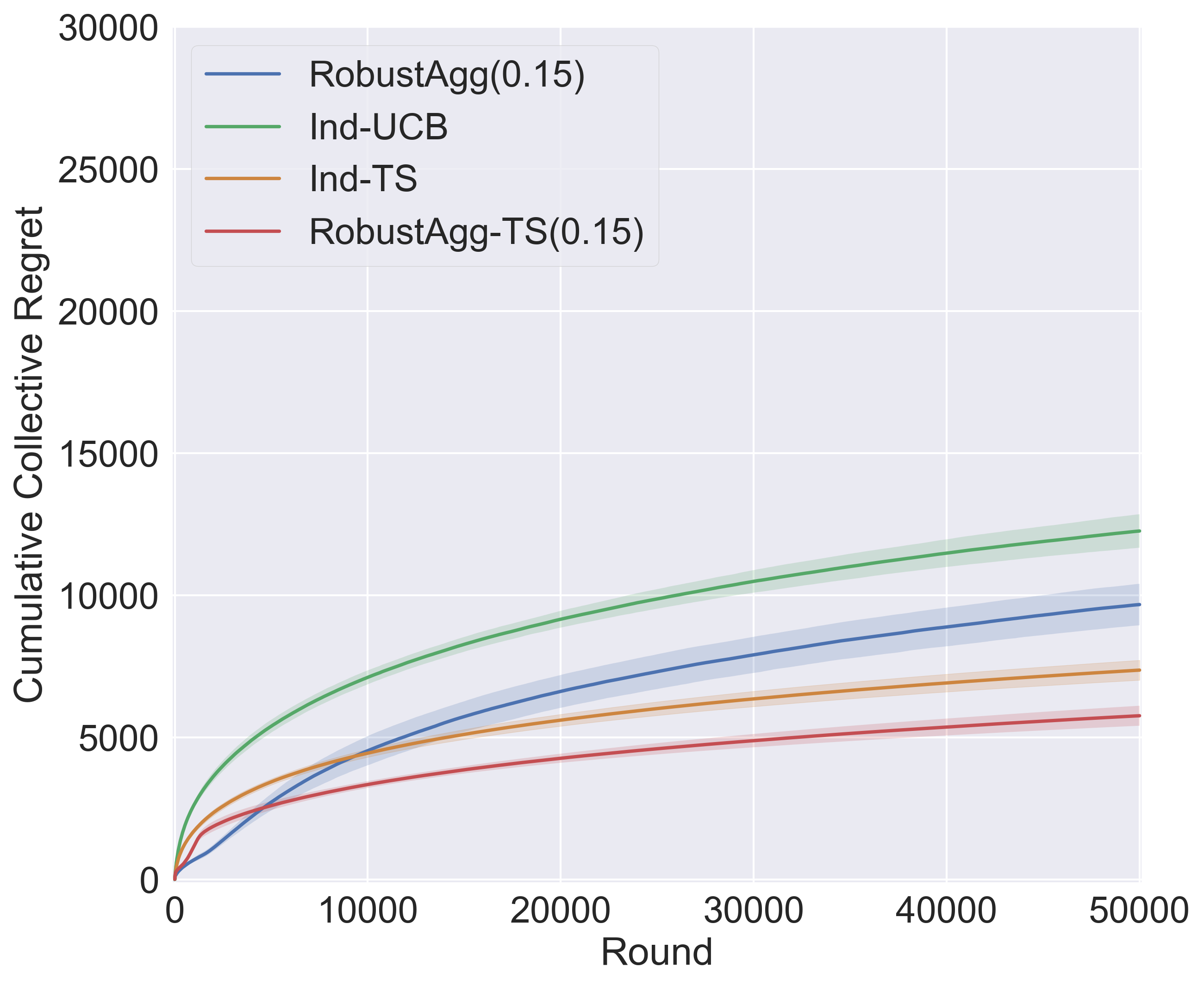}
        \caption{$|\Ical_{5\epsilon}| = 6$}
        \label{figure:exp1_6}
    \end{subfigure}
    \begin{subfigure}{0.32\textwidth}
        \centering
        \includegraphics[height=0.85\linewidth]{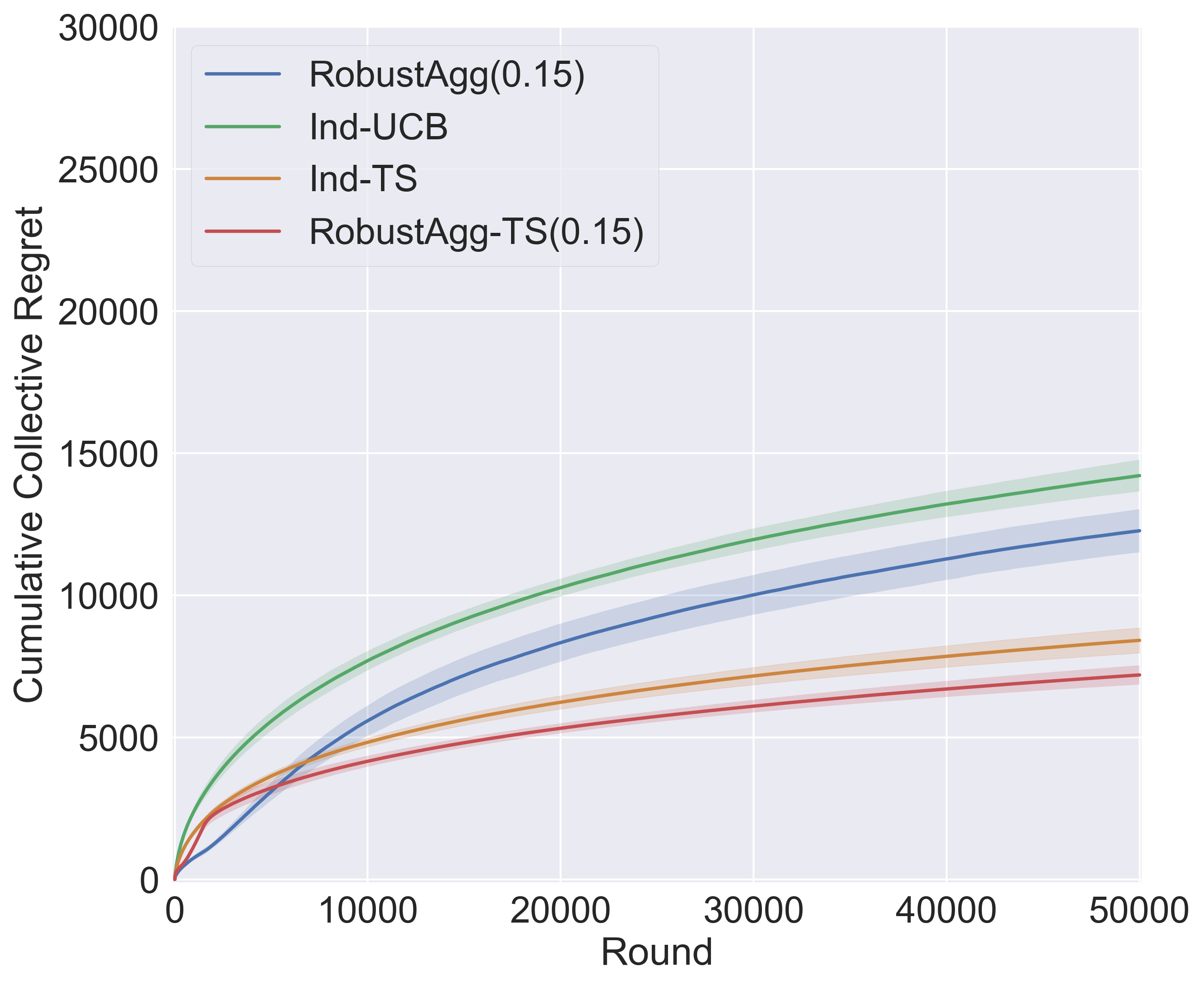}
        \caption{$|\Ical_{5\epsilon}| = 5$}
        \label{figure:exp1_5}
    \end{subfigure}
    \begin{subfigure}{0.32\textwidth}
        \centering
        \includegraphics[height=0.85\linewidth]{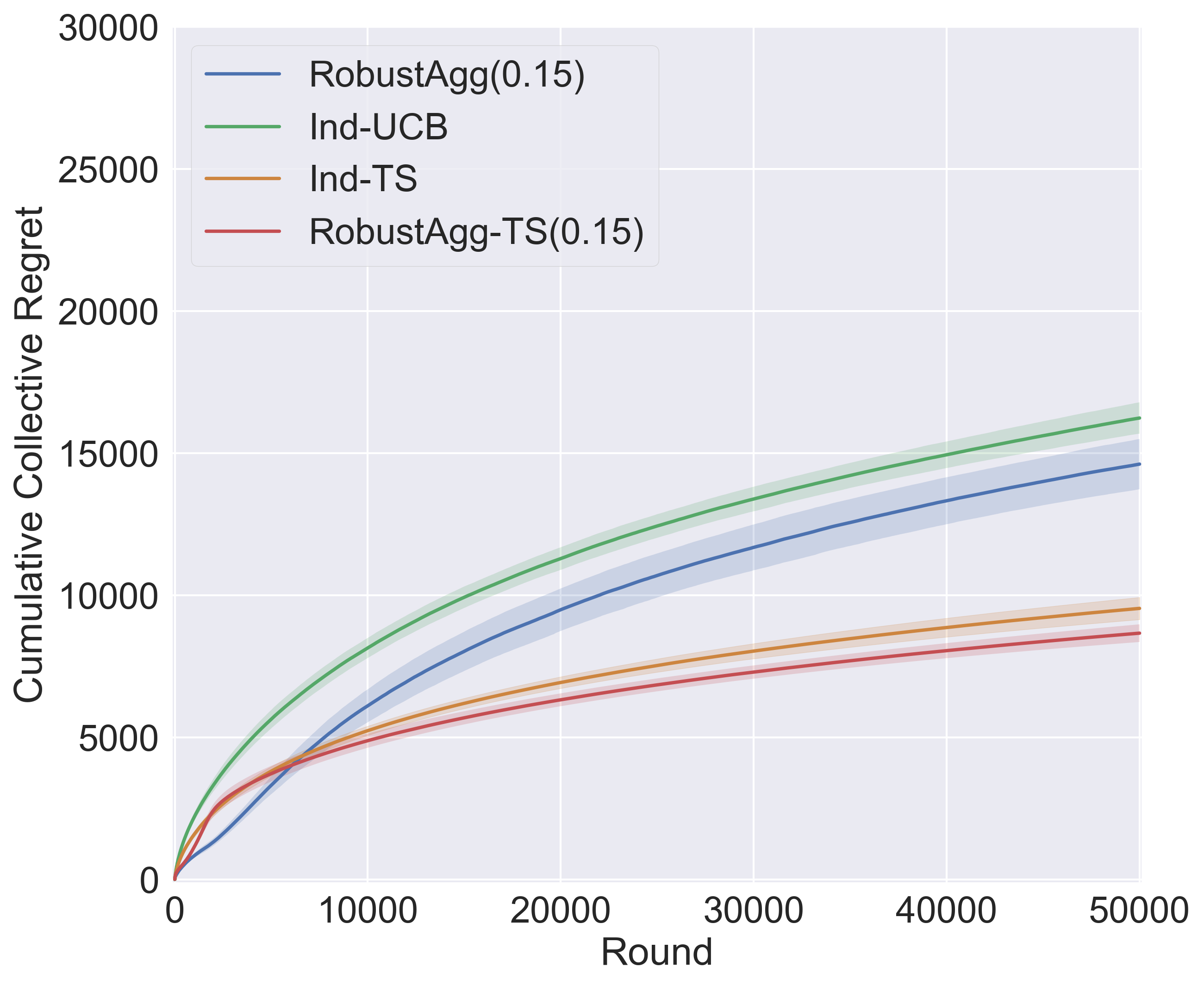}
        \caption{$|\Ical_{5\epsilon}| = 4$}
        \label{figure:exp1_4}
    \end{subfigure}
        \begin{subfigure}{.32\textwidth}
        \centering
        \includegraphics[height=0.85\linewidth]{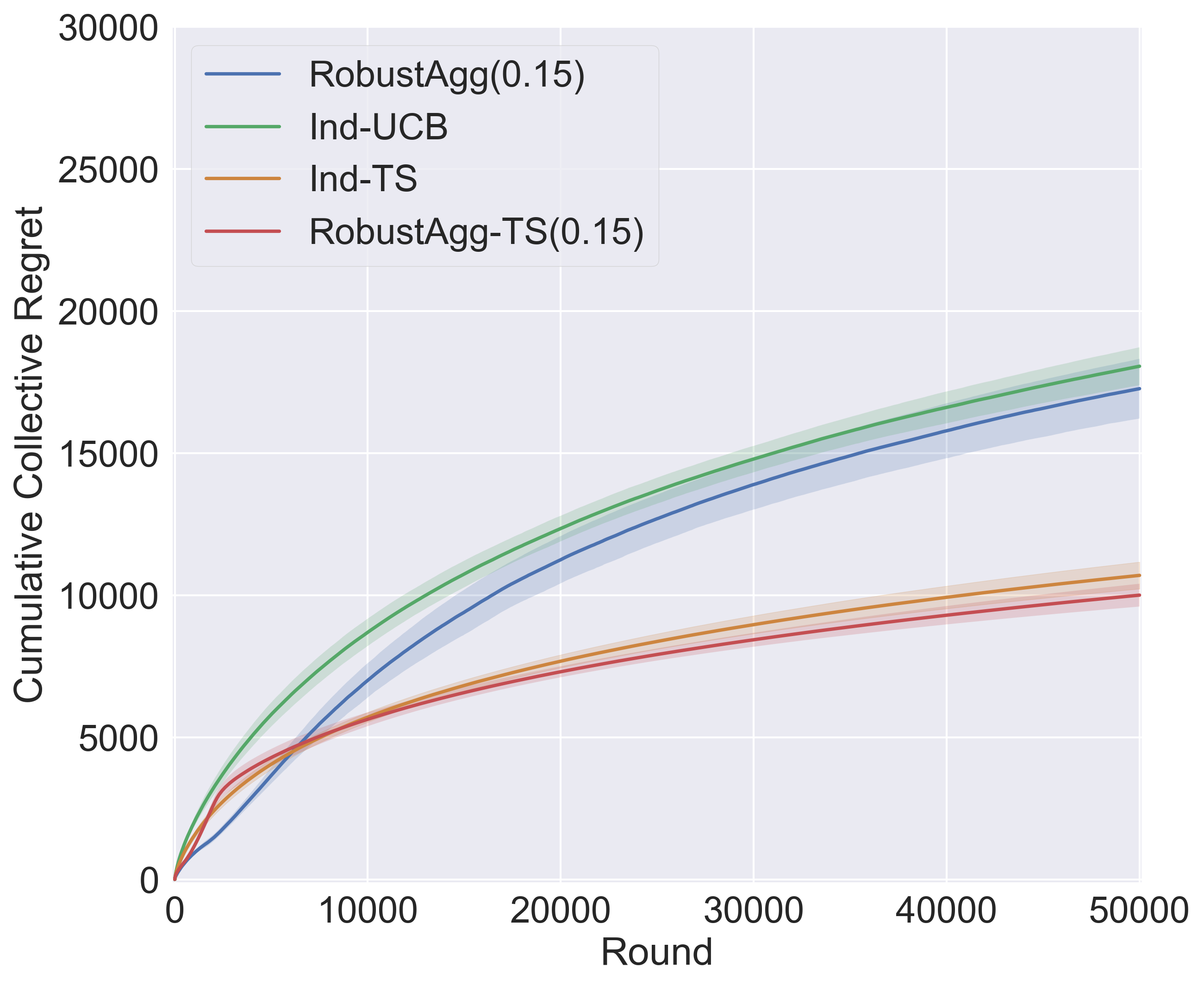}
        \caption{$|\Ical_{5\epsilon}| = 3$}
        \label{figure:exp1_3}
    \end{subfigure}
    \begin{subfigure}{0.32\textwidth}
        \centering
        \includegraphics[height=0.85\linewidth]{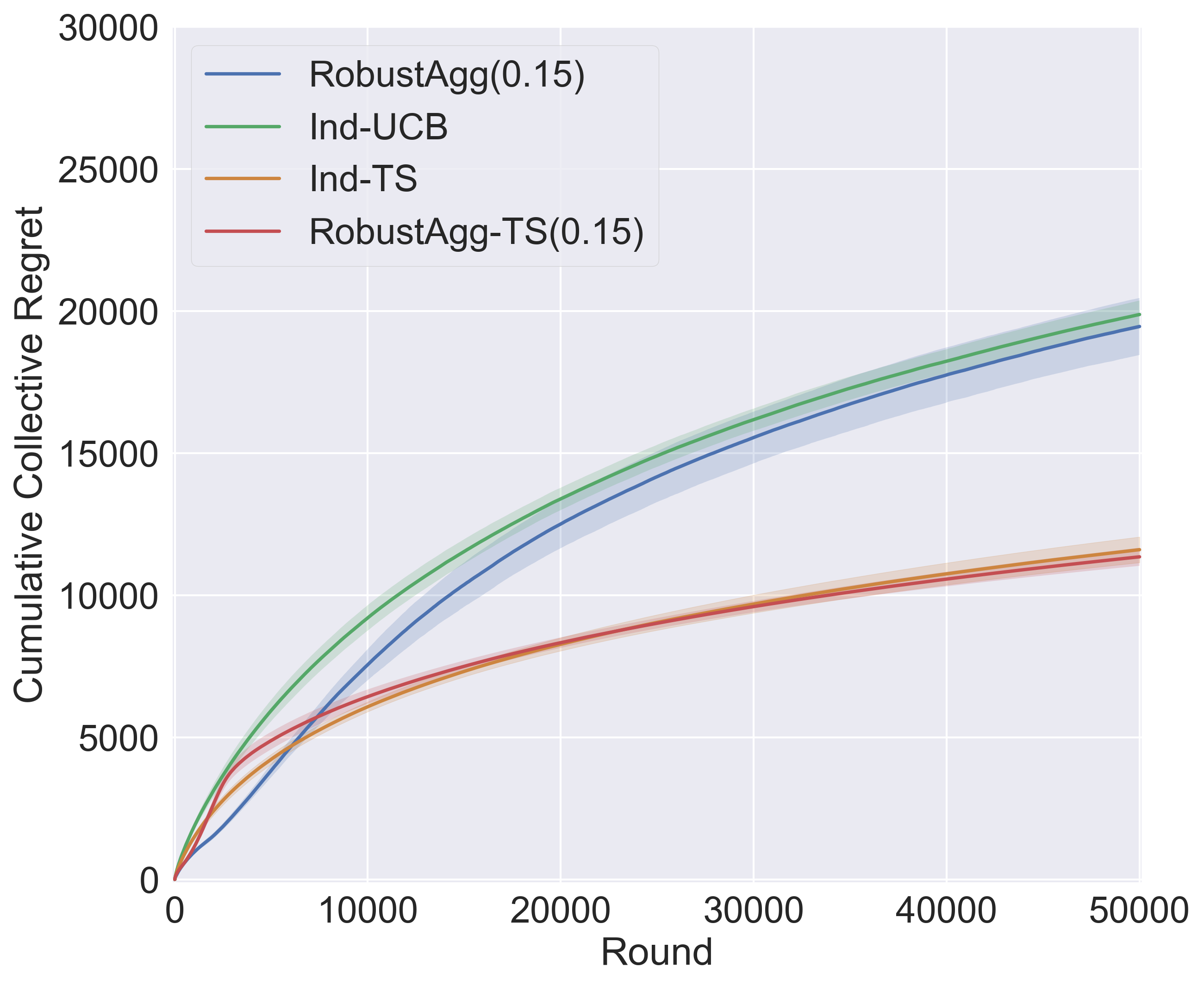}
        \caption{$|\Ical_{5\epsilon}| = 2$}
        \label{figure:exp1_2}
    \end{subfigure}
    \begin{subfigure}{0.32\textwidth}
        \centering
        \includegraphics[height=0.85\linewidth]{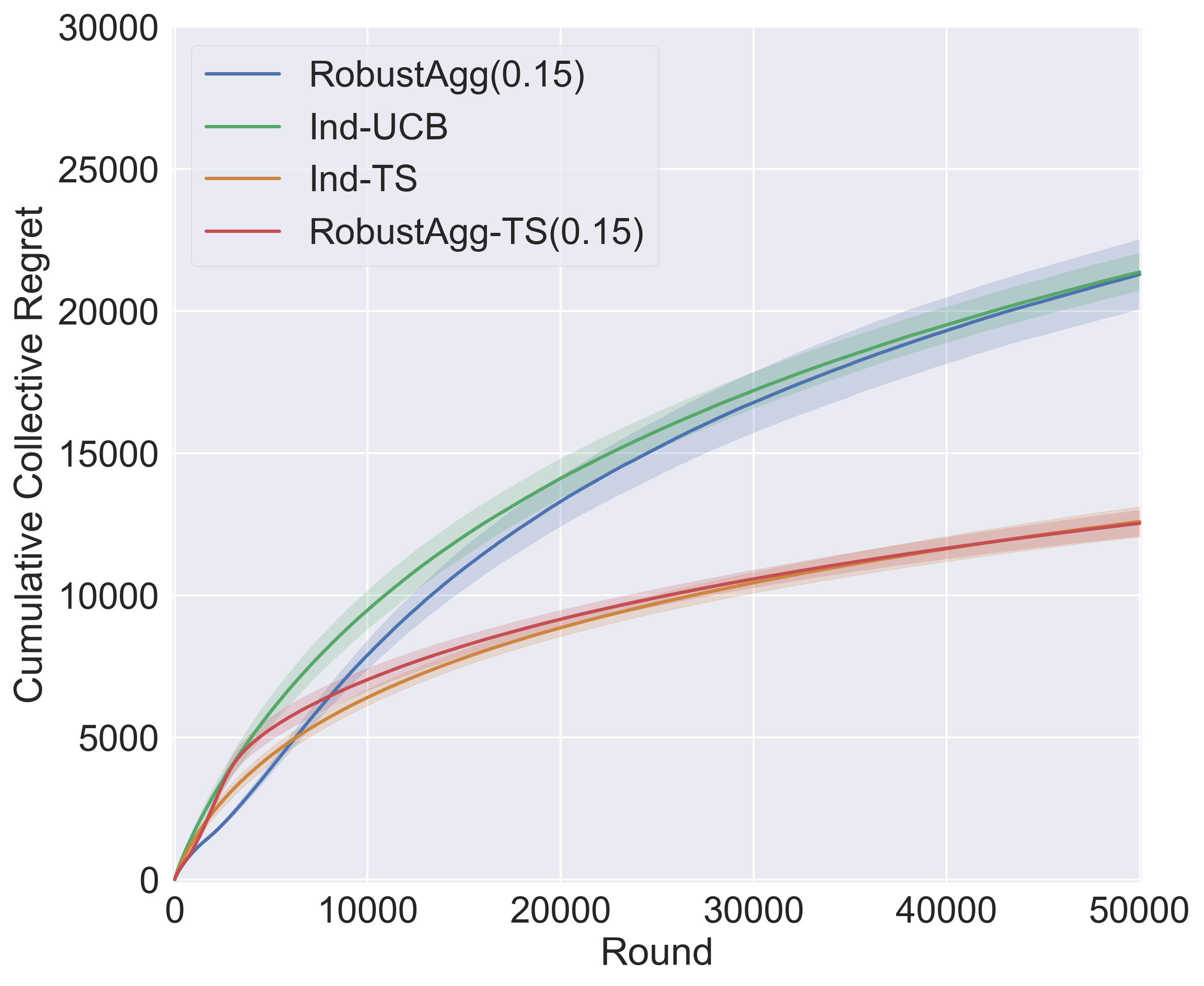}
        \caption{$|\Ical_{5\epsilon}| = 1$}
        \label{figure:exp1_1}
    \end{subfigure}
    \begin{subfigure}{0.32\textwidth}
        \centering
        \includegraphics[height=0.85\linewidth]{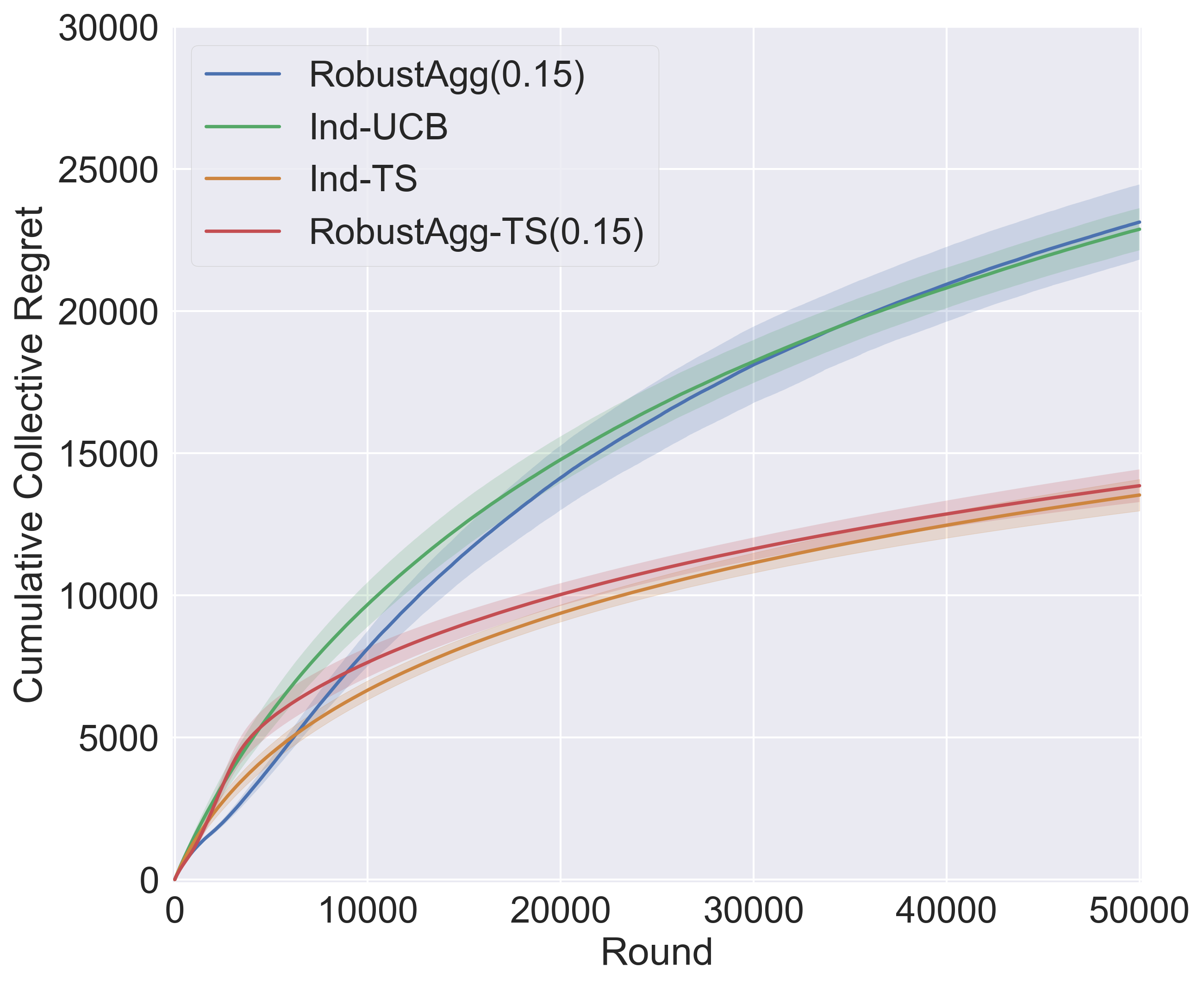}
        \caption{$|\Ical_{5\epsilon}| = 0$}
        \label{figure:exp1_0}
    \end{subfigure}
    \caption{Compares the cumulative collective regret of the 4 algorithms over a horizon of $T = 50,000$ rounds.}
    \label{figure:cumulative_allplots}
\end{figure}

\begin{figure}[htbp]
    \centering
    \begin{subfigure}{0.32\textwidth}
        \centering
        \includegraphics[height=0.85\linewidth]{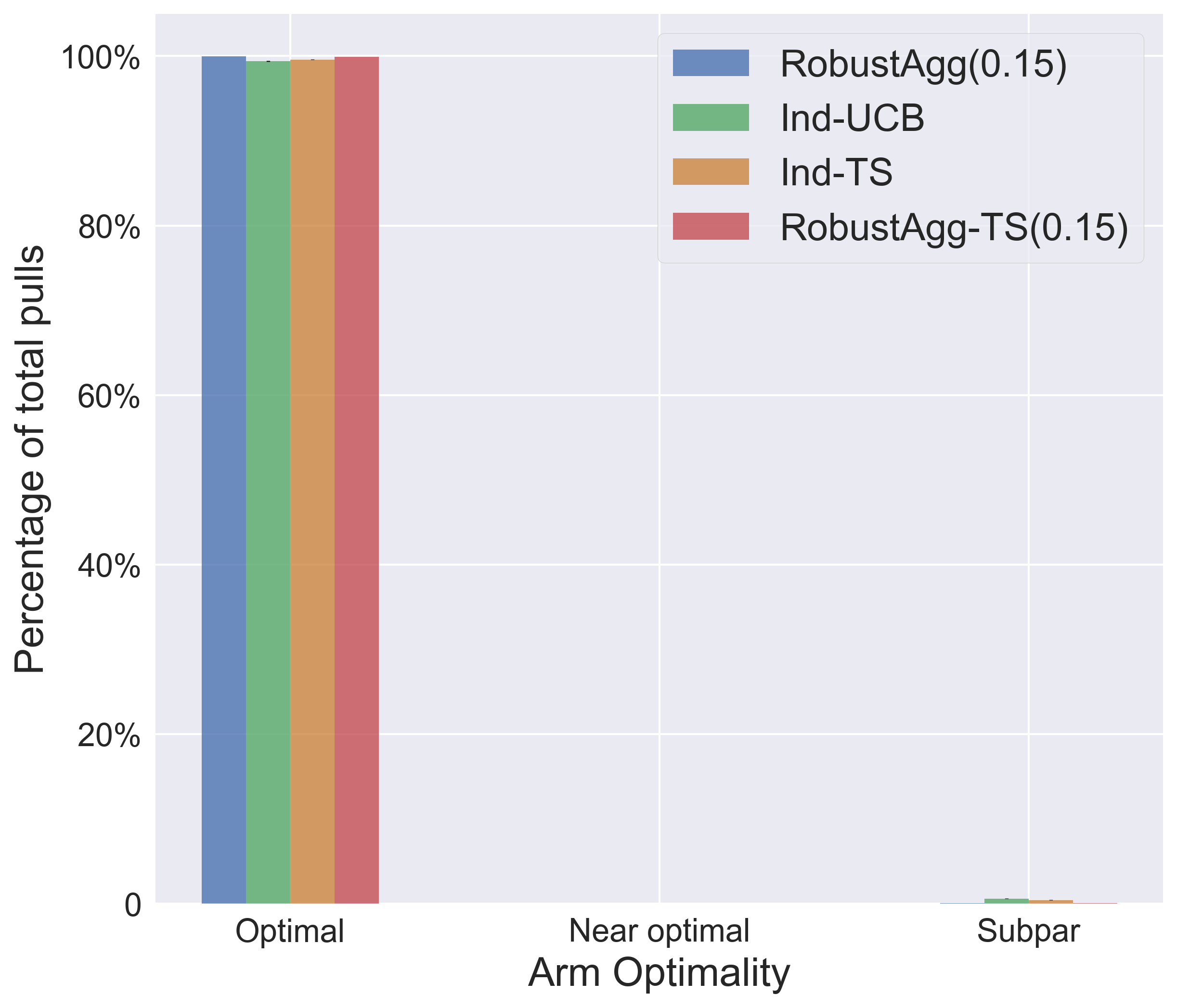}
        \caption{$|\Ical_{5\epsilon}| = 9$}
        \label{figure:exp2_9}
    \end{subfigure}
    \begin{subfigure}{0.32\textwidth}
        \centering
        \includegraphics[height=0.85\linewidth]{plots/exp1_perc_ical=8_50000x30.png}
        \caption{$|\Ical_{5\epsilon}| = 8$}
        \label{figure:exp2_8}
    \end{subfigure}
    \begin{subfigure}{0.32\textwidth}
        \centering
        \includegraphics[height=0.85\linewidth]{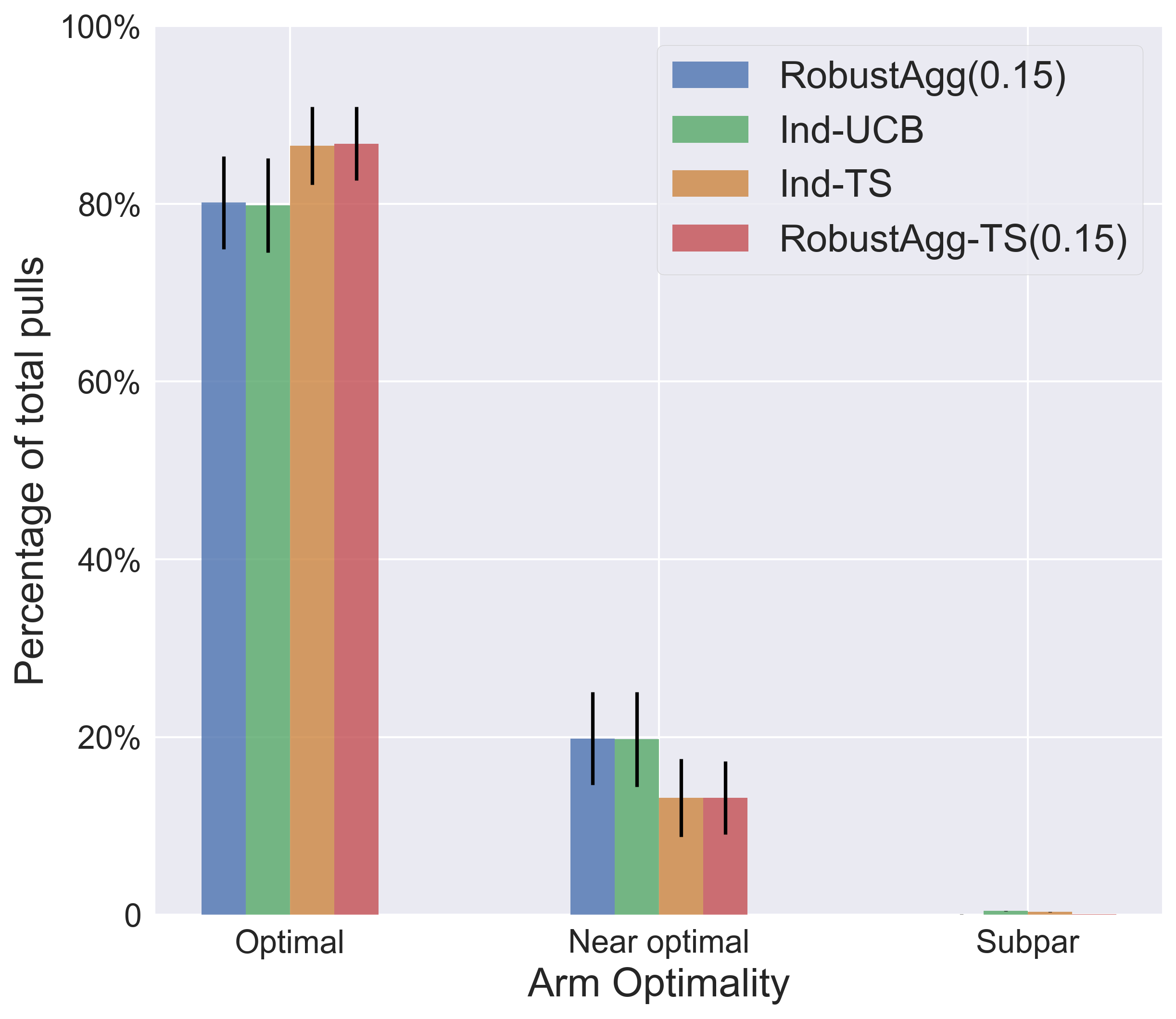}
        \caption{$|\Ical_{5\epsilon}| = 7$}
        \label{figure:exp2_7}
    \end{subfigure}
        \begin{subfigure}{.32\textwidth}
        \centering
        \includegraphics[height=0.85\linewidth]{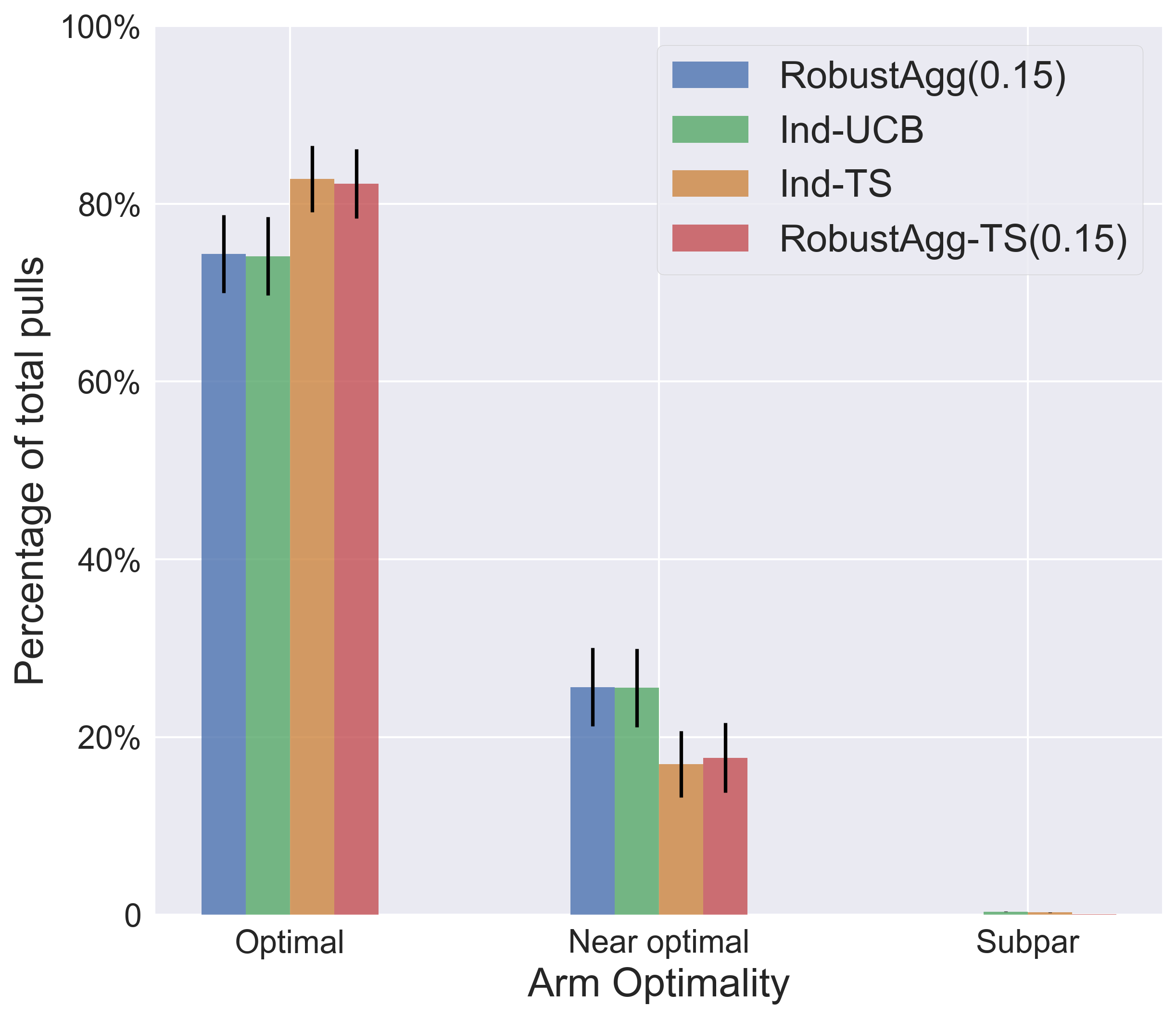}
        \caption{$|\Ical_{5\epsilon}| = 6$}
        \label{figure:exp2_6}
    \end{subfigure}
    \begin{subfigure}{0.32\textwidth}
        \centering
        \includegraphics[height=0.85\linewidth]{plots/exp1_perc_ical=5_50000x30.png}
        \caption{$|\Ical_{5\epsilon}| = 5$}
        \label{figure:exp2_5}
    \end{subfigure}
    \begin{subfigure}{0.32\textwidth}
        \centering
        \includegraphics[height=0.85\linewidth]{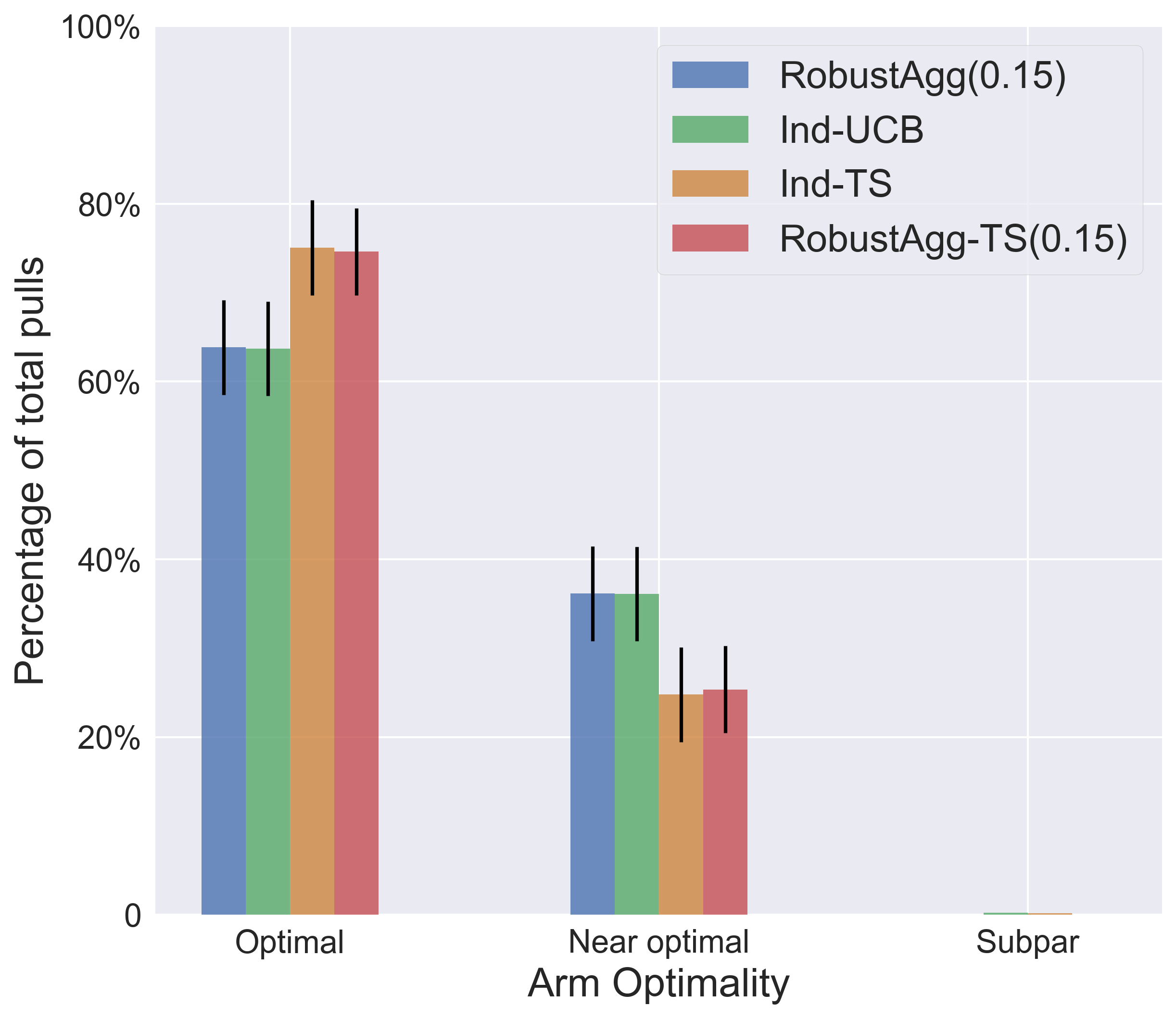}
        \caption{$|\Ical_{5\epsilon}| = 4$}
        \label{figure:exp2_4}
    \end{subfigure}
        \begin{subfigure}{.32\textwidth}
        \centering
        \includegraphics[height=0.85\linewidth]{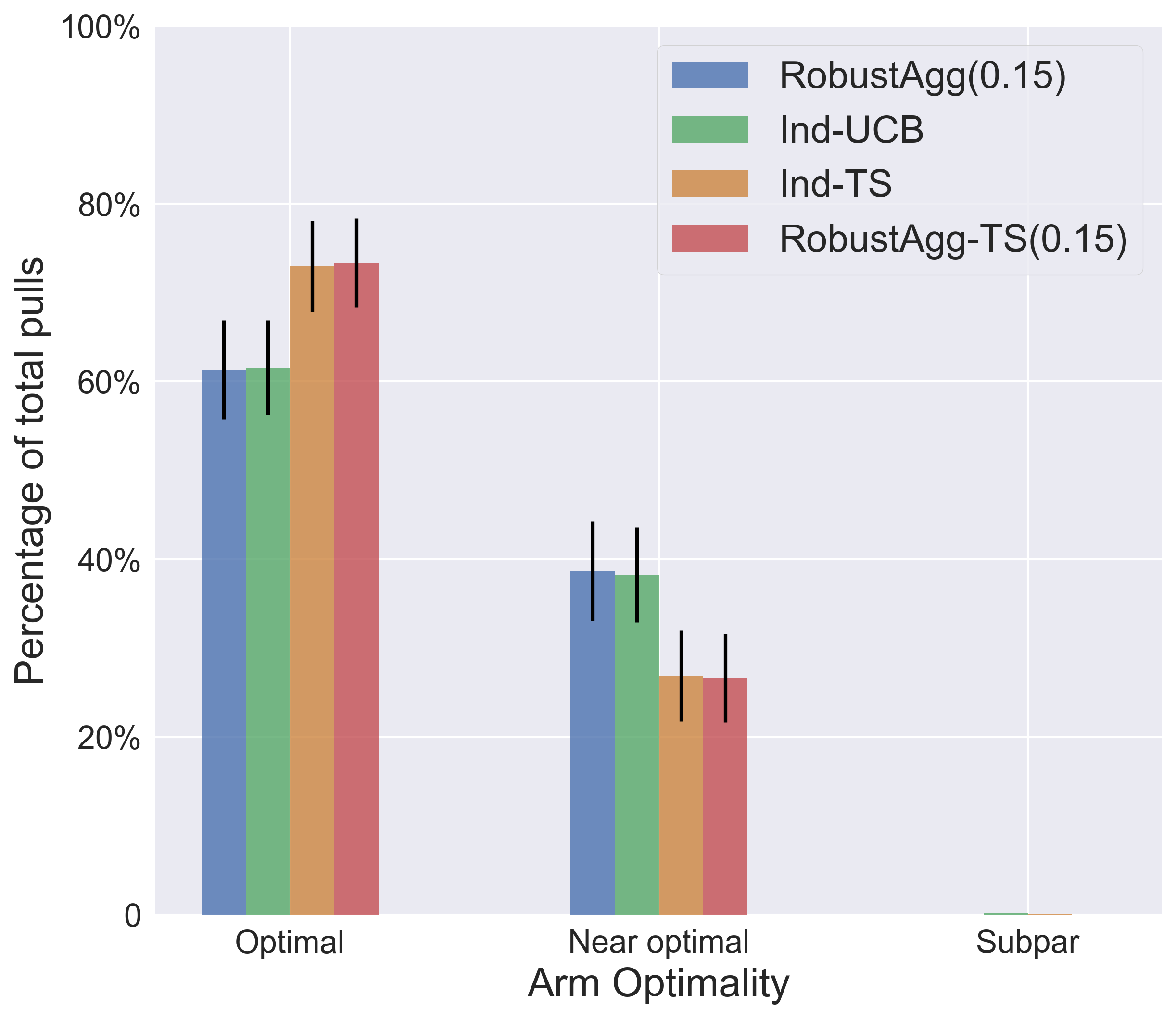}
        \caption{$|\Ical_{5\epsilon}| = 3$}
        \label{figure:exp2_3}
    \end{subfigure}
    \begin{subfigure}{0.32\textwidth}
        \centering
        \includegraphics[height=0.85\linewidth]{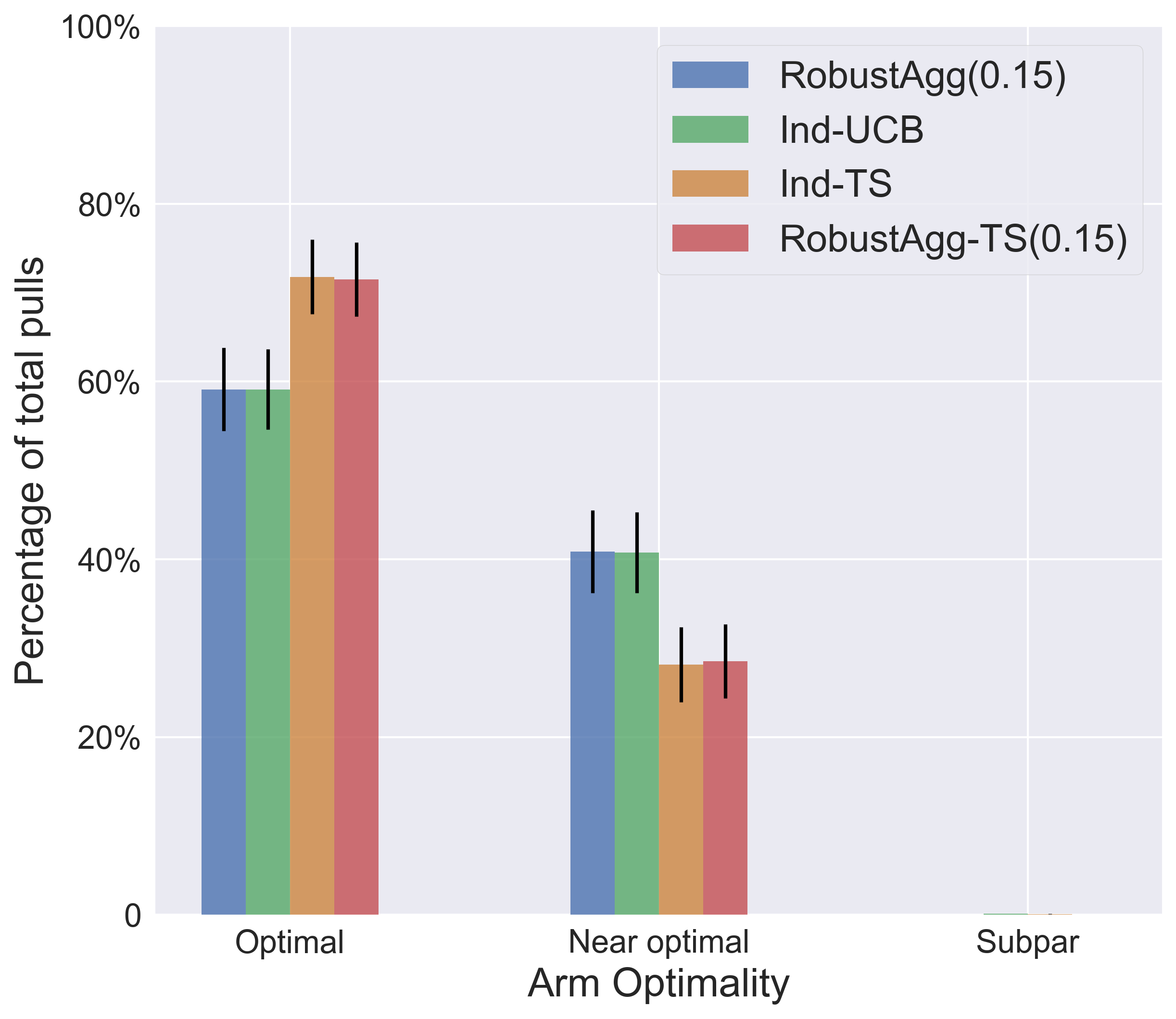}
        \caption{$|\Ical_{5\epsilon}| = 2$}
        \label{figure:exp2_2}
    \end{subfigure}
    \begin{subfigure}{0.32\textwidth}
        \centering
        \includegraphics[height=0.85\linewidth]{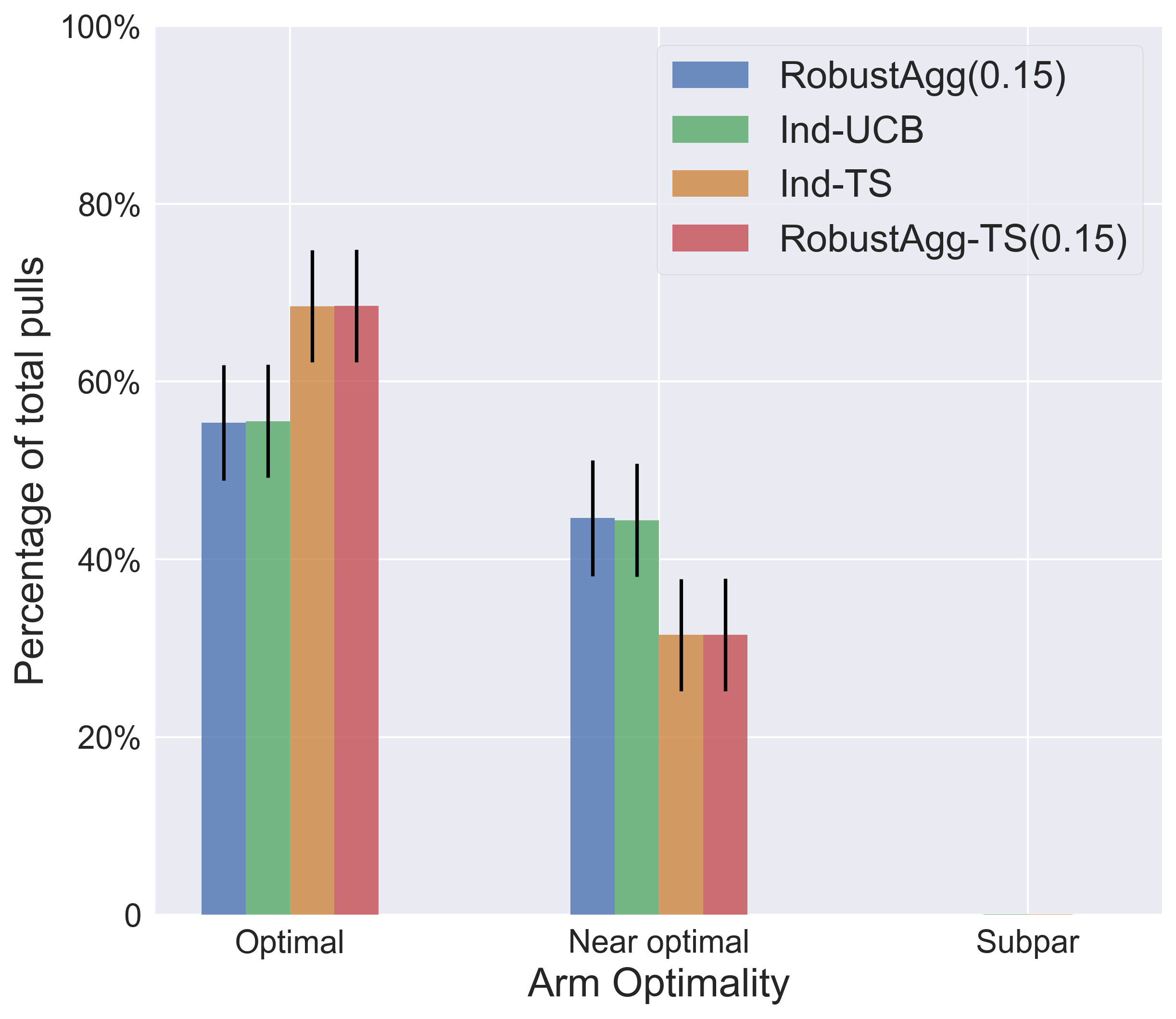}
        \caption{$|\Ical_{5\epsilon}| = 1$}
        \label{figure:exp2_1}
    \end{subfigure}
    \begin{subfigure}{0.32\textwidth}
        \centering
        \includegraphics[height=0.85\linewidth]{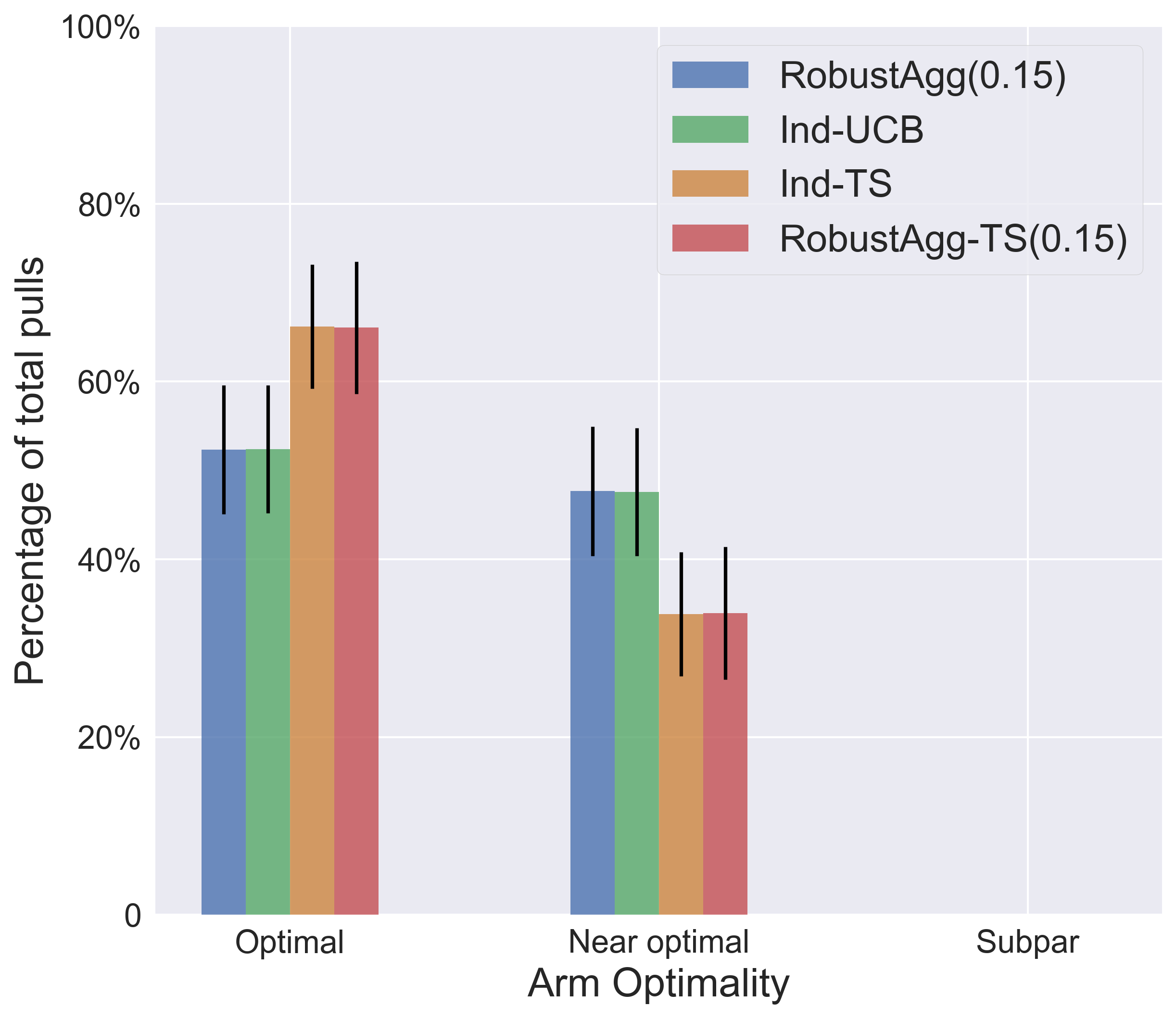}
        \caption{$|\Ical_{5\epsilon}| = 0$}
        \label{figure:exp2_0}
    \end{subfigure}
    \caption{Compares the percentage of arm pulls by arm optimality for the 4 algorithms in $T = 50,000$ rounds.}
    \label{figure:perc_allplots}
\end{figure}

\begin{figure}[htbp]
    \centering
    \begin{subfigure}{0.32\textwidth}
        \centering
        \includegraphics[height=0.85\linewidth]{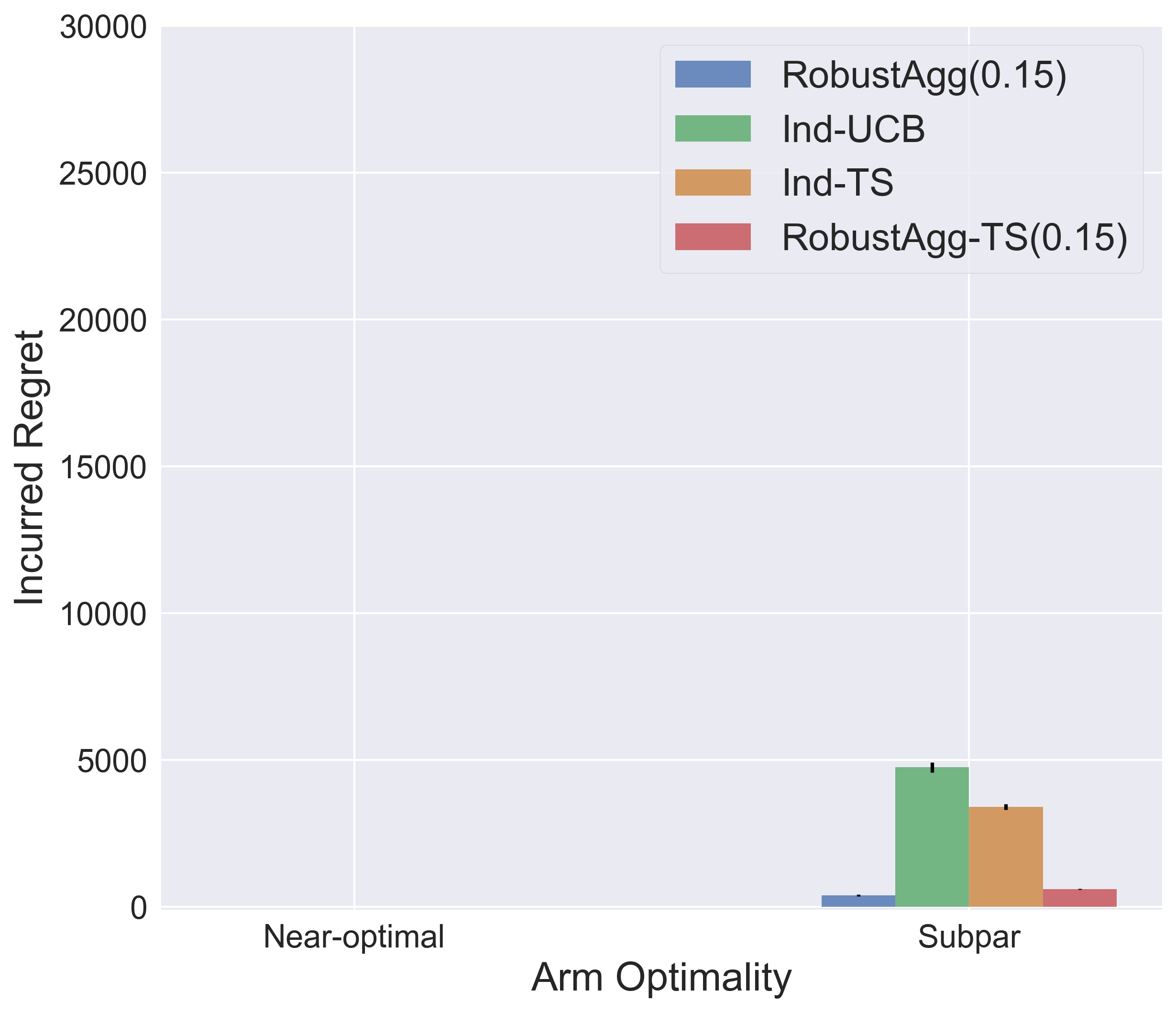}
        \caption{$|\Ical_{5\epsilon}| = 9$}
        \label{figure:exp3_9}
    \end{subfigure}
    \begin{subfigure}{0.32\textwidth}
        \centering
        \includegraphics[height=0.85\linewidth]{plots/exp1__regret_perc_ical=8_50000x30.png}
        \caption{$|\Ical_{5\epsilon}| = 8$}
        \label{figure:exp3_8}
    \end{subfigure}
    \begin{subfigure}{0.32\textwidth}
        \centering
        \includegraphics[height=0.85\linewidth]{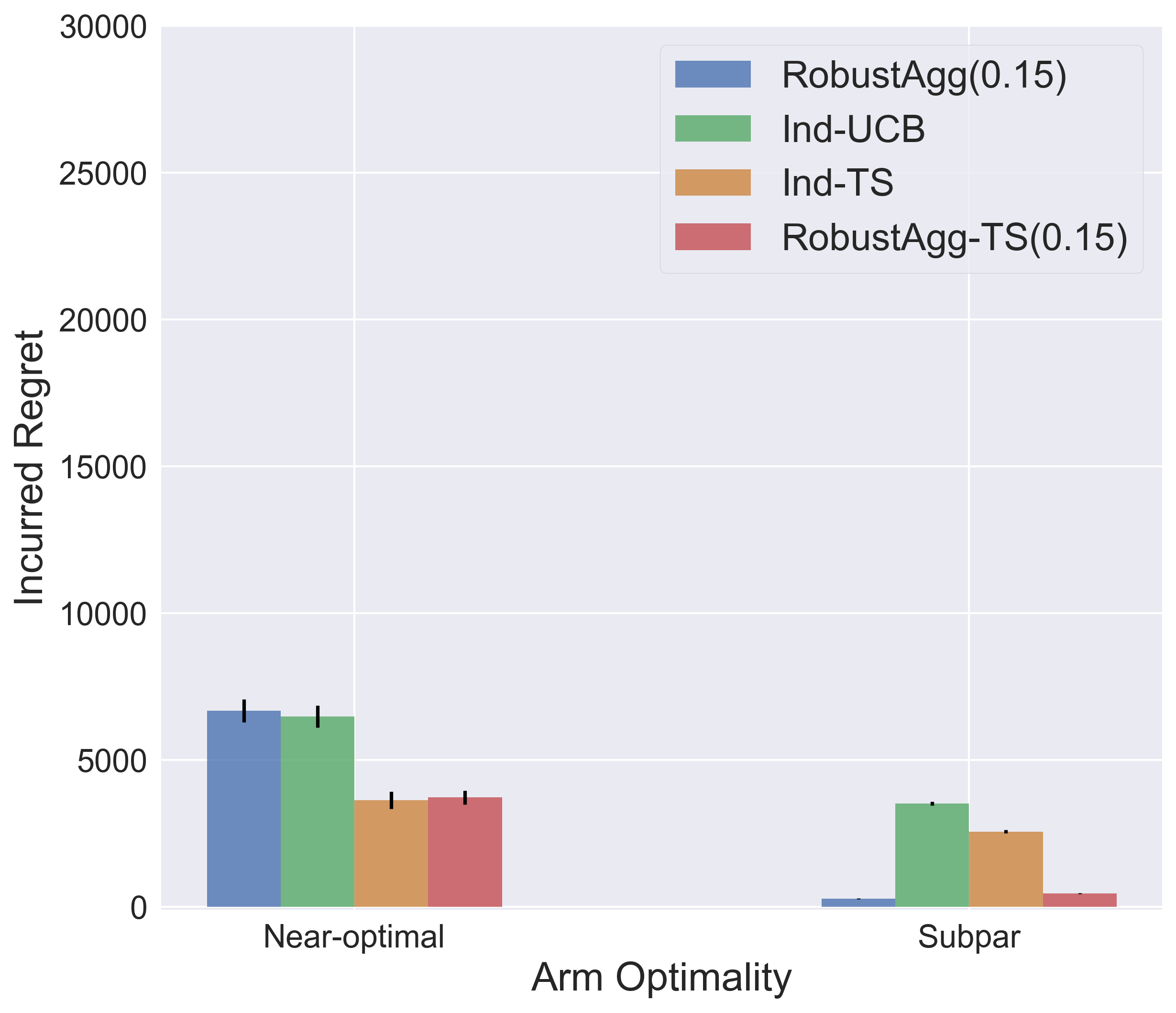}
        \caption{$|\Ical_{5\epsilon}| = 7$}
        \label{figure:exp3_7}
    \end{subfigure}
        \begin{subfigure}{.32\textwidth}
        \centering
        \includegraphics[height=0.85\linewidth]{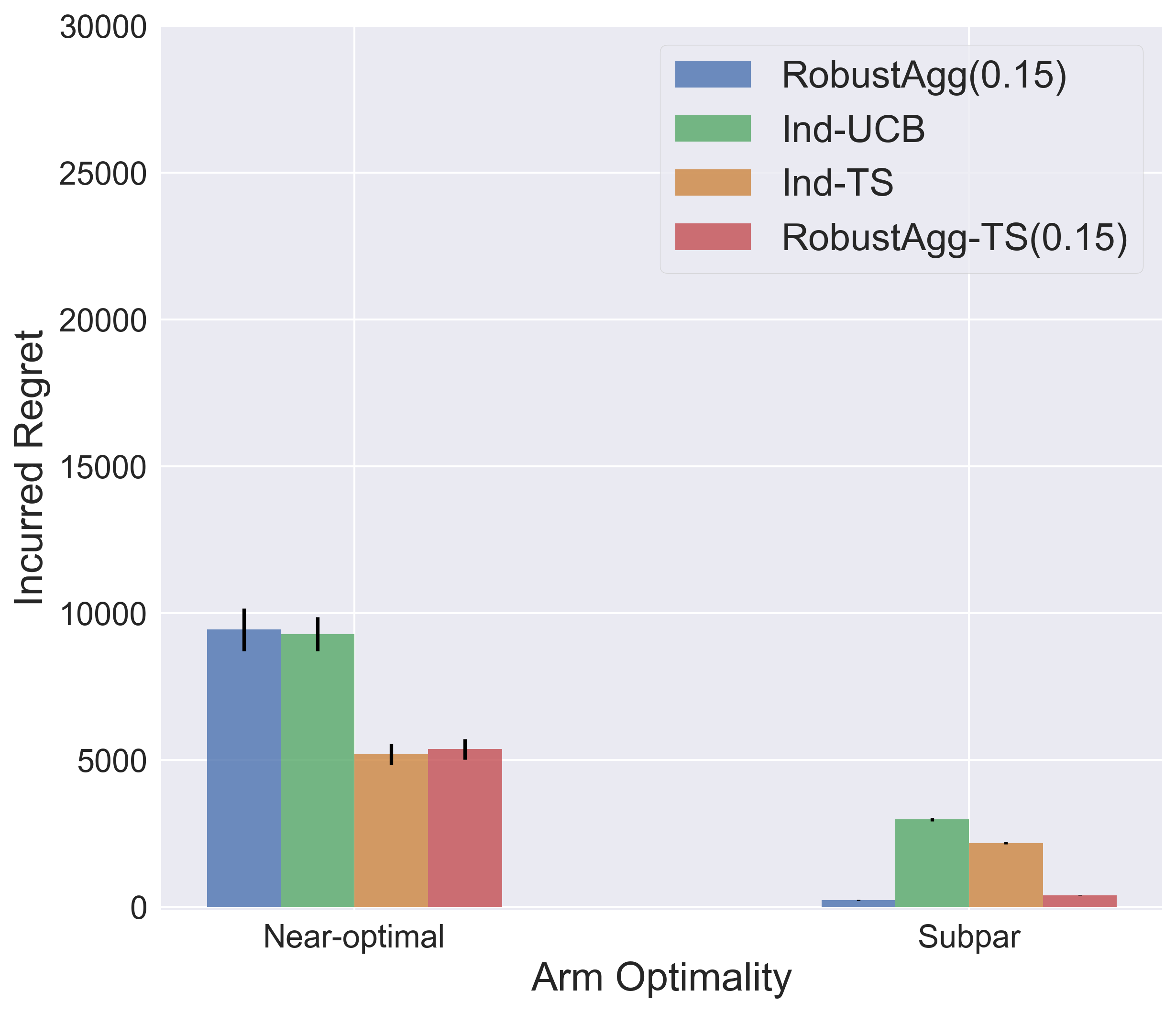}
        \caption{$|\Ical_{5\epsilon}| = 6$}
        \label{figure:exp3_6}
    \end{subfigure}
    \begin{subfigure}{0.32\textwidth}
        \centering
        \includegraphics[height=0.85\linewidth]{plots/exp1__regret_perc_ical=5_50000x30.png}
        \caption{$|\Ical_{5\epsilon}| = 5$}
        \label{figure:exp3_5}
    \end{subfigure}
    \begin{subfigure}{0.32\textwidth}
        \centering
        \includegraphics[height=0.85\linewidth]{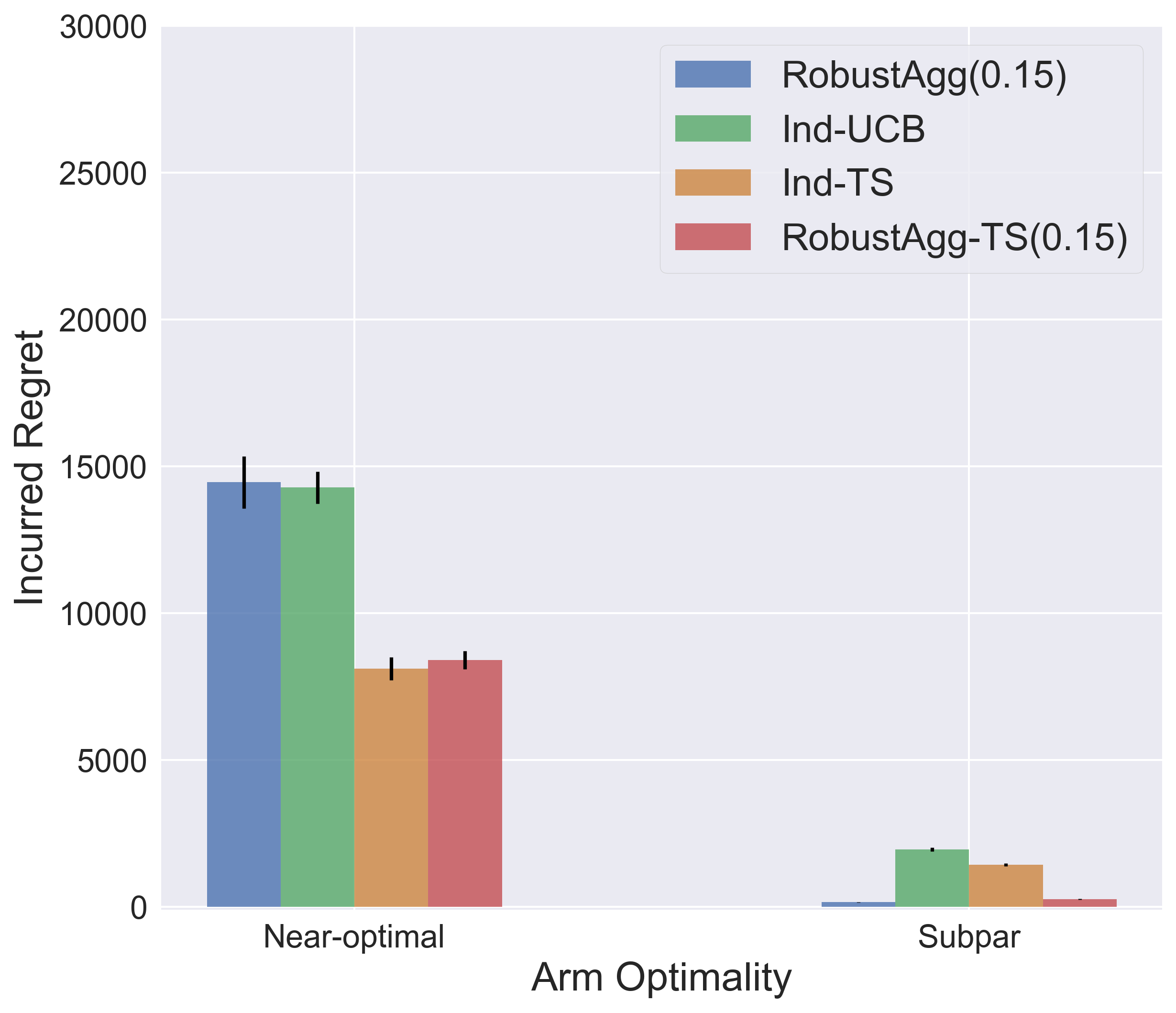}
        \caption{$|\Ical_{5\epsilon}| = 4$}
        \label{figure:exp3_4}
    \end{subfigure}
        \begin{subfigure}{.32\textwidth}
        \centering
        \includegraphics[height=0.85\linewidth]{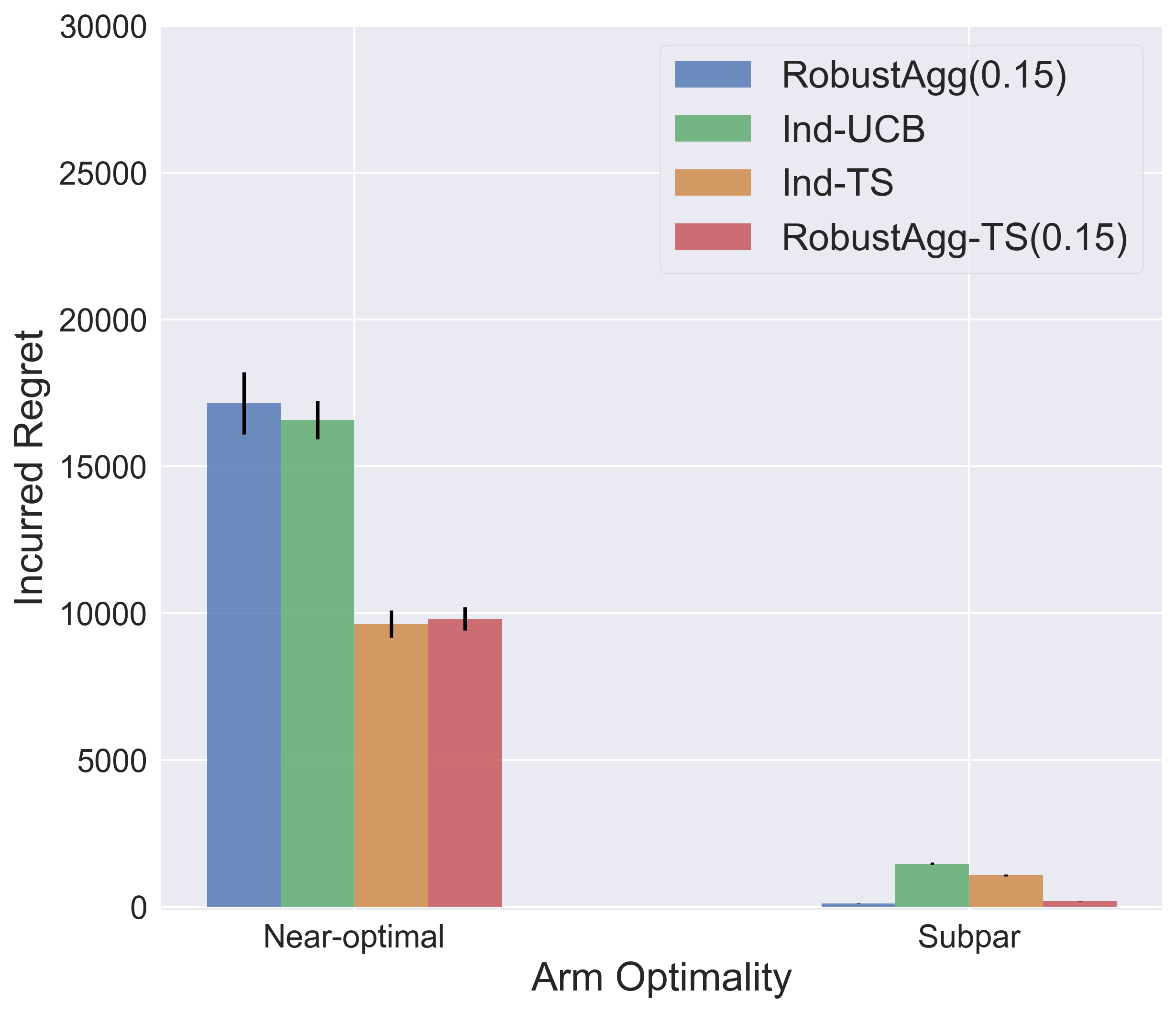}
        \caption{$|\Ical_{5\epsilon}| = 3$}
        \label{figure:exp3_3}
    \end{subfigure}
    \begin{subfigure}{0.32\textwidth}
        \centering
        \includegraphics[height=0.85\linewidth]{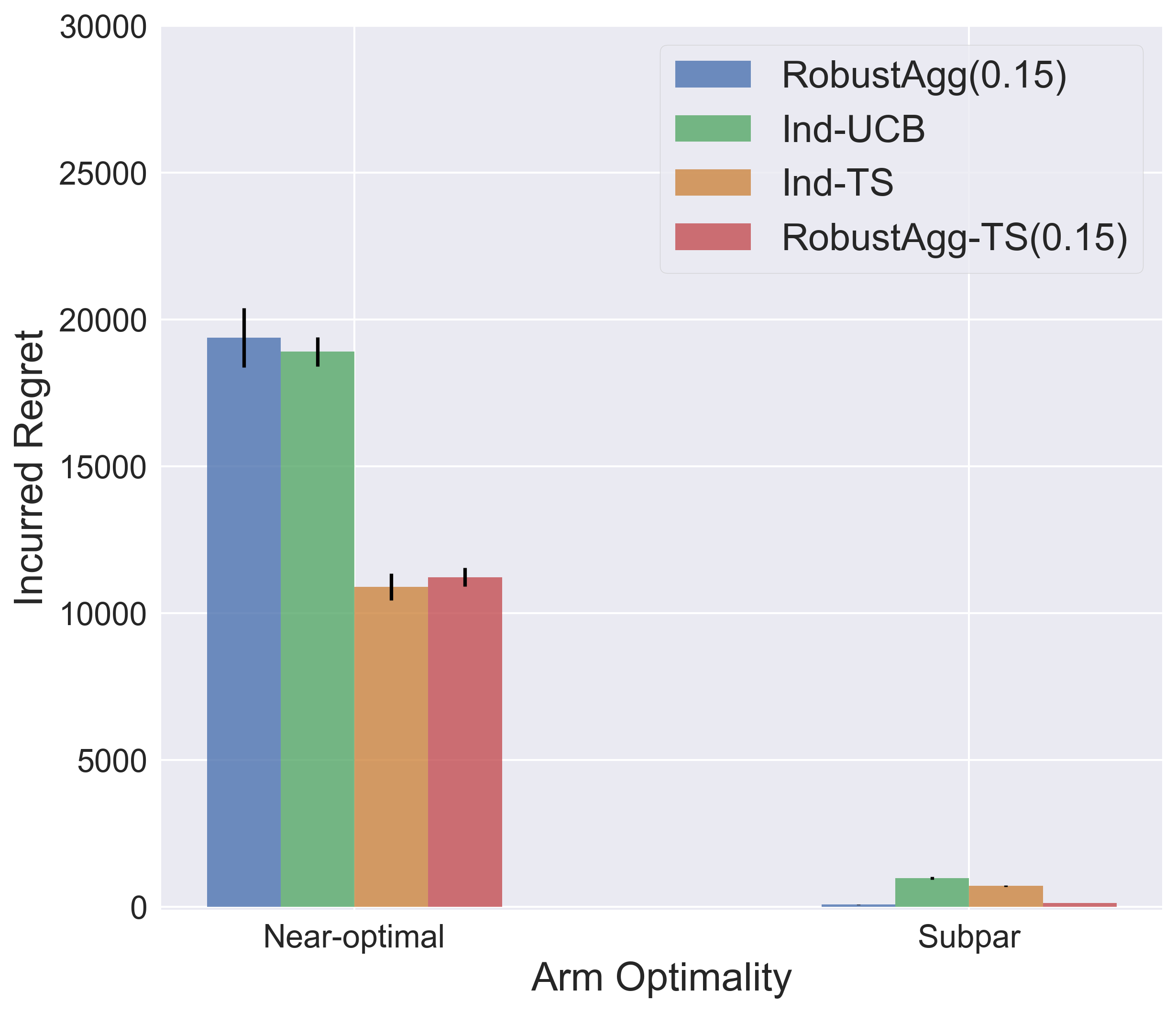}
        \caption{$|\Ical_{5\epsilon}| = 2$}
        \label{figure:exp3_2}
    \end{subfigure}
    \begin{subfigure}{0.32\textwidth}
        \centering
        \includegraphics[height=0.85\linewidth]{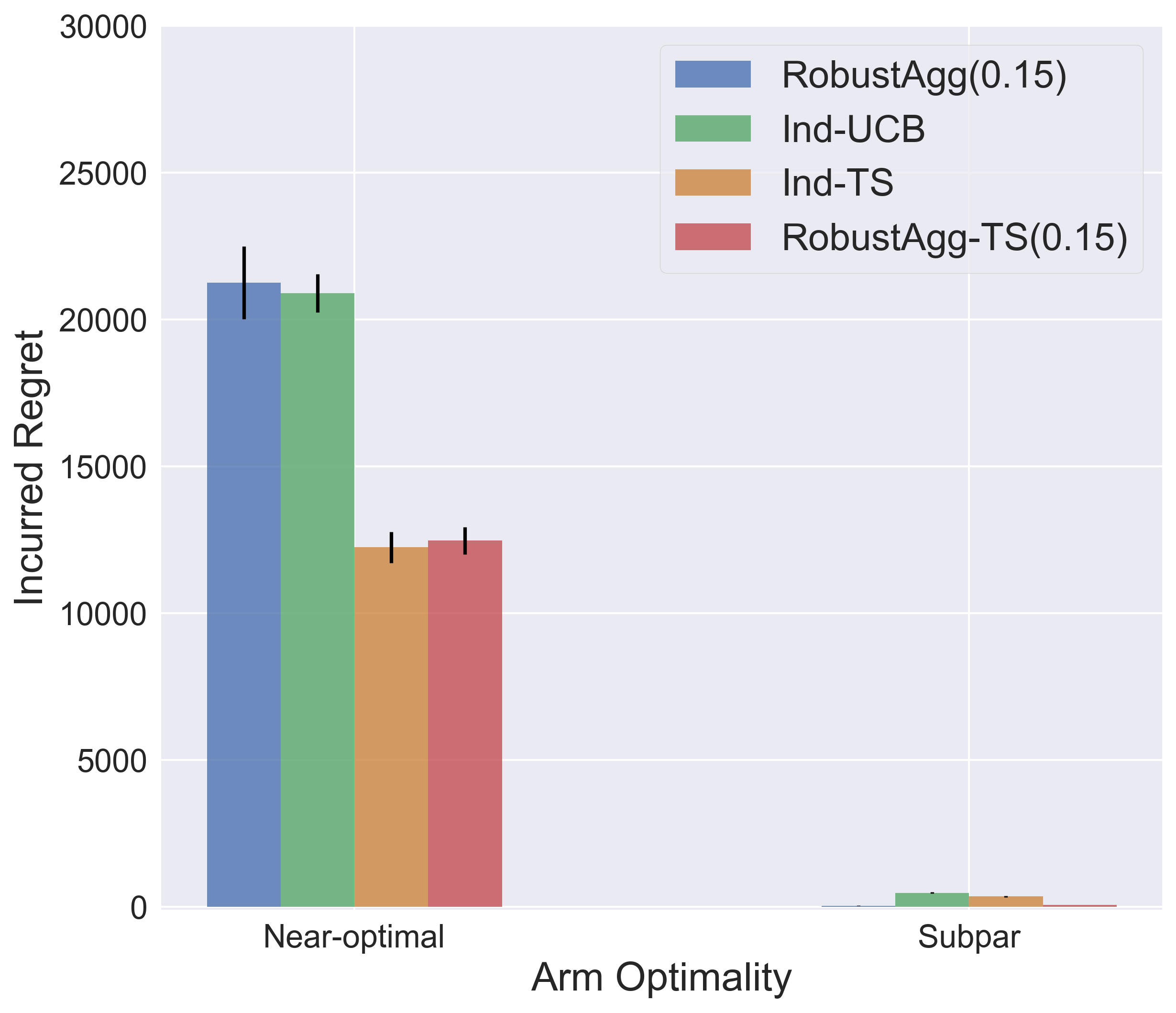}
        \caption{$|\Ical_{5\epsilon}| = 1$}
        \label{figure:exp3_1}
    \end{subfigure}
    \begin{subfigure}{0.32\textwidth}
        \centering
        \includegraphics[height=0.85\linewidth]{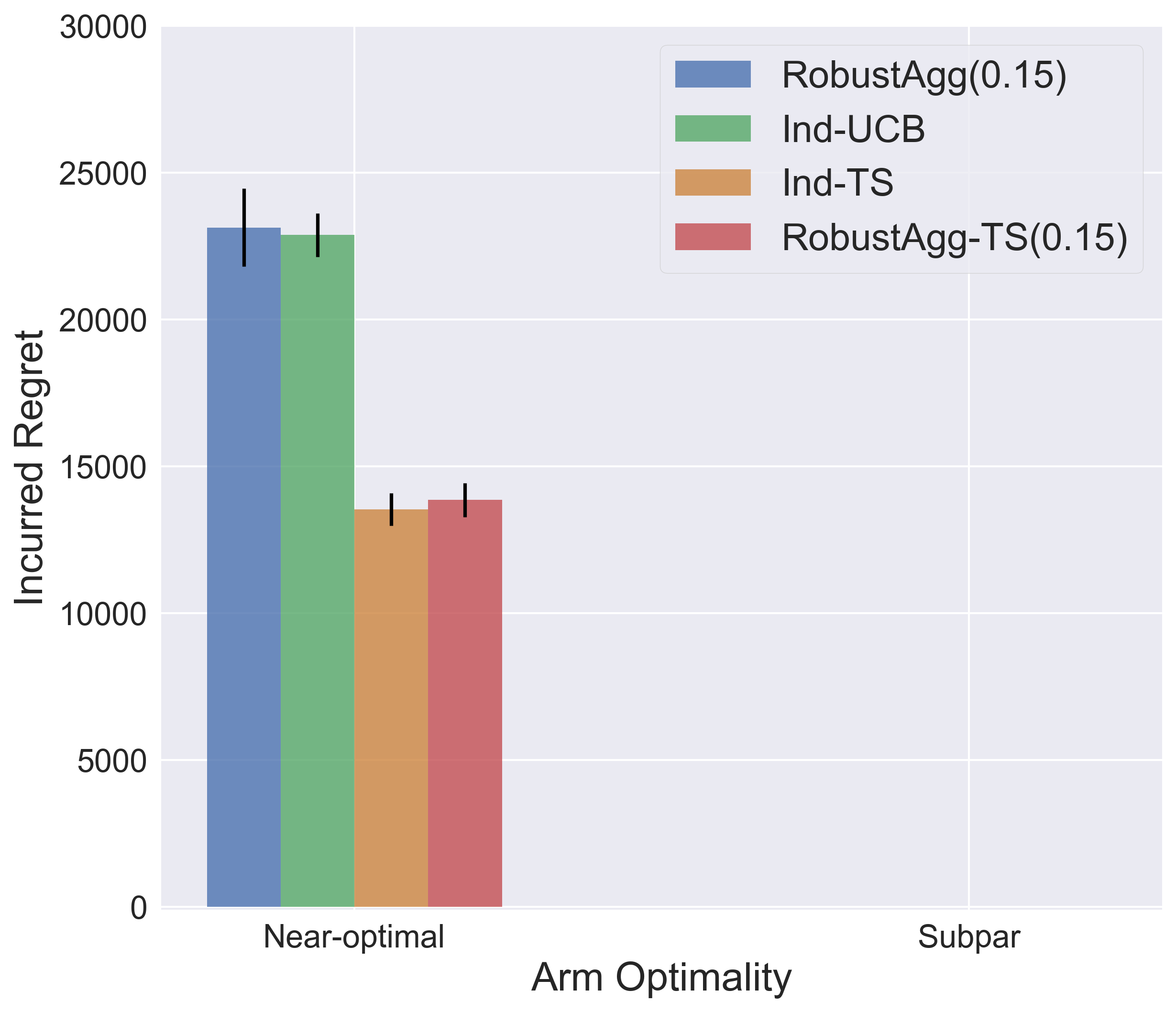}
        \caption{$|\Ical_{5\epsilon}| = 0$}
        \label{figure:exp3_0}
    \end{subfigure}
    \caption{Compares the cumulative collective regret incurred by arm optimality for the 4 algorithms in $T = 50,000$ rounds.}
    \label{figure:regret_perc_allplots}
\end{figure}

\begin{figure}[htbp]
    \centering
    \begin{subfigure}{0.32\textwidth}
        \centering
        \includegraphics[height=0.85\linewidth]{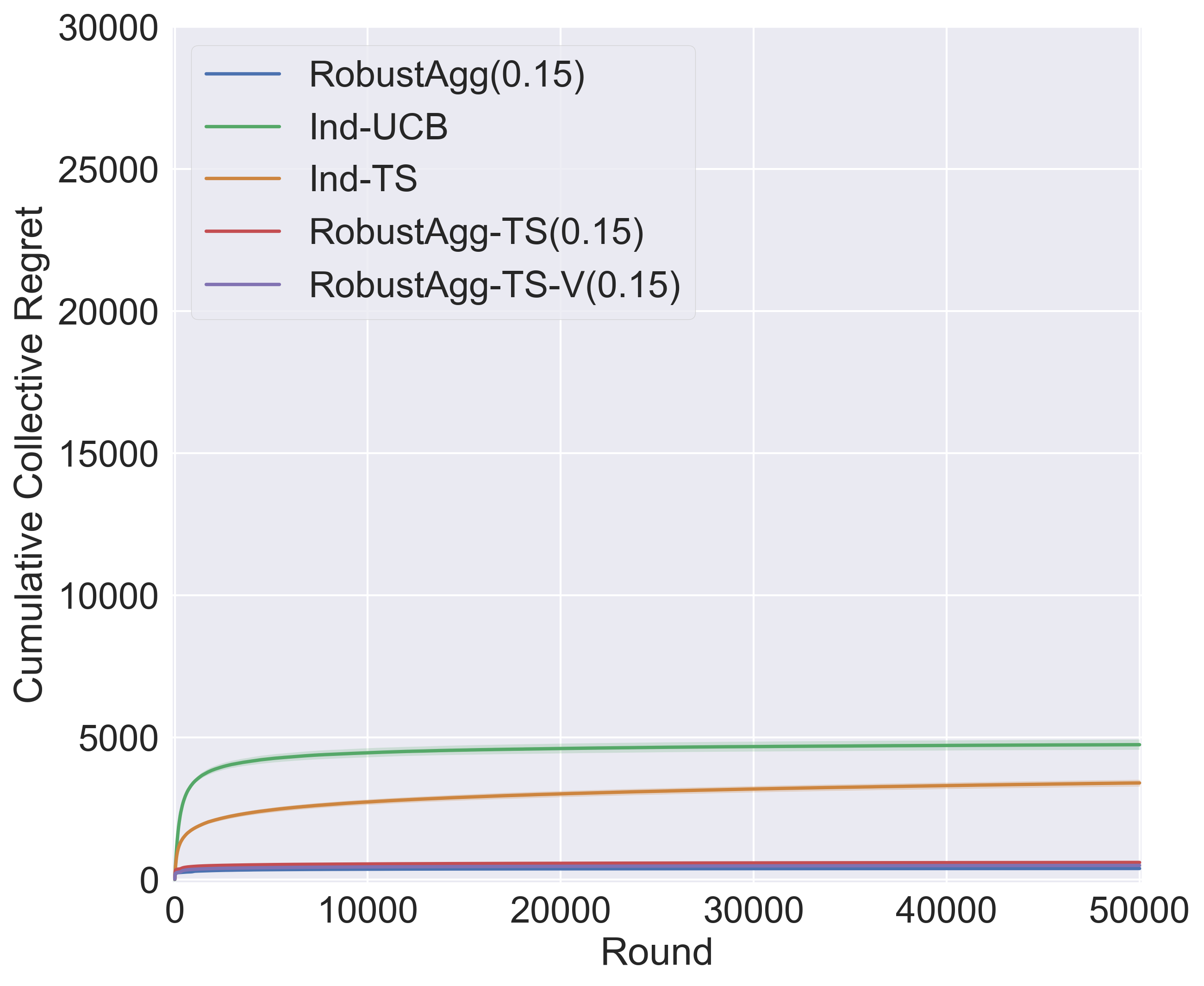}
        \caption{$|\Ical_{5\epsilon}| = 9$}
        \label{figure:exp4_9}
    \end{subfigure}
    \begin{subfigure}{0.32\textwidth}
        \centering
        \includegraphics[height=0.85\linewidth]{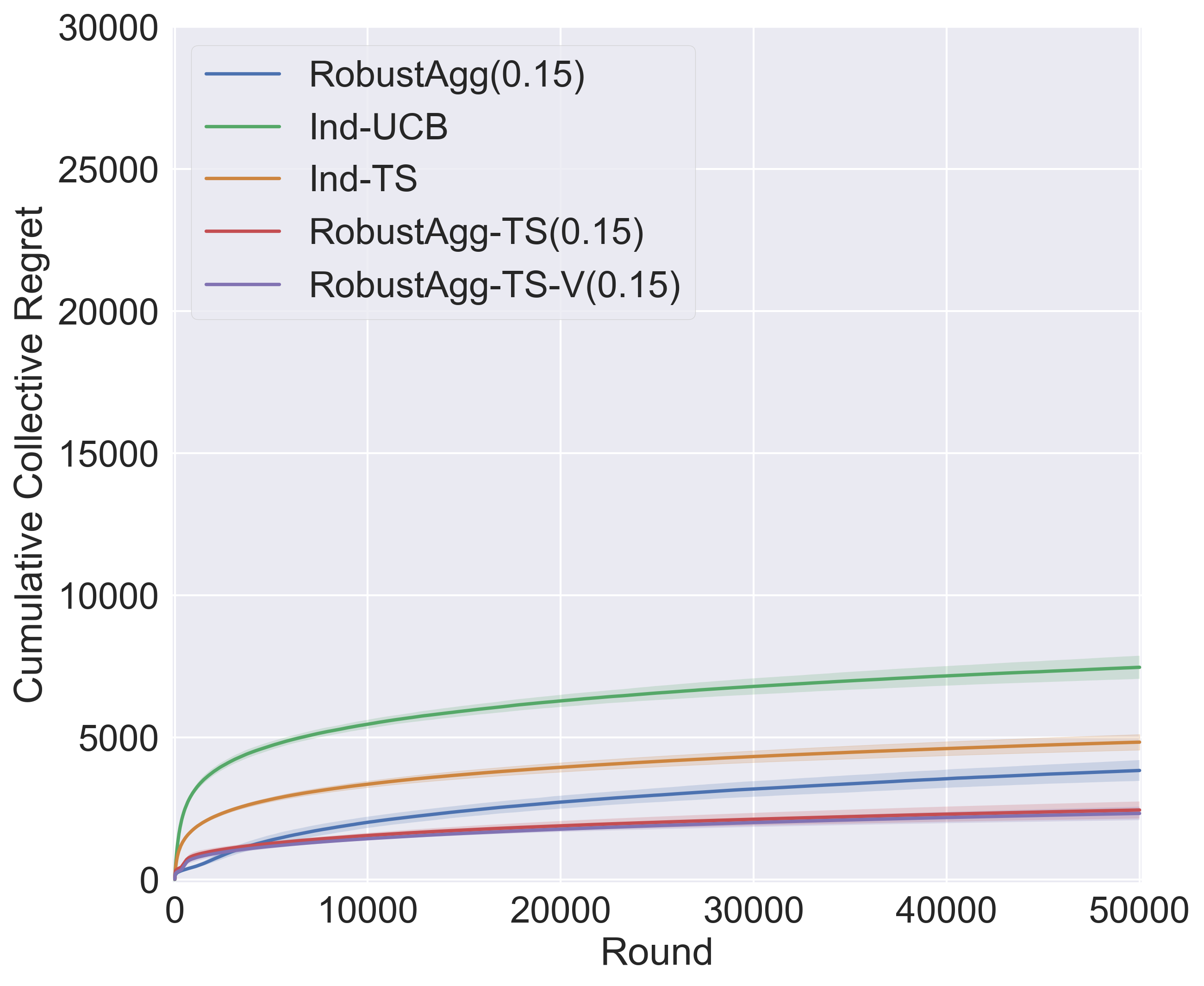}
        \caption{$|\Ical_{5\epsilon}| = 8$}
        \label{figure:exp4_8}
    \end{subfigure}
    \begin{subfigure}{0.32\textwidth}
        \centering
        \includegraphics[height=0.85\linewidth]{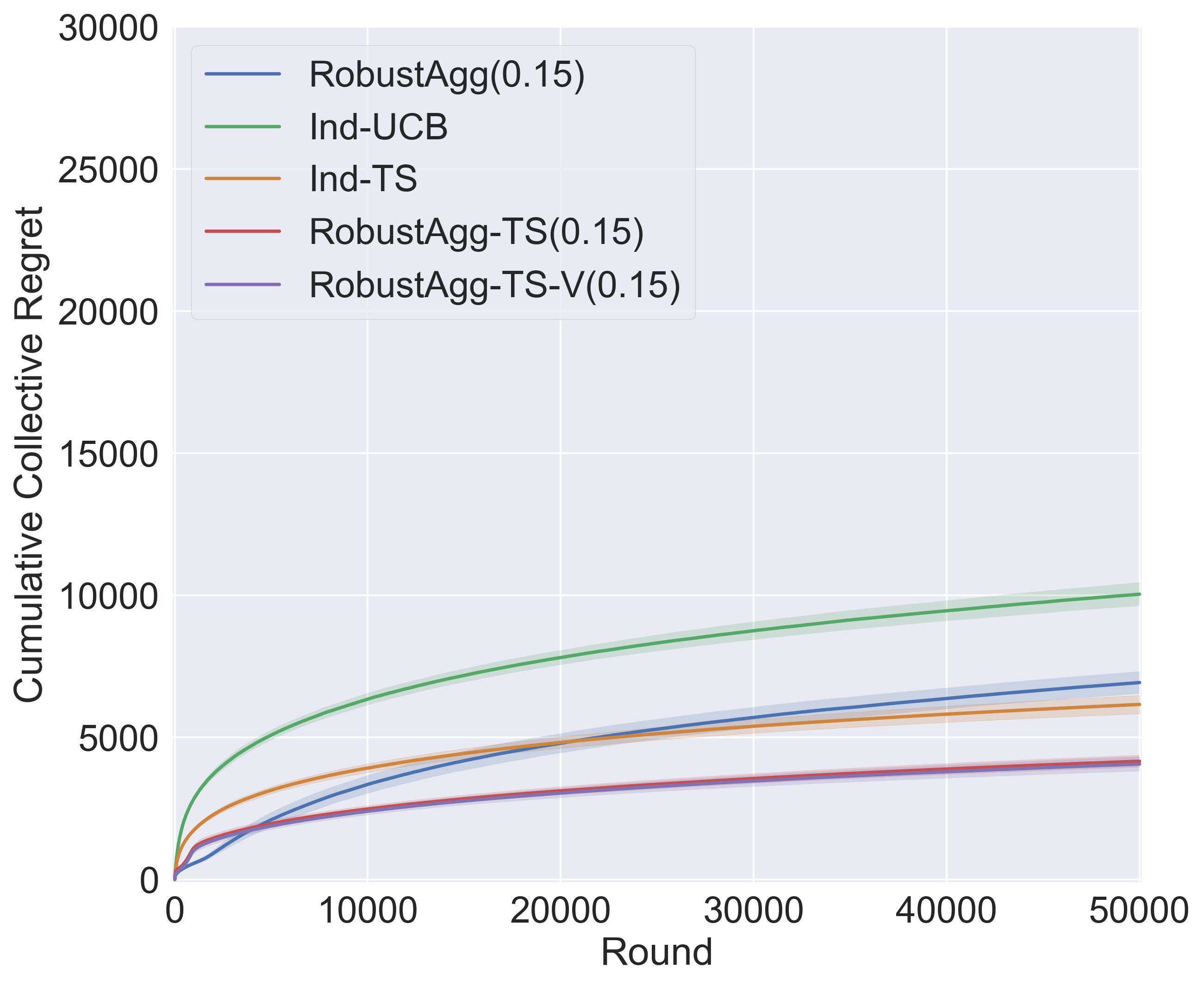}
        \caption{$|\Ical_{5\epsilon}| = 7$}
        \label{figure:exp4_7}
    \end{subfigure}
        \begin{subfigure}{.32\textwidth}
        \centering
        \includegraphics[height=0.85\linewidth]{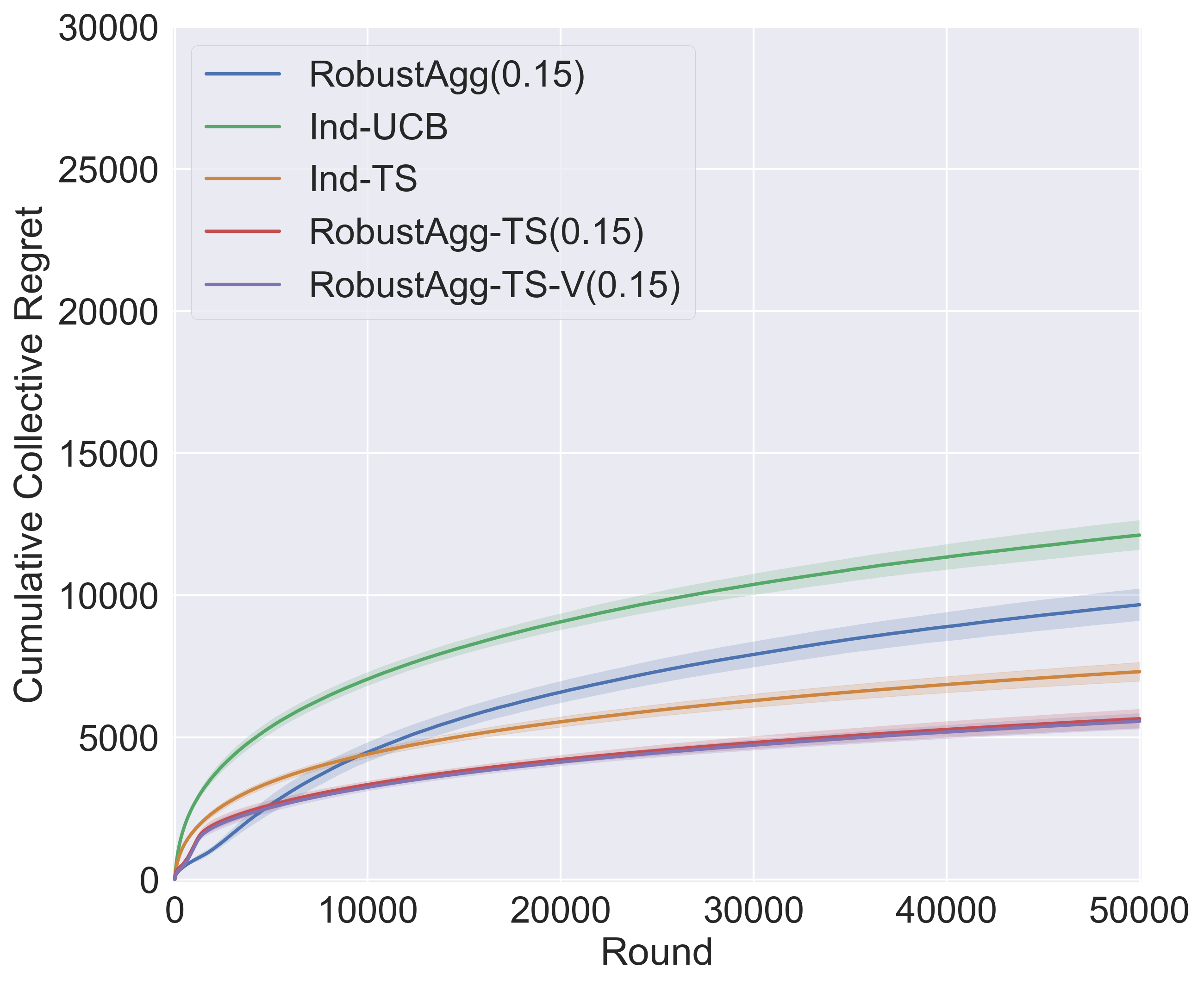}
        \caption{$|\Ical_{5\epsilon}| = 6$}
        \label{figure:exp4_6}
    \end{subfigure}
    \begin{subfigure}{0.32\textwidth}
        \centering
        \includegraphics[height=0.85\linewidth]{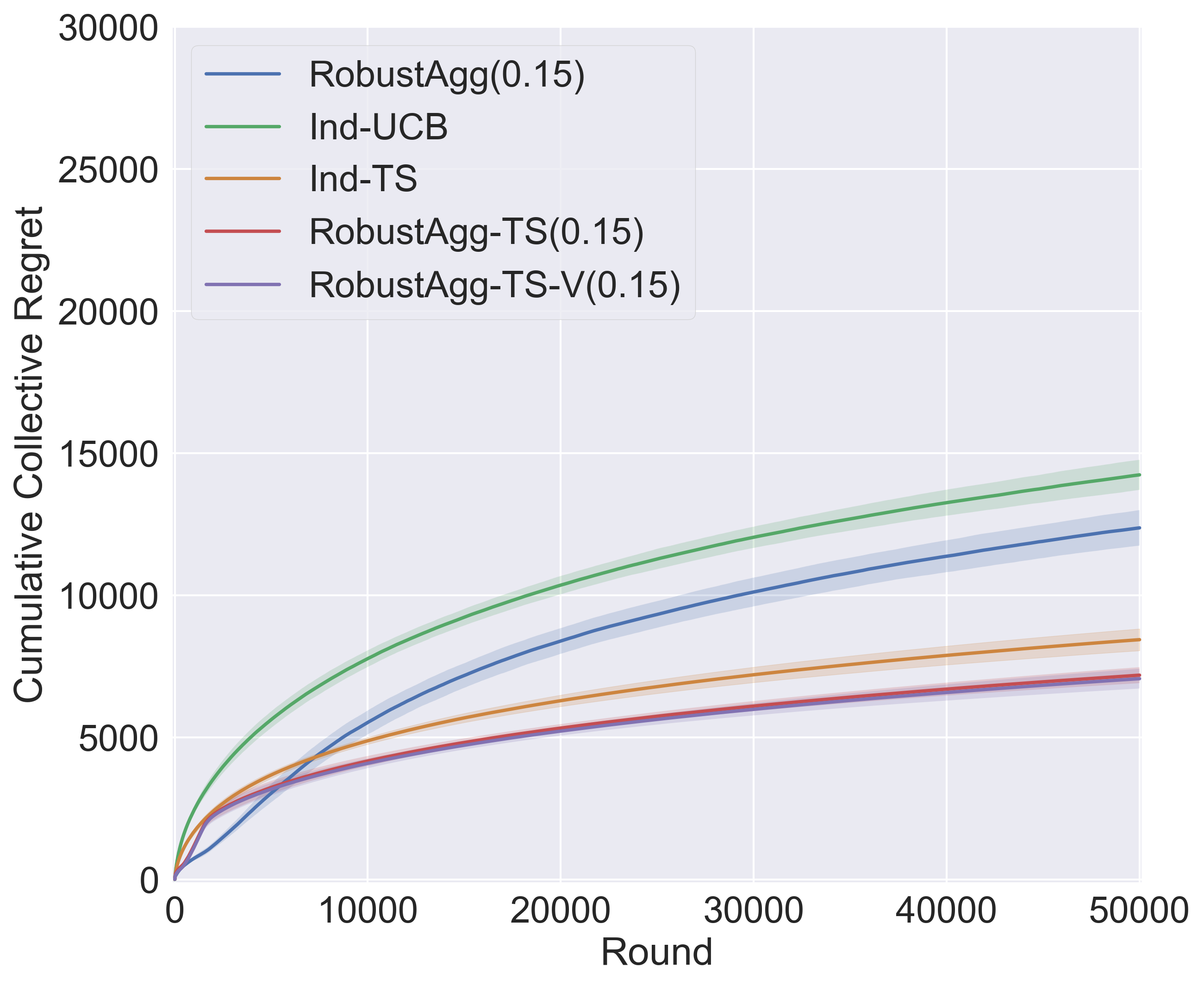}
        \caption{$|\Ical_{5\epsilon}| = 5$}
        \label{figure:exp4_5}
    \end{subfigure}
    \begin{subfigure}{0.32\textwidth}
        \centering
        \includegraphics[height=0.85\linewidth]{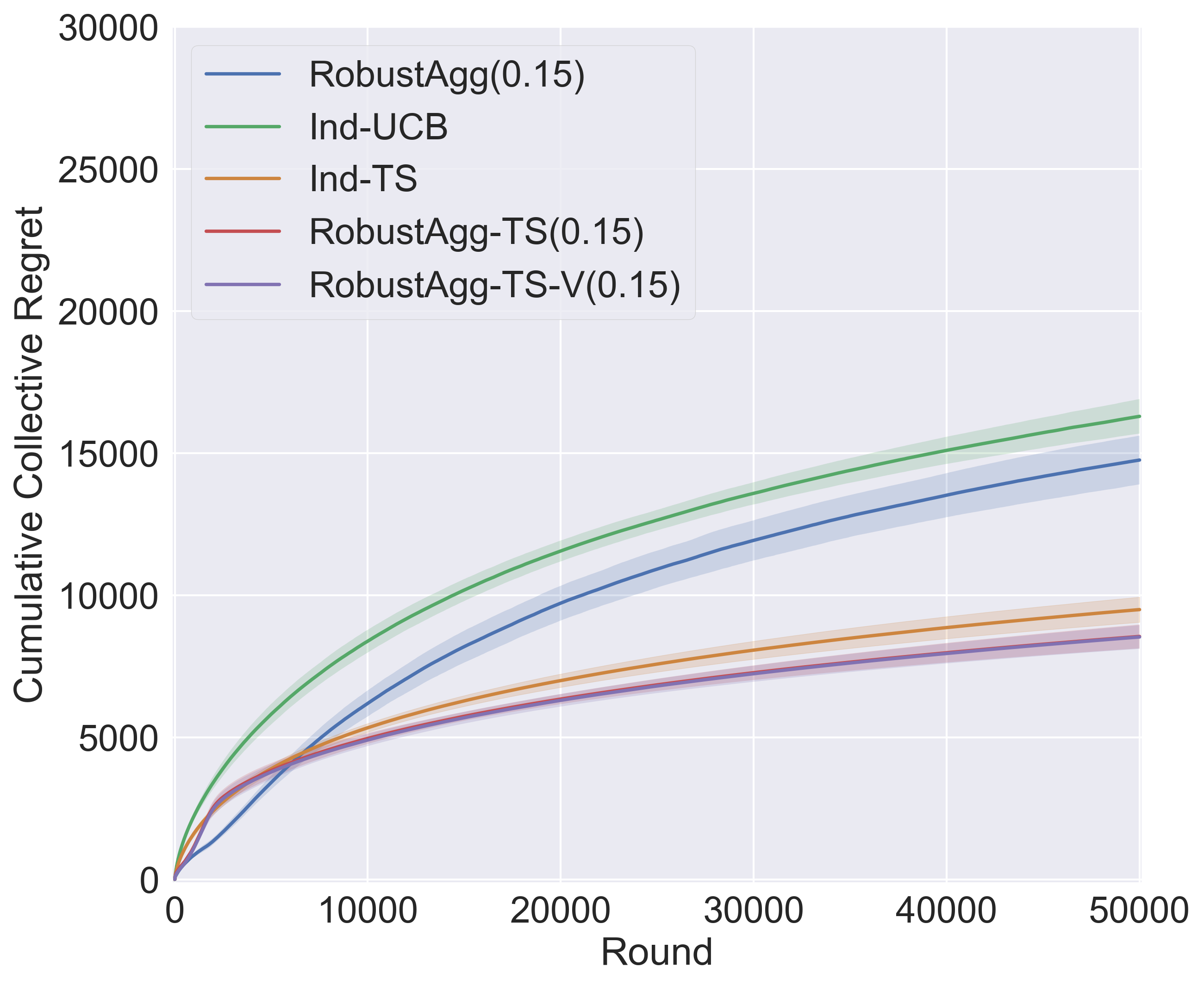}
        \caption{$|\Ical_{5\epsilon}| = 4$}
        \label{figure:exp4_4}
    \end{subfigure}
        \begin{subfigure}{.32\textwidth}
        \centering
        \includegraphics[height=0.85\linewidth]{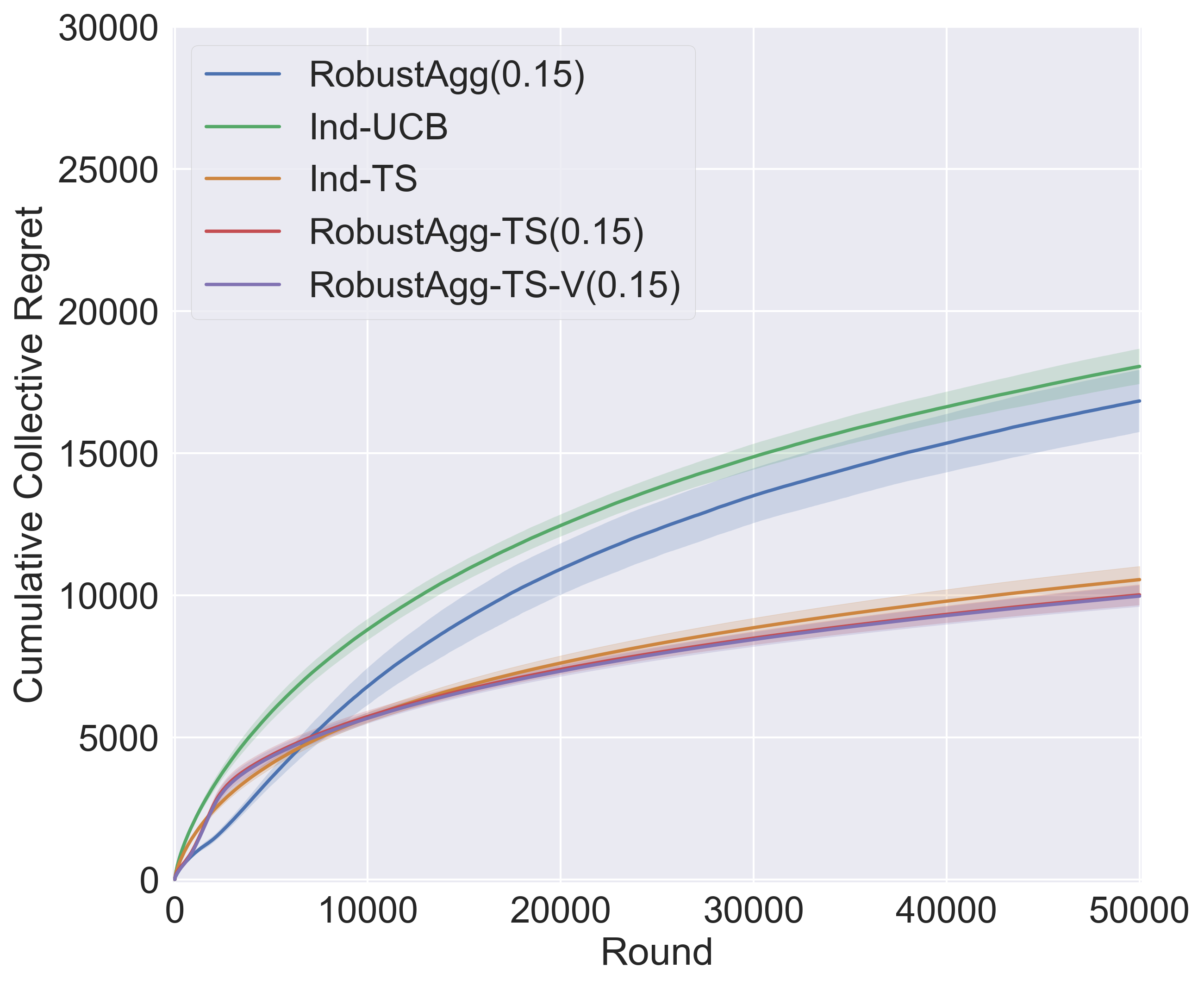}
        \caption{$|\Ical_{5\epsilon}| = 3$}
        \label{figure:exp4_3}
    \end{subfigure}
    \begin{subfigure}{0.32\textwidth}
        \centering
        \includegraphics[height=0.85\linewidth]{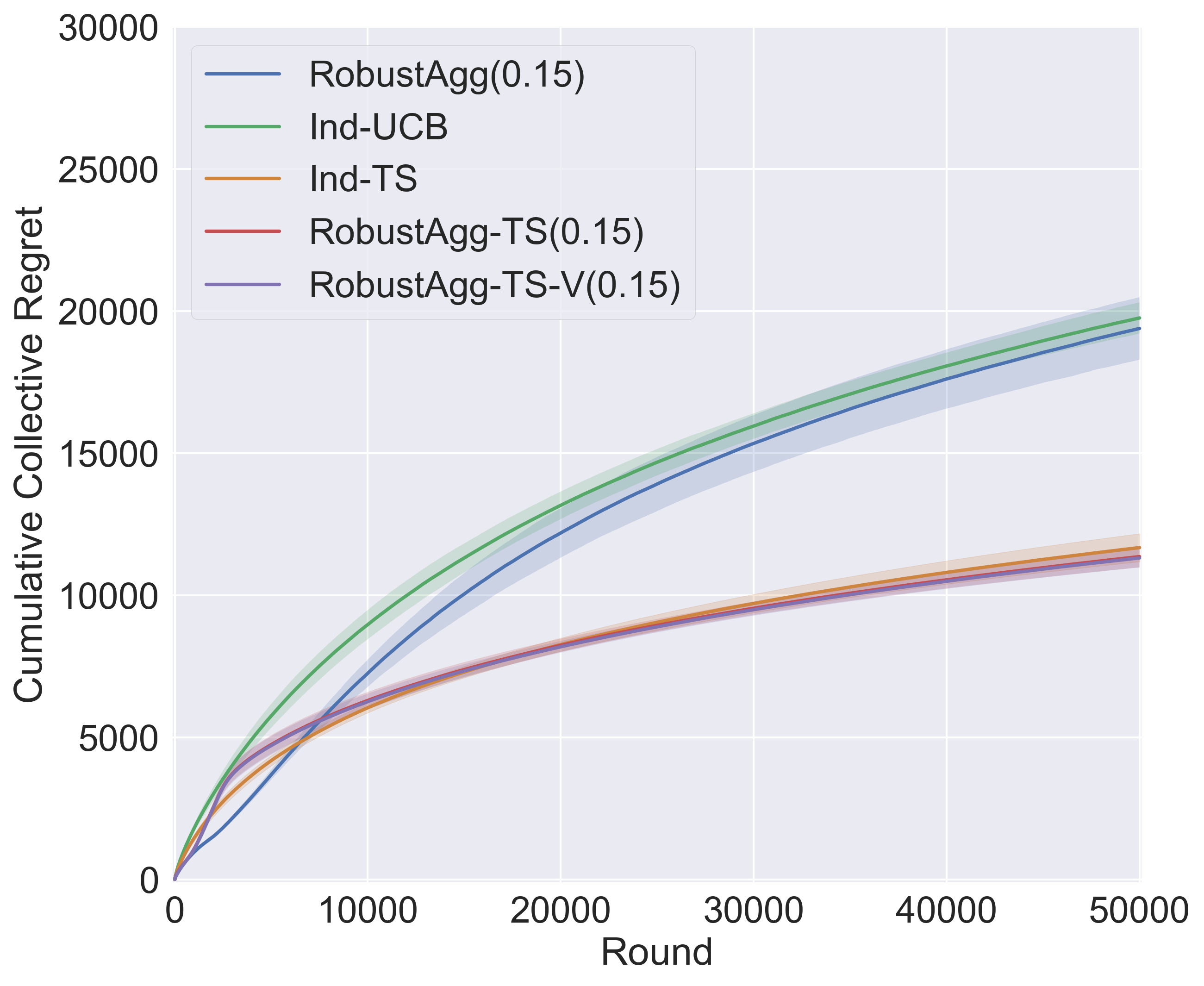}
        \caption{$|\Ical_{5\epsilon}| = 2$}
        \label{figure:exp4_2}
    \end{subfigure}
    \begin{subfigure}{0.32\textwidth}
        \centering
        \includegraphics[height=0.85\linewidth]{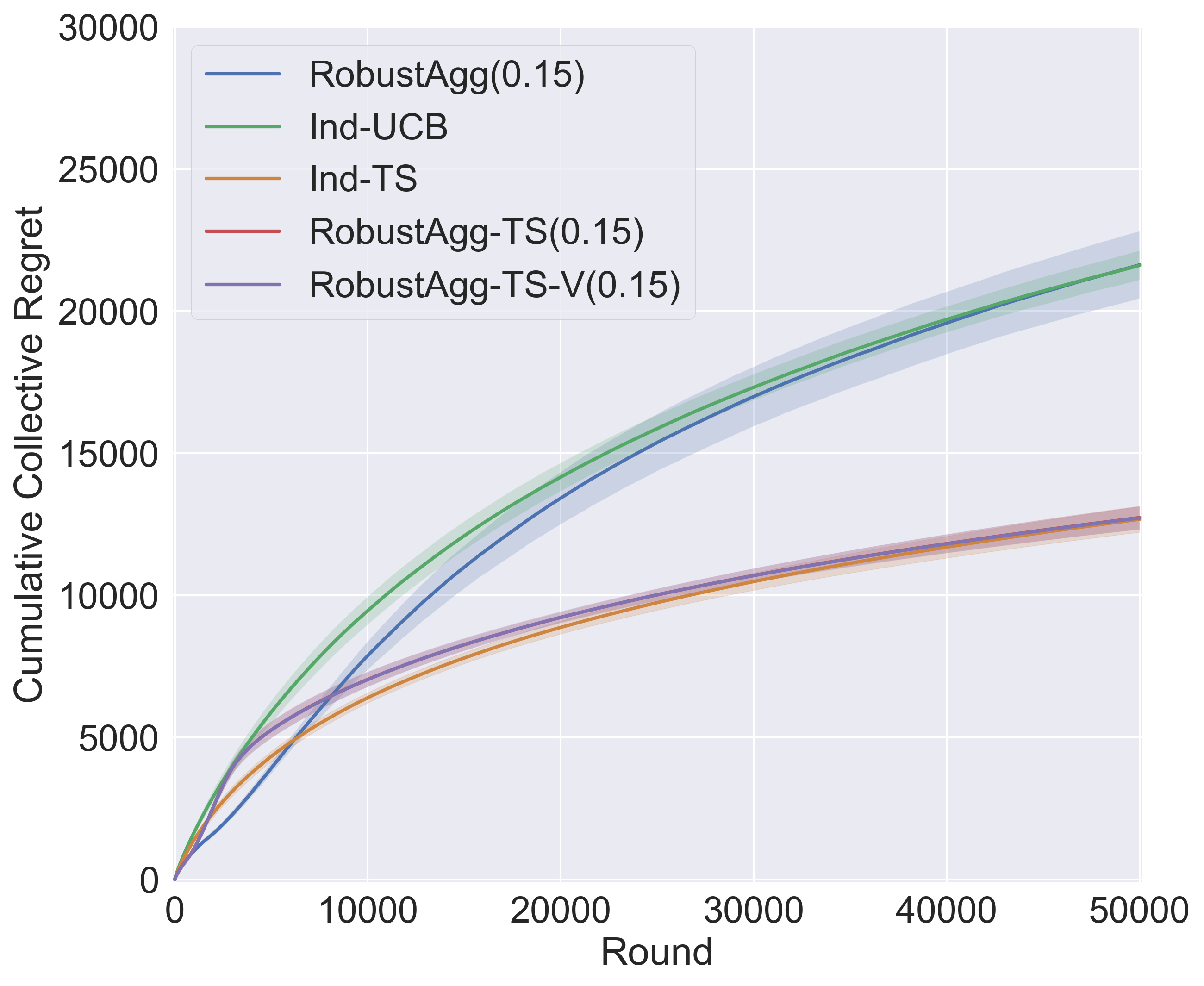}
        \caption{$|\Ical_{5\epsilon}| = 1$}
        \label{figure:exp4_1}
    \end{subfigure}
    \begin{subfigure}{0.32\textwidth}
        \centering
        \includegraphics[height=0.85\linewidth]{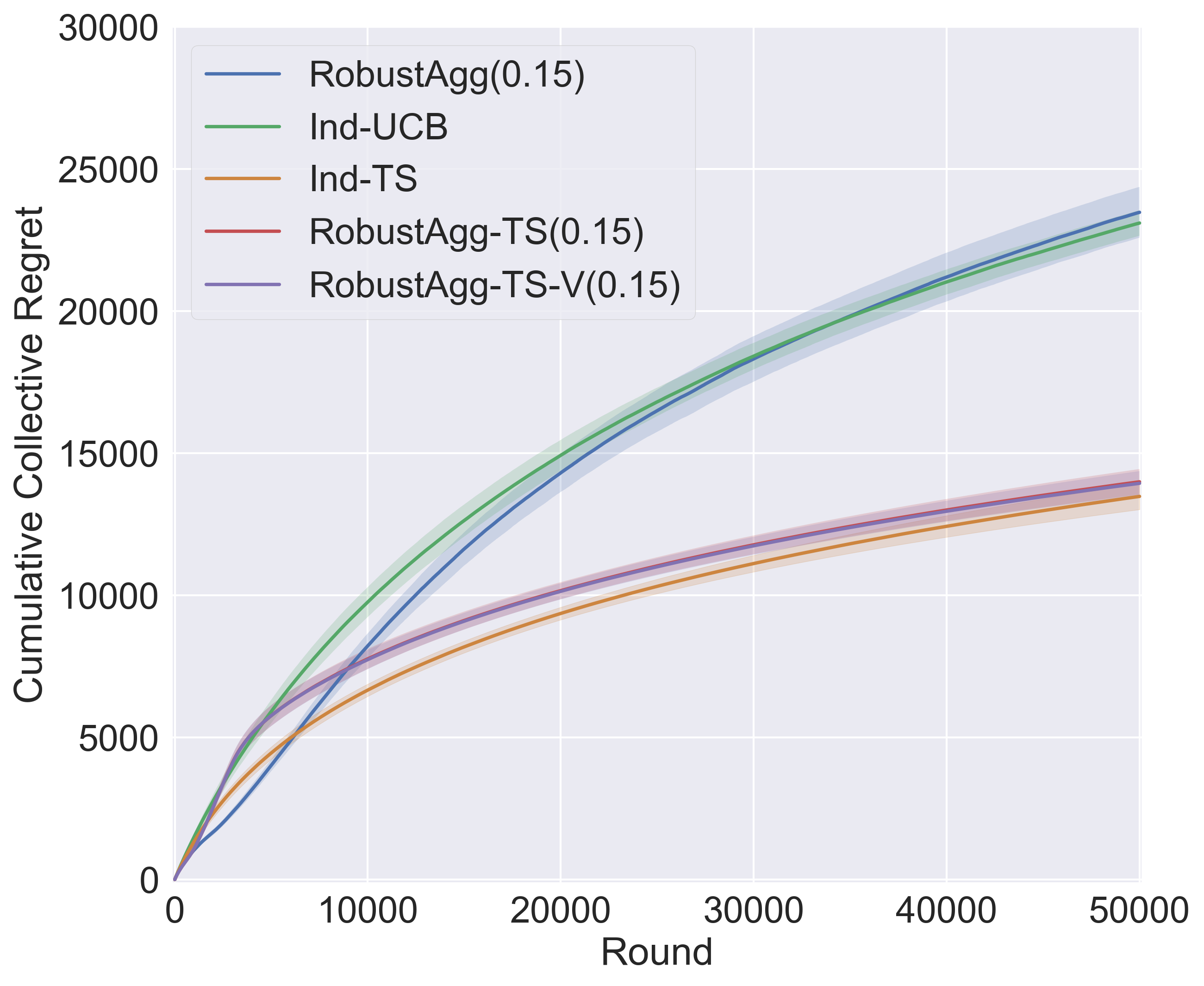}
        \caption{$|\Ical_{5\epsilon}| = 0$}
        \label{figure:exp4_0}
    \end{subfigure}
    \caption{Compares the cumulative collective regret of the 5 algorithms over a horizon of $T = 50,000$ rounds.}
    \label{figure:v_allplots}
\end{figure}

\end{document}